\newlist{myitemize}{enumerate}{1}
\setlist[myitemize]{font=\color{darkred}\bfseries\itshape}
\newcommand{\tsn}[1]{{\left\vert\kern-0.25ex\left\vert\kern-0.25ex\left\vert #1 
    \right\vert\kern-0.25ex\right\vert\kern-0.25ex\right\vert}}
\definecolor{darkred}{RGB}{150,0,0}
\definecolor{darkgreen}{RGB}{0,150,0}
\definecolor{darkblue}{RGB}{0,0,200}
\newcommand{\darkblue}{\color{darkblue}}
\newenvironment{fminipage}%
  {\begin{Sbox}\begin{minipage}}%
  {\end{minipage}\end{Sbox}\fbox{\TheSbox}}
\newtheorem{theorem}{Theorem}
\newtheorem{lemma}{Lemma}[section]
\newtheorem{corollary}{Corollary}[theorem]
\newtheorem{propo}{Proposition}
\newtheorem{definition}{Definition}
\newtheorem{remark}{Remark}[section]
\newcommand{\Cos}[2]{\operatorname{Cos}(#1,#2)}
\newcommand{\wmaj}{\w_{\rm{maj}}}
\newcommand{\wmin}{\w_{\rm{minor}}}
\newcommand{\hmaj}{\h_{\rm{maj}}}
\newcommand{\hmin}{\h_{\rm{minor}}}
\newcommand{\mubmaj}{\boldsymbol{\mu}_{\rm{maj}}}
\newcommand{\mubmin}{\boldsymbol{\mu}_{\rm{minor}}}
\newcommand{\minor}{{\text{minor}}}
\newcommand{\maj}{{\text{maj}}}
\newcommand{\rank}{\operatorname{rank}}
\newcommand{\Ghat}{\hat{\G}}
\newcommand{\nmin}{n_{\text{min}}}
\newcommand{\Deltab}{\boldsymbol{\Delta}}
\newcommand{\rhobar}{\overline{\rho}}
\newcommand{\SEL}{{SEL}}
\newcommand{\SELI}{{SELI}}
\newcommand{\Sec}{Sec.}
\newcommand{\Fig}{Fig.}
\DeclareMathOperator{\tr}{tr}
\DeclareMathOperator{\diag}{diag}
\newcommand{\cut}[1]{\textcolor{red}{}}
\newcommand{\W}{\mathbf{W}}
\newcommand{\M}{\mathbf{M}}
\newcommand{\Z}{\mathbf{Z}}
\newcommand{\Ub}{\mathbf{U}}
\newcommand{\nb}{\mathbf{n}}
\newcommand{\G}{\mathbf{G}}
\newcommand{\Rb}{\mathbf{R}}
\newcommand{\Hb}{{\mathbf{H}}}
\newcommand{\Hhat}{{\hat{\Hb}}}
\newcommand{\What}{{\hat{\W}}}
\newcommand{\Zhat}{{\hat{\Z}}}
\newcommand{\zhat}{{\hat{\z}}}
\newcommand{\Sb}{\mathbf{S}}
\newcommand{\X}{\mathbf{X}}
\newcommand{\Y}{\mathbf{Y}}
\newcommand{\Vb}{\mathbf{V}}
\newcommand{\A}{\mathbf{A}}
\newcommand{\Db}{\mathbf{D}}
\newcommand{\Pb}{\mathds{P}}
\newcommand{\Pbf}{\mathbf{P}}
\newcommand{\Qbf}{\mathbf{Q}}
\newcommand{\Pibf}{\boldsymbol{\Pi}}
\newcommand{\Xibf}{\boldsymbol{\Xi}}
\newcommand{\omegab}{\boldsymbol{\omega}}
\newcommand{\pbf}{\mathbf{p}}
\newcommand{\qbf}{\mathbf{q}}
\newcommand{\mub}{\boldsymbol{\mu}}
\newcommand{\x}{\mathbf{x}}
\newcommand{\ub}{\mathbf{u}}
\newcommand{\w}{\mathbf{w}}
\newcommand{\vb}{\mathbf{v}}
\newcommand{\Bb}{\mathbf{B}}
\newcommand{\eb}{\mathbf{e}}
\newcommand{\y}{\mathbf{y}}
\newcommand{\z}{\mathbf{z}}
\newcommand{\ab}{\mathbf{a}}
\newcommand{\h}{\mathbf{h}}
\newcommand{\Sc}{{\mathcal{S}}}
\newcommand{\Nc}{\mathcal{N}}
\newcommand{\Lc}{\mathcal{L}}
\newcommand{\beq}{\begin{equation}}
\newcommand{\eeq}{\end{equation}}
\newcommand{\bea}{\begin{align}}
\newcommand{\eea}{\end{align}}
\newcommand{\vp}{\vspace{5pt}}
\newcommand{\R}{\mathbb{R}}
\newcommand{\nn}{\notag}
  \newcommand{\Sigmab}{\boldsymbol\Sigma}
  \newcommand{\la}{{\lambda}}                     
  \newcommand{\eps}{\epsilon}
    \newcommand{\Lambdab}{\boldsymbol{\Lambda}}                     
\newcommand{\Xhat}{\hat{\X}}
 \newcommand{\deltab}{\boldsymbol\delta}
\DeclarePairedDelimiterX{\inp}[2]{\langle}{\rangle}{#1, #2}
\newcommand{\wt}{\widetilde}
\newcommand{\Id}{\mathds{I}}
\newcommand{\ones}{\mathds{1}}
\newcommand{\zeros}{\mathbf{0}}
\newcommand{\blue}[1]{\textcolor{blue}{#1}}
\newcommand{\red}[1]{\textcolor{red}{#1}}
\newcommand{\new}[1]{{{#1}}}
\author{%
Christos Thrampoulidis$^\star$, Ganesh R. Kini$^\dagger$, Vala Vakilian$^\star$, Tina Behnia$^\star$ \footnote{This work is supported by an NSERC Undergraduate Student Research Grant, an NSERC Discovery Grant,  NSF Grant CCF-2009030, and by a CRG8-KAUST award. The authors also acknowledge use of the Sockeye cluster by UBC Advanced Research Computing. 
}
 \vspace{8pt}
 \\
$^\star$University of British Columbia, Canada 
\vspace{3pt}
\\
$^\dagger$University of California, Santa Barbara, USA
}
\title{
Imbalance Trouble: Revisiting Neural-Collapse Geometry
}
\begin{document}

\maketitle

\begin{abstract}
Neural Collapse refers to the remarkable structural properties characterizing the geometry of class embeddings and classifier weights, found by deep nets when trained beyond zero training error. However, this characterization only holds for balanced data. Here we thus ask whether it can be made invariant to class imbalances. Towards this end, we adopt the unconstrained-features  model (UFM), a recent theoretical model for studying neural collapse, and introduce $\text{\emph{Simplex-Encoded-Labels Interpolation}}$ (SELI) as an invariant characterization of the neural collapse phenomenon. Specifically, we prove for the UFM with cross-entropy loss and vanishing regularization that, irrespective of class imbalances, the embeddings and classifiers always interpolate a simplex-encoded label matrix and that their individual geometries are determined by the SVD factors of this same label matrix. We then present extensive experiments on synthetic and real datasets that confirm convergence to the SELI geometry. However, we caution that convergence worsens with increasing imbalances. We theoretically support this finding by showing that unlike the balanced case, when minorities are present, ridge-regularization plays a critical role in tweaking the geometry. This defines new questions and motivates further investigations into the impact of class imbalances on the rates at which first-order methods converge to their asymptotically preferred solutions.
\end{abstract}

\addtocontents{toc}{\protect\setcounter{tocdepth}{0}}

\section{Introduction}\label{sec:intro}

 {What are the unique structural properties of models learned by training deep neural networks to zero training error? Is there an \emph{implicit bias} towards solutions of certain geometry? How does this vary across training instances, architectures, and data? These questions are at the core of understanding the optimization landscape of deep-nets. Also, they are naturally informative about the role of models since different parameterizations might affect preferred geometries. Ultimately, such understanding makes progress towards explaining generalization of overparameterized models. 

Recently, remarkable new progress in answering these questions has been made by \citet{NC}, who empirically discover and formalize the so-called \emph{Neural-collapse} (NC) phenomenon. NC describes  geometric properties of the learned embeddings (aka last-layer features) and of the classifier weights of deep-nets, trained with cross-entropy (CE) loss and \emph{balanced} data far into the zero training-error regime. The NC phenomenon produces a remarkably simple description of a particularly symmetric geometry: (i) The embeddings of each class collapse to their class mean (see \emph{\ref{NC}} property); and (ii) The class means align with the classifier weights and they form a simplex equiangular tight frame (see \emph{\ref{ETF}} property). Importantly, as noted by  \citet{NC}, this {simple} geometry appears to be ``cross-situational \emph{invariant}''
across different architectures and different \emph{balanced} datasets.

In this paper, we study Neural collapse with {imbalanced} classes:
\emph{Is there a (ideally equally simple) description of the geometry that is invariant across \underline{class-imbalanced datasets}? 
}

\begin{figure}
	\begin{subfigure}{0.7\linewidth}
		\centering
		\includegraphics[scale=0.6]{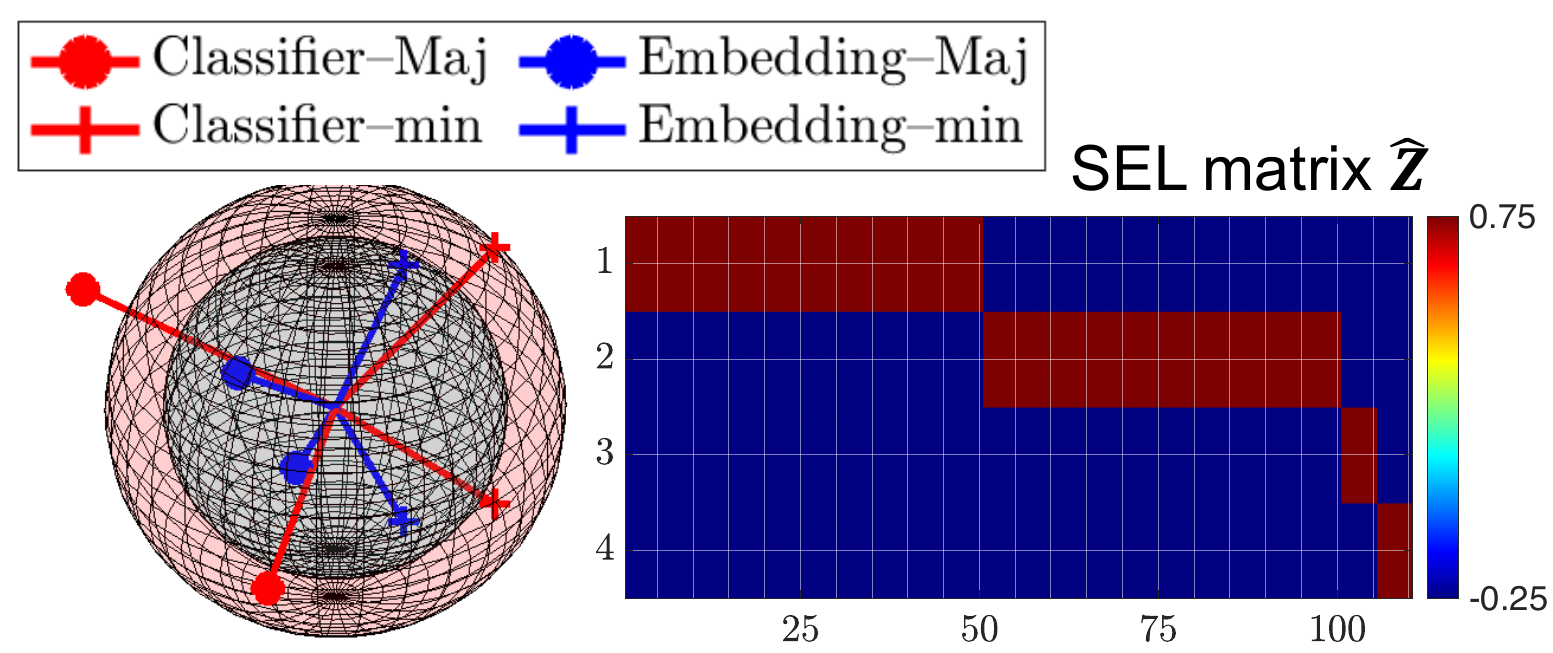}
		\caption{\SELI~geometry;\\ $(10,\nicefrac{1}{2})$-STEP imbalance, $k=4$ classes}
		\label{fig:intro_SEL_seli}
	\end{subfigure}
\begin{subfigure}{0.24\linewidth}
	\centering
	\vspace{1cm}
\includegraphics[scale=0.52]{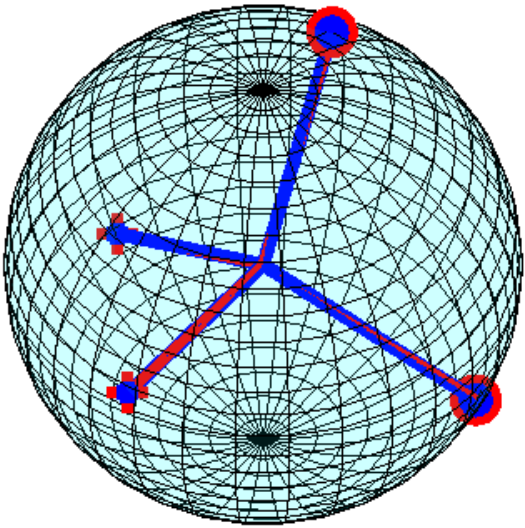}
\captionsetup{width=0.9\linewidth}\caption{ETF~geometry;\\ $k=4$ balanced classes}
\label{fig:intro_SEL_etf}
\end{subfigure}
\caption{Visualization of the SELI and ETF geometries.}
\label{fig:intro_SEL}
\end{figure}

\vp
\noindent\textbf{Contributions.} We propose a new description capturing the geometric structure of learned-embeddings and classifier-weights on possibly \emph{class-imbalanced data}, which we call the \emph{Simplex-Encoded-Labels Interpolation}  \emph{\ref{SELI}} geometry. This new geometry is a generalization of the {ETF} geometry: It recovers the latter when data are balanced   {or when there are only two classes}, and also, unlike ETF, it remains invariant across different \emph{imbalance levels}. Importantly, it too, has a simple description: The matrix of learned logits interpolates a simplex-encoded label (SEL) matrix $\Zhat$, and, the individual  geometries of the embeddings and classifiers are determined by the SVD factors of this same \SEL~matrix. Because the particular arrangement of columns of the \SEL~matrix changes with the imbalance level, this also impacts the geometric arrangement of the embedding and classifier vectors.   {Overall, the norms and angles of these vectors admit simple closed-form expressions in terms of the imbalance characteristics and the number of classes.}

We use an example to illustrate this. \Fig~\ref{fig:intro_SEL_seli} depicts the \SEL~matrix $\Zhat\in\R^{4\times 110}$ for a STEP-imbalanced $k=4$-class dataset with two majority classes of $50$ examples each, and, two minority classes of $5$ examples each. Each column of $\Zhat$ includes the $k$ learned logits for each one of the $110$ examples in the dataset. Each such column has exactly one entry equal to $1-1/k=0.75$ and  three entries equal to $-1/k=-0.25$. The corresponding geometry of the embeddings and classifiers, shown in the 3D plot, is found by an SVD of $\Zhat$: the left eigenvectors determine the classifiers and the right ones the embeddings. Note that $\Zhat$ is rank 3, hence the geometry is  3D. Since embeddings collapse to their class means (see \emph{\ref{NC}} property), we only show the four {\color{blue}class-mean embeddings} and the corresponding four {\color{red}classifiers}. Two of each correspond to majorities (``$\bullet$'' marker) and two to minorities (``$+$'' marker).   {The radii of the two concentric spheres are equal to the norms of the minority classifiers (red sphere) and of the minority embeddings (blue sphere), respectively. Note that the norms of minorities and majorities are different, and so are the angles. Moreover, the classifiers are \emph{not} aligned with the embeddings. Overall, the geometry  is different compared to the ETF geometry seen in the balanced case, which is shown in Fig. \ref{fig:intro_SEL_etf}.}   {What remains invariant across class-imbalances is that the logits (i.e. inner products between classifiers and embeddings) only take values either $1-1/k$  or  $-1/k$, so that the logit matrix is equal to the \SEL~matrix. Equivalently, the learned model interpolates the simplex-encoding of the labels.}

Below we explain the conception of this geometry and our contributions in  detail. The initial major challenge was: \emph{Assuming a class-imbalance-invariant geometry exists, how to find  it?} 

%
%

To answer this question, we adopted the \emph{Unconstrained Feature Model (UFM}) previously introduced in the literature as a  two-layer proxy model to theoretically justify neural collapse \cite{mixon2020neural,fang2021exploring,zhu2021geometric}. Motivated by deep-learning practice and by studies on implicit bias of gradient descent (GD) for unregularized CE minimization, we analyze the geometry of solutions to an unconstrained-features Support Vector Machines (UF-SVM) problem. We prove, for STEP-imbalanced data, any solution of the UF-SVM follows the SELI geometry. Thus, the learned end-to-end model always interpolates a simplex labels encoding. We show that \emph{\ref{ETF}}$\implies$\emph{\ref{SELI}}. However, \emph{\ref{SELI}}$\not\implies$\emph{\ref{ETF}} unless classes are balanced or there is just two of them ($k=2$). 

Next, we analyze training of the UFM with ridge-regularized CE. Unlike previous studies, we find in the presence of imbalances that regularization matters as it changes the geometry of solutions. In fact, we show that there is \emph{no} finite regularization that leads to the  SELI geometry. However, we also show that as regularization vanishes, the solutions do interpolate the \SEL~matrix (after appropriate normalization.) Finally, 
we show that the SELI geometry differs from the minority-collapse phenomenon \cite{fang2021exploring}, since the latter does not correspond to solutions with zero training error. In fact, we show for minority collapse that: (i)  It does \emph{not} occur for small finite regularization   {and finite imbalance ratio, and (ii) It occurs asymptotically for vanishing regularization, but only asymptotically as the imbalance ratio grows.}   

We numerically test convergence to the proposed SELI geometry in both synthetic and real class-imbalanced datasets. For different imbalance levels, the learned geometries approach the SELI geometry significantly faster compared to the ETF geometry. See \Fig~\ref{fig:intro_CIFAR}.
However, we also observe that convergence \emph{worsens} with increasing level of imbalance. A plausible theoretical justification is that as we show regularization plays critical role under imbalances. We also consistently get better convergence rate for the classifiers. We believe our observations strongly motivate further investigations regarding potential frailties of ``asymptotic'' implicit bias characterizations and how these might vary in multiclass and possibly imbalanced settings.

\begin{figure*}[t]
	\vspace{-10pt}
	\centering
	\hspace{-40pt} \begin{subfigure}{0.3\textwidth}
		\centering
		\begin{tikzpicture}
			\node at (-1.4,-1.4) 
			{\includegraphics[scale=0.26]{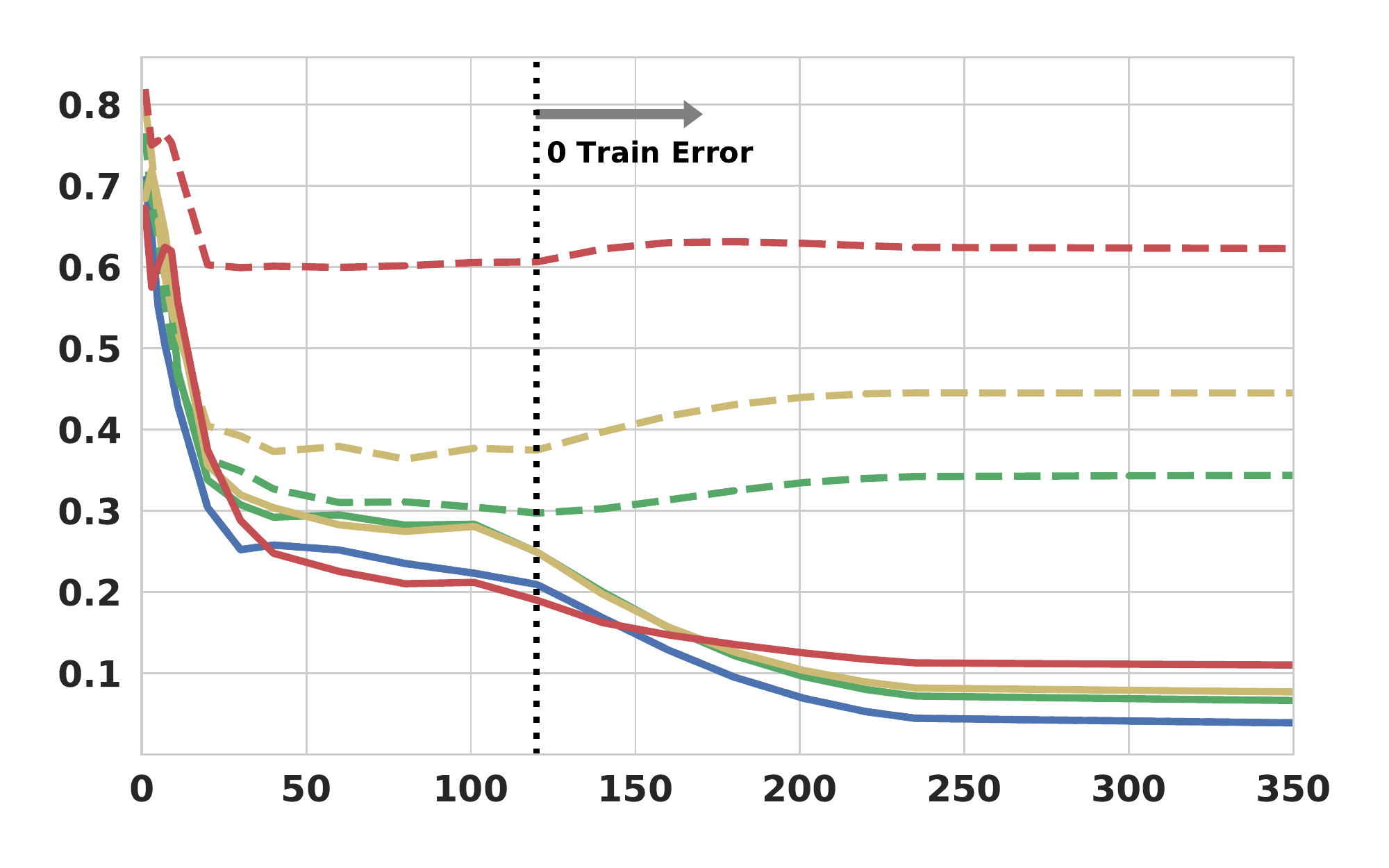}};
			\node at (-4.1,-1.4)  [scale=0.7, rotate=90]{Distance to SELI/ETF};
		\end{tikzpicture}
	\end{subfigure}\hspace{13pt}\begin{subfigure}{0.3\textwidth}
		\centering
		\begin{tikzpicture}
			\node at (0,-1.4) {\includegraphics[scale=0.26]{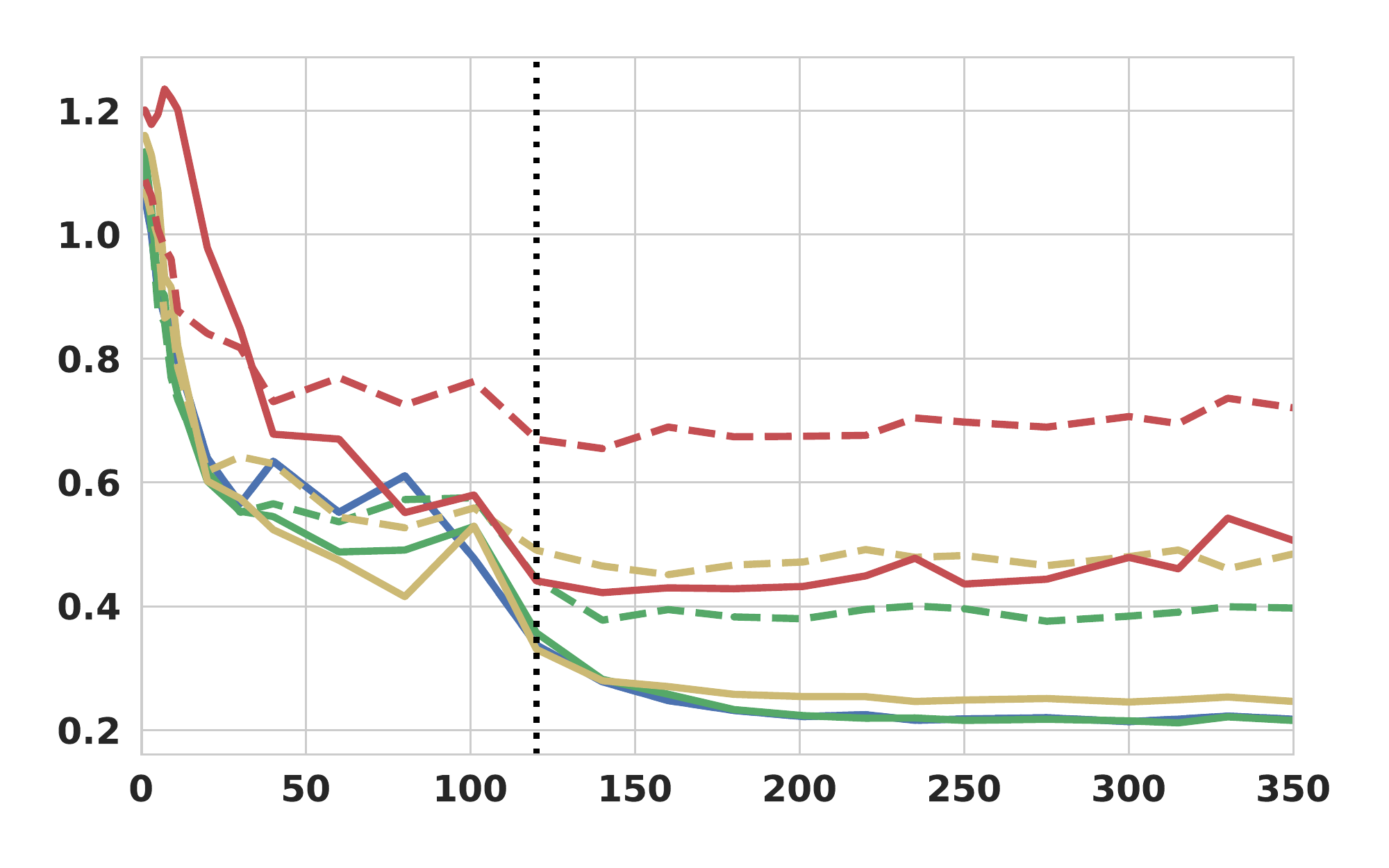}};
			\node at (0,0.2) [scale=0.9]{\textbf{CIFAR10}};
			\node at (-2.7,-1.4) [scale=0.7, rotate=90]{};
		\end{tikzpicture}
	\end{subfigure}\hspace{13pt}\begin{subfigure}{0.3\textwidth}
		\centering
		\begin{tikzpicture}
			\node at (0,-1.4) {\includegraphics[scale=0.26]{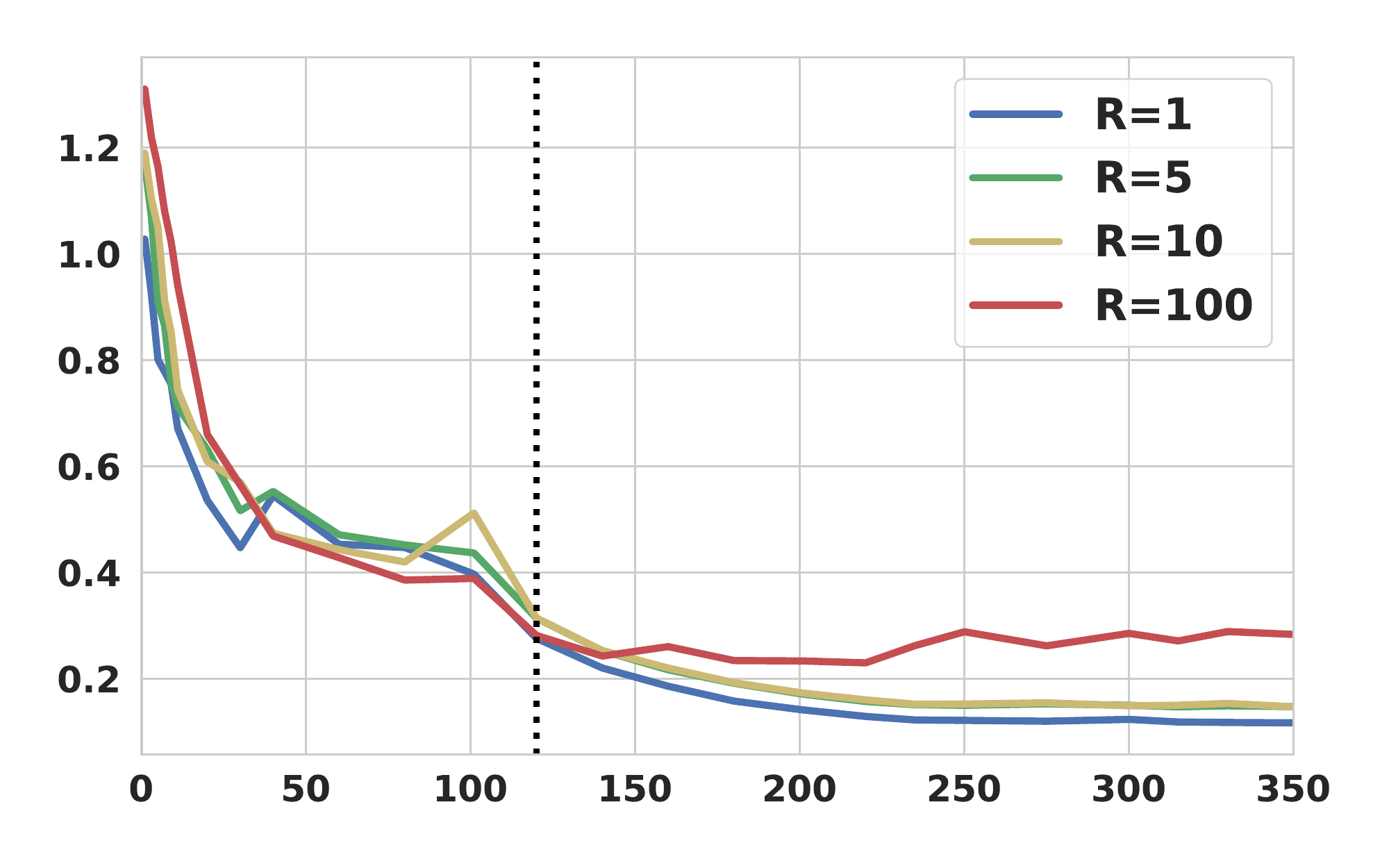}};
			\node at (-2.7,-1.4) [scale=0.7, rotate=90]{};
		\end{tikzpicture}
	\end{subfigure}\vspace{-10pt}
	
	\vspace{5pt}
	\centering
	\hspace{-40pt} \begin{subfigure}{0.3\textwidth}
		\centering
		\begin{tikzpicture}
			\node at (-1.4,-1.4)
			{\includegraphics[scale=0.26]{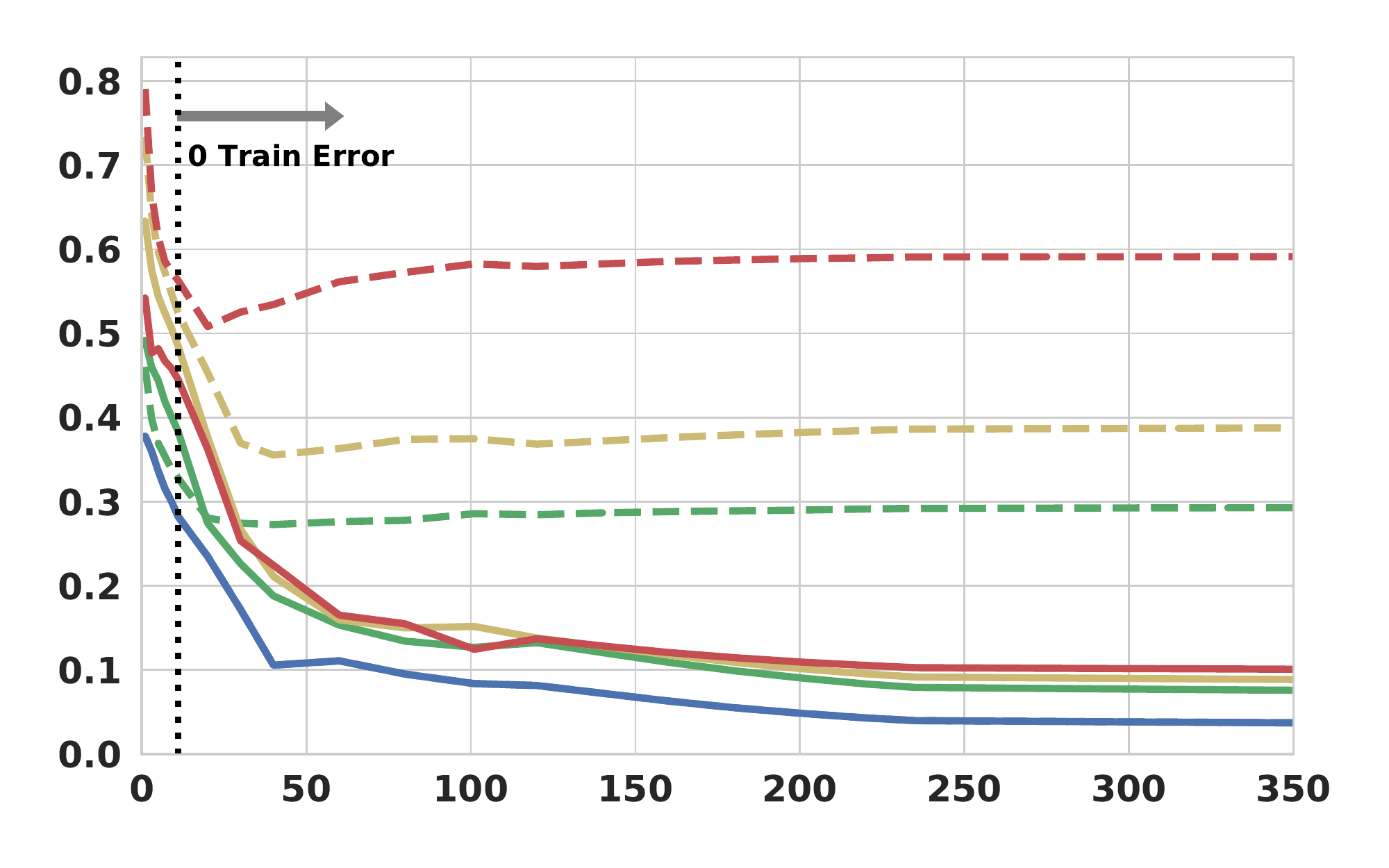}};
			\node at (-4.1,-1.4)  [scale=0.7, rotate=90]{Distance to SELI/ETF};
		\end{tikzpicture}
	\end{subfigure}\hspace{13pt}\begin{subfigure}{0.3\textwidth}
		\centering
		\begin{tikzpicture}
			\node at (0,-1.4) {\includegraphics[scale=0.26]{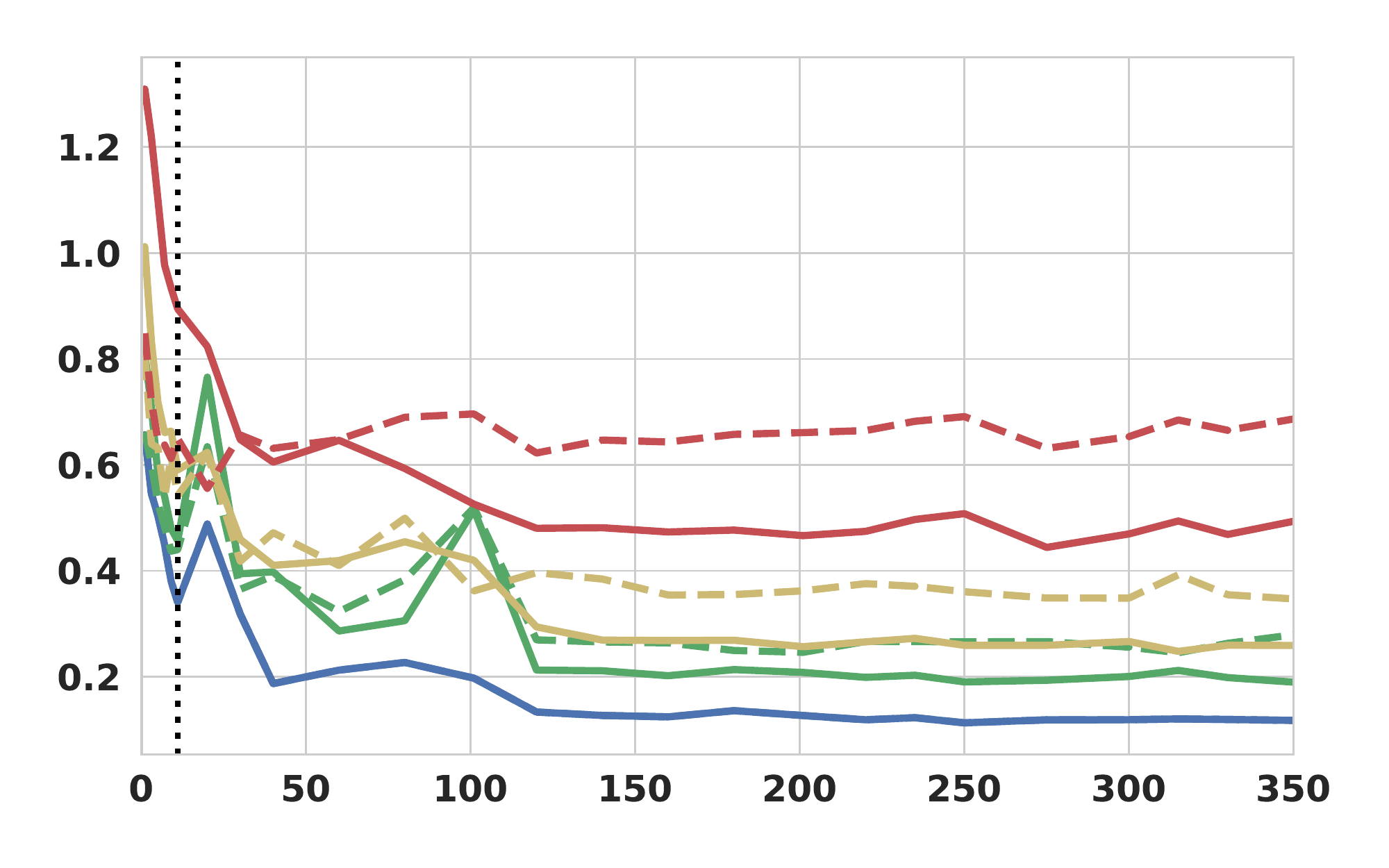}};
			\node at (0,0.2) [scale=0.9]{\textbf{MNIST}};
			\node at (-2.7,-1.4) [scale=0.7, rotate=90]{};
		\end{tikzpicture}
	\end{subfigure}\hspace{13pt}\begin{subfigure}{0.3\textwidth}
		\centering
		\begin{tikzpicture}
			\node at (0,-1.4) {\includegraphics[scale=0.26]{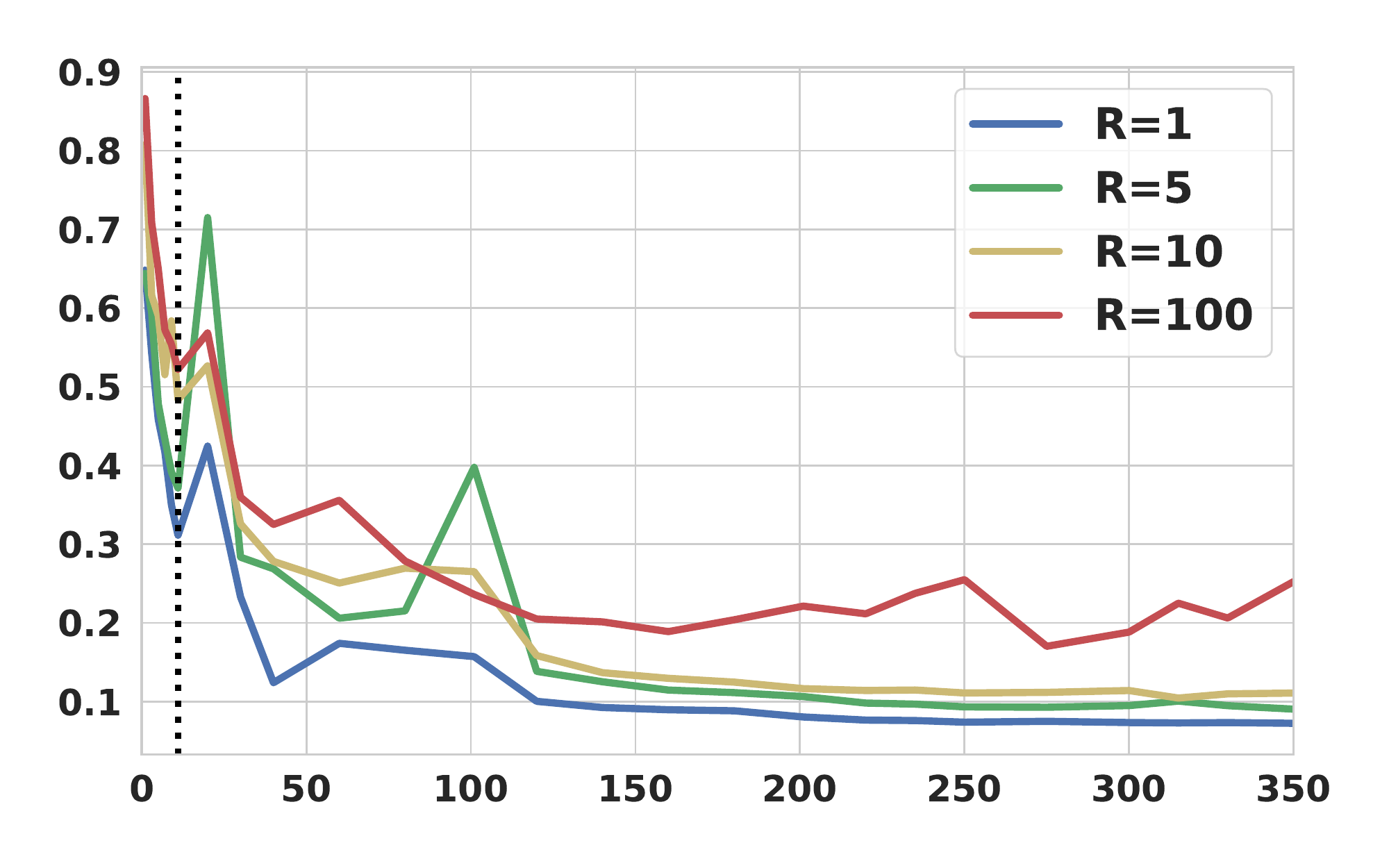}};
			\node at (-2.7,-1.4) [scale=0.7, rotate=90]{};
		\end{tikzpicture}
	\end{subfigure}\vspace{-10pt}
	
	\vspace{5pt}
	\centering
	\hspace{-40pt} \begin{subfigure}{0.3\textwidth}
		\centering
		\begin{tikzpicture}
			\node at (-1.4,-1.4)
			{\includegraphics[scale=0.26]{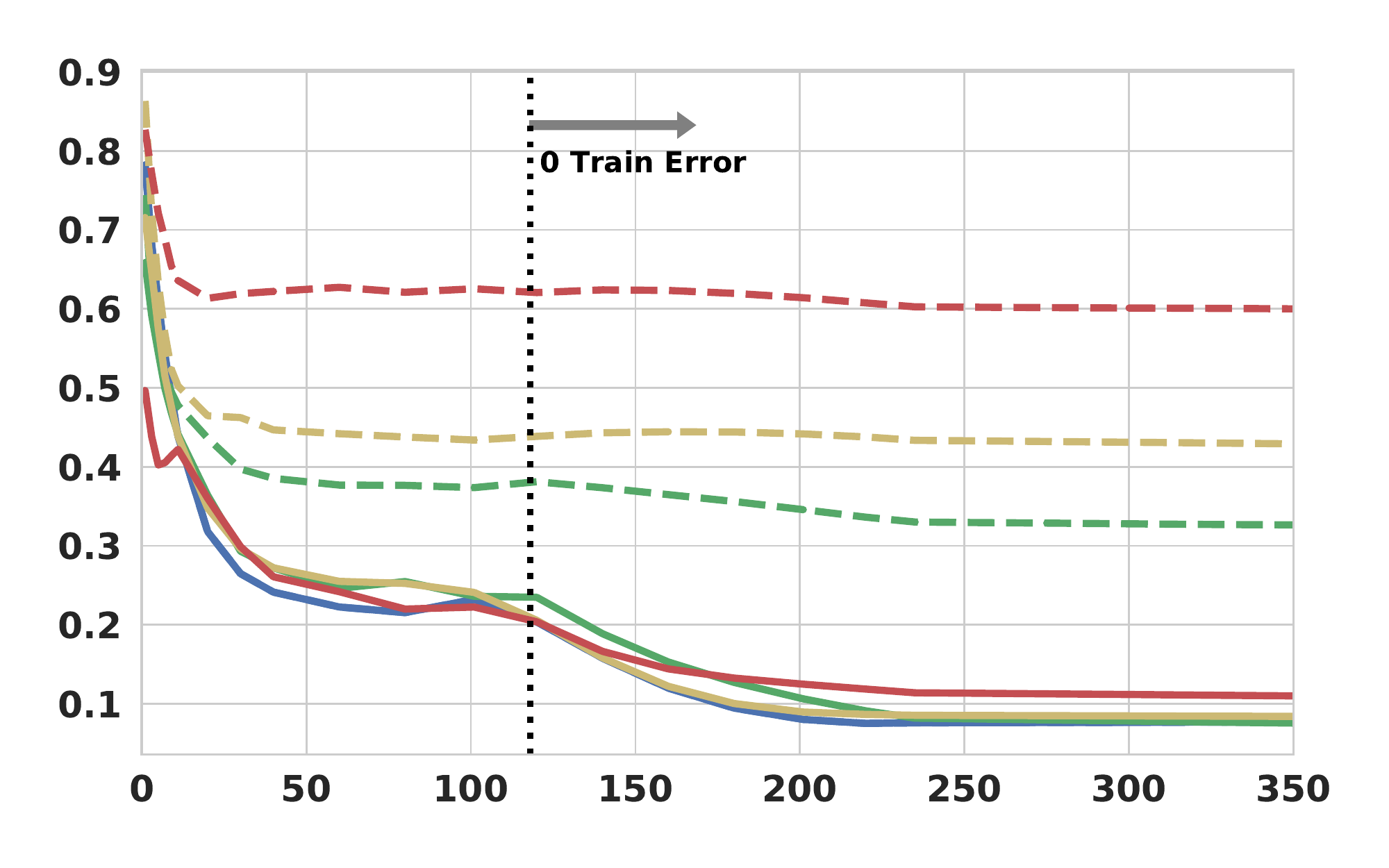}};
			\node at (-1.4,-3.1) [scale=0.7]{Epochs};
			\node at (-4.1,-1.4)  [scale=0.7, rotate=90]{Distance to SELI/ETF};
		\end{tikzpicture}\vspace{-0.2cm}
		\captionsetup{width=0.6\linewidth}\caption{Classifiers}
	\end{subfigure}\hspace{13pt}\begin{subfigure}{0.3\textwidth}
		\centering
		\begin{tikzpicture}
			\node at (0,-1.4) {\includegraphics[scale=0.26]{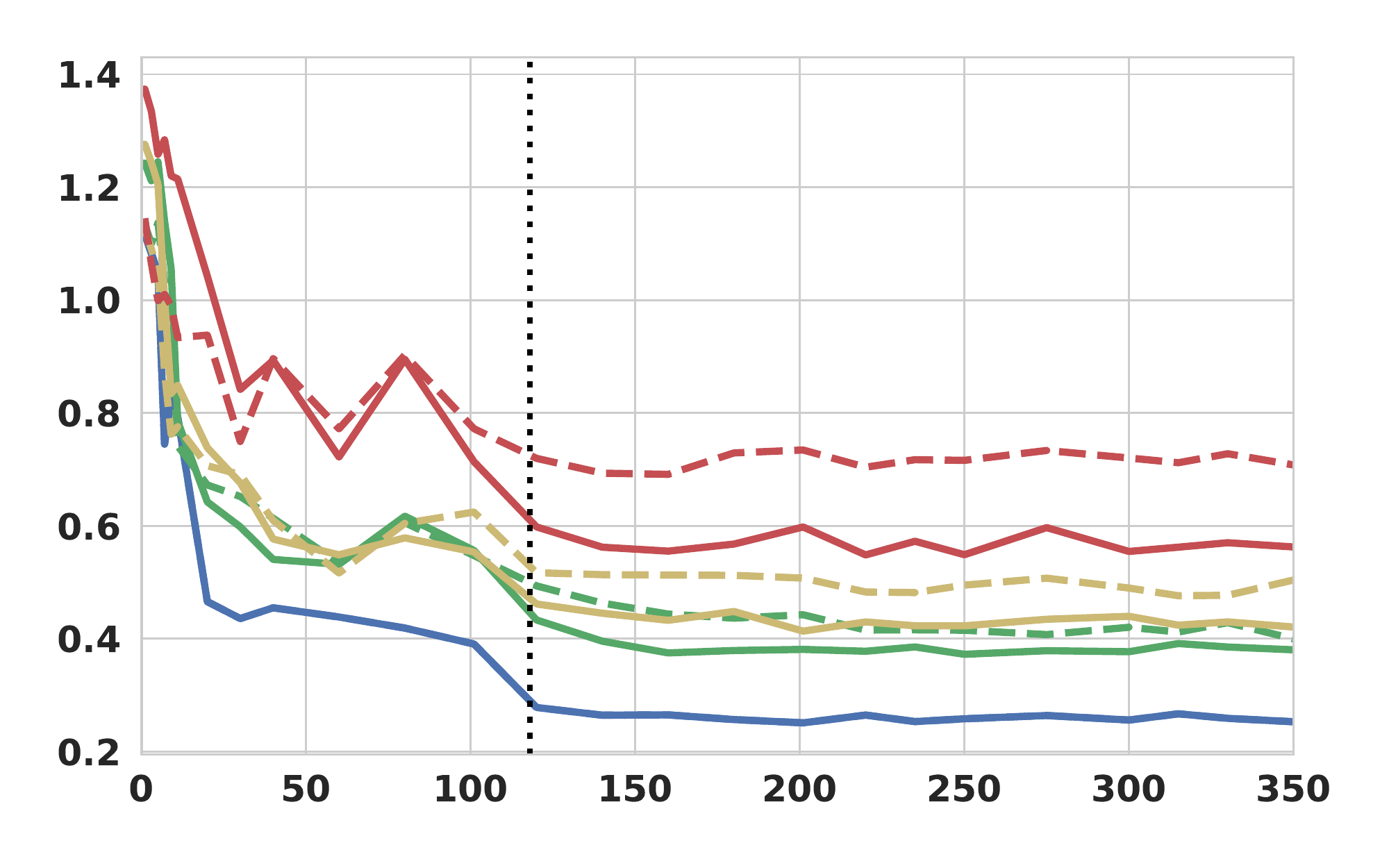}};
			\node at (0,0.2) [scale=0.9]{\new{\textbf{Fashion-MNIST}}};
			\node at (0,-3.1) [scale=0.7]{Epochs};
			\node at (-2.7,-1.4) [scale=0.7, rotate=90]{};
		\end{tikzpicture}\vspace{-0.2cm}
		\captionsetup{width=0.6\linewidth}\caption{Embeddings}
	\end{subfigure}\hspace{13pt}\begin{subfigure}{0.3\textwidth}
		\centering
		\begin{tikzpicture}
			\node at (0,-1.4) {\includegraphics[scale=0.26]{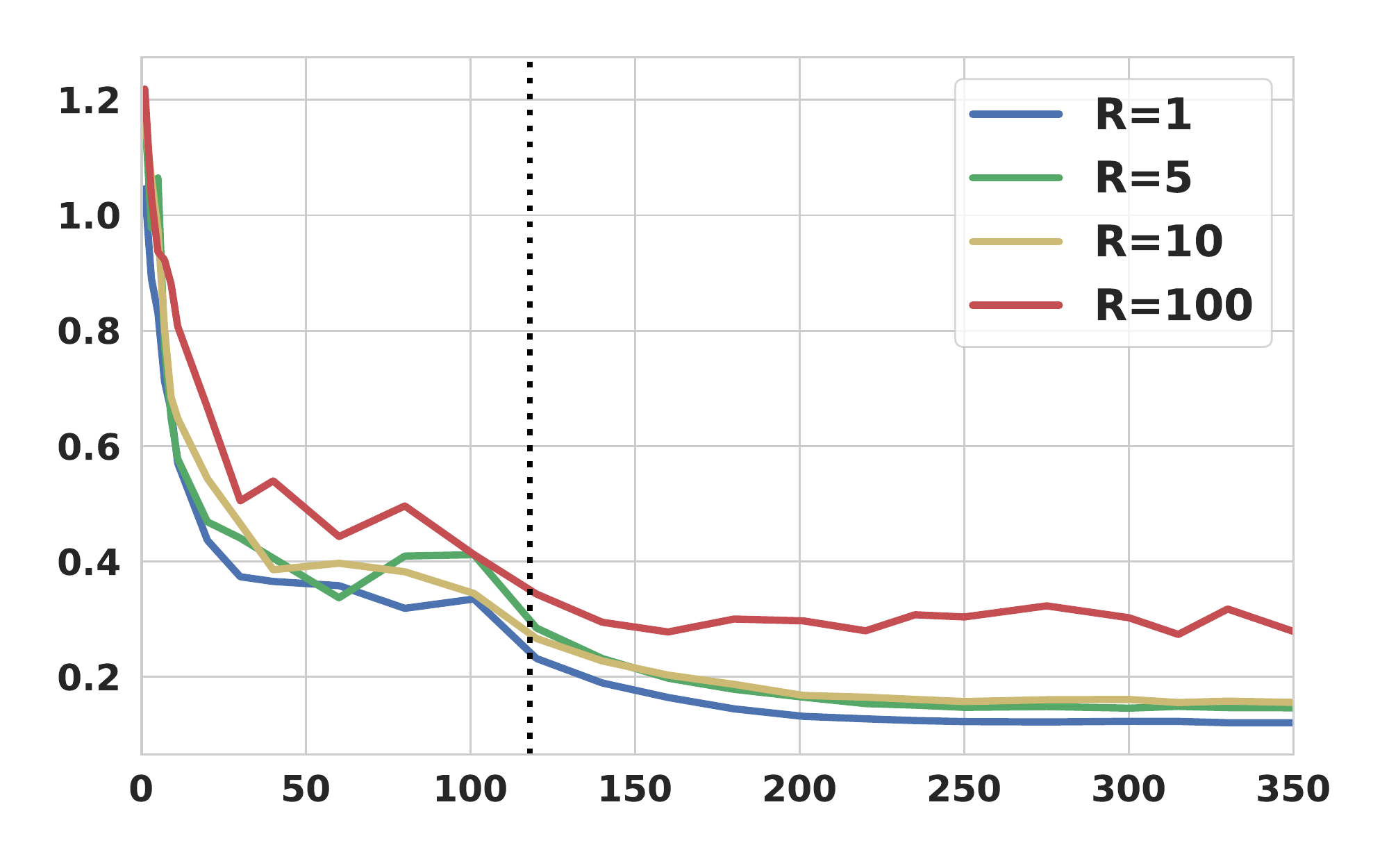}};
			\node at (0,-3.1) [scale=0.7]{Epochs};
			\node at (-2.7,-1.4) [scale=0.7, rotate=90]{};
		\end{tikzpicture}\vspace{-0.2cm}
		\captionsetup{width=0.6\linewidth}\caption{Logits}
	\end{subfigure}\vspace{-5pt}\caption{Convergence of learned classifiers, embeddings and corresponding logits to the SELI (solid lines) vs ETF (dashed lines) geometries, measured using a ResNet-18 model, trained far beyond zero training error on \textbf{\underline{STEP-Imbalanced}} CIFAR10, MNIST and Fashion-MNIST, for different imbalance ratios $R$; see \Sec~\ref{sec:rel} for metrics and discussion. 
	}
	\label{fig:intro_CIFAR}
\end{figure*}

\vp
\noindent\textbf{Why imbalanced?}~~Investigating structural geometric properties under data imbalances is important for the following reasons. First, data-imbalances appear in various learning tasks more often than not. Thus, it is relevant understanding what deep neural networks learn in these settings. Second, the previously discovered, perfectly symmetric, ETF geometry only holds when the data distribution is  itself symmetric. This naturally raises a question, which we answer here: when this data symmetry breaks, is it still possible to characterize potentially non-symmetric geometries? We argue this result is nontrivial, particularly so because the new geometry is richer than the ETF as it is parameterized by not only the number of classes, but also by the imbalance ratio and the fraction of minorities. Third, while previous work has shown the simplified unconstrained-features model is powerful to predict the exact geometry when classes are balanced, it is not a priori known whether the model is also able to predict the corresponding geometry when classes are imbalanced. Our paper shows this to be the case and also uncovers several unique features when data are imbalanced: (i) the UFM solution depends on the regularization parameter, and, (ii) epoch-wise convergence of SGD slows down with increasing imbalance and is generally better for classifiers compared to embeddings. Finally, it is known that data imbalances affect generalization since vanilla training with cross-entropy loss results in poor generalization for minority classes. We envision that understanding how the properties of the learnt geometries vary for minorities versus majorities could help create links between structural results and generalization towards explaining the effectiveness of existing or inspiring new techniques for mitigating imbalances.

\begin{table}[t]
\begin{minipage}[b]{\linewidth}\centering
\scalebox{0.75}{\hspace*{-1.9cm}
\begin{tabular}{|c|c|c|c | c| }
\multicolumn{5}{c}{\large{Geometry of global minimizers for UFM}} \\
\cline{2-5}
\multicolumn{1}{c|}{}&\multicolumn{2}{|c|}{\textbf{UF-RidgeCE} (Eqn.~\eqref{eq:CE})}
&\multicolumn{2}{|c|}{\textbf{UF-SVM} (Eqn.~\eqref{eq:svm_original})} \\
\hline
 \makecell{\makecell{~\\
 \textbf{Balanced}
 \\ ~
 }}
 & 
 \multicolumn{2}{|c|}{\makecell{
\textbf{ETF}\\
\cite{lu2020neural,graf2021dissecting,zhu2021geometric,fang2021exploring,tirer2022extended}
 }}
 & 
 \multicolumn{2}{|c|}{\makecell{
\textbf{ETF}\\
\cite{ULPM}, \textbf{\darkblue [Cor.~\ref{cor:balanced}]}
 }}
 \\ 
 \hline
  \makecell{\makecell{~\\
 \textbf{Imbalanced}
 \\ ~
 }}
 & 
 \multicolumn{2}{|c|}{\makecell{
\textbf{
$\forall\la$: NO SELI
{\darkblue [Prop.~\ref{propo:imbalance_reg}]}
}
\\
\textbf{$\la\rightarrow0$: \textbf{\red{SELI}}
 \darkblue [Prop.~\ref{propo:reg_path}]}
\\
\textbf{$\la<\frac{1}{2}$: NO minority collapse
[\darkblue Sec. \ref{sec:minority}]}
 }}
 & 
 \multicolumn{2}{|c|}{\makecell{
\textbf{\red{SELI}}\\
\textbf{\darkblue [Thm.~\ref{thm:SVM}]}
 }}
 \\
\hline
 \end{tabular}
}
\vspace{-0.0cm}
\caption{\small{Summary of contributions and comparison to most-closely related work. 
}}
\label{table:intro}
\end{minipage}
\end{table}

\vspace{-0.15cm}
\subsection{Related works}\label{sec:intro_related}
\vspace{-0.15cm}
The original contribution by \citet{NC} has attracted lots of attention resulting  in numerous followups within short time period, e.g., \cite{zhu2021geometric,ULPM,fang2021exploring,han2021neural,lu2020neural,mixon2020neural,graf2021dissecting,zhou2022optimization,tirer2022extended}. (See also \cite[\Sec~E]{han2021neural} for a  review of the recent literature.) Several works have proposed and/or used the UFM with CE training to 
analyze theoretical abstractions of NC  \cite{zhu2021geometric,ULPM,fang2021exploring,graf2021dissecting}. \new{Other works analyze the UFM with square loss \cite{mixon2020neural,han2021neural,zhou2022optimization,tirer2022extended} and recent model extensions accounting for additional layers and nonlinearities are studied in \cite{tirer2022extended}. Here,} we drove particular inspiration from \citet{zhu2021geometric}, who  presented a particularly transparent and complete analysis of the optimization landscape of ridge-regularized CE minimization for the UFM under balanced data.  In the same spirit, we also relied on the UFM.  
 However, our work is, to the best of our knowledge, the first explicit geometry analysis for class-imbalanced data. See also Table \ref{table:intro} for a comparison. 

The only previous work on neural collapse with imbalances is  
\cite{fang2021exploring}, which was the first to note that collapse of the embeddings is preserved, but otherwise the geometry might skew away from ETF. Also, \citet{fang2021exploring} first proposed studying the new geometry using the UFM and appropriate convex relaxations. With this setup, they presented an intriguing finding, which they termed \emph{minority collapse}: for large imbalance levels, the minorities' classifiers collapse to the same vector. As  mentioned above, our focus is on the, particularly relevant for deep-learning practice, zero-training error scenarios. This excludes minority collapse by definition. More importantly, we derive an explicit geometric characterization of \emph{both}  embeddings and  classifiers for \emph{both} majorities and minorities and for \emph{all} imbalance levels.  {Specializing these findings to vanishing regularization and imbalance ratio growing to infinity recovers and gives new insights to minority collapse. 
}


Our results also draw from and relate to the literatures on implicit bias, matrix factorization, and imbalanced deep-learning.  {We defer a detailed discussion on these to \Sec~\ref{sec:rel2} of the Supplementary Material. }

\subsection{Organization}

In \Sec~\ref{sec:setup} we setup some necessary terminology and  introduce the unconstrained features model. In \Sec~\ref{sec:UF-SVM} we formally introduce the SELI geometry as the geometry of the global minimizers of the non-convex max-margin minimization over the UFM. We also relate the new SELI geometry to the ETF and derive closed-form expressions for the former in terms of the level of imbalance. Next in \Sec~\ref{Sec:reg},  we investigate the impact of data imbalances on the structure of the ridge-regularized CE minimization with the UFM. In \Sec~\ref{sec:rel} we present experimental results corroborating our theoretical findings. Specifically, we conduct experiments on both synthetic data under the UFM  and on benchmark class-imbalanced datasets. Concluding remarks and some directions for future research are included in \Sec~\ref{sec:out}. 

The proofs of our results are presented in Secs.~\ref{sec:SEL_mat_properties}--\ref{sec:proofs_UF-SVM} of the Supplementary Material (SM). Several additional experimental results on the UFM and on real data are also included in \Sec s \ref{sec:UFM} and \ref{sec:real_exp_SM}, respectively. \Sec~\ref{sec:minority} of the SM discusses in detail implications of our findings to minority collapse. Finally, \Sec~\ref{sec:rel2} includes additional remarks and comparisons to previous works.

\vp
\noindent\textbf{Notation.}~For matrix $\Vb\in\R^{m\times n}$, $\Vb[i,j]$ denotes its $(i,j)$-th entry, $\vb_j$ denotes the $j$-th \emph{column}, $\Vb^T$ its transpose and $\Vb^\dagger$ its Moore-Penrose pseudoinverse.
We denote $\|\Vb\|_F, \|\Vb\|_2$, and, $\|\Vb\|_*$ the Frobenius, spectral, and, nuclear norms of $\Vb$.  $\tr(\Vb)$ denotes the trace of $\Vb$. $\odot$ and $\otimes$ denote Hadammard and Kronecker products, respectively. \new{$\Vb\succ0$ denotes $\Vb$ is positive semidefinite and $\Vb\geq 0$ that $\Vb$ has nonnegative entries.} 
 $\nabla_\Vb\Lc \in\R^{m\times n}$ is the gradient of a scalar function $\Lc(.)$ with respect to $\Vb$. 
We use $\ones_m$ to denote an  $m$-dimensional vector of all ones and $\Id_m$ for the $m$-dimensional identity matrix. For vectors/matrices with all zero entries, we simply write $0$, as dimensions are easily understood from context. $\eb_{j}$ denotes a column with a single non-zero entry of $1$ in the $j$-th entry.


\section{Problem setup}\label{sec:setup}


\vspace{-0.1cm}
We adopt the  \emph{unconstrained feature model} (UFM)  \citep{mixon2020neural,fang2021exploring} in a $k$-class classification setting. Let $
\W_{d\times k} = [\w_1,\w_2, \cdots,\w_k]
$ be the matrix of classifier weights corresponding to the $k$ classes. Here, $d$ is the feature dimension. We assume throughout that $d\geq k-1$. Next, we let 
$
\Hb_{d\times n} = [\h_1, \h_2, \cdots, \h_n]
$ denote a matrix of $n$ feature embeddings, each corresponding to a different example in the training set. We assume each class $c\in[k]$ has $n_c\geq 1$  examples (thus, $n_c$ embeddings) so that $\sum_{c\in[k]}n_c=n.$ Without loss of generality, we assume examples are ordered. Formally, we assume that examples $i=1,\ldots,n_1$ have labels $y_i=1$, examples $i=n_1+1,\ldots,n_1+n_2$ have labels $y_i=2$, and so on. 
The UFM trains the features $\h_i,i\in[n]$ (jointly with the weights $\w_c,c\in[k]$) without any further constraints, i.e., by minimizing the ridge-regularized cross-entropy (CE) loss as follows \cite{zhu2021geometric}:
\begin{align}\label{eq:CE}
(\What_\la,\Hhat_\la):=\arg\min_{\W,\Hb}~\Lc(\W^T\Hb) + \frac{\la}{2} \|\W\|_F^2+\frac{\la}{2}\|\Hb\|_F^2,
\end{align}
where $\Lc(\W^T\Hb):= \sum_{i\in[N]} \log\big(1+\sum_{c\neq y_i}e^{-(\w_{y_i}-\w_c)^T\h_i}\big)$ is the CE loss.

\vp
\noindent\textbf{UFM as two-layer linear net.}~The formulation above does not explicitly specify inputs for each example. 
An alternative view follows by considering training a 2-layer linear net with  hidden dimension $d$, first layer $\Hb$, and second layer $\W$, over $n$ examples  {with $n$-dimensional inputs} $\x_i=\eb_i \in\R^n, \in[n]$ and labels $y_i$ as above: $y_i=1$ for $i=1,\ldots,n_1$, $y_i=2$ for $i=n_1+1,\ldots,n_1+n_2$, and so on. 

\subsection{Unconstrained-features SVM (UF-SVM)}

Since neural-collapse is observed when training with small / vanishing regularization \cite{NC}, it is reasonable to consider an unregularized version of \eqref{eq:CE}. In this special case, gradient descent (with sufficiently small step size) on \eqref{eq:CE} produces iterates that diverge in norm, but converge in direction \cite{lyu2019gradient,ULPM}. In fact, it has been recently shown that 
the GD solutions converge in direction 
  to a KKT  point of the following max-margin classifier \cite{lyu2019gradient,ULPM}: 
\begin{align}\label{eq:svm_original}
(\What,\Hhat)\in\arg\min_{\W,\Hb}~\frac{1}{2}\|\W\|_F^2+ \frac{1}{2}\|\Hb\|_F^2
\quad\quad\text{sub. to}\quad (\w_{y_i}-\w_c)^T\h_i \geq 1,~i\in[n], c\neq y_i.
\end{align}
For convenience, we refer to the optimization problem in \eqref{eq:svm_original} as unconstrained-features SVM (UF-SVM). This minimization (unlike `standard' SVM) is non-convex. Hence, KKT points (thus, GD convergence directions) are not necessarily global minimizers; see discussion in \Sec~\ref{sec:out}. 

\subsection{Class-imbalance model}
To streamline the presentation, we focus on a setting with STEP imbalances. This includes balanced data as special case by setting $R=1$. 

\begin{definition}[$(R,\rho)$-STEP imbalance] In a $(R,\rho)$-STEP imbalance setting with label-imbalance ratio $R\geq 1$ and minority fraction $\rho\in(0,1)$, the following hold. All minority (resp. majority) classes have the same sample size $\nmin$ (resp. $R\nmin$). There are $(1-\rho)k$ majority and $\rho k$ minority classes. Without loss of generality, we assume classes $\{1,\ldots,(1-\rho)k\}$ are majorities.
\end{definition}



\section{Global structure of the UF-SVM: SELI geometry}\label{sec:UF-SVM}

In this section, we characterize the global minimizers of the non-convex program in \eqref{eq:svm_original}. Perhaps surprisingly, we show that they take a particularly simple form that is best described in terms of a \emph{simplex-encoding} of the labels. 

\begin{definition}[\SEL~matrix]\label{def:SEL} The \emph{simplex-encoding label (\SEL) matrix} $\Zhat_{k\times n}$
is such that
\begin{align}
\forall c\in[k], i\in[n]\,:\,\,\,\,\Zhat[c,i] = \begin{cases}
1-1/k & ,\,c=y_i \\
-1/k & ,\,c\neq y_i
\end{cases}.
\end{align}
Onwards, let $\Zhat^T=\Ub\Lambdab\Vb^T$ be the compact SVD of $\Zhat^T$. Specifically, $\Lambdab$ is a positive $(k-1)$-dimensional diagonal matrix and $\Ub_{n\times (k-1)}$, $\Vb_{k\times(k-1)}$ have orthonormal columns. 

\end{definition}

Each column $\zhat_i\in\R^{k}$ of $\Zhat$ represents a class-membership  encoding of datapoint $i\in[n]$. This differs from the vanilla one-hot encoding $\hat{\y}_i=\eb_{y_i}$ in that $\zhat_i=\hat{\y}_i-\frac{1}{k}\ones_k$.
Specifically, $\Zhat$ has exactly $k$ different and affinely independent columns, which together with the zero vector form a $k$-dimensional simplex, motivating the \SEL~name. Finally, note that $\Zhat^T\ones_k=0$; thus,  $\operatorname{rank}(\Zhat)=k-1$.  {We gather useful properties about the eigenstructure of $\Zhat$ in \Sec~\ref{sec:SEL_mat_properties}.}


\begin{theorem}[Structure of the UF-SVM minimizers]\label{thm:SVM}
 Suppose $d\geq k-1$ and a $(R,\rho)$-STEP imbalance setting.  Let $(\What,\Hhat)$ be any solution and $\mathrm{p}_* $ the optimal cost of the UF-SVM in \eqref{eq:svm_original}.
Then, $\mathrm{p}_* = {\|\Zhat\|_*}={\|\Hhat\|_F^2}={\|\What\|_F^2}$. Moreover, the following statements  characterize the geometry of global minimizers in terms of the the \SEL~matrix and its SVD.

\begin{enumerate}[label={(\roman*)},itemindent=0em]

\item\label{thm:global_logits}
 For the optimal logits it holds that
$\What^T\Hhat = \Zhat$. 

\item\label{thm:global_gram}
The Gram matrices satisfy
$
\Hhat^T\Hhat = \Ub\Lambdab\Ub^T
$ and 
$\What^T\What = \Vb\Lambdab\Vb^T.
$

\item\label{thm:global_W}
For some partial orthonormal matrix $\Rb\in\R^{(k-1)\times d}$, 
$
\What=\Rb^T\Lambdab^{1/2}\Vb^T
$
and
$
\Hhat=\Rb^T\Lambdab^{1/2}\Ub^T.
$
\end{enumerate}
\end{theorem}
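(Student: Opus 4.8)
The plan is to bound the (non-convex) UF-SVM cost from below by a convex nuclear-norm program on the logit matrix $\Z=\W^T\Hb$, to solve that program exactly using the spectral structure of $\Zhat$ collected in Section~\ref{sec:SEL_mat_properties}, and then to read off the geometry of any minimizer from the equality cases of the inequalities used. The starting point is the elementary bound $\tfrac12\|\W\|_F^2+\tfrac12\|\Hb\|_F^2\ge\|\W^T\Hb\|_*$ (AM--GM together with the duality of the nuclear and spectral norms). Writing the SVD $\Zhat=\Vb\Lambdab\Ub^T$, the ``sign matrix'' $\G:=\Vb\Ub^T$ has $\|\G\|_2=1$ and $\langle\Zhat,\G\rangle=\tr(\Lambdab)=\|\Zhat\|_*$, and---this is the crucial structural input, taken from Section~\ref{sec:SEL_mat_properties}---$\G$ is a \emph{strictly positive} combination of the margin-constraint gradients, $\G=\sum_{i\in[n],\,c\ne y_i}\beta_{i,c}\,(\eb_{y_i}-\eb_c)\eb_i^T$ with all $\beta_{i,c}>0$ and $\sum_{i,c}\beta_{i,c}=\|\Zhat\|_*$. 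Then for \emph{any} feasible $(\W,\Hb)$,
\[
\tfrac12\|\W\|_F^2+\tfrac12\|\Hb\|_F^2\;\ge\;\|\W^T\Hb\|_*\;\ge\;\langle\W^T\Hb,\G\rangle\;=\;\sum_{i,c}\beta_{i,c}(\w_{y_i}-\w_c)^T\h_i\;\ge\;\sum_{i,c}\beta_{i,c}\;=\;\|\Zhat\|_*,
\]
so $\mathrm{p}_*\ge\|\Zhat\|_*$; conversely $\What_0=\Rb^T\Lambdab^{1/2}\Vb^T,\ \Hhat_0=\Rb^T\Lambdab^{1/2}\Ub^T$ with $\Rb\Rb^T=\Id_{k-1}$ (possible since $d\ge k-1$) is feasible, because $\What_0^T\Hhat_0=\Zhat$, and has cost $\tr(\Lambdab)=\|\Zhat\|_*$. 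Hence $\mathrm{p}_*=\|\Zhat\|_*$.

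Next, for \emph{any} minimizer $(\What,\Hhat)$, the rescaling $(\alpha\What,\alpha^{-1}\Hhat)$ is feasible with cost $\tfrac{\alpha^2}{2}\|\What\|_F^2+\tfrac{1}{2\alpha^2}\|\Hhat\|_F^2$, and optimality at $\alpha=1$ forces $\|\What\|_F^2=\|\Hhat\|_F^2$; since their half-sum equals $\mathrm{p}_*=\|\Zhat\|_*$, each equals $\|\Zhat\|_*$. Moreover the displayed chain must be tight at $(\What,\Hhat)$, which yields two facts: (a) since each summand of $\langle\What^T\Hhat,\G\rangle=\sum_{i,c}\beta_{i,c}$ is at least $\beta_{i,c}>0$, \emph{all} margin constraints are active, i.e. $(\What^T\Hhat)[y_i,i]-(\What^T\Hhat)[c,i]=1$ for every $i$ and $c\ne y_i$; (b) since $\langle\What^T\Hhat,\G\rangle=\|\What^T\Hhat\|_*$ with $\|\G\|_2=1$, the Cauchy--Schwarz equality conditions on each singular triple of $\What^T\Hhat$ force its left singular vectors into the column space of $\Vb$, and because $\Vb^T\ones_k=0$ (immediate from $\Zhat^T\ones_k=0$) this makes every column of $\What^T\Hhat$ sum to zero. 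Reading (a) and (b) one column at a time pins $(\What^T\Hhat)[y_i,i]=1-\tfrac1k$ and $(\What^T\Hhat)[c,i]=-\tfrac1k$, i.e. $\What^T\Hhat=\Zhat$, which is part~\ref{thm:global_logits}.

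For the remaining two parts we now know $\tfrac12\|\What\|_F^2+\tfrac12\|\Hhat\|_F^2=\|\Zhat\|_*=\|\What^T\Hhat\|_*$, i.e. equality holds in $\tfrac12\|\W\|_F^2+\tfrac12\|\Hb\|_F^2\ge\|\W^T\Hb\|_*$. I would isolate, as a short lemma, the characterization of this equality case: equality in AM--GM gives $\|\What\|_F=\|\Hhat\|_F$, and equality in the trace/Cauchy--Schwarz steps forces a \emph{balanced} factorization, namely the existence of $\Rb\in\R^{(k-1)\times d}$ with $\Rb\Rb^T=\Id_{k-1}$ such that $\What=\Rb^T\Lambdab^{1/2}\Vb^T$ and $\Hhat=\Rb^T\Lambdab^{1/2}\Ub^T$ (using $\Zhat=\Vb\Lambdab\Ub^T$, equivalently $\Zhat^T=\Ub\Lambdab\Vb^T$). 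This is part~\ref{thm:global_W}, and part~\ref{thm:global_gram} then follows at once from $\Rb\Rb^T=\Id_{k-1}$: $\Hhat^T\Hhat=\Ub\Lambdab^{1/2}\Rb\Rb^T\Lambdab^{1/2}\Ub^T=\Ub\Lambdab\Ub^T$ and $\What^T\What=\Vb\Lambdab^{1/2}\Rb\Rb^T\Lambdab^{1/2}\Vb^T=\Vb\Lambdab\Vb^T$.

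The step I expect to be the real obstacle is the structural claim invoked at the outset: that the sign matrix $\Vb\Ub^T$ lies in the \emph{relative interior} of the cone generated by the margin-constraint gradients $\{(\eb_{y_i}-\eb_c)\eb_i^T\}$, equivalently that the dual multipliers $\beta_{i,c}$ can all be chosen strictly positive, equivalently that $\Vb\Ub^T$ is positive at each entry $(y_i,i)$ and negative at each entry $(c,i)$, $c\ne y_i$. This is the one place where the $(R,\rho)$-STEP structure genuinely enters: one computes $\Vb\Ub^T$ in closed form for STEP imbalance---its entries depend only on whether $y_i$ and $c$ are majority or minority---and verifies this sign pattern. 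Everything past the computation of $\mathrm{p}_*$ is then bookkeeping with equality cases of standard matrix inequalities (AM--GM, Cauchy--Schwarz, nuclear/spectral duality), together with the fact that $\rank\Zhat=k-1\le d$, so the rank restriction implicit in $\Z=\W^T\Hb$ never binds at the optimum.
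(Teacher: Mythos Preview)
Your proposal is correct and reaches the same conclusions through what is, at heart, the same mechanism as the paper---but you package it more economically. The paper lifts the problem to an SDP on $\X=\begin{bmatrix}\W^T\W & \W^T\Hb\\\Hb^T\W & \Hb^T\Hb\end{bmatrix}$, writes down the Lagrangian dual, reparameterizes to exhibit the candidate certificate $\hat\Bb=\Ub\Vb^T$, then spends a separate lemma proving $\hat\Bb$ is the \emph{unique} dual maximizer (so that complementary slackness must hold with $\hat\Bb$ specifically), and finally reads off the primal structure from complementary slackness. You instead plug the same certificate $\G=\Vb\Ub^T=\hat\Bb^T$ directly into the one-line chain $\tfrac12\|\W\|_F^2+\tfrac12\|\Hb\|_F^2\ge\|\W^T\Hb\|_*\ge\langle\W^T\Hb,\G\rangle\ge\sum_{i,c}\beta_{i,c}=\|\Zhat\|_*$ and derive everything from the equality cases. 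This bypasses the SDP scaffolding and, notably, the dual-uniqueness lemma: you never need to know $\G$ is the only certificate, only that it is one whose coefficients $\beta_{i,c}$ are all strictly positive.

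Both routes hinge on the identical structural computation you correctly flag as the crux: that $(\Ub\Vb^T)\odot\Zhat^T>0$ entrywise under $(R,\rho)$-STEP imbalance (the paper's Lemma~\ref{eq:SVD_elementwise}). Your equality-case extraction of $\What^T\Hhat=\Zhat$ (active constraints from $\beta_{i,c}>0$, then $\ones_k^T\What^T\Hhat=0$ from the singular-space containment, then column-by-column identification) is exactly the content of the paper's complementary-slackness step, rephrased. The ``balanced factorization'' lemma you defer is indeed short: from $\|\What-\Hhat\Ub\Vb^T\|_F^2\ge0$ and the already-established equalities one gets $\What=\Hhat\Ub\Vb^T$ and $\Hhat(\Id-\Ub\Ub^T)=0$, whence $\Hhat^T\Hhat\Ub=\Ub\Lambdab$ and the Gram-matrix identities follow; part~\ref{thm:global_W} then drops out as you indicate. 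What your streamlining buys is fewer moving parts; what the paper's SDP framing buys is a template that plugs directly into the landscape analysis of the regularized problem in Section~\ref{Sec:reg}.
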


We outline the theorem's proof in \Sec~\ref{sec:SVM_proof_sketch} and defer the details to \Sec~\ref{sec:proof_SVM}. 
The theorem provides an explicit characterization of the geometry of optimal embeddings and classifiers that relies around the key finding that the optimal logit matrix is always equal to the \SEL~matrix. We highlight the following key features of this characterization.

\vp
\noindent\textbf{Simplicity.} The lack of symmetry in the imbalanced setting, makes it a priori unclear whether a simple geometry description is still possible, as in the balanced case. But, the theorem shows this to be the case. The key observation is that the optimal logit matrix $\What^T\Hhat$ equals $\Zhat$ (cf. Statement \ref{thm:global_logits}). Then, the Gram matrices of embeddings and classifiers are given simply in terms of the singular factors of the \SEL~matrix (cf. Statements \ref{thm:global_gram},\ref{thm:global_W}).

\vp
\noindent\textbf{Invariance to imbalances.} The theorem's characterization is valid for all types of $(R,\rho)$-STEP imbalances. In particular, equality of the optimal logit matrix to the \SEL~matrix is the key invariant characterization across changing imbalances.  This also implies that at optimality all margins are equal irrespective of the imbalance type. The description of Gram matrices in terms of the SVD of $\Zhat$ is also invariant. Of course, the particular arrangement of columns of $\Zhat$ itself depends on the values of $(R,\rho)$. 
In turn, the singular factors determining the geometry of embeddings and classifiers depend implicitly on the same parameters. Thus, as we show next, the geometry differs for different imbalance levels; see \Fig~\ref{fig:intro_SEL} for an example.

\subsection{Invariant properties: NC and SELI}\label{sec:NC_and_SELI}

Here, we further discuss the geometry of embeddings and classifiers induced by the SVD of the \SEL~matrix. The first realization is that the embeddings collapse under \emph{all} settings.

\begin{corollary}\label{cor:NC}
The UF-SVM solutions satisfy the following property irrespective of imbalance:
\begin{myitemize}[label=\textbf{\emph{(NC)}},itemindent=0.5em]
\item\label{NC} The embeddings collapse to their class means $\hat\h_i=\hat\mub_c:=\frac{1}{n_c}\sum_{j:y_j=c}\hat\h_j$, $\forall c\in[k]$,  $i:y_i=c$.
\end{myitemize}
\end{corollary}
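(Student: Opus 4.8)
The plan is to read off within-class collapse directly from Theorem~\ref{thm:SVM}; once all embeddings of a given class are shown to coincide, each of them trivially equals the class mean $\hat\mub_c$, so the claimed \ref{NC} property follows.

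The first step is to exploit the structure of the \SEL~matrix from Definition~\ref{def:SEL}: the column $\zhat_i$ of $\Zhat$ is a function of $y_i$ alone (namely $\eb_{y_i}-\tfrac{1}{k}\ones_k$). Hence for any two examples $i,j$ with $y_i=y_j$ the rows $\eb_i^T\Zhat^T$ and $\eb_j^T\Zhat^T$ are identical. The second step transfers this to the SVD factor $\Ub$: starting from $\Zhat^T=\Ub\Lambdab\Vb^T$ and using that $\Lambdab$ is positive diagonal (hence invertible) and $\Vb^T\Vb=\Id_{k-1}$, one recovers $\Ub=\Zhat^T\Vb\Lambdab^{-1}$, so equality of rows of $\Zhat^T$ passes to equality of the corresponding rows of $\Ub$; that is, the rows of $\Ub$ are constant within each class. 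Now I would plug in Statement~\ref{thm:global_W}, $\Hhat=\Rb^T\Lambdab^{1/2}\Ub^T$: the $i$-th embedding $\hat\h_i$ equals $\Rb^T\Lambdab^{1/2}$ applied to the transpose of the $i$-th row of $\Ub$, which depends on $i$ only through $y_i$. Therefore $\hat\h_i=\hat\h_j$ whenever $y_i=y_j$, and in particular $\hat\h_i=\frac{1}{n_c}\sum_{j:y_j=c}\hat\h_j=\hat\mub_c$ for every $i:y_i=c$.

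A more self-contained variant uses Statements~\ref{thm:global_logits} and \ref{thm:global_W} only: $\What^T\Hhat=\Zhat$ together with the class-constancy of the columns of $\Zhat$ gives $\What^T(\hat\h_i-\hat\h_j)=0$ when $y_i=y_j$, while Statement~\ref{thm:global_W} shows the columns of $\What$ and of $\Hhat$ span the same $(k-1)$-dimensional subspace $\mathrm{col}(\Rb^T)$; writing $\hat\h_i-\hat\h_j=\What\ab$ then forces $\|\What\ab\|^2=\ab^T\What^T(\hat\h_i-\hat\h_j)=0$, i.e. $\hat\h_i=\hat\h_j$.

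I do not anticipate a real obstacle: the only point needing a word of justification is that equality of rows of $\Zhat^T$ transfers to rows of $\Ub$ (equivalently, that $\hat\h_i-\hat\h_j$ lies in the column space of $\What$), and this is immediate from $\Lambdab$ being invertible and $\Vb$ having full column rank. The substantive work is entirely contained in Theorem~\ref{thm:SVM}; the corollary is a short consequence of its structural description of $\Hhat$.
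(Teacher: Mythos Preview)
Your argument is correct and coincides with the paper's first line of reasoning: infer the corollary from Theorem~\ref{thm:SVM}\ref{thm:global_W} together with the fact that the rows of $\Ub$ are constant within each class (you spell out the justification $\Ub=\Zhat^T\Vb\Lambdab^{-1}$ more explicitly than the paper, which simply says ``$\Ub$ has repeated columns''). The paper also records a second, more self-contained route that you do not mention: for any fixed optimal $\What$, the minimization of \eqref{eq:svm_original} over the $\h_i$'s is separable, strongly convex, and identical across all $i$ with $y_i=c$, so same-class embeddings must agree; this avoids invoking Theorem~\ref{thm:SVM} altogether.
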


This statement can be inferred from Theorem \ref{thm:SVM} (specifically from Statement \ref{thm:global_W} and that $\Ub$ has repeated columns.) A more straightforward argument is possible by directly inspecting the UF-SVM optimization in \eqref{eq:svm_original}. For any fixed (say optimal) $\What$, the minimization over $\h_i$ is: (i) separable and identical for all $i\,:\,y_i=c$ in same class $c$, and (ii)  strongly convex. Hence, for all $i\,:\,y_i=c$, there is unique minimizer corresponding to the fixed $\What$; this must be their class mean. 

Beyond {{(NC)}}, Theorem \ref{thm:SVM} specifies the exact geometry of solutions. The corollary below is a restatement of Theorem \ref{thm:SVM} under the following formalization of what we call the SELI geometry. 


\begin{definition}[SELI geometry]\label{def:SELI} The  embedding and classifier matrices $\Hb_{d\times n}$ and $\W_{d\times k}$ follow the simplex-encoded-labels interpolation  geometry when for some scaling $\alpha>0$:  
\begin{myitemize}[label=\textbf{\emph{(SELI)}},itemindent=1.4em]
\item\label{SELI} 
$
\begin{bmatrix} \W^T \\ {\Hb}^T \end{bmatrix} \begin{bmatrix} \W & \Hb \end{bmatrix} = \alpha \begin{bmatrix}\Vb\Lambdab\Vb^T & \Zhat \\ \Zhat^T & \Ub\Lambdab\Ub^T\end{bmatrix}
,$~~ where $\Zhat=\Vb\Lambdab\Ub^T$ is the \SEL~matrix.
\end{myitemize}
\end{definition}


\begin{corollary}\label{cor:SELI}
The UF-SVM solutions follow the {SELI} geometry,  irrespective of imbalance.
\end{corollary}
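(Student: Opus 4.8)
The plan is to read the corollary off directly from Theorem~\ref{thm:SVM}, of which it is essentially a restatement. Let $(\What,\Hhat)$ be any solution of the UF-SVM. First I would assemble the Gram matrix of the horizontally stacked matrix $[\,\What\ \ \Hhat\,]$:
\[
\begin{bmatrix} \What^T \\ \Hhat^T \end{bmatrix}\begin{bmatrix} \What & \Hhat \end{bmatrix}
= \begin{bmatrix} \What^T\What & \What^T\Hhat \\ \Hhat^T\What & \Hhat^T\Hhat \end{bmatrix}.
\]
By Statement~\ref{thm:global_logits} of Theorem~\ref{thm:SVM}, the top-right block is $\What^T\Hhat = \Zhat$, hence the bottom-left block is $\Hhat^T\What = \Zhat^T$ by transposition; by Statement~\ref{thm:global_gram}, the top-left block is $\What^T\What = \Vb\Lambdab\Vb^T$ and the bottom-right block is $\Hhat^T\Hhat = \Ub\Lambdab\Ub^T$. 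Substituting these four identities, the stacked Gram matrix equals
\[
\begin{bmatrix}\Vb\Lambdab\Vb^T & \Zhat \\ \Zhat^T & \Ub\Lambdab\Ub^T\end{bmatrix},
\]
which is exactly the right-hand side of the \ref{SELI} display with scaling $\alpha = 1$.

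Second, I would check that the SVD conventions agree. Definition~\ref{def:SEL} fixes $\Zhat^T = \Ub\Lambdab\Vb^T$ as the compact SVD of $\Zhat^T$ — so $\Lambdab$ is a positive $(k-1)$-dimensional diagonal matrix and $\Ub,\Vb$ have orthonormal columns — and therefore $\Zhat = \Vb\Lambdab\Ub^T$, which is precisely the factorization of $\Zhat$ required inside Definition~\ref{def:SELI}. Together with $\alpha = 1 > 0$, all the conditions defining the SELI geometry are satisfied. Since Theorem~\ref{thm:SVM} holds for every $(R,\rho)$-STEP imbalance (with $R = 1$ recovering the balanced case), the corollary follows irrespective of imbalance.

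There is no genuine obstacle here: the entire substance lives in Theorem~\ref{thm:SVM}, whose proof is outlined in \Sec~\ref{sec:SVM_proof_sketch}, and the corollary is pure bookkeeping. The only points that require a little care are (i) keeping straight that the factor $\Ub$ pairs with the embeddings while $\Vb$ pairs with the classifiers, and (ii) noting that the two off-diagonal blocks of the stacked Gram matrix are transposes of each other and need not be derived separately. One could also state it keeping the general $\alpha>0$ of Definition~\ref{def:SELI}; the $\alpha=1$ instance already captures the geometry, and the $\alpha$ freedom is only needed later for the normalized regularization-path solutions.
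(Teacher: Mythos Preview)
Your proposal is correct and matches the paper's own argument: the paper explicitly states that Corollary~\ref{cor:SELI} is ``nothing but a reformulation of Theorem~\ref{thm:SVM}\ref{thm:global_logits}--\ref{thm:global_gram} in view of Definition~\ref{def:SELI},'' which is precisely the block-by-block bookkeeping you carry out with $\alpha=1$.
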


 {The \emph{\ref{SELI}} geometry characterization specifies (up to a global positive scaling) the Gram matrices $\G_\W:=\W^T\W$ and $\G_\Hb:=\Hb^T\Hb$, and the logit matrix $\Z:=\W^T\Hb$. Specifically, the diagonals of the two Gram matrices specify the \emph{norms} of the classifiers and of the embeddings \footnote{\new{To simplify the exposition, we assume throughout that the regularization strength is same for embeddings and classifiers. For completeness we treat the general case in \Sec~\ref{sec:different_hyperparams} in the SM, where it is shown that different regularization values do \emph{not} change the SELI geometry as per Definition \ref{def:SELI} apart from introducing a (global) relative scaling factor between the norms of the embeddings and classifiers.}}. These, together with their off-diagonal entries, further specify the \emph{angles} between different classifiers and between different embeddings. Because of the \emph{\ref{NC}} property, the norms and angles of the embeddings $\h_i, i\in[n]$ are uniquely determined in terms of the norms and angles of the mean-embeddings $\mub_c, c\in[k]$. In other words, the Gram matrix $\G_\Hb=\Hb^T\Hb\in\R^{n\times n}$ is such that for all $i\in[n]$, $\G_{\Hb}[i,j]=\G_{\Hb}[i,\ell]$ if $y_j=y_\ell$.  Finally, the norms together with the entries of the logit matrix determine the angles between the two sets of: (a) the $k$ classifiers and (b) the $k$ mean embeddings. Thus, they specify the degree of alignment between the two sets of vectors. In the next section, we show that it is in fact possible to obtain explicit closed-form formulas describing the norms, angles and alignment of classifier and embedding vectors, in terms of the imbalance characteristics and the number of classes.}

\begin{remark}[Why ``SELI''?]
For the UFM, $\Z=\W^T\Hb$ is the learned end-to-end model. According to its definition, the SELI geometry implies $\W^T\Hb=\alpha\Zhat$. Thus, the learned model \emph{interpolates} (a scaling of) the \SEL~matrix, motivates the naming in Definition \ref{def:SELI}. 
\end{remark}

\subsubsection{Special case: Balanced or binary data}

For the special cases of balanced or binary data, Theorem \ref{thm:SVM} recovers the  ETF structure, i.e.  \emph{\ref{SELI}}$\equiv$\emph{\ref{ETF}}. Let $\hat\M=[\hat\mub_1,\ldots,\hat\mub_k]$ denote the matrix of mean embeddings.

\begin{corollary}[$R=1$ or $k=2$]\label{cor:balanced}
Assume balanced data ($R=1$) or binary classification ($k=2$).
 Then, any UF-SVM solution $(\hat\W,\hat\Hb)$  follows the ETF geometry as defined in \cite{NC}:
\begin{myitemize}[label=\textbf{\emph{(ETF)}},itemindent=1em]
\setcounter{enumi}{1}

\item\label{ETF}
$\hat\W=\hat\M$ and $\hat\M^T\hat\M=\hat\W^T\hat\W=\Id_k-\frac{1}{k}\ones_k\ones_k^T$.
\end{myitemize}
\end{corollary}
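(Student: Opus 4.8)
The plan is to obtain Corollary~\ref{cor:balanced} as a specialization of Theorem~\ref{thm:SVM} (equivalently Corollary~\ref{cor:SELI}): once the compact SVD $\Zhat^T=\Ub\Lambdab\Vb^T$ is computed explicitly for $R=1$ or $k=2$, statements \ref{thm:global_logits}--\ref{thm:global_W} read off the \emph{\ref{ETF}} description directly. Throughout I write $\Pibf:=\Id_k-\tfrac1k\ones_k\ones_k^T$ for the orthogonal projector onto $\ones_k^\perp$. Since $\Zhat^T\ones_k=0$ and $\rank\Zhat=k-1$, the matrix $\Vb\Vb^T$ is always the projector onto the column space of $\Zhat$, hence $\Vb\Vb^T=\Pibf$ \emph{regardless} of imbalance; the point of the special cases is that the \emph{singular values} also degenerate.

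First I would compute $\Zhat\Zhat^T$. Grouping the columns $\zhat_i=\eb_{y_i}-\tfrac1k\ones_k$ by class gives $\Zhat\Zhat^T=\sum_{c\in[k]}n_c\,(\eb_c-\tfrac1k\ones_k)(\eb_c-\tfrac1k\ones_k)^T=\diag(\nb)-\tfrac1k\nb\ones_k^T-\tfrac1k\ones_k\nb^T+\tfrac{n}{k^2}\ones_k\ones_k^T$, with $\nb=(n_1,\dots,n_k)^T$. When $R=1$ we have $\nb=\tfrac nk\ones_k$ and this collapses to $\tfrac nk\Pibf$; when $k=2$ a $2\times2$ computation gives $\tfrac n2\Pibf$. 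So in both regimes $\Zhat\Zhat^T=\beta\,\Pibf$ with $\beta=n/k>0$, all $k-1$ nonzero singular values of $\Zhat$ equal $\sqrt\beta$, and therefore $\Lambdab=\sqrt\beta\,\Id_{k-1}$.

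Plugging $\Lambdab=\sqrt\beta\,\Id_{k-1}$ and $\Vb\Vb^T=\Pibf$ into Theorem~\ref{thm:SVM}: statement~\ref{thm:global_gram} gives $\What^T\What=\Vb\Lambdab\Vb^T=\sqrt\beta\,\Pibf$, the \emph{\ref{ETF}} classifier Gram matrix up to the global scale inherent to the definitions of \emph{\ref{SELI}}/\emph{\ref{ETF}}. By \emph{\ref{NC}} (Corollary~\ref{cor:NC}) the embeddings equal their class means, so I would then identify $\hat\M^T\hat\M$: introducing the class-averaging matrix $B\in\R^{n\times k}$, $B[i,c]=\ind{y_i=c}/n_c$, so $\hat\M=\Hhat B$, and using $\Hhat=\Rb^T\Lambdab^{1/2}\Ub^T$ from statement~\ref{thm:global_W} together with $\Ub^T=\Lambdab^{-1}\Vb^T\Zhat$ (from $\Zhat=\Vb\Lambdab\Ub^T$) and the identity $\Zhat B=\Pibf=\Vb\Vb^T$ (valid for any imbalance), one gets $\Ub^TB=\Lambdab^{-1}\Vb^T$, hence $\hat\M=\Rb^T\Lambdab^{-1/2}\Vb^T$ and, using $\Rb\Rb^T=\Id_{k-1}$, $\hat\M^T\hat\M=\Vb\Lambdab^{-1}\Vb^T=\beta^{-1/2}\Pibf$ — again the ETF Gram up to scale. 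Comparing $\What=\Rb^T\Lambdab^{1/2}\Vb^T$ with $\hat\M=\Rb^T\Lambdab^{-1/2}\Vb^T$ and using that $\Lambdab$ is scalar yields $\What=\sqrt\beta\,\hat\M$, i.e.\ the \emph{\ref{ETF}} self-duality. Finally statement~\ref{thm:global_logits} gives $\What^T\Hhat=\Zhat$, whose $k$ distinct columns are exactly the columns of $\Pibf$, so the logit/interpolation part follows as well.

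The computations are routine; I expect the only point requiring care to be bookkeeping of the overall scalar, being explicit that ``\emph{\ref{ETF}} as defined in \cite{NC}''---like Definition~\ref{def:SELI}---is a statement about geometry modulo a common positive scaling (the normalized Gram matrices are $\propto\Pibf$), with the specific scale pinned by the UF-SVM margin constraints. It is also worth recording \emph{why} the argument is confined to $R=1$ or $k=2$: only then is $\Zhat\Zhat^T$ a scalar multiple of $\Pibf$. For $R>1$ and $k\geq3$ its nonzero eigenvalues are distinct, $\Lambdab$ is not a multiple of the identity, $\What$ and $\hat\M$ are only ``$\Lambdab^{\pm1/2}$-duals'' rather than aligned, and the \emph{\ref{SELI}} Gram matrices are genuinely non-ETF.
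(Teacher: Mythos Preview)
Your proof is correct and takes a genuinely more direct route than the paper. The paper obtains Corollary~\ref{cor:balanced} via Corollary~\ref{cor:SELI_to_ETF}, which in turn specializes the closed-form norm and angle formulas of Lemmas~\ref{lem:norms_w}--\ref{lem:angles_wh}; those formulas are themselves derived from the explicit block SVD of $\Zhat$ in Lemma~\ref{lem:Zhat_SVD_general}. In other words, the paper first builds the full $(R,\rho)$-parameterized machinery and then plugs in $R=1$ or $k=2$. You instead observe only what is needed: that $\Zhat\Zhat^T=(n/k)\,\Pibf$ in these two cases (a one-line consequence of Lemma~\ref{lem:Zhat}\ref{state:SEL_prop_Grams}), so $\Lambdab$ is a scalar multiple of the identity, and then read off everything from Theorem~\ref{thm:SVM}\ref{thm:global_gram}--\ref{thm:global_W} without ever constructing $\Ub,\Vb$ explicitly. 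Your identity $\hat\M=\Rb^T\Lambdab^{-1/2}\Vb^T$ via $\Zhat B=\Pibf$ is a clean way to get the mean-embedding Gram matrix that the paper reaches only through the per-class norm and angle computations. What the paper's approach buys is that the same lemmas yield the full SELI geometry for arbitrary $R$; what your approach buys is a self-contained and shorter proof of this corollary that also makes transparent \emph{why} the ETF reduction is confined to $R=1$ or $k=2$ (exactly the cases where $\Lambdab$ is scalar). Your remark about scaling is well placed: the paper's proof via Corollary~\ref{cor:SELI_to_ETF} likewise establishes equality of normalized Gram matrices and alignment, with the literal equality $\hat\W=\hat\M$ holding only after the global scaling inherent in Definition~\ref{def:SELI}.
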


Thus, when data are balanced or binary: (i) the norms of the classifiers and of the embeddings are all equal; (ii) the angles between any two classifiers or any two embeddings are all equal to $\frac{-1/k}{(k-1)/k}=-\frac{1}{k-1}.$; and, (iii) the set of classifiers and the set of embeddings are aligned. 

\subsection{How the SELI geometry changes with imbalances}\label{sec:SELI_changes}

For $k>3$ and $R>1$, \emph{\ref{SELI}}$\not\implies$\emph{\ref{ETF}}.  In this general case, the norms and angles specifying the geometry are determined in terms of the SVD factors of the \SEL~matrix as per Definition \ref{def:SELI}. In \Sec~\ref{sec:SEL_mat_properties}, we give an explicit characterization of these SVD factors. Notably, this allows us to obtain explicit closed-form formulas for the norms, angles and alignment of the SELI geometry in terms of $R, \rho$ and  $k, n_{\min}$. The following lemma is an example: it gives a formula for the ratio of majority and minority norms for the classifiers and embeddings. For simplicity, we focus on the default case of equal numbers of minorities and majorities, i.e. $\rho=1/2$.

\begin{lemma}[Norm ratios]\label{lem:norms} Assume $(R,\nicefrac{1}{2})$-STEP imbalance. Suppose $(\W,\Hb)$ satisfies the \ref{SELI} property. Let $\wmaj,\hmaj$ (resp. $\wmin,\hmin$) denote majority (resp. minority) classifiers and embeddings, respectively. Then,
\begin{align*}
\frac{\|\wmaj\|_2^2}{\|\wmin\|_2^2} = \frac{(1-2/k)\sqrt{R}+\frac{\sqrt{(R+1)/2}}{k}}{(1-2/k)+\frac{\sqrt{(R+1)/2}}{k}}~~\text{ and }~~\frac{\|\hmaj\|_2^2}{\|\hmin\|_2^2} = \frac{\frac{1}{\sqrt{R}}(1-2/k)+\frac{1}{k\sqrt{(R+1)/2}}}{(1-2/k)+\frac{1}{k\sqrt{(R+1)/2}}}.
\end{align*}
Thus, $\|\wmaj\|_2\geq \|\wmin\|_2$ and $\|\hmaj\|_2\leq\|\hmin\|_2$, with equalities if and only if $R=1$ or $k=2$.
\end{lemma}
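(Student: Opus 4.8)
The plan is to extract the two squared norms as diagonal entries of the Gram matrices that the \emph{\ref{SELI}} property pins down, and then to evaluate those diagonals by a direct spectral analysis of a $k\times k$ structured matrix, exploiting the permutation symmetry of the $(R,\nicefrac{1}{2})$-STEP model within the majority block and within the minority block.

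First I would reduce both quantities to diagonal entries of matrix square roots. By Definition~\ref{def:SELI}, $\W^\top\W=\alpha\,\Vb\Lambdab\Vb^\top$ and $\Hb^\top\Hb=\alpha\,\Ub\Lambdab\Ub^\top$, so $\|\w_c\|_2^2=\alpha\,[\Vb\Lambdab\Vb^\top]_{cc}$ and $\|\h_i\|_2^2=\alpha\,[\Ub\Lambdab\Ub^\top]_{ii}$. Since $\Zhat^\top=\Ub\Lambdab\Vb^\top$ is the compact SVD and $\rank(\Zhat)=k-1$, taking PSD square roots yields $\Vb\Lambdab\Vb^\top=(\Zhat\Zhat^\top)^{1/2}$ and $\Ub\Lambdab\Ub^\top=(\Zhat^\top\Zhat)^{1/2}$. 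Writing $\Zhat=\Pbf\hat\Y$ with $\Pbf:=\Id_k-\tfrac1k\ones_k\ones_k^\top$, $\hat\Y\in\R^{k\times n}$ the one-hot label matrix, and $\Db:=\hat\Y\hat\Y^\top=\diag(n_1,\dots,n_k)$, and using $\Pbf^2=\Pbf$, this becomes
\[
\Vb\Lambdab\Vb^\top=(\Pbf\Db\Pbf)^{1/2}=\mathbf{M}^{1/2},\qquad
\Ub\Lambdab\Ub^\top=(\hat\Y^\top\Pbf\hat\Y)^{1/2}=\hat\Y^\top\Db^{-1/2}\,\mathbf{N}^{1/2}\,\Db^{-1/2}\hat\Y ,
\]
with $\mathbf{M}:=\Pbf\Db\Pbf$, $\mathbf{N}:=\Db^{1/2}\Pbf\Db^{1/2}$; the last equality holds because squaring its right-hand side returns $\hat\Y^\top\Db^{-1/2}\mathbf{N}\Db^{-1/2}\hat\Y=\hat\Y^\top\Pbf\hat\Y$. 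Since $[\hat\Y^\top A\hat\Y]_{ii}=A_{y_i,y_i}$ and, by the \emph{\ref{NC}} property (Cor.~\ref{cor:NC}), embeddings within a class coincide, I get $\|\wmaj\|_2^2=\alpha\,[\mathbf{M}^{1/2}]_{cc}$ for a majority index $c$, $\|\hmaj\|_2^2=\alpha\,[\mathbf{N}^{1/2}]_{cc}/(R\nmin)$ for a majority index $c$, and the analogues with $R\nmin$ replaced by $\nmin$ for minorities. A remark worth making is that $\mathbf{M}=BB^\top$ and $\mathbf{N}=B^\top B$ for $B:=\Pbf\Db^{1/2}$, so they share nonzero eigenvalues but not eigenvectors — which is exactly why majorities and minorities scale differently for classifiers than for embeddings.

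Then I would diagonalize $\mathbf{M}$ and $\mathbf{N}$ for the STEP model, where $\Db=\nmin\diag(R\ones_{k/2};\,\ones_{k/2})$. Both matrices commute with every permutation that fixes the majority block $\{1,\dots,k/2\}$ and the minority block, which forces four eigenspaces: the two within-block zero-sum subspaces (each of dimension $k/2-1$) and the plane spanned by the block indicators $\pbf=(\ones_{k/2};0)$, $\qbf=(0;\ones_{k/2})$, which contains $\ker\Pbf\ni\ones_k$. A short computation gives, for $\mathbf{M}$, eigenvalue $\nmin R$ on the within-majority zero-sum space, $\nmin$ on the within-minority zero-sum space, $\nmin\tfrac{R+1}{2}$ on $\mathrm{span}(\pbf-\qbf)$, and $0$ on $\mathrm{span}(\ones_k)$; and for $\mathbf{N}$, the same eigenvalues but with middle eigenvector $\sqrt R\,\pbf-\qbf$ and null vector $\pbf+\sqrt R\,\qbf$. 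Taking square roots of these eigenvalues and summing them weighted by the squared coordinates of $\eb_c$ in the orthonormal eigenvectors produces the diagonals; e.g.\ $[\mathbf{M}^{1/2}]_{cc}=\sqrt{\nmin}\big(\sqrt R\,(1-\tfrac2k)+\tfrac1k\sqrt{\tfrac{R+1}{2}}\big)$ for $c$ majority, the same with $\sqrt R\mapsto 1$ for $c$ minority, and likewise (by the same method) $[\mathbf{N}^{1/2}]_{cc}$ in both cases. Substituting into the reductions above and forming the two quotients, the common factors $\alpha$ and $\sqrt{\nmin}$ — and, in the embedding case, the normalizations $R\nmin$ versus $\nmin$ — cancel, leaving exactly the two displayed formulas. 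The two inequalities are then immediate from the closed forms, since $(1-\tfrac2k)\sqrt R\ge 1-\tfrac2k$ and $\tfrac{1}{\sqrt R}(1-\tfrac2k)\le 1-\tfrac2k$ for $R\ge1$, $k\ge2$, with equality iff $R=1$ or $k=2$ (the latter making $1-\tfrac2k=0$).

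I expect the main obstacle to be bookkeeping rather than anything conceptual: getting the two-dimensional invariant plane exactly right — in particular that the active middle eigenvector of $\mathbf{N}$ is $\sqrt R\,\pbf-\qbf$ and not $\pbf-\qbf$, because $\mathbf{N}$ sits on the $B^\top B$ side — and then carrying the induced projection weights consistently through the $1/n_c$ normalization that separates the embedding diagonal from the classifier diagonal. With the explicit SVD factors of $\Zhat$ recorded in \Sec~\ref{sec:SEL_mat_properties} available, the remaining algebra is entirely routine.
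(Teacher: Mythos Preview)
Your proposal is correct and takes a genuinely different route from the paper. The paper's argument (Lemmas~\ref{lem:norms_w} and~\ref{lem:norms_h}) simply plugs in the explicit SVD factors $\Vb,\Lambdab,\Ub$ already computed in Lemma~\ref{lem:Zhat_SVD_general}, multiplies out $\Vb\Lambdab\Vb^\top$ and $\Ub\Lambdab\Ub^\top$ as block matrices (Eqns.~\eqref{eq:SELI_Gw} and~\eqref{eq:SELI_Gh}), and reads off diagonals. You instead bypass those explicit SVD formulas entirely: you recast $\Vb\Lambdab\Vb^\top=(\Pbf\Db\Pbf)^{1/2}$ and reduce the $n\times n$ embedding Gram to a $k\times k$ square root via the $\hat\Y^\top\Db^{-1/2}(\cdot)\Db^{-1/2}\hat\Y$ conjugation, then diagonalize $\mathbf{M}$ and $\mathbf{N}$ from scratch using the block-permutation symmetry of the STEP model. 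What your approach buys is self-containment and a clean conceptual explanation---your remark that $\mathbf{M}=BB^\top$ and $\mathbf{N}=B^\top B$ share eigenvalues but not eigenvectors is exactly the mechanism that makes majority/minority norm ratios go in opposite directions for classifiers versus embeddings, which the paper's block-matrix computation obscures. What the paper's approach buys is that once the SVD of $\Zhat$ is tabulated (which it needs anyway for Lemma~\ref{eq:SVD_elementwise} in the proof of Theorem~\ref{thm:SVM}), the norm formulas and all the angle formulas of \Sec~\ref{sec:SELI_angles} fall out of a single matrix multiplication with no further spectral work.
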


The fact that CE learns classifiers of larger norm for majorities (i.e. $\|\wmaj\|_2\geq \|\wmin\|_2$), has been empirically observed in the deep imbalanced-learning literature, e.g. by  \citet{kang2020decoupling,KimKim}. Lemma \ref{lem:norms}, not only provides a  theoretical justification for this empirical observation, but it also precisely quantifies the ratio. Moreover, it specifies the norm-ratio between majorities and minorities, not only for classifiers, but also for the learned embeddings.

\begin{figure}[t]
     \centering
     \begin{subfigure}[b]{0.49\textwidth}
         \centering
         \includegraphics[width=0.75\textwidth]{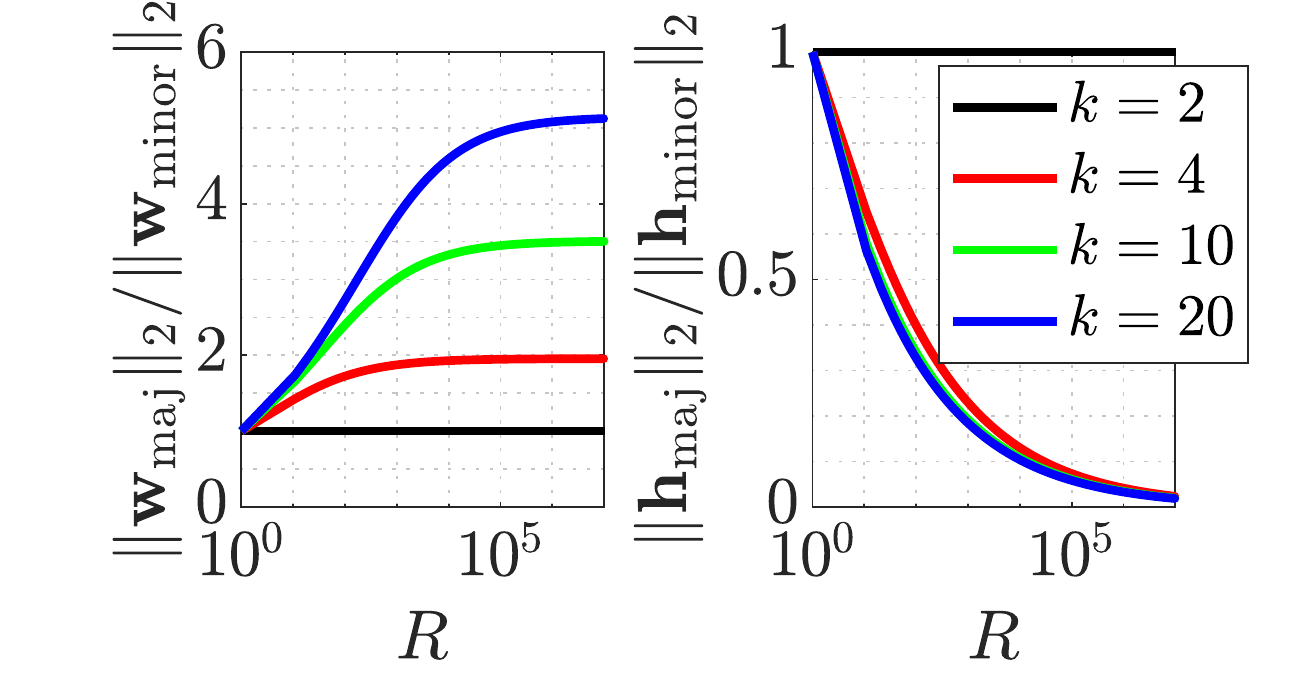}
         \caption{Norms of classifiers and embeddings.}
         \label{fig:SELI_theory_norms_inf}
     \end{subfigure}
     \hfill
     \begin{subfigure}[b]{0.49\textwidth}
         \centering
         \includegraphics[width=0.75\textwidth]{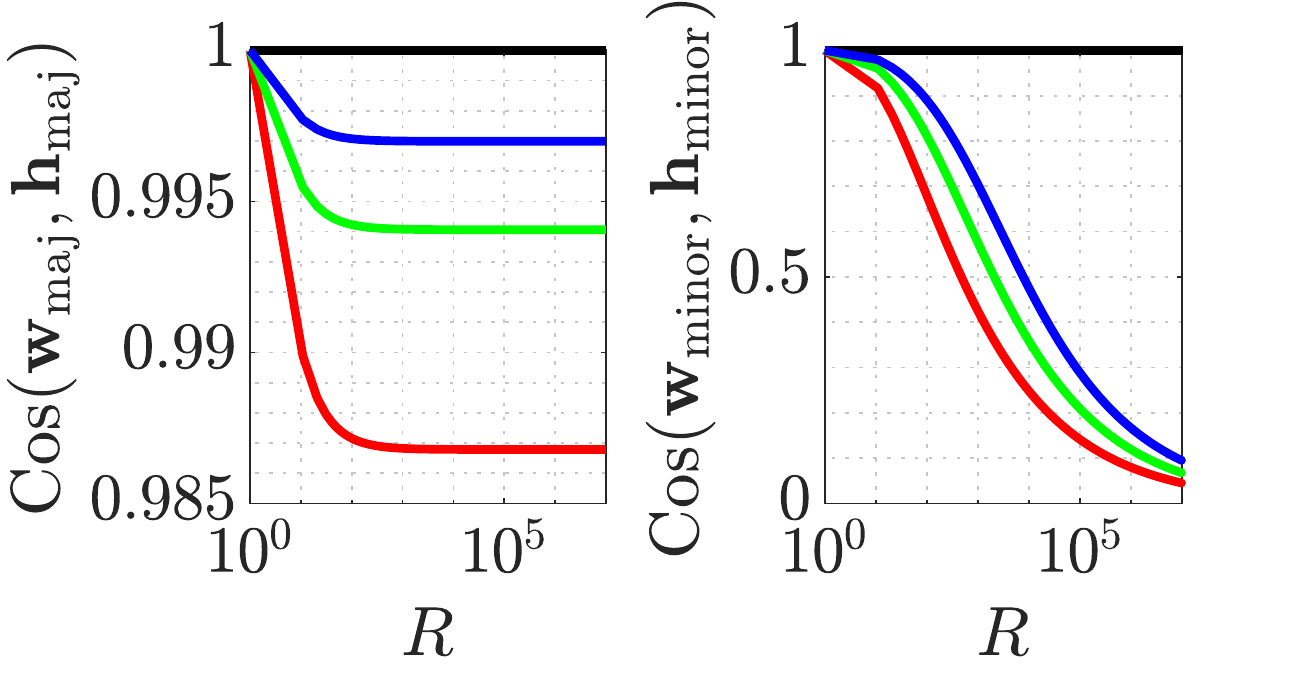}
         \caption{Alignment of classifiers and embeddings.}
         \label{fig:SELI_theory_angles_wh_inf}
     \end{subfigure}
     \\
          \begin{subfigure}[b]{0.75\textwidth}
         \centering
              \includegraphics[width=\linewidth]{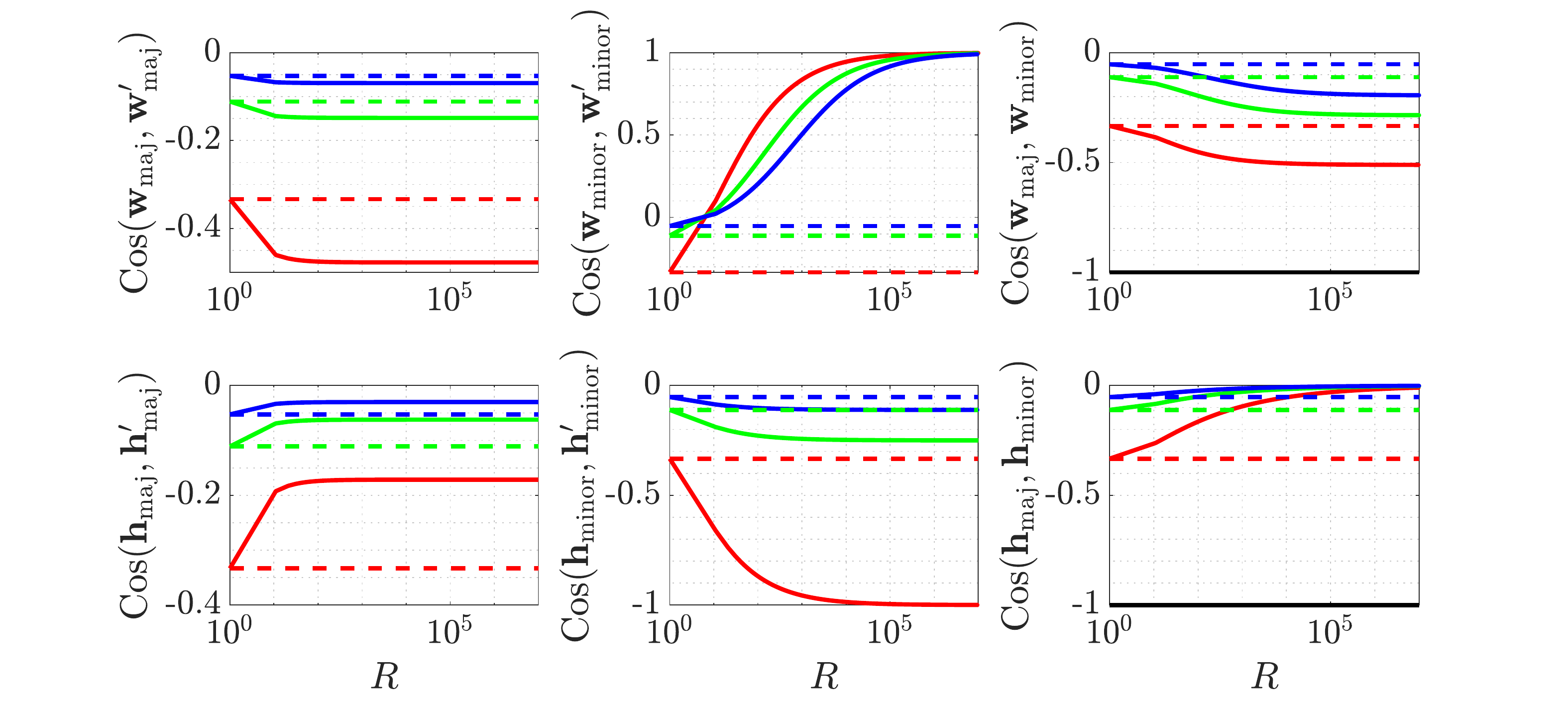}
         \caption{Angles of classifiers and embeddings. Dashed lines show the value $-1/(k-1)$ characterizing the ETF geometry.}
         \label{fig:SELI_theory_angles_inf}
     \end{subfigure}
        \caption{Illustration of how norms and angles of the SELI geometry vary with the number of classes $k$ and the imbalance ratio $R$. The minorities fraction is set to $\rho=1/2$. Unless $R=1$ or $k=2$, the geometry differs from the ETF geometry. For example, majority classifiers have larger norms than minorities (see Fig. \ref{fig:SELI_theory_norms_inf}), minority classifiers are less aligned with their corresponding class mean-embeddings  (see \Fig~\ref{fig:SELI_theory_angles_wh_inf}), and the angle between minority classifiers decreases (see \Fig~\ref{fig:SELI_theory_angles_inf}). The depicted values are computed thanks to closed-form formulas. See \Sec~\ref{sec:SELI_changes} and \Sec~\ref{sec:SELI_properties}.
        }
        \label{fig:SELI_theory_inf}
\end{figure}

\Sec~\ref{sec:SELI_properties} includes additional closed-form formulas for the angles and alignment of classifiers and embeddings. Thanks to these, it is easy to precisely quantify the changes in the geometry as a function of the imbalance ratio and of number of classes. As another example of this, besides Lemma \ref{lem:norms}, the SELI geometry leads to majority / minority angles for the classifiers satisfying the following formulas: (with $(R,\nicefrac{1}{2})$-STEP imbalance; see \Sec~\ref{sec:SELI_angles} for details)
\begin{align}
\Cos{\wmaj}{\wmaj'}&=\frac{-2\sqrt{R}+\sqrt{(R+1)/2}}{(k-2)\sqrt{R}+\sqrt{(R+1)/2}} \nn
\\
\Cos{\wmin}{\wmin'}&=\frac{R-7}{R-7+2k\big(2+\sqrt{(R+1)/2}\big)}\,.\label{eq:cos_min_min}
\end{align}
Here, $\wmaj,\wmaj'$ are two classifier vectors corresponding to any two majority classes (similarly for the minorities.) It is easy to see that both these formulas evaluate to $-1/(k-1)$ for $R=1$. Furthermore, the cosine of majorities is strictly decreasing, while the cosine of minorities is strictly increasing. That is, with increasing imbalance ratio majority classifiers go further away from each other, while minority classifiers come closer. These properties, together with the fact from Lemma \ref{lem:norms} that majority classifiers have larger norms compared to minorities, is visualized in \Fig~\ref{fig:intro_SEL} for $R=10$ and $k=4$. Additionally, \Fig~\ref{fig:SELI_theory_inf} shows how all the norms and angles of the geometry vary with the imbalance ratio $R$ for several values of $k=2,4,10,20.$

\begin{remark}[Asymptotics]
While we focus on understanding the geometry at finite values of $R$, it is possible to evaluate limits for our formulas giving asymptotic characterizations as $R\rightarrow\infty.$ The asymptotic behaviors can also be observed in \Fig~\ref{fig:SELI_theory_inf}. As an example, it is easy to see from \eqref{eq:cos_min_min} that the angle between the minority classifiers collapses to zero in that limit. This phenomenon is called ``minority collapse'' by \citet{fang2021exploring}. Here, we recover it as a special case of Theorem \ref{thm:SVM} and of the SELI characterization. Note also that the rate at which the minority angle collapses is rather slow (e.g. see \Fig~\ref{fig:SELI_theory_angles_inf}). Additional details and discussion on the SELI geometry are included in \Sec~\ref{sec:SELI_properties}. 
\end{remark}

\subsection{Proof sketch}\label{sec:SVM_proof_sketch}
We start from the following (by now, relatively standard) convex relaxation of the UF-SVM \cite{srebro2004maximum,haeffele2019structured,zhu2021geometric,fang2021exploring}: 
\begin{align}\label{eq:nuc_norm_SVM}
\min_{\Z\in\R^{k\times n}} \|\Z\|_*\quad\text{ subj. to }~ \Z[y_i,i]-\Z[c,i]\geq 1,\,\forall c\neq y_i, i\in[n].
\end{align}
The relaxation follows by setting $\Z=\W^T\Hb$, thus $\Z$ is the logit matrix (also, the end-to-end model) of the non-convex UF-SVM. Our key technical innovation is proving that $\Zhat$ is the unique minimizer of \eqref{eq:nuc_norm_SVM}. There are three key ingredients in this. First, is a clever re-parameterization of the dual program to \eqref{eq:nuc_norm_SVM}, introducing the \SEL~matrix $\Zhat$ in the dual:
\begin{align}\label{eq:dual_main}
\max_{\Bb\in\R^{n\times k} }~~~\tr(\Zhat\Bb)\quad\text{sub.~to}~\|\Bb\|_2\leq 1, ~~
 \Bb\ones_k=0, ~~ \Bb\odot\Zhat^T \geq 0\,.
\end{align}
Second, we prove that $\hat\Bb=\Ub\Vb^T$ is the unique maximizer of the re-parameterized dual problem in \eqref{eq:dual_main}. While it is not hard to check that $\hat\Bb$ optimizes a relaxation of \eqref{eq:dual_main}, it is far from obvious that $\hat\Bb$ is unique, and, even more that it satisfies the third constraint. The key technical challenge here is that the third constraint acts entry-wise on $\Bb$. In fact, to proceed with the proof we need that the constraint is \emph{not} active, i.e. $\hat\Bb\odot\Zhat^T > 0$, or equivalently, that the sign pattern of the entries of $\hat\Bb$ agrees with the sign pattern of the transpose SEL~matrix $\Zhat^T$. We prove this by an explicit construction of the singular factors $\Ub,\Vb$ exploiting the structure of the \SEL~matrix. Once we have shown that $\hat\Bb$ is the unique maximizer and is strictly feasible, we use the KKT conditions to prove that $\Zhat$ is the unique minimizer of the nuclear-norm relaxation in \eqref{eq:nuc_norm_SVM}. To do this, we leverage that strict feasibility of $\hat\Bb$ implies by complementary slackness all constraints in the primal \eqref{eq:nuc_norm_SVM} must be active at the optimum. The proof of the theorem completes by arguing that the relaxation \eqref{eq:dual_main} is tight when $d\geq k-1$ allowing us to connect the UF-SVM minimizers to the SEL~matrix $\Zhat$. See \Sec~\ref{sec:proof_SVM} for details.

\begin{remark}[Comparison to literature]\label{remark:comparison}
The common analysis strategy in \emph{all} other related works is deriving tight bounds on the CE loss (or related quantities, such as the minimum margin), and, then identifying the structure in the parameters that achieves those bounds. For example, \cite{zhu2021geometric,graf2021dissecting,fang2021exploring} lower bound the CE loss and \cite{ULPM} upper bounds the minimum margin, all using a similar elegant argument based on Cauchy-Schwartz and Jensen inequalities. The ETF geometry is then uncovered by recognizing that it uniquely achieves those bounds. It is not clear how to employ such exercises in the presence of imbalances, due to the absence of  symmetry properties (e.g. alignment of classifiers with embeddings). Our proof of Theorem \ref{thm:SVM} is more direct and is in large enabled by identifying the key role played by the logit~matrix.
\end{remark}




\section{The role of regularization}\label{Sec:reg}
In this section, we focus on the ridge-regularized CE loss minimization in \eqref{eq:CE}. Specifically, we  study the geometry of solutions $(\What_\la,\Hhat_\la)$ as a function of both the imbalance and the regularization parameter $\la$.

\subsection{Global minimizers as solutions to a convex relaxation}\label{sec:nuc_norm_ridge}

The regularized CE minimization in \eqref{eq:CE} is non-convex. Yet, its  landscape is benign and the global solution can be described in terms of the solution to a convex relaxation program \cite{zhu2021geometric}. 

\begin{theorem}[Reformulated from \cite{zhu2021geometric}]\label{thm:regularized}
Let $\la>0$, $d> k-1$ and a $(R,\rho)$-STEP imbalance setting. Let $\Zhat_\la\in\R^{k\times n}$ be the unique minimizer of the convex nuclear-norm-regularized loss minimization,
\begin{align}\label{eq:nuc_norm_reg}
\Zhat_\la := \arg\min_\Z\, \Lc(\Z)+\la\|\Z\|_*\,,
\end{align}
and, denote $\Zhat_\la=\Vb_\la\Lambdab_\la\Ub_\la^T$ its SVD. Any stationary point of \eqref{eq:CE} satisfies  exactly one of the following two. Either it is a strict saddle, or, it is a global minimizer $(\What_\la,\Hhat_\la)$ and satisfies 
\begin{align}\label{eq:geometry_reg}
\begin{bmatrix} \What_\la^T \\ {\Hhat_\la}^T \end{bmatrix} \begin{bmatrix} \What_\la & \Hhat_\la \end{bmatrix} = 
\begin{bmatrix}\Vb_\la \\ \Ub_\la \end{bmatrix}\Lambdab_\la \begin{bmatrix}\Vb_\la^T & \Ub_\la^T\end{bmatrix}.
\end{align}
\end{theorem}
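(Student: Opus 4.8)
The plan is to run the convex-relaxation / benign-landscape argument of \cite{zhu2021geometric} and observe that nothing in it uses $n_1=\cdots=n_k$, so it applies verbatim under $(R,\rho)$-STEP imbalance. Two ingredients do the work: (a) the variational form of the nuclear norm, $\|\Z\|_*=\min\{\tfrac12(\|\W\|_F^2+\|\Hb\|_F^2)\,:\,\W^T\Hb=\Z\}$, whose minimizing factorizations are exactly the ``balanced'' ones (equal Frobenius energy, aligned singular vectors); and (b) the first-order stationarity equations of \eqref{eq:CE}, namely $\Hb\Pbf^T+\la\W=0$ and $\W\Pbf+\la\Hb=0$, where $\Pbf:=\nabla_\Z\Lc(\Z)\big|_{\Z=\W^T\Hb}$. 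Differentiating $\Lc$ shows each column of $\Pbf$ sums to zero, i.e. $\ones_k^T\Pbf=0$, a fact I use repeatedly.

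\emph{Global minimizers.} First I would combine (a) with convexity of $\Lc$ to get, for any $(\W,\Hb)$ with $\Z=\W^T\Hb$, that $\Lc(\W^T\Hb)+\tfrac\la2\|\W\|_F^2+\tfrac\la2\|\Hb\|_F^2\ \ge\ \Lc(\Z)+\la\|\Z\|_*\ \ge\ \Lc(\Zhat_\la)+\la\|\Zhat_\la\|_*$, with equality throughout iff $\Z=\Zhat_\la$ and the factorization is balanced (coercivity for $\la>0$ gives existence of $\Zhat_\la$; uniqueness follows as in \cite{zhu2021geometric} from strict convexity of $\Lc$ in the ``margin'' coordinates plus a tie-break for the nuclear-norm term). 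Since $\Lc$ is invariant under $\Z\mapsto\Z+\ones_k v^T$, the minimizer satisfies $\ones_k^T\Zhat_\la=0$, hence $\rank(\Zhat_\la)\le k-1<d$, so balanced factorizations of $\Zhat_\la$ exist and are precisely $\W=\Rb^T\Lambdab_\la^{1/2}\Vb_\la^T$, $\Hb=\Rb^T\Lambdab_\la^{1/2}\Ub_\la^T$ with $\Rb\Rb^T=\Id_{k-1}$; plugging these into the left-hand side of \eqref{eq:geometry_reg} reproduces its right-hand side. Thus the global minimizers of \eqref{eq:CE} are exactly the balanced factorizations of $\Zhat_\la$, and all satisfy \eqref{eq:geometry_reg}.

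\emph{Every other stationary point is a strict saddle.} Let $(\W,\Hb)$ be stationary, $\X=\W^T\Hb$, $\Pbf=\nabla\Lc(\X)$. From (b) I would first show $\operatorname{colsp}(\W)=\operatorname{colsp}(\Hb)=:S$, and since $\ones_k^T\Pbf=0$ forces $\rank(\W),\rank(\Hb)\le k-1$, we get $\dim S\le k-1<d$, so there is a unit $u\in\R^d$ with $u^T\W=u^T\Hb=0$. Then I split on $\|\Pbf\|_2$. If $\|\Pbf\|_2>\la$, I perturb along $\W(t)=\W+t\,ua^T$, $\Hb(t)=\Hb+t\,ub^T$ with $(a,b)$ equal-norm (scaled) leading singular vectors of $\Pbf$; orthogonality of $u$ to $S$ gives $\W(t)^T\Hb(t)=\X+t^2ab^T$ and $\|\W(t)\|_F^2=\|\W\|_F^2+t^2\|a\|^2$ (likewise for $\Hb$), so the restricted objective is $\psi(t^2)$ with $\psi(s)=\Lc(\X+s\,ab^T)+\tfrac\la2 s(\|a\|^2+\|b\|^2)+\mathrm{const}$ and $\psi'(0)=a^T\Pbf b+\tfrac\la2(\|a\|^2+\|b\|^2)=\|a\|^2(\la-\|\Pbf\|_2)<0$; hence the Hessian has curvature $2\psi'(0)<0$ along this direction, a strict saddle. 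If instead $\|\Pbf\|_2\le\la$, I use (b) once more: restricting to $S$ gives full-row-rank $\tilde\W,\tilde\Hb$ satisfying $\tilde\W(\Pbf\Pbf^T-\la^2\Id)=0$, so $\la$ is the top singular value of $\Pbf$ and $\operatorname{colsp}(\W),\operatorname{colsp}(\Hb)$ sit inside its leading left/right singular subspaces; a short computation then writes $\X$, $\W^T\W$, $\Hb^T\Hb$ in terms of a single PSD matrix supported on those subspaces, shows $-\Pbf/\la\in\partial\|\X\|_*$ (the residual after subtracting the sign term has spectral norm $\le1$ and is orthogonal to the singular subspaces of $\X$), whence $\X=\Zhat_\la$ by uniqueness, and simultaneously yields $\|\W\|_F^2=\|\Hb\|_F^2=\|\Zhat_\la\|_*$, i.e. a balanced factorization, so a global minimizer obeying \eqref{eq:geometry_reg}.

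\emph{Main obstacle.} The delicate part is the case $\|\nabla\Lc(\X)\|_2\le\la$: showing such a stationary point must be convex-optimal \emph{and} balanced. This needs the stationarity equations to be exploited carefully to lock the column spaces of $\W$ and $\Hb$ into the leading singular subspace of $\nabla\Lc(\X)$, and then a matching against the nuclear-norm subdifferential at $\X$; the bookkeeping with that subspace and the associated PSD factor is where the real work lies. Note that the hypothesis $d>k-1$ is used exactly to produce the free direction $u$ in the negative-curvature construction; without it one can have rank-constrained spurious minima.
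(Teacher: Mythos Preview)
Your proposal is correct and follows essentially the same route as the paper: both recognize that the benign-landscape argument of \cite{zhu2021geometric} goes through verbatim under class imbalance, and both pinpoint the observation $\ones_k^T\nabla_\Z\Lc(\Z)=0$ (hence $\W\ones_k=0$ at any stationary point, so $\rank(\W)\le k-1$) as the ingredient that relaxes the dimension assumption from $d>k$ to $d>k-1$. The paper's own proof section is in fact much terser than your outline: it simply cites \cite[Theorem~3.2]{zhu2021geometric}, records that the balanced-data assumption there is never used, proves uniqueness of $\Zhat_\la$ separately (via strict convexity of $\ell_y$ on $\{\ones_k\}^\perp$, exactly your ``margin coordinates'' remark), and supplies the rank bound $\What_\la\ones_k=0$ as Lemma~\ref{lem:W_ones}. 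Your additional observation that $\operatorname{colsp}(\W)=\operatorname{colsp}(\Hb)$ at any stationary point (from $\W=-\tfrac1\la\Hb\Pbf^T$ and $\Hb=-\tfrac1\la\W\Pbf$) is a clean way to get a single $u$ orthogonal to both factors for the negative-curvature construction, but is otherwise the same mechanism.
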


First, the statement ensures that any first-order method escaping strict saddles finds a stationary point that is a global minimizer \cite{zhu2021geometric}. Moreover, 
it describes the structure of the global minimizers of \eqref{eq:CE} in terms of $\Zhat_\la$, the solution to the \emph{convex}  minimization in \eqref{eq:nuc_norm_reg}.
Structurally, the characterization in \eqref{eq:geometry_reg} resembles the characterization in Theorem \ref{thm:SVM} regarding the UF-SVM. However, Theorem \ref{thm:SVM} goes a step further and gives an \emph{explicit} form for the logit matrix, namely the \SEL~matrix $\Zhat$. Instead, $\Zhat_\la$ in Theorem~\ref{thm:regularized} is given implicitly as the solution to a convex program. In the remaining of this section, we ask: \emph{how does $\Zhat_\la$ compare to $\Zhat$ for different values of the regularizer?} Also, \emph{how does the answer depend on the imbalance level?}

\begin{remark}\label{rem:nuc_norm_ridge} 
Although not stated explicitly in this form, Theorem \ref{thm:regularized} is essentially retrieved from the proof of \citep[Theorem 3.2]{zhu2021geometric} with two small adjustments. First, \citet{zhu2021geometric} only considers balanced data. Here, we realize  their proof actually carries over to the imbalanced setting. Second (less important), we relax their assumption $d> k$ to $d> k-1$ thanks to a simple observation: $\ones_{k}^T\nabla_{\Z} \Lc(\Z) = \zeros$, hence the CE gradient drops rank (see \Sec~\ref{sec:proof_regularized_thm} for details).
\end{remark}

\subsection{Regularization matters}\label{sec:reg_matters}

For balanced data, previous works have shown that the minimizers $\What_\la,\Hhat_\la$ of \eqref{eq:CE} satisfy the \emph{\ref{NC}} and \emph{\ref{ETF}} properties (up to scaling by a constant) for \emph{every} value of the regularization parameter $\la>0$ \cite{zhu2021geometric} (see also \cite{graf2021dissecting,fang2021exploring,lu2020neural}.) In our language, \textbf{for all $\la>0$, there exists} scalar $\alpha_\la$ such that a scaling $(\alpha_\la\What_\la,\alpha_\la\Hhat_\la)$ of any global solution of the regularized CE minimization in \eqref{eq:CE} satisfies the ETF geometry. 
Thus,\emph{
for balanced data, up to a global scaling, the geometry is: (i) insensitive to $\la>0$ and (ii) the same as that of the UF-SVM minimizers. 
}


Here, we show that the situation changes drastically with imbalances: the regularization now plays a critical role and the solution is never the same as that of UF-SVM for finite $\la$. 
\begin{propo}[Imbalanced data: Regularization matters]\label{propo:imbalance_reg}
Assume imbalanced data \emph{and} $k>2$. There \textbf{does not exist} finite $\lambda>0$ and corresponding scaling $\alpha_\la$ such that the scaled solution $(\alpha_\la\What_\la,\alpha_\la\Hhat_\la)$ of \eqref{eq:CE} follows the \ref{SELI} geometry. Equivalently, there does not exist $\la>0$ and $\alpha_\la$ such that a scaling of the UF-SVM solution solves \eqref{eq:CE}.
\end{propo}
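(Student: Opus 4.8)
By Theorem~\ref{thm:regularized} the logit matrix $\What_\la^T\Hhat_\la$ of any global minimizer of \eqref{eq:CE} equals $\Zhat_\la$, the unique solution of the convex program \eqref{eq:nuc_norm_reg}. Since the \ref{SELI} property for $(\alpha_\la\What_\la,\alpha_\la\Hhat_\la)$ requires, in particular, that its logit matrix $\alpha_\la^2\Zhat_\la$ be a strictly positive multiple of the \SEL~matrix $\Zhat$, the proposition is equivalent to the following statement: \emph{for imbalanced data ($R>1$) with $k>2$, there is no $\la>0$ such that $\Zhat_\la=\alpha\Zhat$ for some $\alpha>0$.} I would prove this by contradiction: assume such $\la,\alpha$ exist, and extract a contradiction from (a) the first-order optimality condition for \eqref{eq:nuc_norm_reg} and (b) the singular spectrum of $\Zhat$.

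The first ingredient is the CE gradient at a scaled \SEL~matrix. Because every column of $\Zhat$ is a permutation of $\eb_{y}-\tfrac1k\ones_k$, the per-example softmax probabilities induced by the logits $\alpha\Zhat$ are identical across all examples, and a short computation using $\Zhat=\Y-\tfrac1k\ones_k\ones_n^T$ (with $\Y$ the one-hot label matrix, $\Y[c,i]=\ind{c=y_i}$) gives the clean relation $\nabla_\Z\Lc(\alpha\Zhat)=-\tfrac{k}{e^{\alpha}+k-1}\,\Zhat$, i.e. the CE gradient at $\alpha\Zhat$ is itself a strictly negative multiple of $\Zhat$. The optimality condition $\zeros\in\nabla_\Z\Lc(\Zhat_\la)+\la\,\partial\|\Zhat_\la\|_*$ then forces $\partial\|\alpha\Zhat\|_*$ to contain the strictly positive multiple $\tfrac{k}{\la(e^{\alpha}+k-1)}\,\Zhat$ of $\Zhat$. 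Writing $\Zhat=\Vb\Lambdab\Ub^T$ (the compact SVD of Definition~\ref{def:SEL}), one has $\partial\|\alpha\Zhat\|_*=\{\Vb\Ub^T+\M:\ \Vb^T\M=0,\ \M\Ub=0,\ \|\M\|_2\le1\}$; requiring an element of this set to equal $\beta\Vb\Lambdab\Ub^T$ for some $\beta>0$ forces $\M=\Vb(\beta\Lambdab-\Iden)\Ub^T$, and then $\Vb^T\M=0$ together with $\Vb,\Ub$ having orthonormal columns forces $\beta\Lambdab=\Iden$. So a necessary consequence of the assumption is that $\Lambdab$ is a multiple of the identity, i.e. all $k-1$ nonzero singular values of $\Zhat$ are equal.

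It remains to rule out this flat spectrum under imbalance with $k>2$. Since $\|\Zhat\|_F^2=\sum_{i\in[n]}\big[(1-\tfrac1k)^2+(k-1)\tfrac1{k^2}\big]=n\tfrac{k-1}{k}$, a flat spectrum would make each nonzero eigenvalue of $\Zhat\Zhat^T$ equal to $n/k$. On the other hand, expanding $\Zhat\Zhat^T=\Db-\tfrac1k\nb\ones_k^T-\tfrac1k\ones_k\nb^T+\tfrac{n}{k^2}\ones_k\ones_k^T$ with $\Db=\diag(n_1,\dots,n_k)$ and $\nb=(n_1,\dots,n_k)^T$, one checks that whenever $a,b$ are two classes in the same group the vector $\eb_a-\eb_b$ is orthogonal to both $\ones_k$ and $\nb$, hence an eigenvector of $\Zhat\Zhat^T$ with eigenvalue $n_a\in\{R\nmin,\nmin\}$. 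Because $k>2$ and both groups are nonempty, at least one group has $\ge2$ classes, so such a pair exists; and since $n/k=\nmin\big((1-\rho)R+\rho\big)$, neither $R\nmin$ nor $\nmin$ equals $n/k$ unless $R=1$. For $R>1$ this contradicts the flat spectrum, proving the equivalent statement and hence the proposition; the ``no scaling of the UF-SVM solution solves \eqref{eq:CE}'' reformulation is then immediate because the UF-SVM logit matrix equals $\Zhat$ by Theorem~\ref{thm:SVM}.

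\textbf{Anticipated obstacle.} The crux is the exact handling of the nonsmooth optimality condition: on one side, establishing that $\nabla_\Z\Lc$ evaluated at $\alpha\Zhat$ is collinear with $\Zhat$ (this is where the combinatorial structure of the \SEL~matrix enters), and on the other, correctly intersecting ``a matrix supported on the singular-vector spaces of $\Zhat$'' with the nuclear-norm subdifferential to pin down $\Lambdab\propto\Iden$. A secondary care-point is checking that the ``same-group pair'' argument genuinely covers every imbalanced $(R,\rho)$-STEP configuration with $k>2$, including those in which one of the two groups consists of a single class.
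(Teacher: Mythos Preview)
Your proof is correct and follows the same contradiction strategy as the paper (compute $\nabla_\Z\Lc(\alpha\Zhat)\propto\Zhat$, deduce $\Lambdab\propto\Id$, and show the spectrum of $\Zhat$ is not flat when $R>1$, $k>2$), but the two technical steps are executed differently. The paper works directly with the non-convex stationarity condition of \eqref{eq:CE}: from $\nabla_\W\Lc+\la\W=0$ it gets $\Zhat(\nabla_\Z\Lc(\alpha\Zhat))^T\propto\What^T\What$, substitutes $\What^T\What=\Vb\Lambdab\Vb^T$ from Theorem~\ref{thm:SVM}, and reads off $\Lambdab^2\propto\Lambdab$. You instead route through Theorem~\ref{thm:regularized} to the convex problem \eqref{eq:nuc_norm_reg} and use the nuclear-norm subdifferential, which is equally clean and makes the role of the convex relaxation explicit (at the cost of inheriting the hypothesis $d>k-1$ from that theorem, which the paper's stationarity argument does not need). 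For the spectrum step, the paper simply invokes the explicit SVD of $\Zhat$ (Lemma~\ref{lem:Zhat_SVD_general}, giving singular values $\sqrt{R},\sqrt{\rhobar+R\rho},1$), whereas your trace-plus-eigenvector argument ($\eb_a-\eb_b$ for same-group $a,b$ is an eigenvector of $\Zhat\Zhat^T$ with eigenvalue $n_a\neq n/k$) is more elementary and self-contained, avoiding the full SVD computation. Your ``anticipated obstacle'' about whether a same-group pair always exists is handled correctly: $k>2$ with two nonempty groups forces one group to have at least two classes.
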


\begin{proof}The proof relies on Theorem \ref{thm:SVM} as follows. For the sake of contradiction assume there exists $\la>0$ and some $\alpha_\la>0$ such that the scaled UF-SVM minimizer $(\alpha_\la\What,\alpha_\la\Hhat)$ solves \eqref{eq:CE}. Since then  $(\alpha_\la\What,\alpha_\la\Hhat)$ is a stationary point, it satisfies 
\begin{align*}
\nabla_\W\Lc(\alpha_\la^2\What^T\Hhat)+\la\alpha_\la\hat\W=0 
&\implies \alpha_\la\Hhat\, \left(\nabla_\Z\Lc(\alpha_\la^2\What^T\Hhat)\right)^T = -\la \What
\\
&\implies \alpha_\la\What^T\Hhat\, \left(\nabla_\Z\Lc(\alpha_\la^2\What^T\Hhat)\right)^T = -\la \What^T\What
.
\end{align*}
But, by Theorem \ref{thm:SVM}: $\What^T\Hhat=\Zhat$ and $\What^T\What=\Vb\Lambdab\Vb^T$. Moreover, thanks to the special structure of $\Zhat$ we can check that $\nabla_\Z\Lc(\alpha_\la^2\Zhat)=-\alpha_\la^\prime\Zhat$ for $\alpha_\la^\prime:={k}\big/({\exp({\alpha_\la^2})+k-1)}$; see Lemma \ref{lem:Zhat}\ref{state:SEL_prop_CE}. With these, and denoting $\alpha_\la^{\prime\prime}:=\alpha_\la\alpha_\la^\prime$, we arrive at the following  about the singular values of $\Zhat$:
$$
\alpha_\la^{\prime\prime}\Zhat\Zhat^T = \la\What^T\What \implies
\alpha_\la^{\prime\prime}\Vb\Lambdab^2\Vb^T = \la\Vb\Lambdab\Vb^T 
\implies \alpha_\la^{\prime\prime}\Lambdab^2 = \la\Lambdab \implies \Lambdab = \big({\la}/{\alpha_\la^{\prime\prime}}\big)\Id_{k-1}.
$$
Thus, all singular values of $\Zhat$ must be the same. However, we show in Lemma \ref{lem:Zhat_SVD_general} that this is \emph{not} the case unless data are balanced or $k=2$. Thus, we arrive at a contradiction completing the proof of the proposition.
\end{proof}

\subsection{Vanishing regularization}
As $\la$ vanishes, it is not hard to check that  minimizers diverge in norm and the relevant question becomes: where do they converge in direction? The following answers this.

\begin{propo}[Regularization path leads to UF-SVM]\label{propo:reg_path}
 Suppose $d> k-1$ and $(R,\rho)$-STEP imbalance. 
 It then holds that $\lim_{\la\rightarrow0}\nicefrac{\What_\la^T\Hhat_\la}{{(\|\\\What_\la\|_F^2/2+\|\Hhat_\la\|_F^2/2)}}=\nicefrac{\Zhat}{\|\Zhat\|_*}$.
\end{propo}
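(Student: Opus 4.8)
The plan is to first invoke Theorem~\ref{thm:regularized} to rewrite the ratio on the left-hand side purely in terms of the convex program~\eqref{eq:nuc_norm_reg}, and then to show that the regularization path of~\eqref{eq:nuc_norm_reg} converges in direction to the minimum--nuclear-norm separator $\Zhat$ produced by Theorem~\ref{thm:SVM}. Reading off~\eqref{eq:geometry_reg}, the off-diagonal block gives $\What_\la^T\Hhat_\la=\Vb_\la\Lambdab_\la\Ub_\la^T=\Zhat_\la$, while the two diagonal blocks give $\|\What_\la\|_F^2=\tr(\Vb_\la\Lambdab_\la\Vb_\la^T)=\tr(\Lambdab_\la)$ and, likewise, $\|\Hhat_\la\|_F^2=\tr(\Lambdab_\la)=\|\Zhat_\la\|_*$ (the columns of $\Vb_\la,\Ub_\la$ being orthonormal). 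Hence the denominator equals $\tfrac12\|\What_\la\|_F^2+\tfrac12\|\Hhat_\la\|_F^2=\|\Zhat_\la\|_*$, the left-hand side of the claimed identity is exactly $\Zhat_\la/\|\Zhat_\la\|_*$, and it suffices to prove $\Zhat_\la/\|\Zhat_\la\|_*\to\Zhat/\|\Zhat\|_*$ as $\la\to0$.

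Next I would establish two-sided rate estimates on $\Zhat_\la$. For the upper direction, note that every margin of $\Zhat$ equals $1$, so $\Lc(c\Zhat)\le n(k-1)e^{-c}$ for all $c>0$; comparing the objective value of~\eqref{eq:nuc_norm_reg} at $\Zhat_\la$ with that at $c\Zhat$ and choosing $c=\log(1/\la)$ yields
\[
\Lc(\Zhat_\la)+\la\|\Zhat_\la\|_*\;\le\;n(k-1)\la+\la\log(1/\la)\,\|\Zhat\|_*.
\]
Since both left-hand terms are nonnegative, this gives $\Lc(\Zhat_\la)\to0$ and $\|\Zhat_\la\|_*\le(1+o(1))\|\Zhat\|_*\log(1/\la)$. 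Because $\Lc(\Zhat_\la)\to0$, for small $\la$ the logits of $\Zhat_\la$ strictly separate the data, so the realized margin $\gamma_\la:=\min_{i,\,c\neq y_i}\big(\Zhat_\la[y_i,i]-\Zhat_\la[c,i]\big)$ is positive and tends to $\infty$; moreover the summand of $\Lc$ attaining $\gamma_\la$ is at least $\tfrac12 e^{-\gamma_\la}$, so $e^{-\gamma_\la}\le 2\Lc(\Zhat_\la)$ and therefore $\gamma_\la\ge(1-o(1))\log(1/\la)$.

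Now set $\bar\Z_\la:=\Zhat_\la/\gamma_\la$. By the definition of $\gamma_\la$, $\bar\Z_\la$ satisfies every constraint of the UF-SVM relaxation~\eqref{eq:nuc_norm_SVM}, so $\|\bar\Z_\la\|_*\ge\|\Zhat\|_*$ by Theorem~\ref{thm:SVM}; on the other hand, combining the two rate bounds, $\|\bar\Z_\la\|_*=\|\Zhat_\la\|_*/\gamma_\la\le\big((1+o(1))/(1-o(1))\big)\|\Zhat\|_*$, which tends to $\|\Zhat\|_*$. Hence $\|\bar\Z_\la\|_*\to\|\Zhat\|_*$, i.e.\ $\{\bar\Z_\la\}$ is feasible for~\eqref{eq:nuc_norm_SVM} with nuclear norm converging to the optimal value. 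Since the feasible set is closed and the sublevel sets of $\|\cdot\|_*$ are compact, $\{\bar\Z_\la\}$ is bounded for small $\la$ and every limit point is feasible with nuclear norm $\|\Zhat\|_*$, hence a minimizer of~\eqref{eq:nuc_norm_SVM}; by the uniqueness of that minimizer established in the proof of Theorem~\ref{thm:SVM}, the only limit point is $\Zhat$, so $\bar\Z_\la\to\Zhat$. Dividing by $\|\bar\Z_\la\|_*$ gives $\Zhat_\la/\|\Zhat_\la\|_*=\bar\Z_\la/\|\bar\Z_\la\|_*\to\Zhat/\|\Zhat\|_*$, which together with the first paragraph proves the proposition.

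The hard part will be matching the growth rates in the numerator and denominator of $\bar\Z_\la=\Zhat_\la/\gamma_\la$: one needs the upper bound $\|\Zhat_\la\|_*\le(1+o(1))\|\Zhat\|_*\log(1/\la)$ (from a near-optimal scaling of the test point $\Zhat$) \emph{together with} the matching lower bound $\gamma_\la\ge(1-o(1))\log(1/\la)$ on the achieved margin, the latter obtained by converting the loss bound $\Lc(\Zhat_\la)=O(\la\log(1/\la))$ back into a margin bound through the shape of the log-loss. The $O(\log\log(1/\la))$ slack in these estimates is immaterial because $\log(1/\la)\to\infty$. A secondary point requiring care is the passage from $\|\bar\Z_\la\|_*\to\|\Zhat\|_*$ to $\bar\Z_\la\to\Zhat$: the nuclear norm is not strictly convex, so this step rests essentially on the \emph{uniqueness} assertion of Theorem~\ref{thm:SVM}; without it one could only conclude subsequential convergence to the set of minimum--nuclear-norm separators.
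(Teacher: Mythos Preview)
Your argument is correct and takes a genuinely different route from the paper's. Both proofs share the opening reduction: via Theorem~\ref{thm:regularized} one has $\What_\la^T\Hhat_\la=\Zhat_\la$ and $\tfrac12\|\What_\la\|_F^2+\tfrac12\|\Hhat_\la\|_F^2=\|\Zhat_\la\|_*$, so the statement becomes $\Zhat_\la/\|\Zhat_\la\|_*\to\Zhat/\|\Zhat\|_*$. From there the approaches diverge. The paper follows \citet{rosset2003margin}: it uses the KKT conditions of~\eqref{eq:nuc_norm_reg} to show the minimum margin $m_\la\to\infty$, recasts $\Zhat/\|\Zhat\|_*$ as the \emph{unique} maximizer of the max--min margin over the unit nuclear-norm ball (Lemma~\ref{lem:max-margin}), and then reaches a contradiction through a loss-comparison lemma (Lemma~\ref{lem:compare_margins}) showing that at large scale the higher-margin direction always has smaller CE. Your proof is instead direct and quantitative: by testing~\eqref{eq:nuc_norm_reg} at $c\Zhat$ with $c=\log(1/\la)$ you get the matching rate bounds $\|\Zhat_\la\|_*\le(1+o(1))\|\Zhat\|_*\log(1/\la)$ and $\gamma_\la\ge(1-o(1))\log(1/\la)$, so the rescaled $\bar\Z_\la=\Zhat_\la/\gamma_\la$ is feasible in~\eqref{eq:nuc_norm_SVM} with $\|\bar\Z_\la\|_*\to\|\Zhat\|_*$, and compactness plus the uniqueness of $\Zhat$ from Theorem~\ref{thm:SVM} forces $\bar\Z_\la\to\Zhat$. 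Your route avoids the two auxiliary lemmas and yields explicit $\log(1/\la)$ growth rates as a by-product; the paper's route is the classical Rosset-style argument, which transfers verbatim to other monotone norms but is less informative about rates. Both proofs hinge in the same essential way on the uniqueness of the nuclear-norm SVM minimizer established in Theorem~\ref{thm:SVM}.
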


Put together with the content of the previous section: For balanced data, the solution is always the same up to global scaling. However, for imbalanced data, the solution changes with $\la$ and only in the limit of $\la\rightarrow0$ does it align with that of the UF-SVM. 

Regarding the proof of the proposition, we note that thanks to Theorems \ref{thm:SVM} and \ref{thm:regularized}, it suffices that the solution $\Zhat_\la$ of \eqref{eq:nuc_norm_reg} converges in direction to the \SEL~matrix $\Zhat$; see Proposition \ref{propo:vanish_cvx} in \Sec~\ref{sec:nuc_norm_SM}. To show  this, we critically use from Theorem \ref{thm:SVM} that $\Zhat$ is  \emph{unique} minimizer of  \eqref{eq:nuc_norm_SVM}. See  \Sec~\ref{sec:proof_reg_path_noncvx} for details. The most closely related results are those of \cite{rosset2003margin,ji2020gradient}, who studied the regularization path of $p$-norm regularized CE. 

\vspace{-0.2cm}
\subsection{Imbalance emphasizes the impact of non-convexity}\label{sec:linear_ridge}
\vspace{-0.2cm}
Recall interpreting the UFM as a two-layer linear net trained on  the standard basis $\eb_i\in\R^n.$ Suppose instead that we train a simple $k$-class linear classifier $\Xibf_{k\times n}$  on the same data by minimizing ridge regularized CE:
$\min_{\Xibf}~\Lc(\Xibf)+\frac{\la}{2}\|\Xibf\|_F^2.$
It is easy to check that (after scaling) $\Zhat$ satisfies first-order optimality conditions. 
 Thus, the optimal linear classifier is such that for all $\la>0$, there exists $\alpha_\la$ such that $\hat{\Xibf}_\la=\alpha_\la\Zhat$. 
%
Contrasting this to Proposition \ref{propo:imbalance_reg}, we find that the end-to-end models minimizing ridge-regularized CE for a linear versus a two-layer linear network are the same (in direction) when data are balanced, but differ under imbalances. 



\section{Experiments}\label{sec:rel}
%

In our experiments we choose $(R,\nicefrac{1}{2})$-STEP imbalances with varying imalance ratio $R$. In all cases we measure convergence to either the \emph{\ref{SELI}} or the \emph{\ref{ETF}} geometries, in terms of the three metrics below corresponding to classifiers, embeddings, and logits, respectively. Denote $\underline{\A}=\nicefrac{\A}{\|\A\|_F}$ the Euclidean normalization  and $\G_\A=\A^T\A$ the Gram matrix of matrix $\A$.  
\begin{itemize}
	\item[$\square$] \textbf{~Classifiers:} We measure $\|\underline{\G_\W}-\underline{\Ghat_\W}\|_F$, where $\Ghat_\W^{\text{ETF}}=\Id_k-\frac{1}{k}\ones_k\ones_k^T=:\G^\star$ and $\Ghat_\W^{\text{SELI}}=\Vb\Lambdab\Vb^T$ (see Definition \ref{def:SELI}). 
	\item[$\square$]\textbf{Embeddings:} Because of the NC property, it suffices to work with the class means $\mub_c$ and the corresponding matrix $\M$ of mean embeddings (see \Sec ~\ref{sec:NC_and_SELI}.) Specifically, we measure  $\|\underline{\G_\M}-\underline{\Ghat_\M}\|_F$, where $\Ghat_\M^{\text{ETF}}=\G^\star$ and $\Ghat_\M^{\text{SELI}}$ is computed from the $n$-dimensional $\Ghat_\Hb^{\text{SELI}}=\Ub\Lambdab\Ub^T$ by only keeping the $k$ columns/rows corresponding to the first example of each class. 
	\item[$\square$]\textbf{Logits:} We measure $\|\underline{\W^T\M}-\underline{\G^\star}\|_F$. Note that, when NC holds this metric is essentially analogous to measuring $\|\underline{\W^T\Hb}-\underline{\Zhat}\|_F.$
\end{itemize}

\new{\noindent\textbf{Centering for deep-net experiments.}~In the experiments in Sec. \ref{sec:deep-net-main},  when investigating the embeddings' geometry, we employ an additional centering of the class means with their  (balanced) global mean. Specifically, we compute for each $c\in[k]:$
$
\overline{\mub_c} = \mub_c - \mub_G $
with $\mub_G = \frac{1}{k} \sum_{c \in [k]} \mub_c$. Thus, at each epoch, we compute $\overline\M = [\overline{\mub_1}, \overline{\mub_2}, ..., \overline{\mub_k}]\in \mathbb{R}^{d \times k}$ and compare, after normalization, $\G_\M=\overline{\M}^T\overline{\M}$ to $\hat\G_\M$, which we calculate as described above for the ETF and SELI geometries, respectively. On the other hand, for the logits calculations, we compute $\Z=\W^T\M$ without centering. The discrepancy between the UFM solutions being already centered, while deep-net embeddings require centering before computation of geometric measures is a common denominator in all previous works on the UFM, e.g.  \cite{zhu2021geometric,ULPM,fang2021exploring,han2021neural,lu2020neural,mixon2020neural,graf2021dissecting,zhou2022optimization}. Here, the chosen centering $\mub_G=\frac{1}{k} \sum_{i\in[n]}\frac{1}{n_{y_i}}\h_i$ is in general different (and the same only when classes are balanced) from the global centering $\h_G:=\frac{1}{n}\sum_{i\in[n]}\h_i$ used by \citet{NC}. Our choice of the former is motivated by the fact that the SELI geometry (as predicted by the UFM) satisfies $\sum_{c\in[k]}\overline{\mub_c}=0$ (see Eqn. \eqref{eq:center_mu} in Sec. ), but not always $\frac{1}{n}\sum_{i\in[n]}{\h_i}=0.$}

\begin{figure*}
\vspace{-30pt}
    \centering
   \hspace{-60pt} \begin{subfigure}{0.26\textwidth}
    \centering
        \begin{tikzpicture}
        \node at (-1.4,-1.4) {\includegraphics[scale=0.24]{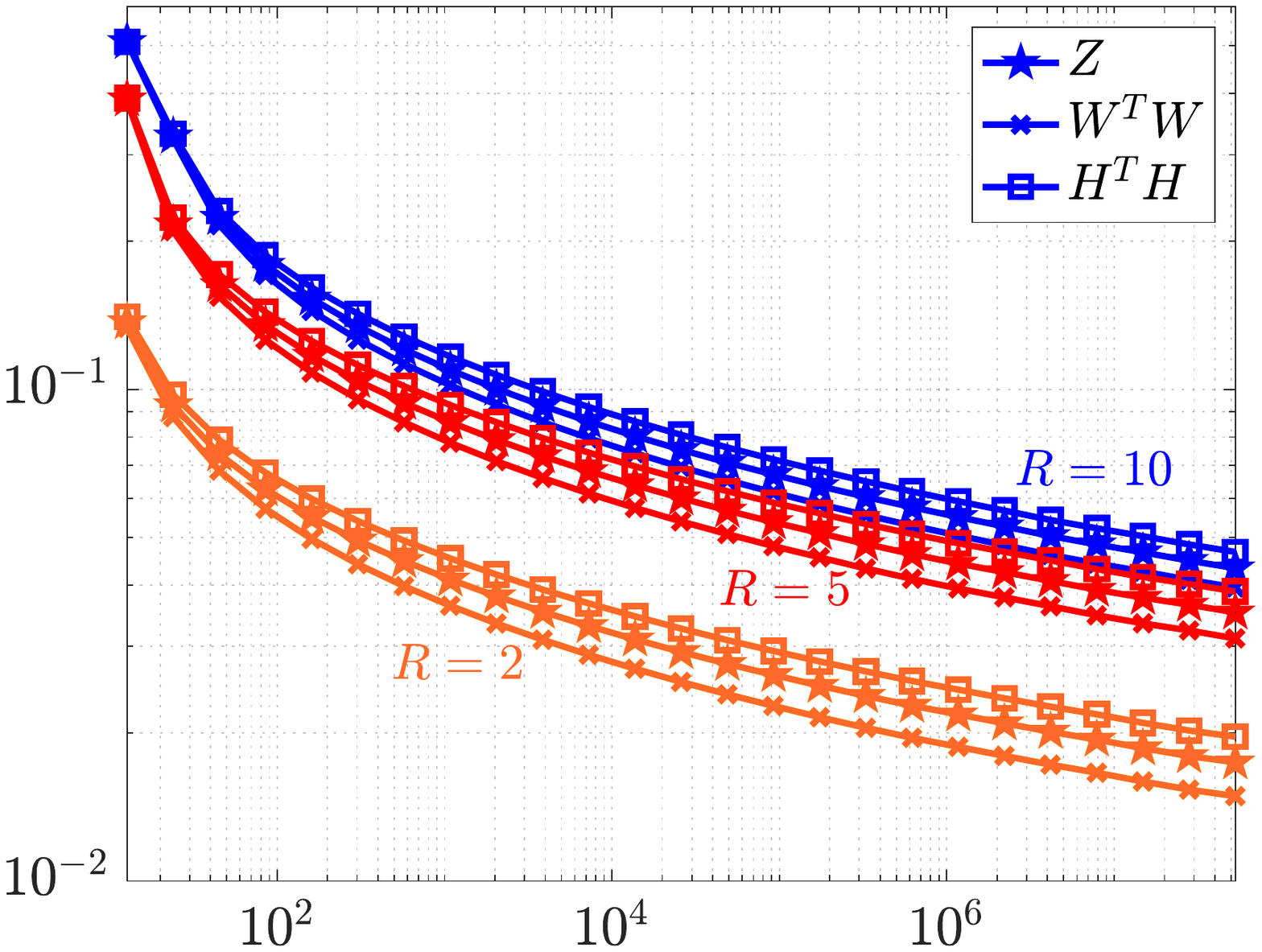}};
        \node at (-1.3,-3.1) [scale=0.9]{$n/\lambda$};
        \node at (-3.7,-1.4)  [scale=0.8, rotate=90]{Distance to SELI};
        \end{tikzpicture}\vspace{-50pt}\captionsetup{width=0.9\linewidth}\caption{SELI~Convergence.}\label{fig:hr_global_local}
    \end{subfigure}\hspace{25pt}\begin{subfigure}{0.26\textwidth}
    \centering
        \begin{tikzpicture}
        \node at (0,0) {\includegraphics[scale=0.24]{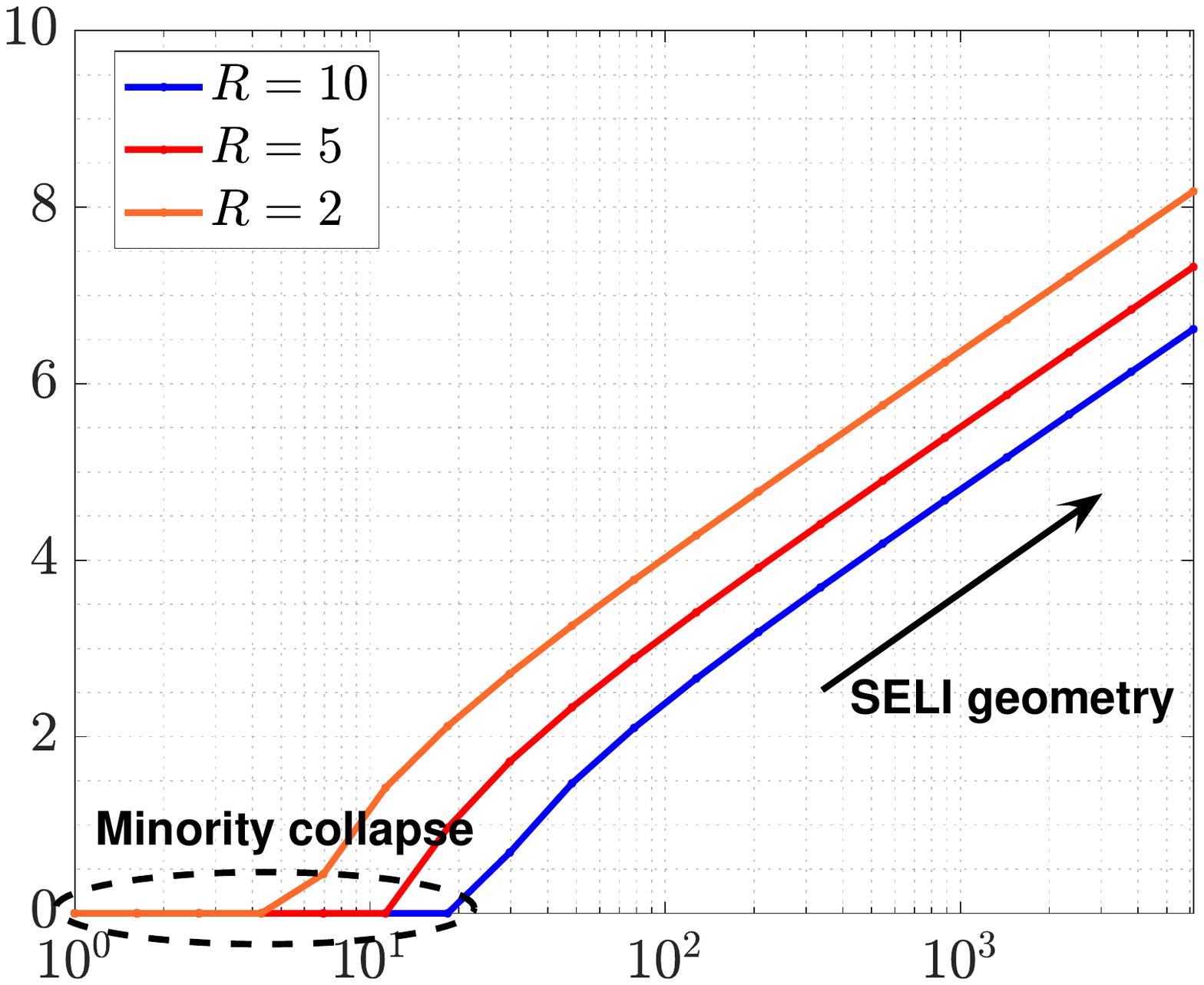}};
\node at (0,-1.8) [scale=0.9]{$n/\lambda$};
        \node at (-2.1,0) [scale=0.8, rotate=90]{Min. margin};
        \end{tikzpicture}\vspace{-50pt}\captionsetup{width=0.9\linewidth}\caption{Minimum margin.}\label{fig:UFM_nuc_norm_lambda_margin}
    \end{subfigure}\hspace{25pt}\begin{subfigure}{0.26\textwidth}
    \centering
        \begin{tikzpicture}
        \node at (0,0) {\includegraphics[scale=0.24]{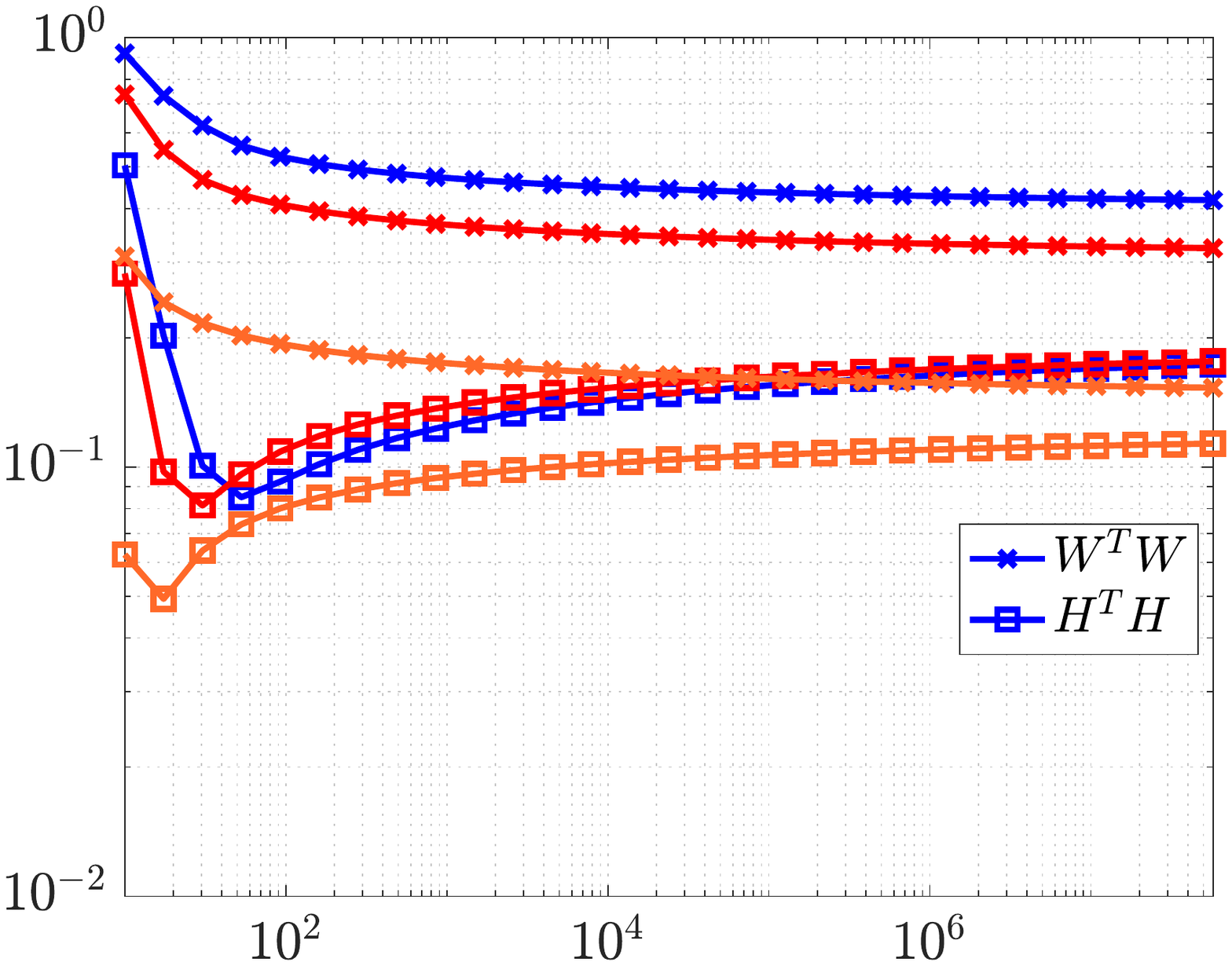}};
        \node at (0,-1.7) [scale=0.9]{$n/\lambda$};
        \node at (-2.1,0.1) [scale=0.8, rotate=90]{Distance to ETF};
        \end{tikzpicture}\vspace{-50pt}\captionsetup{width=0.9\linewidth}\caption{ETF convergence.}\label{fig:minority_collapse}
    \end{subfigure}\vspace{-5pt}\caption{Numerical study of global solutions $(\What_\la,\Hhat_\la)$ of \eqref{eq:CE} across the regularization path.}\label{fig:UFM_nuc_norm_lambda}
\end{figure*}


\subsection{UFM: Global minimizers}
\Fig~\ref{fig:UFM_nuc_norm_lambda} numerically investigates the behavior of global minimizers of regularized CE \eqref{eq:CE} for the UFM and $k=4$ classes. Thanks to Theorem~\ref{thm:regularized}, we obtain such minimizers by solving the convex program \eqref{eq:nuc_norm_reg} with CVX \cite{cvx}, and then, using \eqref{eq:geometry_reg} to infer the Gram matrices $\G_\W,\G_\M$ and logits $\W^T\Hb$. \Fig~\ref{fig:minority_collapse} shows that the distance to ETF is large and not approaching zero for any value of $\la$. On the other hand, \Fig~\ref{fig:hr_global_local} numerically validates Propositions~\ref{propo:imbalance_reg} and \ref{propo:reg_path}: the distance to SELI for all three metrics is non-zero for any finite $\la>0$, but converges to zero as $\la\rightarrow0$. However, this convergence is slow and the rate becomes even worse as $R$ increases. Finally, \Fig~\ref{fig:UFM_nuc_norm_lambda_margin} depicts the minimum margins of solutions across $\la$. Note that for all sufficiently small $\la$ values, the minimum margin is strictly positive. We prove this in Lemma \ref{lem:small_la} in \Sec~\ref{sec:small_la}.  As a byproduct, this shows that the ``minority collapse'' of \cite{fang2021exploring} can only possibly occur for large $\la$; see also \Sec~\ref{sec:minority}.

\begin{figure*}
	\centering
	\hspace{-40pt} \begin{subfigure}{0.3\textwidth}
		\centering
		\begin{tikzpicture}
			\node at (-1.4,-1.4) 
			{\includegraphics[scale=0.26]{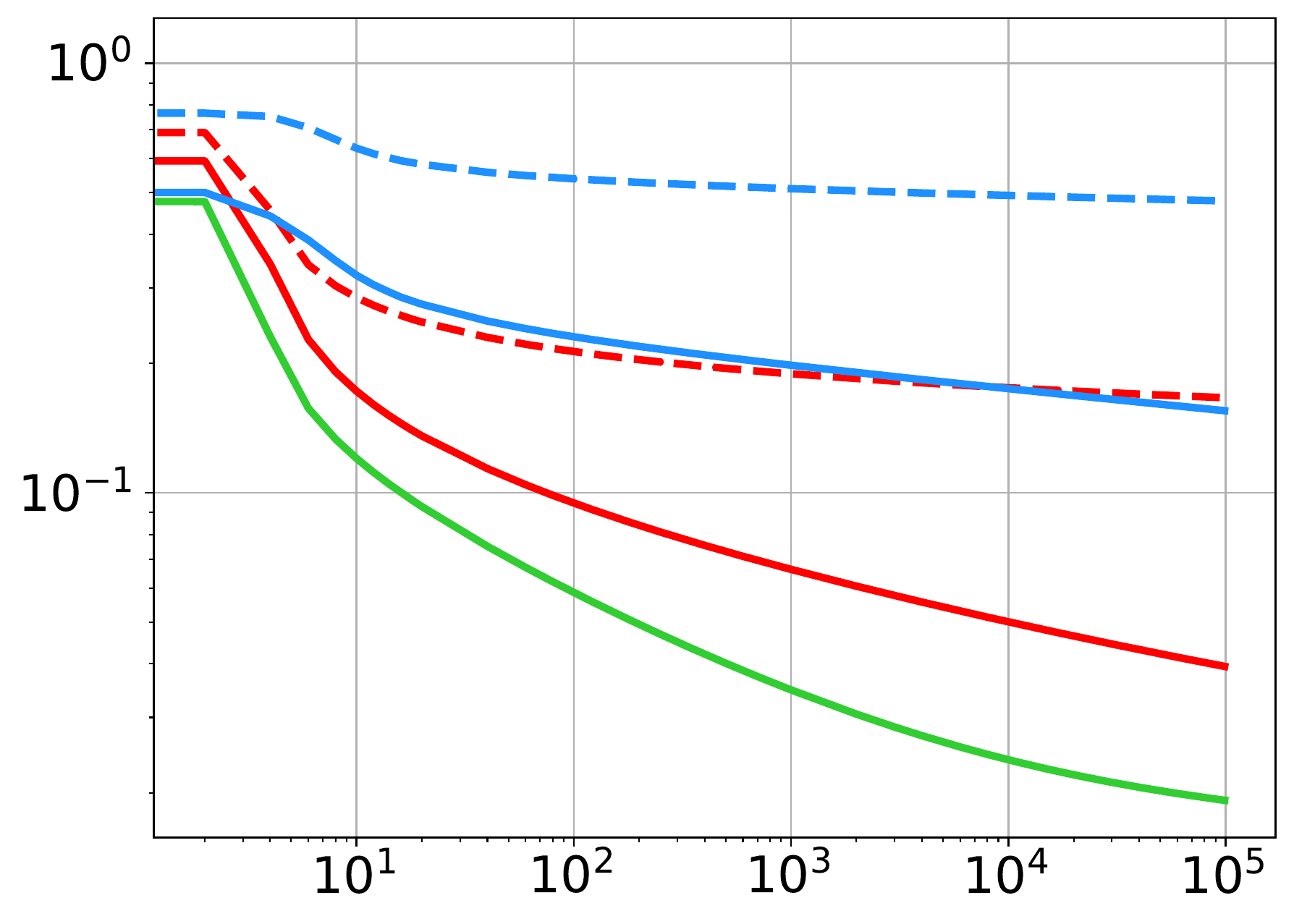}};
			\node at (-1.4,-3.5) [scale=0.7]{Epochs};
			\node at (-4.1,-1.4)  [scale=0.7, rotate=90]{Distance to SELI/ETF};
		\end{tikzpicture}\caption{\small{Classifiers}}\label{fig:UFM_W_hat}
	\end{subfigure}\hspace{25pt}\begin{subfigure}{0.3\textwidth}
		\centering
		\begin{tikzpicture}
			\node at (0,-1.4) {\includegraphics[scale=0.26]{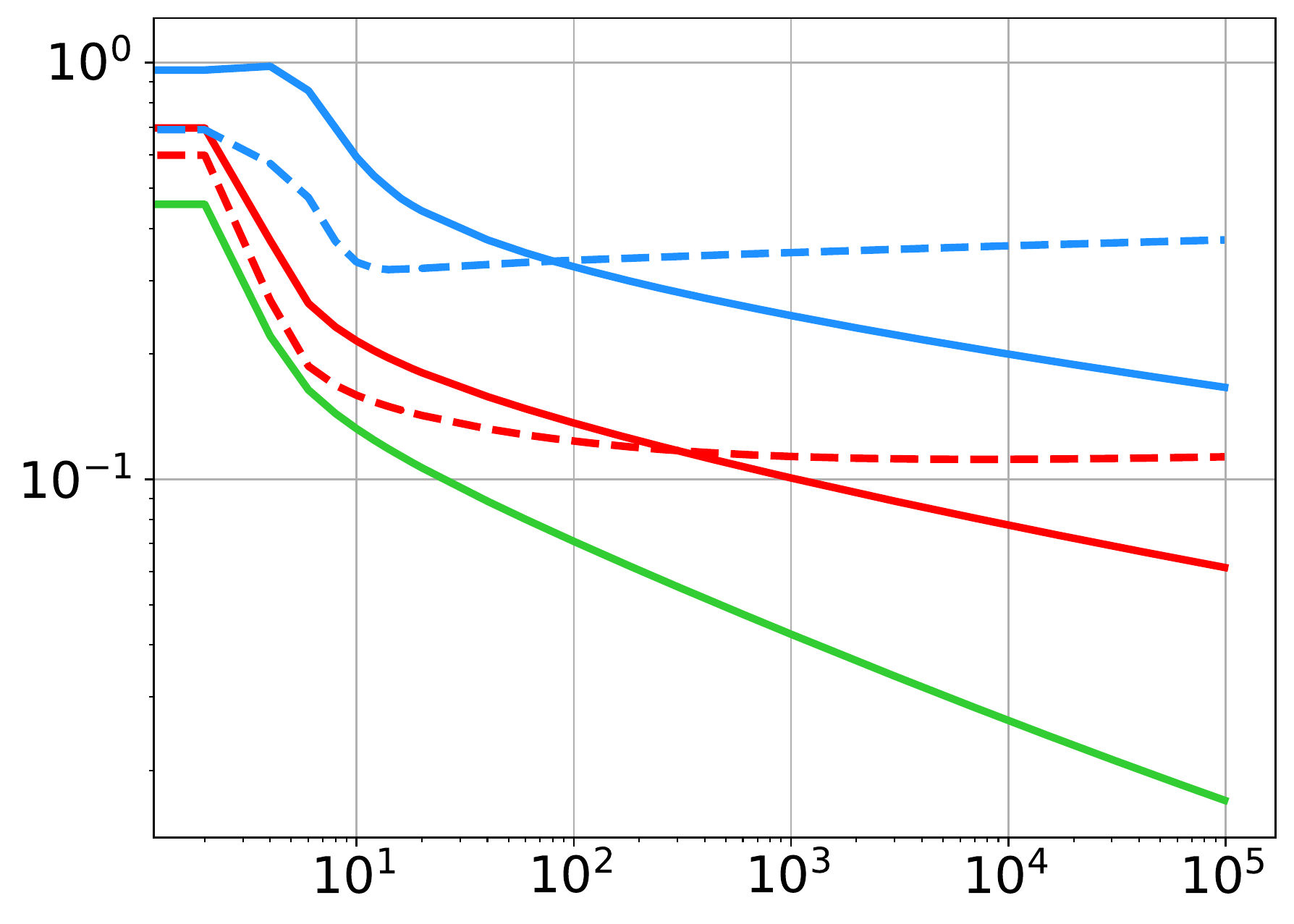}};
			\node at (0,-3.5) [scale=0.7]{Epochs};
			\node at (-2.7,-1.4) [scale=0.7, rotate=90]{};
		\end{tikzpicture}\caption{\small{Embeddings}}\label{fig:UFM_H_hat}
	\end{subfigure}\hspace{25pt}\begin{subfigure}{0.3\textwidth}
		\centering
		\begin{tikzpicture}
			\node at (0,-1.4) {\includegraphics[scale=0.26]{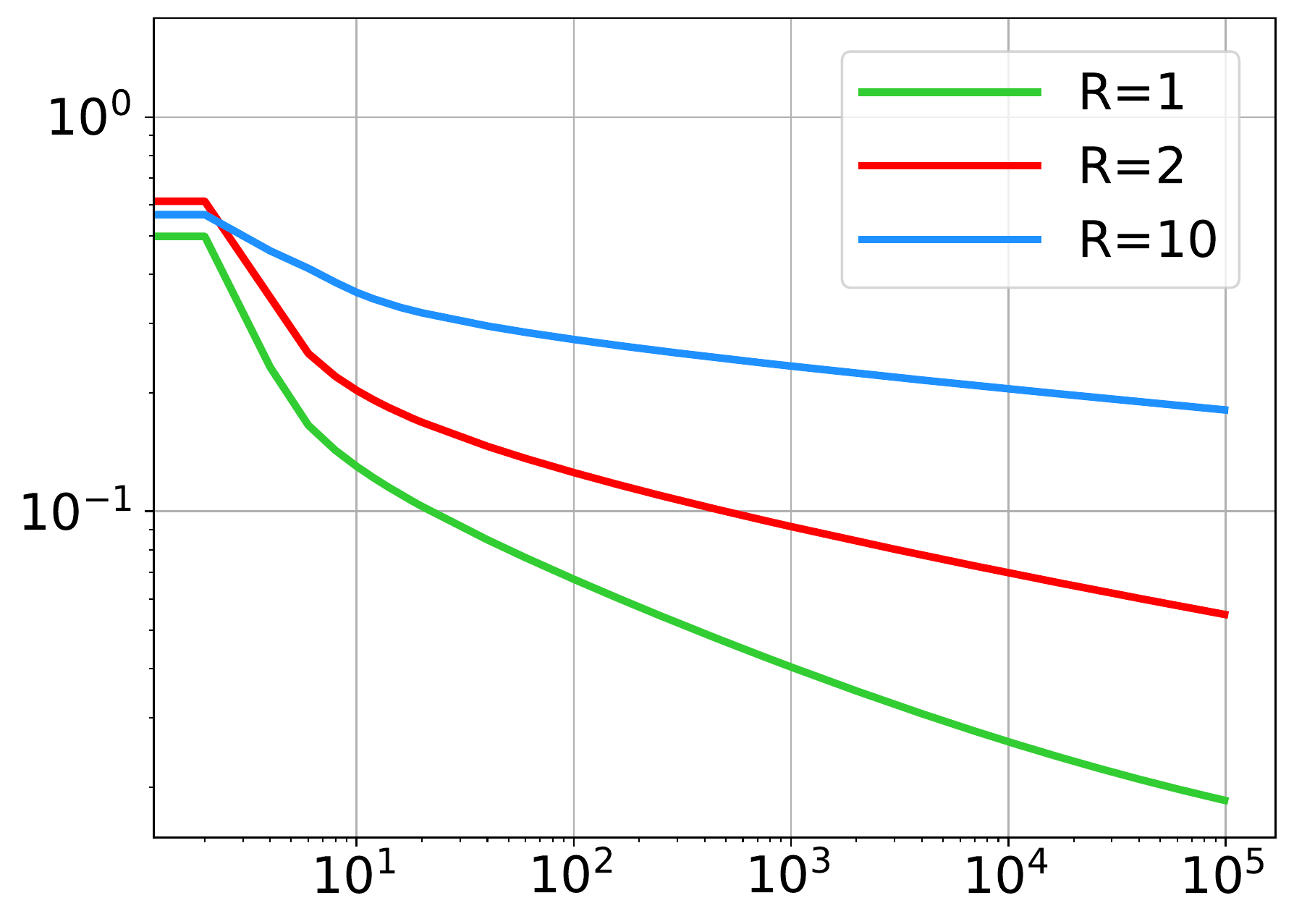}};
			\node at (0,-3.5) [scale=0.7]{Epochs};
			\node at (-2.7,-1.4) [scale=0.7, rotate=90]{};
		\end{tikzpicture}\caption{\small{Logits}}\label{fig:UFM_Z_hat}
	\end{subfigure}
	\vspace{-5pt}\caption{Geometry of SGD solutions on minimizing CE for the  UFM; SELI(Solid)/ETF(Dashed). 
	}
	\label{fig:UFM_convergence}
\end{figure*}

\subsection{UFM: SGD solutions}
\Fig~\ref{fig:UFM_convergence} investigates whether the solutions found by SGD are consistent with the prediction of Theorem~\ref{thm:SVM} about global minimizers of the UF-SVM. Specifically, we fix $k=4$ and, for each $R$, we select the sample size $\nmin$ for minorities so that the total number of samples $n=\big((R+1)/2\big)k\nmin$ is $\approx400$. 
 The weights of the UFM are optimized using SGD with constant learning rate $0.4$, batch size $4$ and no weight decay. We train for $10^5$ epochs, much beyond zero training error and plot the distance to SELI and ETF geometries for classifiers, embeddings and logits over time. We highlight the following three observations: \textbf{(i)} SGD iterates favor the SELI, instead of the ETF geometry. As a matter of fact, the distance to SELI is decreasing with epochs, suggesting an implicit bias of SGD towards global minimizers of the UF-SVM. 
\textbf{(ii)} However, convergence is rather slow and rates get worse with increasing imbalance. 
\textbf{(iii)} Also, the embeddings convergence is more elusive compared to that of the classifiers. Interestingly, the last two observations are reminiscent of the trends we observed in \Fig~\ref{fig:hr_global_local}, suggesting connections between regularization path and (S)GD iterates, worth investigating further. 

We refer the reader to \Sec~\ref{sec:UFM} for additional numerical results on the UFM, such as experiments with varying weight-decay and regularization choices.

\begin{figure*}
	\vspace{-10pt}
	\centering
	\hspace{-40pt} \begin{subfigure}{0.3\textwidth}
		\centering
		\begin{tikzpicture}
			\node at (-1.4,-1.4) 
			{\includegraphics[scale=0.26]{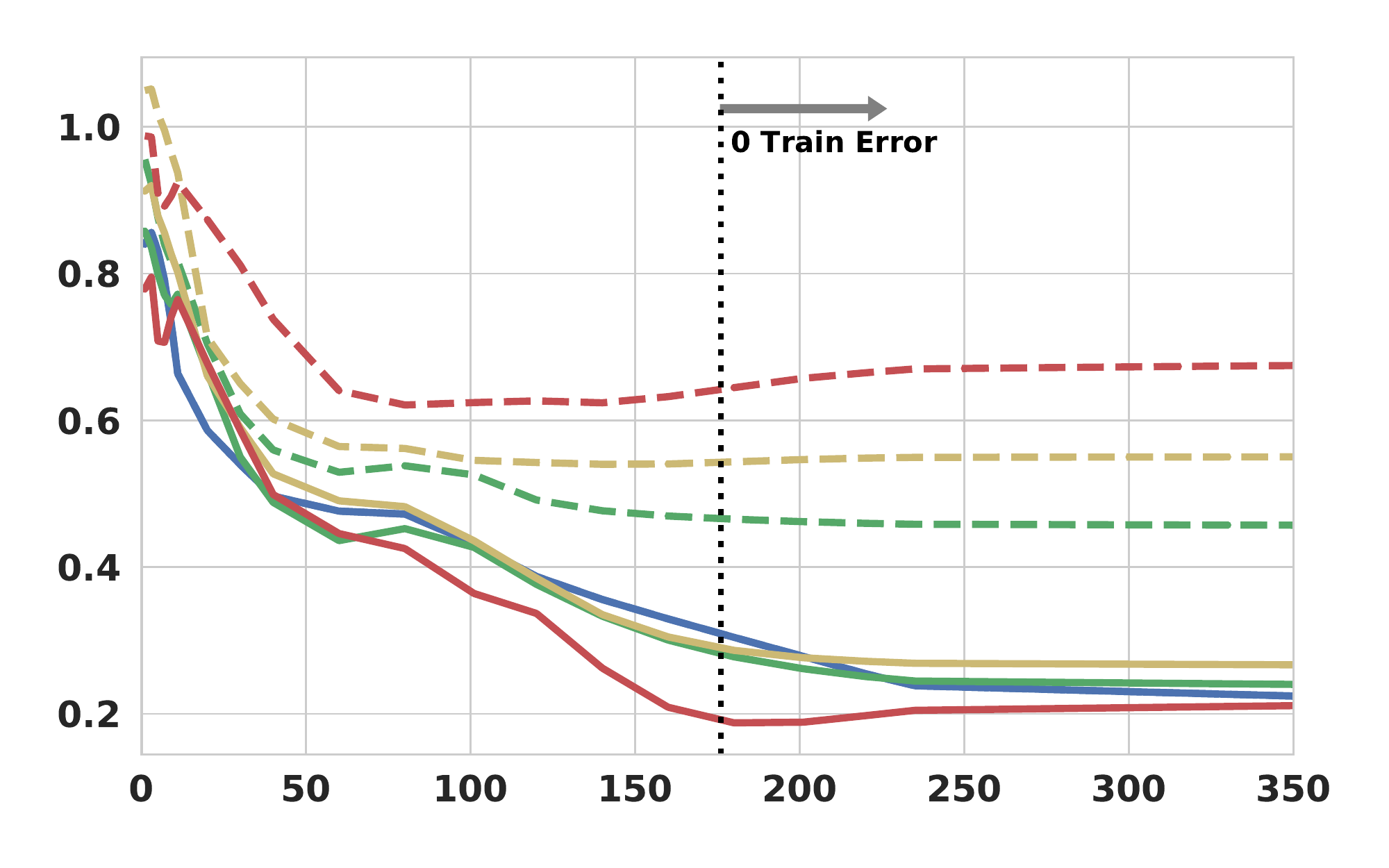}};
			\node at (-4.1,-1.4)  [scale=0.7, rotate=90]{Distance to SELI/ETF};
		\end{tikzpicture}
	\end{subfigure}\hspace{13pt}\begin{subfigure}{0.3\textwidth}
		\centering
		\begin{tikzpicture}
			\node at (0,-1.4) {\includegraphics[scale=0.26]{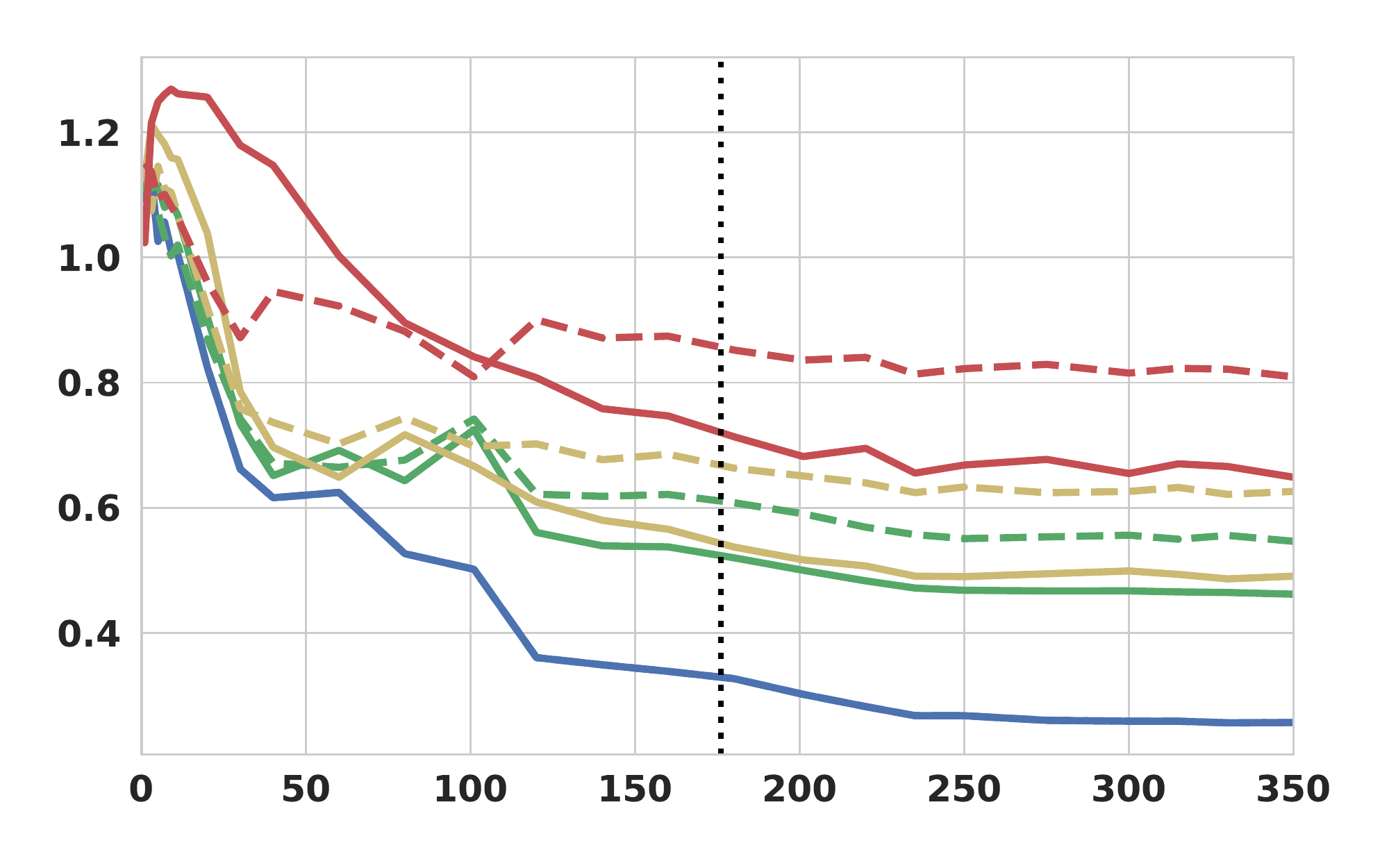}};
			\node at (0,0.2) [scale=0.9]{\textbf{CIFAR10}};
			\node at (-2.7,-1.4) [scale=0.7, rotate=90]{};
		\end{tikzpicture}
	\end{subfigure}\hspace{13pt}\begin{subfigure}{0.3\textwidth}
		\centering
		\begin{tikzpicture}
			\node at (0,-1.4) {\includegraphics[scale=0.26]{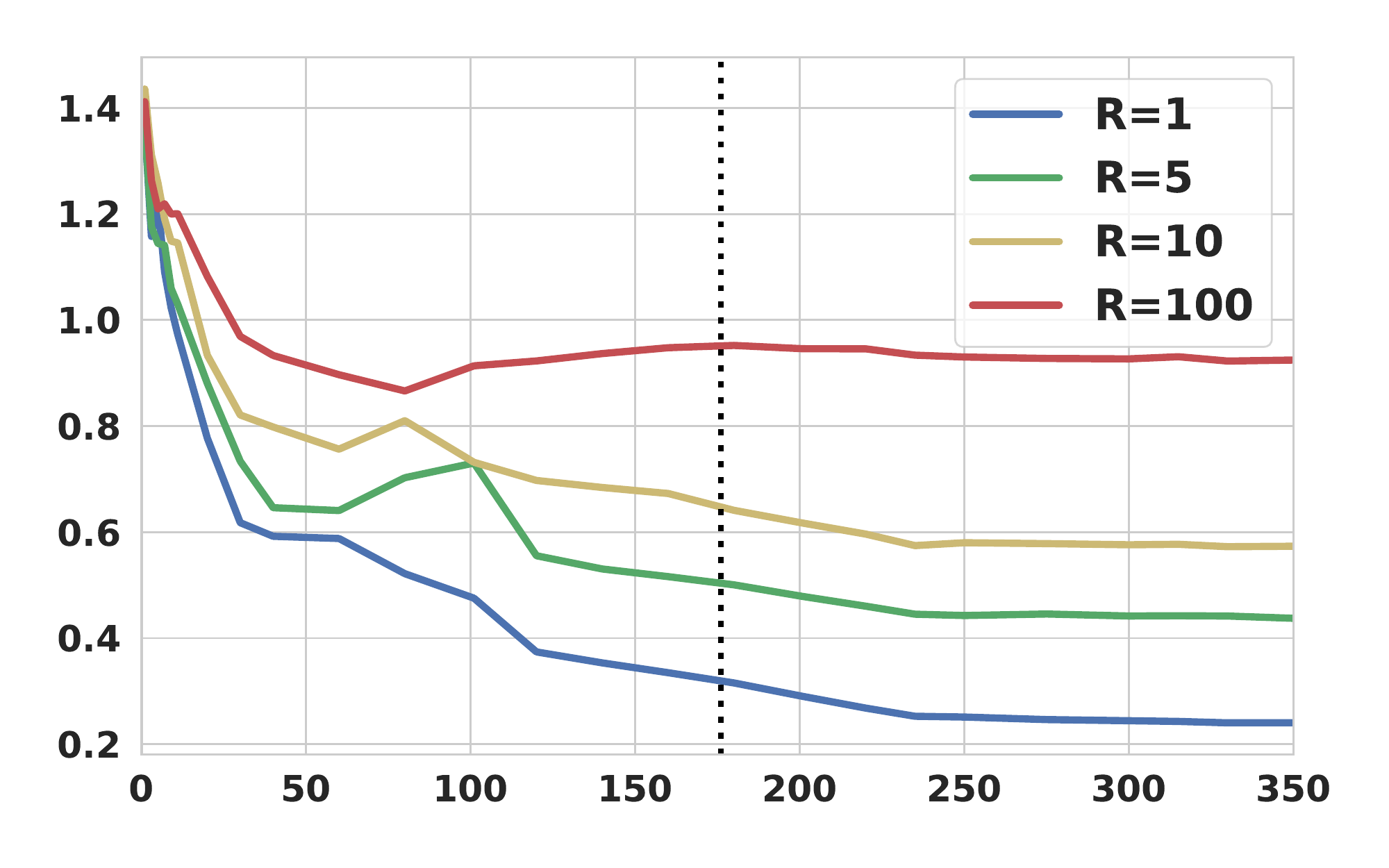}};
			\node at (-2.7,-1.4) [scale=0.7, rotate=90]{};
		\end{tikzpicture}
	\end{subfigure}\vspace{-10pt}

	\vspace{5pt}
	\centering
	\hspace{-40pt} \begin{subfigure}{0.3\textwidth}
	\centering
	\begin{tikzpicture}
		\node at (-1.4,-1.4)
		{\includegraphics[scale=0.26]{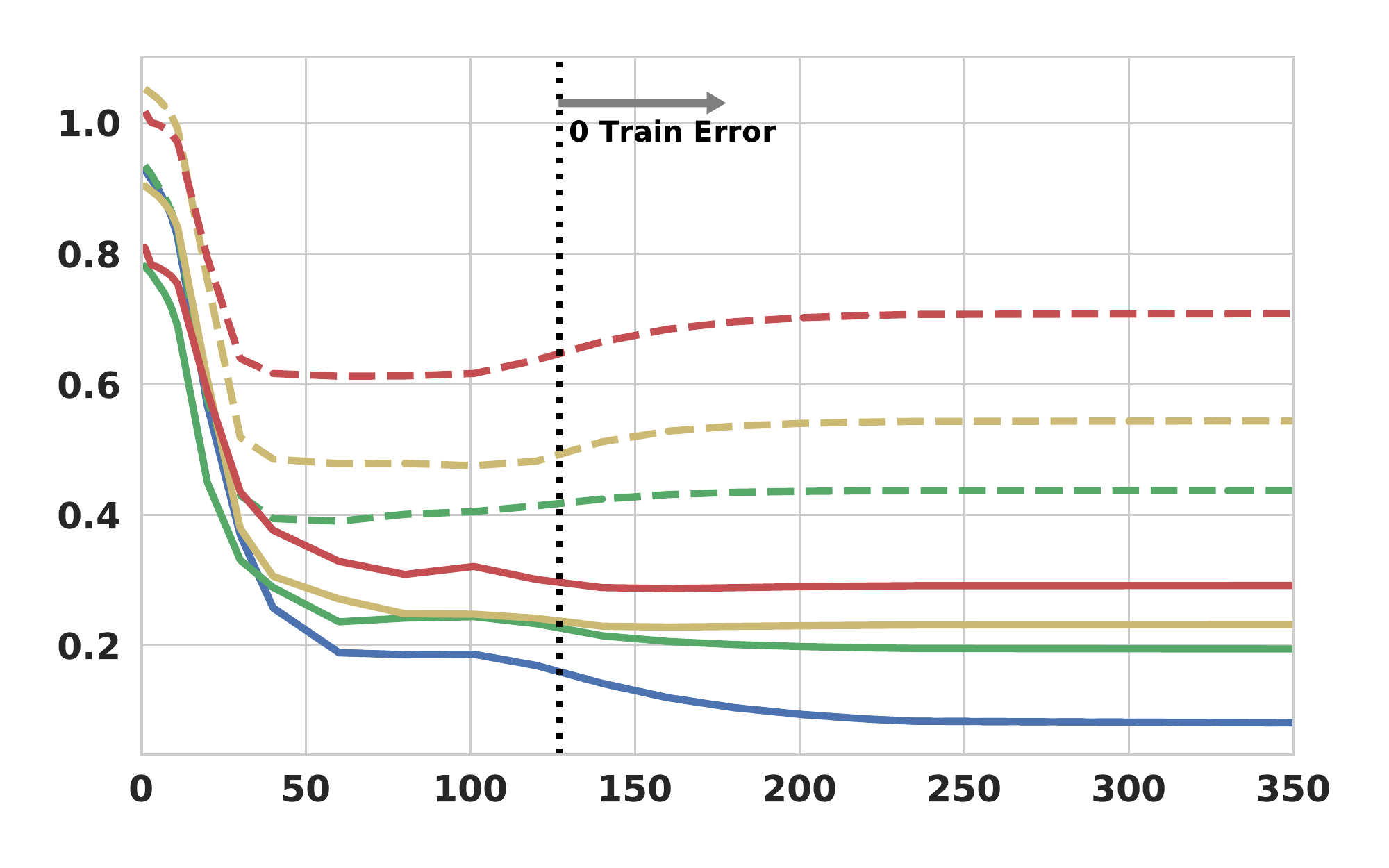}};
		\node at (-4.1,-1.4)  [scale=0.7, rotate=90]{Distance to SELI/ETF};
	\end{tikzpicture}
	\end{subfigure}\hspace{13pt}\begin{subfigure}{0.3\textwidth}
	\centering
	\begin{tikzpicture}
		\node at (0,-1.4) {\includegraphics[scale=0.26]{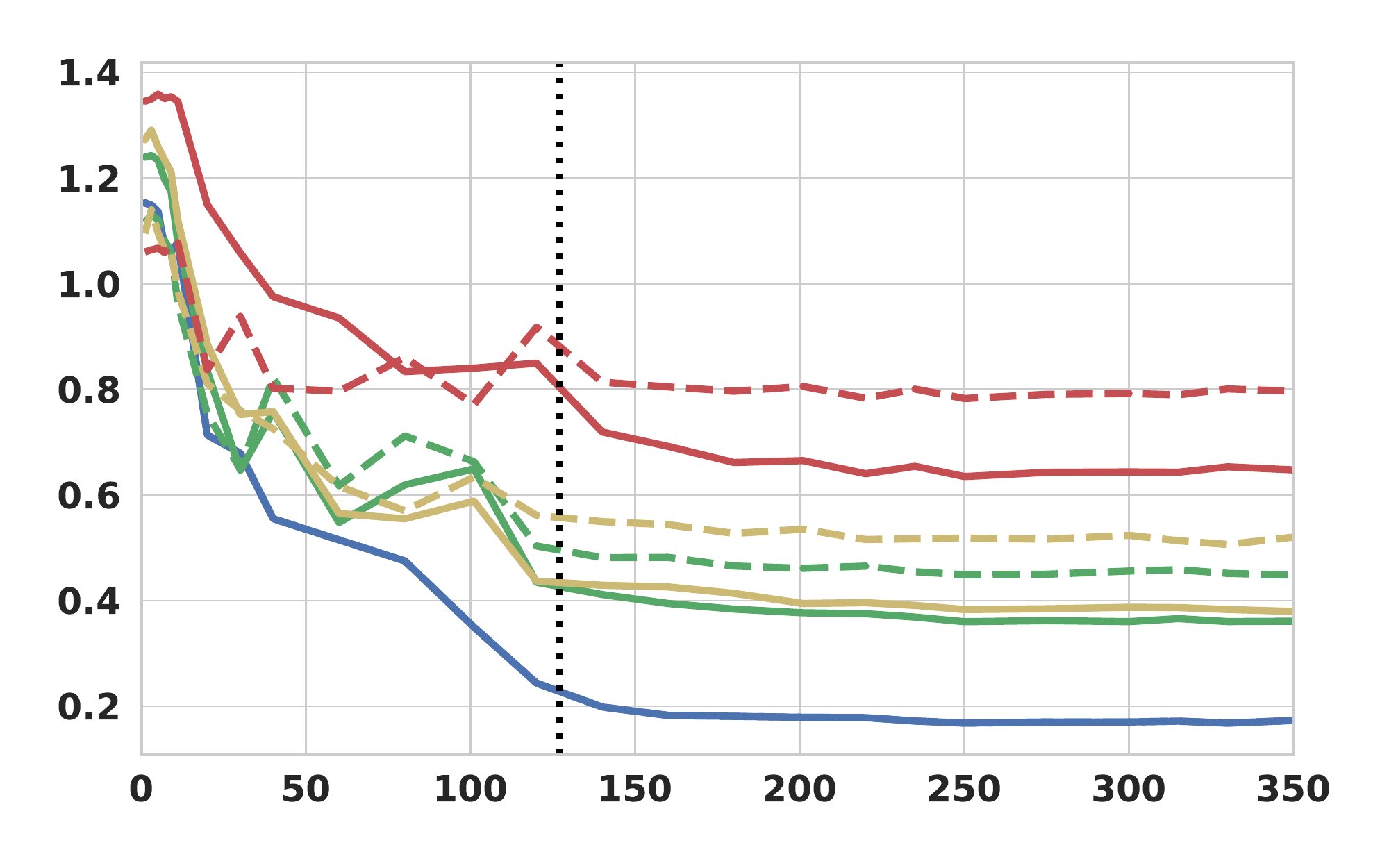}};
		\node at (0,0.2) [scale=0.9]{\textbf{MNIST}};
		\node at (-2.7,-1.4) [scale=0.7, rotate=90]{};
	\end{tikzpicture}
	\end{subfigure}\hspace{13pt}\begin{subfigure}{0.3\textwidth}
	\centering
	\begin{tikzpicture}
		\node at (0,-1.4) {\includegraphics[scale=0.26]{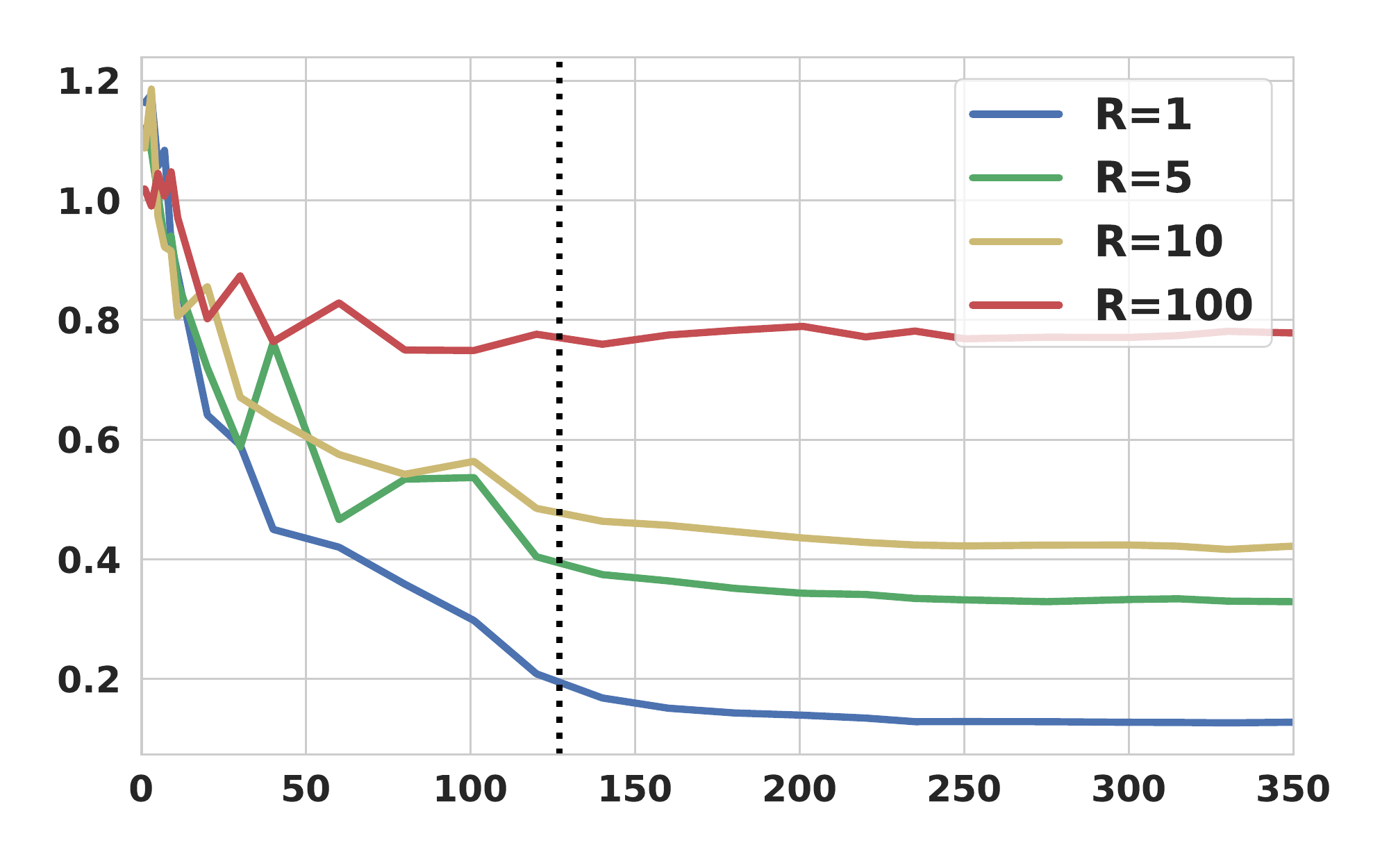}};
		\node at (-2.7,-1.4) [scale=0.7, rotate=90]{};
	\end{tikzpicture}
	\end{subfigure}\vspace{-10pt}
	
	\vspace{5pt}
	\centering
	\hspace{-40pt} \begin{subfigure}{0.3\textwidth}
		\centering
		\begin{tikzpicture}
			\node at (-1.4,-1.4)
			{\includegraphics[scale=0.26]{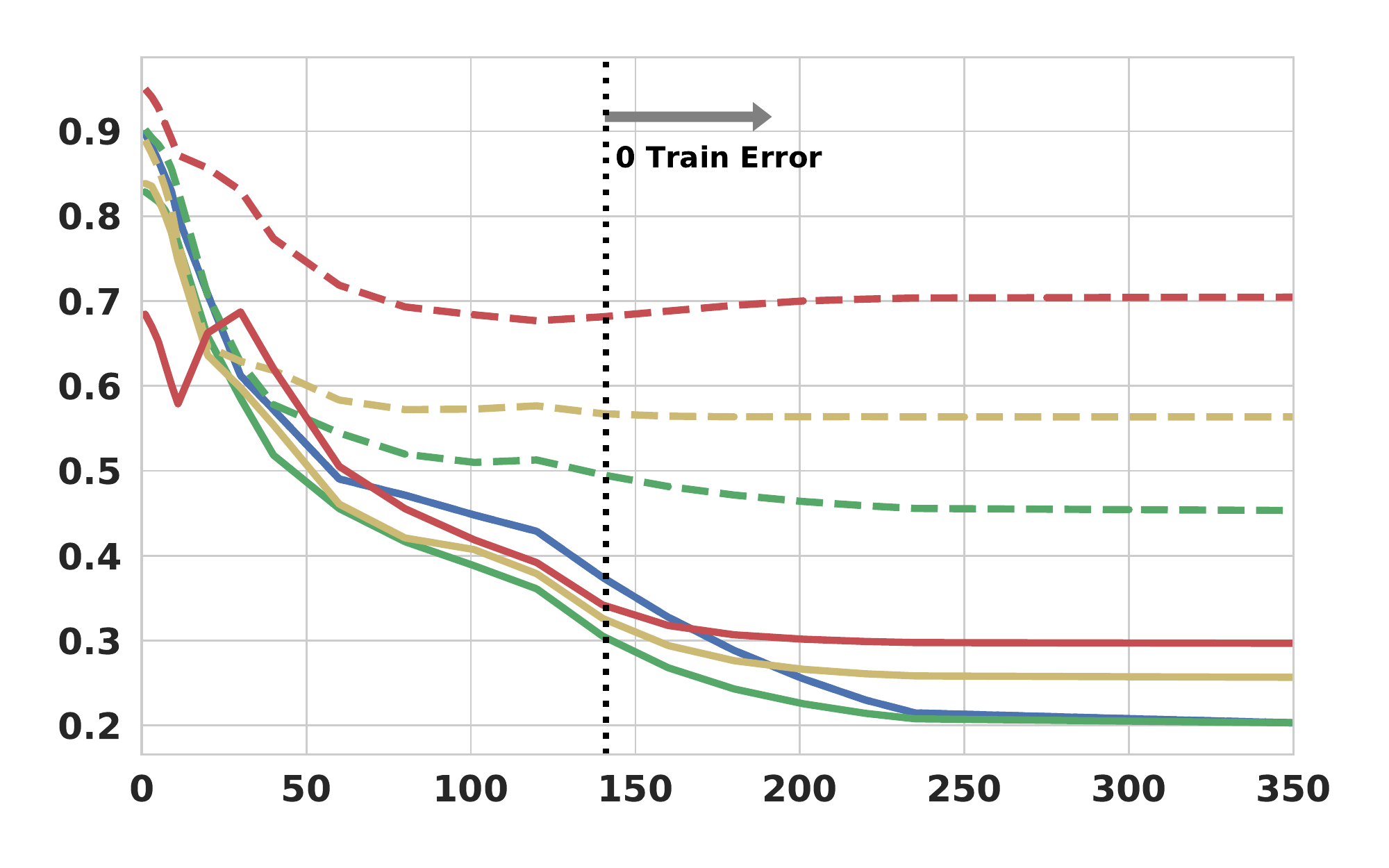}};
			\node at (-1.4,-3.1) [scale=0.7]{Epochs};
			\node at (-4.1,-1.4)  [scale=0.7, rotate=90]{Distance to SELI/ETF};
		\end{tikzpicture}\vspace{-0.2cm}
		\captionsetup{width=0.6\linewidth}\caption{Classifiers}
	\end{subfigure}\hspace{13pt}\begin{subfigure}{0.3\textwidth}
		\centering
		\begin{tikzpicture}
			\node at (0,-1.4) {\includegraphics[scale=0.26]{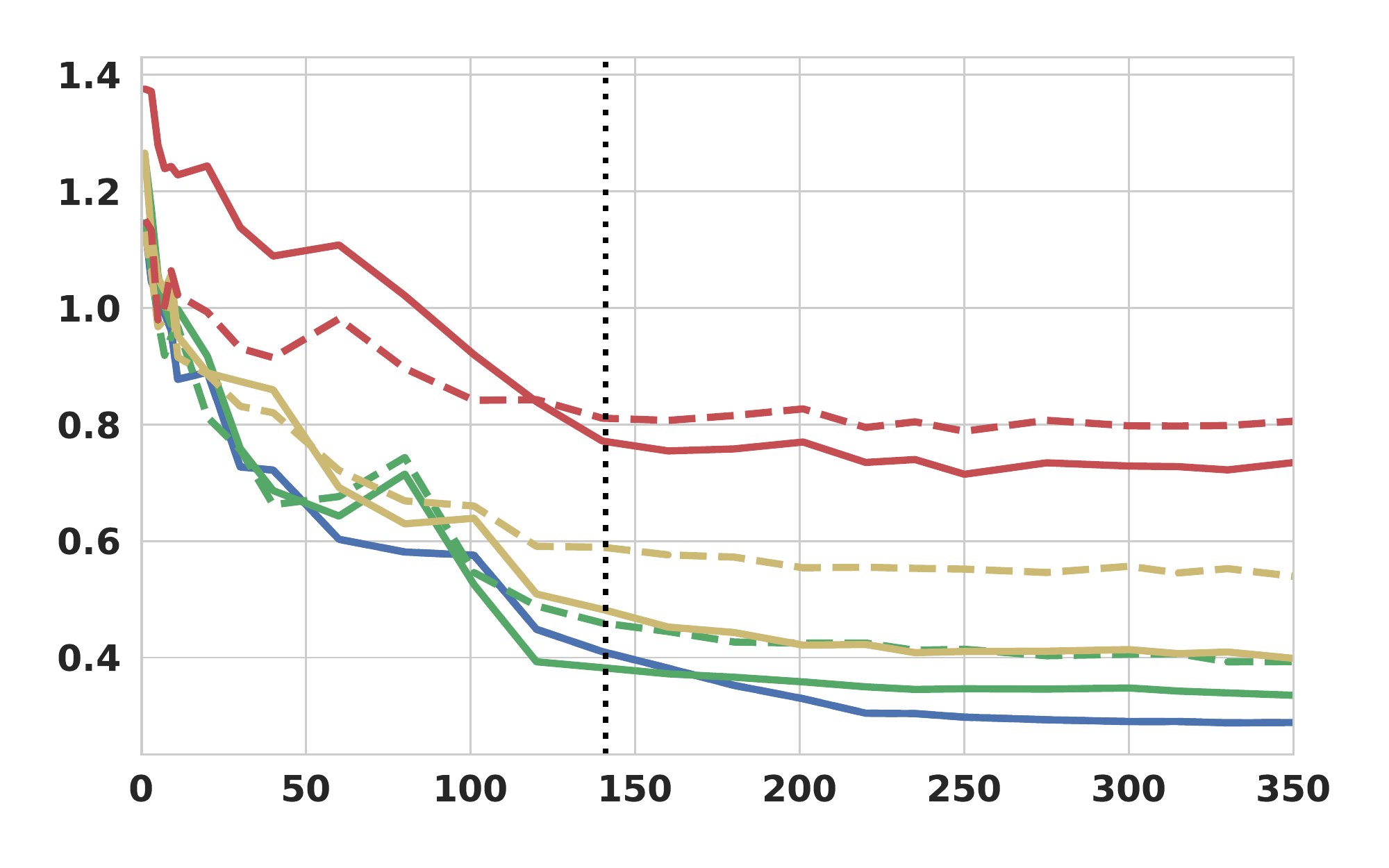}};
			\node at (0,0.2) [scale=0.9]{\new{\textbf{Fashion-MNIST}}};
			\node at (0,-3.1) [scale=0.7]{Epochs};
			\node at (-2.7,-1.4) [scale=0.7, rotate=90]{};
		\end{tikzpicture}\vspace{-0.2cm}
		\captionsetup{width=0.6\linewidth}\caption{Embeddings}
	\end{subfigure}\hspace{13pt}\begin{subfigure}{0.3\textwidth}
		\centering
		\begin{tikzpicture}
			\node at (0,-1.4) {\includegraphics[scale=0.26]{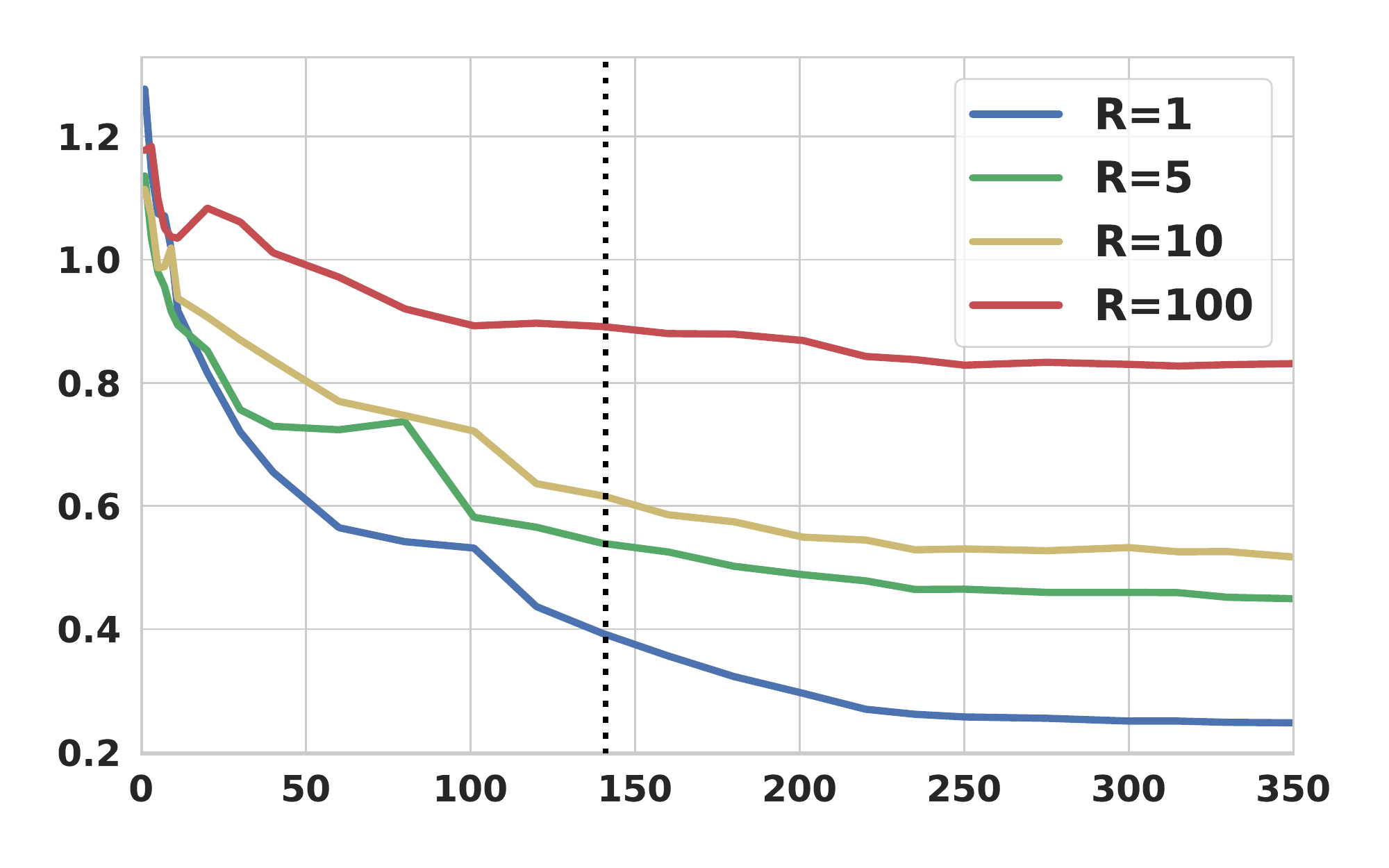}};
			\node at (0,-3.1) [scale=0.7]{Epochs};
			\node at (-2.7,-1.4) [scale=0.7, rotate=90]{};
		\end{tikzpicture}\vspace{-0.2cm}
		\captionsetup{width=0.6\linewidth}\caption{Logits}
	\end{subfigure}\vspace{-5pt}\caption{Same setup as Fig. \ref{fig:intro_CIFAR} only now training with a VGG-13 model.
	}
	\label{fig:vgg_}
	\vspace{-5pt}
\end{figure*}

\subsection{Deep-learning experiments}\label{sec:deep-net-main}

We investigate convergence to the proposed SELI geometry in deep-net training of $(R,\rho)$-STEP imbalanced MNIST, \new{Fashion-MNIST and} CIFAR10 datasets. \new{For concreteness, we consider here equal number of minorities and majorities, i.e. $\rho=1/2$. Additional experimental results for other values of the minority ratio are deferred to Sec. \ref{sec:rho}.}
 For all datasets, we keep the same total number of $n = 100\times 50 \times 5 + 50 \times 5 = 25250$ examples across all different imbalance ratios $R=1,5,10$ and $100$.  No data augmentation was used following \cite{NC}. 
We train two deep architectures, ResNet-18 \cite{he2015deep} and VGG-13 \cite{vgg}, and optimize the models using CE loss with SGD over $350$ epochs. In all experiments, the initial learning rate is set to $0.1$ and decreased by a factor of $10$ at epochs $120$ and $240$. For ResNet training on MNIST, we choose smaller initial learning rate $0.05$ which we empirically find that it interpolates data much faster. Weight decay and momentum are set to $5 \times 10^{-4}$ and $0.9$ respectively. Models are trained on a single GPU with a dataloader batchsize of $128$. 
To evaluate the learnt geometries, we track the three metrics that we defined at the beginning of this section. Recall also that following \cite{NC}, we first perform a global centering by subtracting from the mean embeddings their global average.


The convergence to SELI and ETF for the classifiers, (centered) mean-embeddings, and logits are illustrated in \Fig~\ref{fig:intro_CIFAR} and \ref{fig:vgg_} for ResNet and VGG models, respectively.
The vertical dashed lines mark the epoch at which the model reaches zero training error under all imbalance ratios; see \Sec~\ref{sec:SM_accuracies} for details. Note that in all plots, the distance to SELI geometry decreases as training evolves. Also, convergence to the SELI geometry is consistently better compared to the ETF geometry. However, convergence slows down for increasing imbalance (see $R=100$). Another interesting observation is that convergence is worse for the embeddings compared to classifiers. \new{In Sec. \ref{sec:SM_maj_min} we compare individual quadrants of the (normalized) $\underline{\G_\W}$ and $\underline{\G_\Hb}$ matrices,  which facilitates understanding the \emph{individual} behavior of majorities and minorities.}

In \Fig~\ref{fig:norm_ratio_exp_W} we focus on predicting the norms of the classifiers. In particular, we measure the norm ratio of the classifiers during training and compute its distance  from the predicted closed-form SELI characterization in Lemma \ref{lem:norms}. We  see that the (relative) magnitude of the classifiers  agrees with the value predicted by the SELI geometry. Analogous plots for the embeddings' norms are given in Sec.~\ref{sec:SM_exp_norm}.

\begin{figure*}
	\vspace{-10pt}
	\centering
	\hspace{-60pt}
	\begin{subfigure}{0.9\textwidth}
		\centering
		\begin{tikzpicture}
			\node at (0,0) 
			{\includegraphics[scale=0.25]{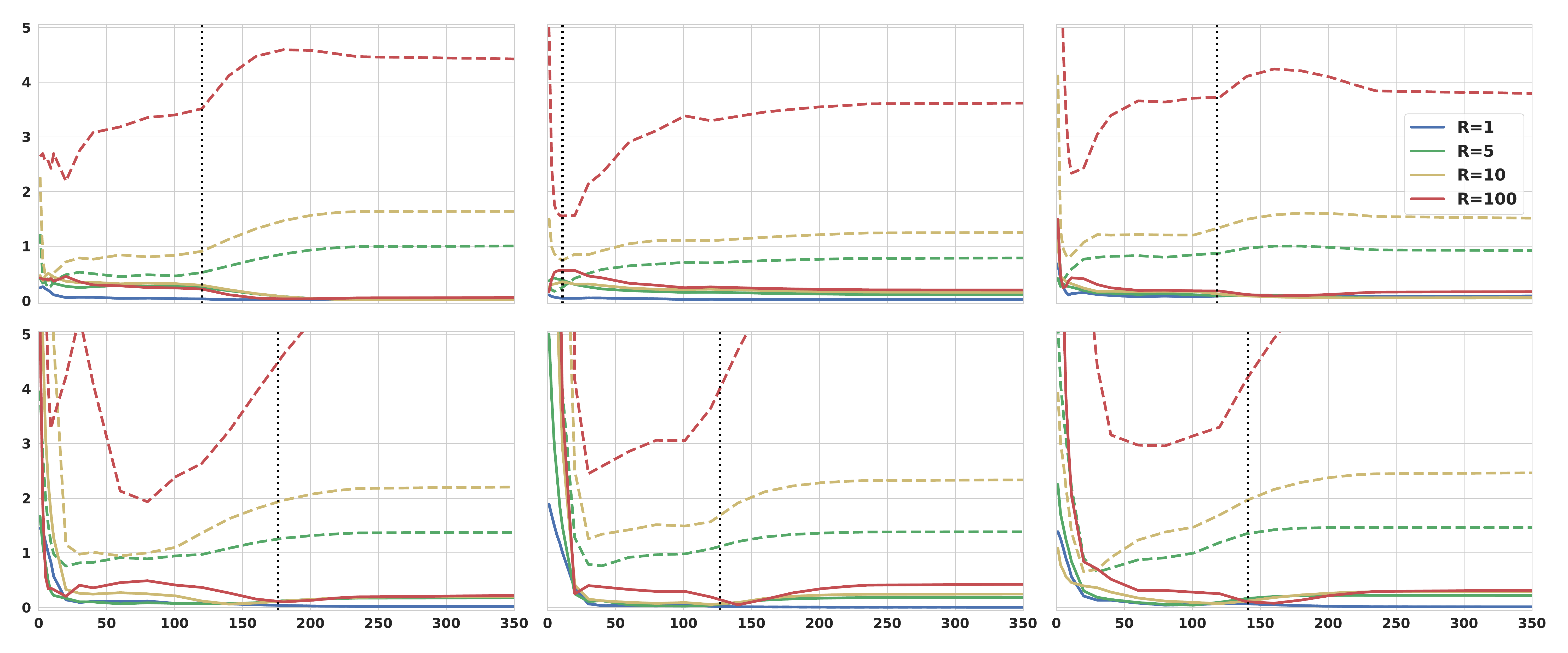}};
			\node at (5.0,-3.2) [scale=0.7] {Epochs};
			\node at (0,-3.2) [scale=0.7] {Epochs};
			\node at (-5.0,-3.2) [scale=0.7] {Epochs};
			\node at (-7.6,1.8) [scale=0.7, rotate=90]{\textbf{ResNet}};
			\node at (-7.6,-1.3) [scale=0.7, rotate=90]{\textbf{VGG}};
			\node at (-8.0,0.2) [scale=0.7, rotate=90] {Distance to SELI/ETF};
			\node at (-5.0,3.1) [scale=0.9] {\textbf{CIFAR10}};
			\node at (0.0,3.1) [scale=0.9] {\textbf{MNIST}};
			\node at (5.0,3.1) [scale=0.9] {\new{\textbf{Fashion-MNIST}}};
		\end{tikzpicture}
	\end{subfigure}	\vspace{-5pt}
	\caption{Convergence of classifier norm ratio ($\|\wmaj\|_2/\|\wmin\|_2$) to SELI (solid) vs ETF (dashed)
	for different imbalance levels $R$. The SELI geometry values are computed as per Lemma \ref{lem:norms}.}
	\label{fig:norm_ratio_exp_W}
\end{figure*}

\section{Outlook: Imbalance troubles and opportunities}\label{sec:out}
We propose \emph{\ref{SELI}} as the class-imbalance-invariant geometry of classifiers and embeddings learnt by overparameterized models when trained beyond zero training error. We arrive at it after showing that the UF-SVM global minimizers follow this geometry. Subsequently, we conjecture that: (C1) GD on the UFM leads to solutions approaching the SELI geometry asymptotically in the number of epochs; (C2) training of deep-nets beyond zero training error learns models that approach the SELI geometry. 

Statement (C1) is a conjecture and does not follow from Theorem~\ref{thm:SVM}. This is because GD is only known to converge to KKT points of the non-convex UF-SVM \cite{lyu2019gradient,ULPM}, which are not necessarily global minima.  Thus, while Proposition~\ref{propo:reg_path} combined with the benign landscape of the corresponding ridge-regularized minimization, as well as, our experiments suggest its validity, the conjecture remains to be further investigated.\footnote{Relying on \citep{ji2020directional}, \citet{vardi2021margin} showed gradient flow  finds global minimizers in deep linear nets. However, these only apply to binary classification. We show that interesting behaviors might occur (e.g. in the role of regularization and geometry) when $k>2$ and $R>1$ calling  further studies.} We also note that convergence rates appear slower with increasing imbalance levels; a feature which is interesting to study further. 

Regarding conjecture (C2), we deem our experiments encouraging: the classifiers' and embeddings' geometries get closer to the SELI geometry as neural-network training progresses. Specifically, the Gram matrices $\G_\W$ and $\G_\Hb$ of the classifiers and embeddings, respectively, align increasingly better with their SELI counterparts $\Vb\Lambdab\Vb^T$ and $\Ub\Lambdab\Ub^T$; see \Fig~\ref{fig:intro_CIFAR}. Also, \Fig~\ref{fig:norm_ratio_exp_W} shows that Lemma \ref{lem:norms} is able to make predictions regarding the norms of majorities versus minorities. We note that, similar to the UFM experiments, convergence appears slower for larger imbalance ratios. Also, we observe better convergence for classifiers compared to embeddings {and for norm predictions compared to angle predictions}. Overall, we hope our results motivate further theoretical and experimental investigations, especially since data imbalances appear frequently across applications. 

 Beyond that, we believe that further similar studies on identifying geometric structures of learned embeddings and classifiers could offer new perspectives on generalization. Our results could pave that way since they uncover different geometries (aka SELI for different $R$ values), each leading to different generalization (worse for increasing $R$ \cite{byrd2019effect}). 
 Relatedly, we envision that further such studies lead to algorithmic contributions in  imbalanced deep-learning as they can facilitate studying the implicit-bias effect of CE adjustments and post-hoc techniques tailored to imbalanced data \cite{TengyuMa,Menon,CDT,KimKim,kang2020decoupling,kini2021label,li2021autobalance}.

\label{sec:out}

\bibliography{refs}


\newpage
\appendix

\section*{Roadmap to the Supplementary material}

The SM contains the following. In Section \ref{sec:SEL_mat_properties} we derive several useful properties of the simplex-encoded-label (\SEL)~matrix, which we then use  in Section \ref{sec:SELI_properties} to present  closed-form characterizations of the embeddings and classifiers geometries. Notably, these include explicit expressions for the norms and angles of both majority and minority classes, which we accompany with numerical illustrations shedding further light on the features of the proposed \emph{\ref{SELI}} geometry. Next, in Section \ref{sec:proofs_UF-SVM}, we use the derived properties of the \SEL~matrix to prove our main Theorem \ref{thm:SVM} and its corollaries. In Section \ref{sec:nuc_norm_SM} we derive several useful properties of the nuclear-norm penalized CE minimization, which we then use in Section \ref{sec:proofs_regularization} to prove the statements of Section \ref{Sec:reg}. Section \ref{sec:UFM} contains additional numerical results on the UFM, such as experiments with varying weight-decay and regularization choices. Additional experiments on real data (such as, isolated minority/majority geometry investigations, and classifiers/embeddings norm ratios), as well as further implementation details, are included in Section \ref{sec:real_exp_SM}. In Section \ref{sec:minority} we show how results relate to minority collapse providing additional theoretical justifications and novel perspectives to empirical observations reported by previous work. Finally, an elaborate discussion on additional related works is contained in Section \ref{sec:rel2}.
\addtocontents{toc}{\protect\setcounter{tocdepth}{3}}
\tableofcontents

\section{Properties of the \SEL~matrix}\label{sec:SEL_mat_properties}
\subsection{Basic facts}

We gather some basic properties about the \SEL~matrix below. These properties hold without any assumptions on the number of examples  $n_1,\ldots,n_k$ per class other than $n_c\geq 1,\forall c\in[k].$

\begin{lemma}[\SEL~matrix --- Basic Facts]\label{lem:Zhat}
The following statements are true.

\begin{enumerate}[label={(\roman*)},itemindent=0em]

\item\label{state:SEL_prop_Y}
 Let $\Y$ be the zero-one hot encoding label matrix, i.e. $ \Y[c,i] = \begin{cases}
1 & ,\,c=y_i \\
0 & ,\,c\neq y_i
\end{cases}, ~
\forall c\in[k], i\in[n].
$ Then, $\Zhat=\Y-\frac{1}{k}\ones_k\ones_n^T$.

\item\label{state:SEL_prop_rank}
 $\Zhat^T\ones_k=0$ and $\operatorname{rank}(\Zhat)=k-1$.

\item\label{state:SEL_prop_SVD}
 $\Zhat^T$ admits a compact SVD $\Zhat^T=\Ub\Lambdab\Vb^T$, such that $\Lambdab$ is a $(k-1)$-diagonal, and $\Ub\in\R^{n \times (k-1)}$, $\Vb\in\R^{k \times (k-1)}$ are partial orthogonal matrices, i.e. $\Ub^T\Ub=\Vb^T\Vb=\Id_{k-1}$. Moreover $\Vb^T\ones_{k}=0,$ and the $k-1$ columns of $\Vb$ span the subspace orthogonal to $\ones_k$, i.e. $\Vb\Vb^T=\Id_{k}-\frac{1}{k}\ones_k\ones_k^T.$

\item\label{state:SEL_prop_Grams}
$\Zhat^T\Zhat = \Y^T\Y-\frac{1}{k}\ones\ones^T$  and  $\Zhat\Zhat^T = \diag(\nb)+\frac{n}{k^2}\ones_{k}\ones_k^T-\frac{1}{k}\nb\ones_k^T-\frac{1}{k}\ones_k\nb^T$, where we defined $\nb=[n_1,n_2,\ldots,n_k]^T$

\item \label{state:SEL_prop_CE}
Let $\Lc(\Z)=\sum_{i\in[n]}\log\big(1+\sum_{c\neq y_i}e^{-(\Z[y_i,i]-\Z[c,i])}\big)$. Then,  $\nabla_{\Z}\Lc(\alpha\Zhat) = -\frac{k}{e^\alpha+k-1}\Zhat$, for all $\alpha\in\R$.

\end{enumerate}
\end{lemma}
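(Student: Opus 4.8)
The plan is to prove all five statements by direct matrix manipulation, using as the single unifying tool the identity relating $\Zhat$ to the one-hot matrix $\Y$. Statement \ref{state:SEL_prop_Y} is immediate from Definition \ref{def:SEL}, since entrywise $\Zhat[c,i]=\Y[c,i]-1/k$ whether or not $c=y_i$. For \ref{state:SEL_prop_rank}, the column sums of $\Zhat$ are $(1-1/k)+(k-1)(-1/k)=0$, giving $\Zhat^T\ones_k=0$. For the rank, I would use $\ones_k^T\Y=\ones_n^T$ to rewrite \ref{state:SEL_prop_Y} as $\Zhat=(\Id_k-\tfrac1k\ones_k\ones_k^T)\,\Y$; because the $k$ rows of $\Y$ have pairwise disjoint support and are nonzero (each $n_c\geq1$), $\Y$ maps $\R^n$ onto $\R^k$, so the column space of $\Zhat$ equals the range of the projector $\Id_k-\tfrac1k\ones_k\ones_k^T$, namely $\ones_k^\perp$, which has dimension $k-1$.

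Existence of a compact SVD $\Zhat^T=\Ub\Lambdab\Vb^T$ with $\Lambdab\in\R^{(k-1)\times(k-1)}$ positive diagonal and $\Ub^T\Ub=\Vb^T\Vb=\Id_{k-1}$ then follows from $\rank(\Zhat)=k-1$. The content of \ref{state:SEL_prop_SVD} is that the columns of $\Vb$ span the column space of $\Zhat$, which we have just identified as $\ones_k^\perp$; hence $\Vb^T\ones_k=0$, and $\Vb\Vb^T$, being the orthogonal projector onto $\ones_k^\perp$, equals $\Id_k-\tfrac1k\ones_k\ones_k^T$.

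For \ref{state:SEL_prop_Grams} I would substitute $\Zhat=\Y-\tfrac1k\ones_k\ones_n^T$ and expand both Gram matrices, using the bookkeeping identities $\Y^T\ones_k=\ones_n$, $\Y\ones_n=\nb$, $\Y\Y^T=\diag(\nb)$, $\ones_k^T\ones_k=k$, and $\ones_n^T\ones_n=n$. The cross terms combine so that $\Zhat^T\Zhat=\Y^T\Y-\tfrac1k\ones_n\ones_n^T$ and $\Zhat\Zhat^T=\diag(\nb)+\tfrac{n}{k^2}\ones_k\ones_k^T-\tfrac1k\nb\ones_k^T-\tfrac1k\ones_k\nb^T$, as claimed.

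Finally, for \ref{state:SEL_prop_CE} I would use that the gradient of the multiclass CE loss in logits $\Z[:,i]$ with label $y_i$ is $\nabla_{\Z[c,i]}\Lc=\mathrm{softmax}_c(\Z[:,i])-\Y[c,i]$, and evaluate the softmax at $\Z=\alpha\Zhat$: after cancelling the common factor $e^{-\alpha/k}$, column $i$ carries softmax mass $e^\alpha/(e^\alpha+k-1)$ on coordinate $y_i$ and $1/(e^\alpha+k-1)$ on each of the remaining $k-1$ coordinates. Comparing, in the case $c=y_i$ (value $-\tfrac{k-1}{e^\alpha+k-1}$) and the case $c\neq y_i$ (value $\tfrac{1}{e^\alpha+k-1}$), with $-\tfrac{k}{e^\alpha+k-1}\Zhat[c,i]$ yields equality in both. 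None of these steps is a genuine obstacle; the only places asking for care are the column-space argument in \ref{state:SEL_prop_rank}--\ref{state:SEL_prop_SVD} (one must check that $\Vb$ \emph{spans} $\ones_k^\perp$, not merely that it is orthogonal to $\ones_k$) and the softmax normalization in \ref{state:SEL_prop_CE}, which is exactly what makes the scalar $k/(e^\alpha+k-1)$ come out uniform over all entries.
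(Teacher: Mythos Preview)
Your proposal is correct and follows essentially the same approach as the paper's proof, which likewise dispatches each item by direct inspection and Gram-matrix expansion. The only minor deviation is in \ref{state:SEL_prop_rank}: the paper argues rank $\geq k-1$ by observing that the $k-1$ vectors $\eb_c-\tfrac{1}{k}\ones_k$, $c\in[k-1]$, are linearly independent, whereas you obtain the full column-space identification $\operatorname{col}(\Zhat)=\ones_k^\perp$ via the factorization $\Zhat=(\Id_k-\tfrac1k\ones_k\ones_k^T)\Y$ and surjectivity of $\Y$; your route is slightly more informative since it makes \ref{state:SEL_prop_SVD} immediate.
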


\begin{proof}

\noindent\underline{Proof of \ref{state:SEL_prop_Y}:} Follows directly from the definition.

\noindent\underline{Proof of \ref{state:SEL_prop_rank}:} The fact that $\Zhat^T\ones_k=0$ is easy to check. Hence, the rank is at most $k-1$. The fact that the rank is exactly $k-1$ follows by noting that the vectors $\eb_c-\frac{1}{k}\ones_k, c\in[k-1]$ are linearly independent.

\noindent\underline{Proof of \ref{state:SEL_prop_SVD}:} Follows directly from Statement \ref{state:SEL_prop_rank}.

\noindent\underline{Proof of \ref{state:SEL_prop_Grams}}: Follows easily by direct calculations. 

\noindent\underline{Proof of \ref{state:SEL_prop_CE}}: We show in Lemma \ref{lem:grad_ones} below that $-\nabla_{\Z} \Lc(\Z) =\Y-\A$ with $\A[c,i]:=\frac{e^{-(\Z[y_i,i]-\Z[c,i])}}{1+\sum_{c'\neq y_i}e^{-(\Z[y_i,i]-\Z[c',i])}}. 
$
It is easy to see that for $\Z=\alpha\Zhat$, the matrix $\A$ can be written as $\A=\frac{e^{\alpha}-1}{k-1+e^\alpha}\Y+\frac{1}{k-1+e^\alpha}\ones_k\ones_n^T$. The desired then follows by recalling Statement \ref{state:SEL_prop_Y}.
\end{proof}

\begin{lemma}[Auxiliary result---Gradient of CE]\label{lem:grad_ones}
The negative gradient of the cross-entropy loss $
\Lc(\Z) = \sum_{i=1}^{n}\log\left(1+\sum_{c\neq y_i}e^{-(\Z[y_i,i]-\Z[c,i])}\right)$ takes the form $-\nabla_{\Z} \Lc(\Z) =\Y-\A$ where $\Y$ is the one-hot encoding label matrix and for $i\in[n], c\in[k]$ we denote $\A[c,i]:=\frac{e^{-(\Z[y_i,i]-\Z[c,i])}}{1+\sum_{c'\neq y_i}e^{-(\Z[y_i,i]-\Z[c',i])}}
$. Thus, for any $\Z\in\R^{k\times n}$, it holds that $\ones_{k}^T\nabla_{\Z} \Lc(\Z) = \zeros$.
\end{lemma}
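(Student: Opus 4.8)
The plan is to exploit the separable structure of $\Lc$ over data points and reduce everything to a single-column computation. Since $\Lc(\Z)=\sum_{i=1}^{n}\ell_i(\z_i)$ with $\z_i:=\Z[:,i]$ the $i$-th column of $\Z$ and $\ell_i(\z_i):=\log\!\big(1+\sum_{c\neq y_i}e^{-(\z_i[y_i]-\z_i[c])}\big)$, the matrix $\nabla_\Z\Lc(\Z)$ has $i$-th column equal to $\nabla_{\z_i}\ell_i$. So it suffices to differentiate $\ell_i$ with respect to each coordinate $\z_i[c]$, treating separately the off-diagonal case $c\neq y_i$ and the diagonal case $c=y_i$, and then to reassemble the columns.

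Concretely, I would introduce the shorthand $S_i:=1+\sum_{c\neq y_i}e^{-(\z_i[y_i]-\z_i[c])}$ for the argument of the logarithm, so that $\A[c,i]=e^{-(\z_i[y_i]-\z_i[c])}/S_i$ for $c\neq y_i$ and, with the convention in the lemma, $\A[y_i,i]=1/S_i$. For an off-diagonal coordinate $c\neq y_i$ the chain rule gives $\partial\ell_i/\partial\z_i[c]=e^{-(\z_i[y_i]-\z_i[c])}/S_i=\A[c,i]$, hence $-\partial\ell_i/\partial\z_i[c]=-\A[c,i]=\Y[c,i]-\A[c,i]$ since $\Y[c,i]=0$. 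For the diagonal coordinate $c=y_i$, the variable $\z_i[y_i]$ appears with coefficient $-1$ in every summand, so $\partial\ell_i/\partial\z_i[y_i]=-\sum_{c\neq y_i}e^{-(\z_i[y_i]-\z_i[c])}/S_i=-(S_i-1)/S_i=\A[y_i,i]-1$, hence $-\partial\ell_i/\partial\z_i[y_i]=1-\A[y_i,i]=\Y[y_i,i]-\A[y_i,i]$. Collecting the two cases coordinate by coordinate and column by column yields $-\nabla_\Z\Lc(\Z)=\Y-\A$.

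For the final claim I would record the auxiliary observation already used above: each column of $\A$ is stochastic, since $\sum_{c\in[k]}\A[c,i]=\A[y_i,i]+\sum_{c\neq y_i}\A[c,i]=1/S_i+(S_i-1)/S_i=1$. Combined with the fact that every column of the one-hot matrix $\Y$ also sums to one, this gives $\ones_k^T(\Y-\A)=\ones_n^T-\ones_n^T=\zeros$, i.e. $\ones_k^T\nabla_\Z\Lc(\Z)=\zeros$.

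There is no genuine obstacle here; the computation is elementary calculus. The only point that warrants care is the bookkeeping of the diagonal entry $c=y_i$ — where the label logit appears inside every summand of $\ell_i$ — and checking that the convention $\A[y_i,i]=1/S_i$ makes the identity $-\nabla_\Z\Lc=\Y-\A$ hold uniformly across all $(c,i)$; once that is in place, the column-stochasticity of $\A$, and hence $\ones_k^T\nabla_\Z\Lc=\zeros$, is immediate.
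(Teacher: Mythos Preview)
Your proposal is correct and follows essentially the same approach as the paper: both split into the cases $c=y_i$ and $c\neq y_i$, compute the partial derivatives directly via the chain rule, and then observe that the column sums of $\Y$ and $\A$ are each identically one to conclude $\ones_k^T\nabla_\Z\Lc(\Z)=\zeros$. Your exposition is slightly more explicit about the column-stochasticity of $\A$, but the argument is the same.
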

\begin{proof}
The proof is straightforward. Denote for convenience, $Z_{ci}:=\Z[c,i]$ and $s_{ci}:=\Z[y_i,i]-\Z[c,i]$.  Taking derivatives with respect to $Z_{ci}, c\in[k]$ we have for $c=y_i$, 
$$\frac{\partial \Lc}{\partial Z_{y_i i}}=-\frac{\sum_{c'\neq y_i}e^{-s_{c'i}}}{1+\sum_{c''\neq y_i}e^{-s_{c''i}}}\frac{\partial s_{c'i}}{\partial Z_{y_ii}} = - \frac{\sum_{c'\neq y_i}e^{-s_{c'i}}}{1+\sum_{c''\neq y_i}e^{-s_{c''i}}} = - \Big(1-\frac{1}{1+\sum_{c'\neq y_i}e^{-s_{c'i}}}\Big)=-(1-\A[y_i,i])
$$
and, for $c\neq y_i$,
$$
\frac{\partial \Lc}{\partial Z_{ci}} = - \frac{e^{-s_{ci}}}{1+\sum_{c'\neq y_i}e^{-s_{c'i}}}\frac{\partial s_{ci}}{\partial Z_{ci}} = \frac{e^{-s_{ci}}}{1+\sum_{c'\neq y_i}e^{-s_{c'i}}}=\A[c,i].
$$
Thus, $\sum_{c=1}^{k}\frac{\partial \Lc}{\partial Z_{ci}} = 0, \forall i \in [n]$. Hence, $\ones_{k}^T\nabla_{\Z} \Lc(\Z) = \zeros$ for every $\Z$.
\end{proof}

\subsection{Eigen-structure}\label{sec:eigen_SEL}
In this section, we explicitly compute the eigenstructure of the \SEL~matrix for $(R,\rho)$-STEP imbalanced data. To simplify the expressions, we assume $n_{\min}=1$. \footnote{It is rather easy to derive all formulas without this requirement. Concretely, the singular values in \eqref{eq:Lambda_general} are multiplied by $\sqrt{n_{\min}}$, the $(1,1)$-- and $(2,3)$-- blocks of $\Ub$ in \eqref{eq:U_general} are multiplied by   $1/\sqrt{n_{\min}}$ and the dimensions of $\Ub$ also adjust appropriately. Nevertheless, this does not change the values of $\Ub\Lambdab\Ub^T$ and of $\Vb\Lambdab\Vb^T$ aside from a global scaling. Besides, the assumption $\nmin=1$ is essentially without loss of generality because the \emph{\ref{NC}} property holds under the SELI property.} 
Also, we need the following definitions. For $m\in[k]$, let $\Pb_{m}\in\R^{m\times(m-1)}$ denote an orthonormal basis of the subspace orthogonal to $\ones_m$, i.e. $\Pb_m\Pb_m^T=\Id_m-\frac{1}{m}\ones_m\ones_m^T$ and $\Pb_m^T\Pb_m=\Id_{m-1}.$  We will also denote $\Sb_m:=\Id_m-\frac{1}{k}\ones_m\ones_m^T \in\R^{m}.$

\begin{lemma}[\SEL~matrix --- SVD for STEP imbalance]
\label{lem:Zhat_SVD_general}
 Assume $(R,\rho)$-STEP imbalanced data and $n_{\min}=1$. Also, denote $\rhobar=1-\rho$ and recall that the total number of examples is $n=(\rho + R\rhobar)k.$ Then, the SVD factors of $\Zhat=\Vb\Lambdab\Ub^T$ are given as follows:

\begin{align}
&\Lambdab=\diag{\left(\begin{bmatrix}
\sqrt{R}\ones_{(\rhobar k-1)}^T & \sqrt{\rhobar+R\rho} &\ones_{(\rho k-1)}^T
\end{bmatrix}
\right)}\,\label{eq:Lambda_general}
\\
\Vb&= \begin{bmatrix} 
\Pb_{\rhobar k}  & -\sqrt{\frac{\rho/\rhobar}{k}}\,\ones_{\rhobar k} & 0_{(\rhobar k)\times{(\rho k-1)}}
\\ 
0_{(\rho k)\times{(\rhobar k-1)}}  & \sqrt{\frac{\rhobar/\rho}{k}}\ones_{\rho k} & \Pb_{\rho k}
\end{bmatrix} \label{eq:V_general}
\\
\Ub&= \begin{bmatrix} 
\frac{1}{\sqrt{R}}\Pb_{\rhobar k}\otimes\ones_R  & -\sqrt{\frac{\rho/\rhobar}{(\rhobar+R\rho)k}}\ones_{R\rhobar k } & 0_{(R\rhobar k)\times{(\rho k-1)}}
\\ 
0_{(\rho k)\times{(\rhobar k-1)}}  & \sqrt{\frac{\rhobar/\rho}{(\rhobar+R\rho)k}}\ones_{\rho k} & \Pb_{\rho k}
\end{bmatrix} \,. \label{eq:U_general}
\end{align}

\end{lemma}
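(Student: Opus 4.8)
The plan is to verify directly that the right-hand sides of \eqref{eq:Lambda_general}--\eqref{eq:U_general} furnish a compact SVD of $\Zhat$; since $\Lambdab$ has $k-1$ strictly positive diagonal entries, it suffices to check that $\Vb^T\Vb=\Ub^T\Ub=\Id_{k-1}$ and that $\Zhat^T\Vb=\Ub\Lambdab$. Both checks are governed by the block structure induced by the STEP imbalance, so I would first fix the ordering $\nb=[R\ones_{\rhobar k}^T,\ones_{\rho k}^T]^T$ (recall $\nmin=1$ and $n=(\rhobar R+\rho)k$) and record, via Lemma~\ref{lem:Zhat}\ref{state:SEL_prop_Grams}, that
\[
\Zhat\Zhat^T=\diag(\nb)+\tfrac{n}{k^2}\ones_k\ones_k^T-\tfrac{1}{k}\nb\ones_k^T-\tfrac{1}{k}\ones_k\nb^T .
\]
This $k\times k$ matrix is constant on each of the three coordinate-pair blocks (maj–maj, min–min, maj–min), hence its eigenvectors split into: (a) vectors supported on the majority coordinates and orthogonal to $\ones_{\rhobar k}$; (b) vectors supported on the minority coordinates and orthogonal to $\ones_{\rho k}$; (c) vectors that are constant on each of the two blocks. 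This is precisely the partition mirrored by the three column-blocks of $\Vb$.

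For types (a)–(b) a one-line evaluation of the displayed formula gives $\Zhat\Zhat^T v=Rv$ when $v$ is of type (a) and $\Zhat\Zhat^T v=v$ when $v$ is of type (b); this yields eigenvalue $R$ with multiplicity $\rhobar k-1$ (eigenvectors $[\Pb_{\rhobar k};0]$) and eigenvalue $1$ with multiplicity $\rho k-1$ (eigenvectors $[0;\Pb_{\rho k}]$), matching the outer two blocks of $\Vb$ and the corresponding entries of $\Lambdab$. For type (c), the two-dimensional constant-per-block subspace contains $\ones_k$, which lies in $\ker\Zhat$ by Lemma~\ref{lem:Zhat}\ref{state:SEL_prop_rank}; its orthogonal complement inside that subspace is spanned by the middle column $\vb$ of $\Vb$, and substituting $\vb$ into the displayed formula (the $\tfrac{n}{k^2}\ones_k\ones_k^T$ and the symmetric $\tfrac1k$ terms cancel because $\vb\perp\ones_k$) gives eigenvalue $\rhobar+R\rho$. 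Verifying that $\vb$ as written is a unit vector orthogonal to the other two blocks is immediate, so $\Vb$ has orthonormal columns spanning $\ones_k^\perp$ (whence also $\Vb\Vb^T=\Id_k-\tfrac1k\ones_k\ones_k^T$) and $\Vb\Lambdab^2\Vb^T=\Zhat\Zhat^T$.

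It then remains to identify $\Ub$. Since $\Lambdab$ is invertible and $\Vb\Lambdab^2\Vb^T=\Zhat\Zhat^T$, defining $\Ub:=\Zhat^T\Vb\Lambdab^{-1}$ automatically yields $\Ub^T\Ub=\Lambdab^{-1}\Vb^T\Zhat\Zhat^T\Vb\Lambdab^{-1}=\Id_{k-1}$ and $\Zhat^T\Vb=\Ub\Lambdab$, so the only remaining task is to compute $\Zhat^T\Vb$ column by column. For any $v\perp\ones_k$ one has $\Zhat^T v=\Y^T v$ by Lemma~\ref{lem:Zhat}\ref{state:SEL_prop_Y} (the $\tfrac1k\ones_n\ones_k^T$ term drops), and $\Y^T$ simply repeats the value assigned to class $c$ on every example of class $c$, i.e. $R$ times for a majority class and once for a minority class. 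Applying this to the three column-blocks of $\Vb$ reproduces respectively $\tfrac{1}{\sqrt R}\Pb_{\rhobar k}\otimes\ones_R$ (after dividing by the singular value $\sqrt R$), the claimed middle column of $\Ub$ (after dividing by $\sqrt{\rhobar+R\rho}$), and $\Pb_{\rho k}$ (dividing by $1$), which is exactly \eqref{eq:U_general}. Finally I would note the boundary conventions: when a block has size $1$ (e.g. $k=2,\rho=1/2$) the relevant $\Pb_1$ is the empty $1\times0$ matrix and that eigenspace is absent, and the reduction to $\nmin=1$ is the one described in the footnote, which only rescales $\Ub,\Lambdab$ without changing $\Ub\Lambdab\Ub^T$ or $\Vb\Lambdab\Vb^T$.

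All steps are elementary linear algebra; the only place that demands care is bookkeeping the normalization constants of the ``mixed'' eigenvector $\vb$ and of the middle column of $\Ub$, and tracking the action of $\Y^T$ on block-constant vectors through the Kronecker factor $\ones_R$. That — rather than any conceptual difficulty — is the step I expect to be the main obstacle.
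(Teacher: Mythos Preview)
Your proposal is correct and takes a genuinely different route from the paper. The paper's proof is pure verification: it writes $\Vb=[\Vb_{\maj},\vb,\Vb_{\minor}]$ and $\Ub=[\Ub_{\maj},\ub,\Ub_{\minor}]$, checks $\Vb^T\Vb=\Ub^T\Ub=\Id_{k-1}$ block by block, and then multiplies out $\Vb\Lambdab\Ub^T$ directly and matches it against the block form \eqref{eq:Zhat_block} of $\Zhat$. Your approach is instead constructive: you diagonalize $\Zhat\Zhat^T$ using Lemma~\ref{lem:Zhat}\ref{state:SEL_prop_Grams} to \emph{derive} $\Vb$ and $\Lambdab$, then obtain $\Ub=\Zhat^T\Vb\Lambdab^{-1}$ via the identity $\Zhat^Tv=\Y^Tv$ for $v\perp\ones_k$, which makes the Kronecker factor $\ones_R$ appear transparently. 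This buys you an explanation of \emph{why} the formulas look the way they do (the three eigenblocks correspond to the maj--maj, min--min, and mixed subspaces), whereas the paper's computation is shorter once the formulas are already on the table.

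Two small points. First, your opening sentence ``it suffices to check $\Vb^T\Vb=\Ub^T\Ub=\Id_{k-1}$ and $\Zhat^T\Vb=\Ub\Lambdab$'' is not quite self-contained: from $\Zhat^T\Vb=\Ub\Lambdab$ you get $\Vb\Lambdab\Ub^T=\Vb\Vb^T\Zhat$, so you also need $\Vb\Vb^T\Zhat=\Zhat$. You do supply this later (noting $\Vb\Vb^T=\Id_k-\tfrac1k\ones_k\ones_k^T$ and invoking $\ones_k^T\Zhat=0$), but it would be cleaner to say so upfront. Second, for the middle eigenvector $\vb$ you write that ``the symmetric $\tfrac1k$ terms cancel because $\vb\perp\ones_k$''; in fact only $-\tfrac1k\nb(\ones_k^T\vb)$ vanishes, while $-\tfrac1k\ones_k(\nb^T\vb)$ does not (since $\nb$ is not parallel to $\ones_k$ when $R>1$). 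The remaining term $-\tfrac1k(\nb^T\vb)\ones_k$ combines with $\diag(\nb)\vb$ to give $(\rhobar+R\rho)\vb$, so the conclusion is right, but the bookkeeping you flagged as ``the main obstacle'' is slightly more involved than your sketch indicates.
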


\begin{proof}
The challenging part is coming up with the formulas in \eqref{eq:Lambda_general}, \eqref{eq:V_general} and \eqref{eq:U_general} for the SVD factors. The lemma already does this for us. Hence, proving that the formulas are correct involves a few tedious calculations, which we present below. 

Let us define for convenience:
\begin{align*}
\Vb_{\text{maj}}:=\begin{bmatrix}
\Pb_{\rhobar k} \\
0_{(\rho k)\times{(\rhobar k-1)}} 
\end{bmatrix}
\quad
\vb = \frac{1}{\sqrt{k}}\begin{bmatrix}
-\sqrt{\frac{\rho}{\rhobar}}\ones_{\rhobar k} \\ \sqrt{\frac{\rhobar}{\rho}}\ones_{\rho k}
\end{bmatrix}
\quad
\Vb_{\text{min}}:=\begin{bmatrix}
0_{(\rhobar k)\times{(\rho k-1)}} \\
 \Pb_{\rho k}
\end{bmatrix}
\end{align*}
so that $\Vb=[\Vb_{\text{maj}},\vb,\Vb_{\text{min}}]$. 
Recalling for $m=\rhobar k$ or $m=\rho k$ that $\Pb_m^T\Pb_m=\Id_{m-1}$ and $\Pb_m^T\ones_m=0$ it is easy to check that $\Vb^T\Vb=\Id_{k-1}.$

Similarly, let
\begin{align*}
\Ub_{\text{maj}}:=\begin{bmatrix}
\frac{1}{\sqrt{R}}\Pb_{\rhobar k}\otimes\ones_R 
\\
0_{(\rho k)\times{(\rhobar k-1)}}  
\end{bmatrix}
\quad
\ub = \frac{1}{\sqrt{(\rhobar+R\rho)k}}\begin{bmatrix}
-\sqrt{\frac{\rho}{\rhobar}}\ones_{R\rhobar k} \\ \sqrt{\frac{\rhobar}{\rho}}\ones_{\rho k}
\end{bmatrix}
\quad
\Ub_{\text{min}}:=\begin{bmatrix}
0_{(R\rhobar k)\times{(\rho k-1)}} \\
 \Pb_{\rho k}
\end{bmatrix}
\end{align*}
so that $\Ub=[\Ub_{\text{maj}},\ub,\Ub_{\text{min}}]$. 
With same argument as above, it is  easy to check that $\Ub^T\Ub=\Id_{k-1}.$ Here, we also use that 
\begin{align*}
\frac{1}{R}(\Pb_{\rhobar k}\otimes\ones_R)^T(\Pb_{\rhobar k}\otimes\ones_R)=\frac{1}{R}(\Pb_{\rhobar k}^T\otimes\ones_R^T)(\Pb_{\rhobar k}\otimes\ones_R) = \frac{1}{R}\Pb_{\rhobar k}^T\Pb_{\rhobar k}\otimes \ones_R^T\ones_R = \Id_{\rhobar k}
\end{align*}
Thus, it  suffices to show that $\Vb\Lambdab\Ub^T=\Zhat.$ The key observation here is that $\Zhat$ can be written in block-form as follows
\begin{align}\label{eq:Zhat_block}
\Zhat = \begin{bmatrix}
\Sb_{\rhobar k}\otimes\ones_R^T
&
-\frac{1}{k}\ones_{\rhobar k}\ones^T_{\rho k}
\\
-\frac{1}{k}\ones_{\rho k}\ones^T_{R\rhobar k}
&
\Sb_{\rho k}
\end{bmatrix}.
\end{align} 
With these, we have the following direct calculations:
\begin{align*}
\Vb\Lambdab\Ub^T &= \sqrt{R}\Vb_{\text{maj}}\Ub_{\text{maj}}^T + \big(\sqrt{\rhobar + R\rho}\big)\vb\ub^T + \Vb_{\text{min}}\Ub_{\text{min}}^T
\\
&=\begin{bmatrix}
\Pb_{\rhobar k}\Pb_{\rhobar k}^T\otimes\ones_R^T & 0\\
0 & 0
\end{bmatrix}
+
\frac{1}{k}
\begin{bmatrix}
\frac{\rho}{\rhobar}\ones_{\rhobar k}\ones_{R \rhobar k}^T & -\ones_{\rhobar k}\ones_{\rho k}^T
\\
-\ones_{\rho k}\ones_{R \rhobar k}^T & \frac{\rhobar}{\rho}\ones_{\rho k}\ones_{\rho k}^T
\end{bmatrix}
+
\begin{bmatrix}
0 & 0
\\
0 & \Pb_{\rho k}\Pb_{\rho k}^T
\end{bmatrix}
\\&
=\begin{bmatrix}
\big(\Id_{\rhobar k}-\frac{1}{\rhobar k}\ones_{\rhobar k}\ones_{\rhobar k}^T\big)\otimes\ones_R^T & 0\\
0 & 0
\end{bmatrix}
+
\frac{1}{k}
\begin{bmatrix}
\frac{\rho}{\rhobar}\ones_{\rhobar k}\ones_{\rhobar k}^T\otimes\ones_R^T & -\ones_{\rhobar k}\ones_{\rho k}^T
\\
-\ones_{\rho k}\ones_{R \rhobar k}^T & \frac{\rhobar}{\rho}\ones_{\rho k}\ones_{\rho k}^T
\end{bmatrix}
+
\begin{bmatrix}
0 & 0
\\
0 & \Id_{\rho k}-\frac{1}{\rho k}\ones_{\rho k}\ones_{\rho k}^T
\end{bmatrix}
\\
&=
\begin{bmatrix}
\big(\Id_{\rhobar k}-\frac{1}{k}\ones_{\rhobar k}\ones_{\rhobar k}^T\big)\otimes\ones_R^T & -\frac{1}{k}\ones_{\rhobar k}\ones_{\rho k}^T\\
-\frac{1}{k}\ones_{\rho k}\ones_{R\rhobar k}^T & \Id_{\rho k}-\frac{1}{k}\ones_{\rho k}\ones_{\rho k}^T
\end{bmatrix}
\\
&= \Zhat.
\end{align*}
\end{proof}
\subsubsection{Special case: Balanced data}

When classes are  balanced, i.e. $R=1$, the following simple description of the SVD factors is immediate to see from Lemma \ref{lem:Zhat_SVD_general}.

\begin{corollary}
Assume balanced data and $n_{\min}=1$. Recall that $\Pb_k\in\R^{k\times(k-1)}$ denotes an orthonormal basis of the subspace orthogonal to $\ones_{k}$. Then, $\Zhat=\Pb_k\Pb_k^T$, that is
$$
\Lambdab=\Id_{k-1},\quad \Ub=\Vb=\Pb_k.
$$
\end{corollary}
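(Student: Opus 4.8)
The plan is to obtain this Corollary as a direct specialization of what precedes, for which essentially no computation is needed. First I would observe that with $n_{\min}=1$ and $R=1$ every class contains exactly one (ordered) example, so $n=k$ and the one-hot label matrix appearing in Lemma \ref{lem:Zhat}\ref{state:SEL_prop_Y} is simply $\Y=\Id_k$. That lemma then gives $\Zhat=\Y-\tfrac1k\ones_k\ones_k^T=\Id_k-\tfrac1k\ones_k\ones_k^T$. On the other hand, the defining property of $\Pb_k$ recorded just before Lemma \ref{lem:Zhat_SVD_general} is exactly $\Pb_k\Pb_k^T=\Id_k-\tfrac1k\ones_k\ones_k^T$. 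Hence $\Zhat=\Pb_k\Pb_k^T$, which is the first assertion of the corollary.

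Next I would read off the SVD from this identity. Since $\Pb_k$ has orthonormal columns ($\Pb_k^T\Pb_k=\Id_{k-1}$), the matrix $\Pb_k\Pb_k^T$ is the orthogonal projector onto $\ones_k^{\perp}$, i.e.\ a symmetric, rank-$(k-1)$ idempotent whose nonzero eigenvalues are all equal to $1$. Its compact SVD is therefore $\Pb_k\,\Id_{k-1}\,\Pb_k^T$. Because $\Zhat$ is symmetric in the balanced case, $\Zhat^T=\Zhat=\Pb_k\,\Id_{k-1}\,\Pb_k^T$, and matching this against the compact SVD $\Zhat^T=\Ub\Lambdab\Vb^T$ of Definition \ref{def:SEL} yields $\Lambdab=\Id_{k-1}$ and $\Ub=\Vb=\Pb_k$. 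As an internal consistency check, one can instead substitute $R=1$ directly into the general formulas \eqref{eq:Lambda_general}--\eqref{eq:U_general}: the majority and minority singular values are $\sqrt{R}=1$ and $1$, the middle one is $\sqrt{\rhobar+R\rho}=\sqrt{\rhobar+\rho}=1$, so $\Lambdab=\Id_{k-1}$, and with $R=1$ the $\Ub$- and $\Vb$-blocks coincide and reassemble into an orthonormal basis of $\ones_k^{\perp}$.

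I do not expect any real obstacle here; the only point worth a sentence is the usual non-uniqueness of singular vectors in the presence of repeated singular values. Since all $k-1$ singular values of $\Zhat$ equal $1$ in the balanced case, \emph{any} orthonormal basis of $\ones_k^{\perp}$ is an admissible choice for $\Ub=\Vb$; the matrix $\Pb_k$ is one canonical such basis, so the asserted equalities are to be read modulo that inherent rotational freedom, exactly as in Definition \ref{def:SEL}.
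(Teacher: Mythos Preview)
Your proposal is correct. The paper simply states that the corollary is ``immediate to see from Lemma~\ref{lem:Zhat_SVD_general}'', i.e.\ it specializes the general formulas \eqref{eq:Lambda_general}--\eqref{eq:U_general} to $R=1$, which is exactly your consistency check; your primary argument (compute $\Zhat=\Id_k-\tfrac1k\ones_k\ones_k^T$ directly and recognize it as the projector $\Pb_k\Pb_k^T$) is an equally valid and slightly more self-contained route that bypasses the general lemma altogether.
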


\subsubsection{Special case: Equal minorities / majorities ($\rho=1/2$)}

Another special case of interest is when the numbers of minorities and majorities are the same, i.e. $\rho=1/2$. In this case, we get the following simplification of Lemma \ref{lem:Zhat_SVD_general}.

\begin{corollary}Consider the setting of step imbalance with even number $k=2m, m\geq 1$ of classes. Let $n_{\min}=1$. Then, the SVD of $\Zhat$  is as follows:
\begin{align*}
\Lambdab&=\diag{\left(\begin{bmatrix}
\sqrt{R}\ones_{(m-1)}^T & \sqrt{(R+1)/2} &\ones_{(m-1)}^T
\end{bmatrix}
\right)}\,,
\\
\Vb&= \begin{bmatrix} 
\Pb_{m}  & -\frac{1}{\sqrt{k}}\ones_m & 0_{m\times{(m-1)}}
\\ 
0_{m\times{(m-1)}}  & +\frac{1}{\sqrt{k}}\ones_m & \Pb_m
\end{bmatrix} 
\\
\Ub&= \begin{bmatrix} 
\frac{1}{\sqrt{R}}\Pb_{m}\otimes\ones_R  & -\frac{1}{\sqrt{n}}\ones_{Rm} & 0_{(Rm)\times{(m-1)}}
\\ 
0_{m\times{(m-1)}}  & +\frac{1}{\sqrt{n}}\ones_m & \Pb_m
\end{bmatrix} \,.
\end{align*}
\end{corollary}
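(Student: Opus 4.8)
The plan is simply to specialize Lemma~\ref{lem:Zhat_SVD_general} to the symmetric regime $\rho=\nicefrac12$; no independent argument is needed, because that lemma already certifies that the displayed triple $(\Ub,\Lambdab,\Vb)$ is a valid compact SVD of $\Zhat$ for \emph{every} $(R,\rho)$-STEP configuration. First I would substitute $\rho=\rhobar=\nicefrac12$, so that $\rhobar k=\rho k=m$ and $R\rhobar k=Rm$, and recompute the total sample count $n=(\rho+R\rhobar)k=\tfrac{R+1}{2}\cdot 2m=(R+1)m$. Under these substitutions the general singular-value vector $[\sqrt R\,\ones_{\rhobar k-1}^T,\ \sqrt{\rhobar+R\rho},\ \ones_{\rho k-1}^T]$ becomes $[\sqrt R\,\ones_{m-1}^T,\ \sqrt{(R+1)/2},\ \ones_{m-1}^T]$, using $\rhobar+R\rho=\tfrac{1+R}{2}$, which is exactly the claimed $\Lambdab$.

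Next I would simplify the two ``mixing'' columns of $\Vb$ and $\Ub$, which are the only places where the value $\rho=\nicefrac12$ genuinely enters. Since $\rho/\rhobar=\rhobar/\rho=1$, the scalar $\sqrt{\tfrac{\rho/\rhobar}{k}}$ equals $\tfrac{1}{\sqrt k}$, so the middle column of $\Vb$ is $\tfrac{1}{\sqrt k}[-\ones_m^T,\ \ones_m^T]^T$; likewise $\sqrt{\tfrac{\rho/\rhobar}{(\rhobar+R\rho)k}}=\tfrac{1}{\sqrt{(\rhobar+R\rho)k}}=\tfrac{1}{\sqrt n}$, so the middle column of $\Ub$ is $\tfrac{1}{\sqrt n}[-\ones_{Rm}^T,\ \ones_m^T]^T$. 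The remaining blocks carry over verbatim: $\Pb_{\rhobar k}=\Pb_{\rho k}=\Pb_m$, the block $\tfrac{1}{\sqrt R}\Pb_{\rhobar k}\otimes\ones_R$ becomes $\tfrac{1}{\sqrt R}\Pb_m\otimes\ones_R$, and the zero blocks $0_{(\rhobar k)\times(\rho k-1)}$, $0_{(\rho k)\times(\rhobar k-1)}$, $0_{(R\rhobar k)\times(\rho k-1)}$ become $0_{m\times(m-1)}$, $0_{m\times(m-1)}$, $0_{(Rm)\times(m-1)}$, respectively. Collecting these yields precisely the matrices stated in the corollary.

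Because the identity $\Vb\Lambdab\Ub^T=\Zhat$ and the orthonormality $\Ub^T\Ub=\Vb^T\Vb=\Id_{k-1}$ are inherited directly from Lemma~\ref{lem:Zhat_SVD_general}, the proof is complete once the substitutions are recorded. I do not anticipate any real obstacle: the entire content is bookkeeping of block dimensions (replacing $\rhobar k$ and $\rho k$ uniformly by $m$, and $R\rhobar k$ by $Rm$) together with the two normalizers $\tfrac{1}{\sqrt k}$ and $\tfrac{1}{\sqrt n}$. The single point worth double-checking is that $(\rhobar+R\rho)k$ really equals the new $n=(R+1)m$, so that the $\tfrac{1}{\sqrt n}$ normalization of the mixing column of $\Ub$ is stated consistently with the convention used elsewhere in the paper.
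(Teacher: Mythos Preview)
Your proposal is correct and matches the paper's approach exactly: the corollary is presented in the paper as an immediate specialization of Lemma~\ref{lem:Zhat_SVD_general} with $\rho=\nicefrac12$, without a written-out proof, and your substitutions (in particular $\rhobar+R\rho=(R+1)/2$ and $(\rhobar+R\rho)k=n$) are precisely the bookkeeping that justifies it.
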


\subsection{A useful property of the singular spaces}\label{sec:SVD_elementwise}

The following result is particularly important for the proof of Theorem \ref{thm:SVM}. It shows that the singular spaces $\Vb, \Ub$ of $\Zhat$ are such that the matrix $\Ub\Vb^T$ has entries that agree on their sign with the sign of the entries of $\Zhat^T.$

\begin{lemma}\label{eq:SVD_elementwise}
Recall the setting of Lemma \ref{lem:Zhat_SVD_general} and the SVD $\Zhat=\Vb\Lambdab\Ub^T$. The matrix $\hat\Bb=\Ub\Vb^T$ satisfies the following element-wise \emph{strict} inequalities: $\hat\Bb\odot\Zhat^T>0.$
\end{lemma}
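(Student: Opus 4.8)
The plan is to prove the strict inequality $\hat\Bb\odot\Zhat^T>0$ by an explicit, block-wise computation of $\hat\Bb=\Ub\Vb^T$, using the closed forms for $\Ub,\Vb$ from Lemma~\ref{lem:Zhat_SVD_general}. Just as $\Zhat$ decomposes into majority/minority blocks in \eqref{eq:Zhat_block}, the matrix $\hat\Bb=\Ub\Vb^T$ will decompose into the same four blocks (with the row/column dimensions of $\Zhat^T$), and I will show each entry of each block has the correct sign: strictly positive where $\Zhat^T[i,c]=1-1/k>0$ (i.e. $c=y_i$), and strictly negative where $\Zhat^T[i,c]=-1/k<0$ (i.e. $c\neq y_i$). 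Using the notation $\Ub=[\Ub_{\text{maj}},\ub,\Ub_{\text{min}}]$ and $\Vb=[\Vb_{\text{maj}},\vb,\Vb_{\text{min}}]$ from the proof of Lemma~\ref{lem:Zhat_SVD_general}, I would write
\[
\hat\Bb=\Ub\Vb^T=\Ub_{\text{maj}}\Vb_{\text{maj}}^T+\ub\vb^T+\Ub_{\text{min}}\Vb_{\text{min}}^T,
\]
and evaluate the three summands separately. The rank-one term $\ub\vb^T$ is immediate from the explicit vectors: its $(\text{maj},\text{maj})$ block is $+\tfrac{\rho/\rhobar}{k\sqrt{R(\rhobar+R\rho)}}\,\ones\ones^T>0$ component-wise, its off-diagonal blocks are negative multiples of $\ones\ones^T$, and its $(\text{min},\text{min})$ block is a positive multiple of $\ones\ones^T$. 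The term $\Ub_{\text{maj}}\Vb_{\text{maj}}^T$ only touches the majority rows/columns and equals $\tfrac{1}{\sqrt{R}}(\Pb_{\rhobar k}\Pb_{\rhobar k}^T)\otimes\ones_R=\tfrac{1}{\sqrt R}\big(\Id_{\rhobar k}-\tfrac1{\rhobar k}\ones\ones^T\big)\otimes\ones_R$; likewise $\Ub_{\text{min}}\Vb_{\text{min}}^T=\Pb_{\rho k}\Pb_{\rho k}^T=\Id_{\rho k}-\tfrac1{\rho k}\ones\ones^T$ touches only the minority block.

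Assembling these, the diagonal (same-class) entries of $\hat\Bb$ in the majority block are $\tfrac1{\sqrt R}(1-\tfrac1{\rhobar k})+\tfrac{\rho/\rhobar}{k\sqrt{R(\rhobar+R\rho)}}$, which is manifestly positive; the same-class minority entries are $1-\tfrac1{\rho k}+(\text{positive})$, again positive. For the off-diagonal entries one must check two cases. Cross-class entries between two distinct \emph{majority} classes come from $\tfrac1{\sqrt R}\cdot(-\tfrac1{\rhobar k})$ plus the positive rank-one contribution; I will verify the first term dominates, i.e. that $\tfrac{\rho/\rhobar}{k\sqrt{R(\rhobar+R\rho)}}<\tfrac1{\sqrt R \rhobar k}$, equivalently $\rho<\sqrt{\rhobar+R\rho}$, which holds since $\rho=1-\rhobar<1\le\rhobar+R\rho$ (using $R\ge1$ and $\rho\in(0,1)$). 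The cross-class entries between two distinct \emph{minority} classes, and the majority-minority cross entries, are handled by the same type of estimate. So the only genuine content is a small family of elementary scalar inequalities of the form ``the rank-one perturbation does not overwhelm the $\Pb_m\Pb_m^T$ diagonal-minus-mean structure'', all provable from $R\ge1$, $\rho\in(0,1)$.

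The main obstacle is purely organizational rather than conceptual: keeping the block indexing straight (majority rows of $\Zhat^T$ have multiplicity $R$ while columns do not, and the Kronecker factor $\ones_R$ must be tracked carefully) and confirming that the worst-case off-diagonal estimate — the majority-majority one above — is the binding one, so that the single inequality $\rho<\sqrt{\rhobar+R\rho}$ suffices for strictness everywhere. I would close the proof by remarking that the case $\rho=1/2$ (used in the experiments) specializes to the even cleaner bounds from the corollary following Lemma~\ref{lem:Zhat_SVD_general}, and that the strictness is exactly what is needed in the proof of Theorem~\ref{thm:SVM} to conclude the third dual constraint $\Bb\odot\Zhat^T\ge0$ is inactive at $\hat\Bb$.
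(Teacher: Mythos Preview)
Your approach is essentially identical to the paper's: both compute $\hat\Bb=\Ub\Vb^T$ block-by-block from the explicit SVD factors of Lemma~\ref{lem:Zhat_SVD_general}, then verify that each block has the same sign pattern as the corresponding block of $\Zhat^T$ via elementary scalar inequalities. Two small corrections: (i) the majority--majority block of the rank-one term $\ub\vb^T$ is $\tfrac{\rho/\rhobar}{k\sqrt{\rhobar+R\rho}}\ones\ones^T$, not $\tfrac{\rho/\rhobar}{k\sqrt{R(\rhobar+R\rho)}}\ones\ones^T$ (no extra $\sqrt{R}$), though the resulting inequality still holds; (ii) the majority--majority and minority--minority off-diagonal checks reduce to two distinct inequalities (essentially $R\rho^2<\rhobar+R\rho$ and $\rhobar^2<\rhobar+R\rho$), not one ``binding'' case, but both are immediate from $\rho,\rhobar\in(0,1)$.
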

\begin{proof}
From Lemma \ref{lem:Zhat_SVD_general}, we have explicit expressions for  the SVD factors $\Ub$ and $\Vb$. From these, we can directly compute that
\begin{align}
\begin{bmatrix}
\hat\Bb_{11} & \hat\Bb_{12}
\\
\hat\Bb_{21} & \hat\Bb_{22}
\end{bmatrix}:=\hat\Bb = \Ub\Vb^T = 
\begin{bmatrix}
\frac{1}{\sqrt{R}}\Pb_{\rhobar k}\Pb_{\rhobar k}^T\otimes\ones_R & 0
\\
0 & 0
\end{bmatrix} + 
\frac{1}{k\sqrt{\rhobar+R\rho}}
\begin{bmatrix}
\frac{\rho}{\rhobar}\ones_{R\rhobar k}\ones_{\rhobar k}^T & -\ones_{R\rhobar k}\ones_{\rho k}^T \\
-\ones_{\rho k}\ones_{\rhobar k}^T & \frac{\rhobar}{\rho}\ones_{\rho k}\ones_{\rho k}^T
\end{bmatrix}
+
\begin{bmatrix}
0& 0
\\
0 & \Pb_{\rho k}\Pb_{\rho k}^T 
\end{bmatrix}
\nn.
\end{align} 
To continue, recall again that for any integer $m$: $\Pb_m\Pb_m^T=\Id_m-\frac{1}{m}\ones_m\ones_m^T$. Also note that $\ones_{R\rhobar k}\ones_{\rhobar k}^T=(\ones_{\rhobar k}\ones_{\rhobar k}^T)\otimes\ones_R$. Hence continuing from the display above we find that
\begin{align*}
\hat\Bb_{11} &= 
\left(\frac{1}{\sqrt{R}}\Id_{\rhobar k}-\left(\frac{1}{\rhobar k\sqrt{R}}-\frac{\rho/\rhobar}{k\sqrt{\rhobar + R \rho}}\right)\ones_{\rhobar k}\ones_{\rhobar k}\right)
\otimes\ones_R = \frac{1}{\sqrt{R}}\left(\Id_{\rhobar k}-\frac{1}{\rhobar k}\left(1-\sqrt{\frac{{R \rho}}{R+{\rhobar}/{\rho}}}\right)\ones_{\rhobar k}\ones_{\rhobar k}\right)
\otimes\ones_R 
\\
\hat\Bb_{12} &= -\frac{1}{k\sqrt{\rhobar+R\rho}}\ones_{R\rhobar k}\ones_{\rho k}^T\qquad\text{ and }\qquad
\hat\Bb_{21} = -\frac{1}{k\sqrt{\rhobar+R\rho}}\ones_{\rho k}\ones_{\rhobar k}^T
\\
\hat\Bb_{22} &= 
\Id_{\rho k}-\left(\frac{1}{\rho k}-\frac{\rhobar/\rho}{k\sqrt{\rhobar + R \rho}}\right)\ones_{\rho k}\ones_{\rho k}^T
= 
\Id_{\rho k}-\frac{1}{\rho k}\left(1-\sqrt{\frac{\rhobar}{{1 + R \left({\rho}/{\rhobar}\right)}}}\right)\ones_{\rho k}\ones_{\rho k}^T.
\end{align*} 
Finally, recall from \eqref{eq:Zhat_block} the block-form of $\Zhat$ repeated here for convenience 
$$
\Zhat^T := \begin{bmatrix}
\Zhat_{11}^T
&
\Zhat_{12}^T
\\
\Zhat_{21}^T
&
\Zhat_{22}^T
\end{bmatrix} = \begin{bmatrix}
\Sb_{\rhobar k}\otimes\ones_R
&
-\frac{1}{k}\ones_{R \rhobar k}\ones^T_{\rho k}
\\
-\frac{1}{k}\ones_{\rho k}\ones^T_{\rhobar k}
&
\Sb_{\rho k}\end{bmatrix}.
$$
By inspection, the signs of $\hat\Bb_{12}$, $\hat\Bb_{21}$ are negative, same as the signs of $\Zhat_{21}^T$, $\Zhat_{12}^T$. To see that the signs of the diagonal blocks also agree it suffices to check that the following strict inequalities always hold
$$
1>1-\sqrt{\frac{{R \rho}}{R+{\rhobar}/{\rho}}}>0\qquad\text{and}\qquad 1>1-\sqrt{\frac{\rhobar}{1 + R \left({\rho}/{\rhobar}\right)}} >0.
$$
This completes the proof of the lemma.
\end{proof}


\section{The SELI geometry}\label{sec:SELI_properties}

As mentioned the SELI geometry is described in terms of the SVD of the \SEL~matrix $\Zhat$.  In this section, we show that it is in fact possible to get explicit closed-form expressions describing the SELI geometry in terms of the parameters $R,\rho,k$. Key to this is the explicit construction of the SVD factors in \Sec~\ref{sec:eigen_SEL}.

For concreteness, we focus on the case of equal numbers of minorities and majorities (i.e. $\rho=1/2$) since the formulas are somewhat simpler and the setting is of sufficient interest to convey main messages. Extension to the general case can be done in a similar fashion.

All results in this section hold under the following assumptions (assumed throughout without further explicit reference):
\begin{itemize}
\item A $(R,1/2)$-STEP imbalanced setting.
\item The classifiers $\w_c,c\in[k]$ and the embeddings $\h_i,i\in[n]$ follow the SELI geometry in Definition \ref{def:SELI}.
\end{itemize}

\subsection{Closed-form expressions}\label{sec:SELI_explicit}
\subsubsection{Norms}

The following two lemmas are essentially restatements of Lemma \ref{lem:norms}. 

\begin{lemma}[Norms of classifiers]\label{lem:norms_w}
The following statements are true about the norms of the classifiers.
\begin{enumerate}[label={(\roman*)},itemindent=0em]

\item The classifier norms across all majority / minority classes are all the same. That is, 
$$
\forall c=1,\ldots,k/2\,,\quad\|\w_c\|_2 =: \|\wmaj\|_2
\quad\text{and}\quad
\forall c=k/2+1,\ldots,k\,,\quad\|\w_c\|_2 =: \|\wmin\|_2
$$
where we let $\|\wmaj\|_2$ / $\|\wmin\|_2$ denote the majority / minority norm of an arbitrary class of the corresponding type.

\item It holds that
\begin{align}
\|\wmaj\|_2^2 = \sqrt{R}(1-2/k)+\frac{\sqrt{(R+1)/2}}{k}
\quad\text{and}\quad
\|\wmin\|_2^2 = (1-2/k)+\frac{\sqrt{(R+1)/2}}{k}.
\end{align}
Thus, $\|\wmaj\|_2\geq\|\wmin\|_2$ with equality if and only if $R=1$ or $k=2$.
\end{enumerate}
\end{lemma}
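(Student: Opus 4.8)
The plan is to read the diagonal entries of $\W^T\W$ off directly from the explicit SVD of the \SEL~matrix. By Definition~\ref{def:SELI}, any $(\W,\Hb)$ satisfying the \emph{\ref{SELI}} property has $\W^T\W=\alpha\,\Vb\Lambdab\Vb^T$ for some $\alpha>0$, so $\|\w_c\|_2^2=\alpha\,[\Vb\Lambdab\Vb^T]_{c,c}$ for every $c\in[k]$, and it suffices to compute the diagonal of $\Vb\Lambdab\Vb^T$. The closed forms claimed in part~(ii) correspond to the canonical normalization $\alpha=1$ and $\nmin=1$; for general $\nmin$ all classifier norms are scaled by a common factor (see the footnote of Lemma~\ref{lem:Zhat_SVD_general}), so the within-group equalities of part~(i) and the majority/minority comparison in part~(ii) are unaffected.

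Next I would invoke the $\rho=1/2$ specialization of Lemma~\ref{lem:Zhat_SVD_general} (so $k=2m$, $\nmin=1$) and split $\Vb=[\Vb_{\text{maj}},\vb,\Vb_{\text{min}}]$ into the blocks associated with the singular values $\sqrt{R}$, $\sqrt{(R+1)/2}$, and $1$, exactly as in the proof of Lemma~\ref{lem:Zhat_SVD_general}, giving
\begin{align*}
\Vb\Lambdab\Vb^T \;=\; \sqrt{R}\,\Vb_{\text{maj}}\Vb_{\text{maj}}^T \;+\; \sqrt{\tfrac{R+1}{2}}\;\vb\vb^T \;+\; \Vb_{\text{min}}\Vb_{\text{min}}^T .
\end{align*}
Using $\Pb_m\Pb_m^T=\Id_m-\tfrac1m\ones_m\ones_m^T$ together with the explicit block shapes ($\Vb_{\text{maj}}$ has $\Pb_m$ in its top block and $0$ below, $\vb=\tfrac1{\sqrt k}[-\ones_m^T,\ones_m^T]^T$, $\Vb_{\text{min}}$ has $\Pb_m$ in its bottom block and $0$ above), each of the three terms is a $2\times2$ block matrix whose diagonal is constant on the majority indices $\{1,\dots,m\}$ and constant on the minority indices $\{m+1,\dots,k\}$: the first term contributes $1-\tfrac1m$ on the majority block and $0$ on the minority block, $\vb\vb^T$ contributes $\tfrac1k$ on every index, and the third term contributes $0$ on the majority block and $1-\tfrac1m$ on the minority block. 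This constancy is exactly part~(i), and collecting the constants and substituting $m=k/2$ yields the two displayed formulas of part~(ii).

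Finally, for the comparison I would just subtract: $\|\wmaj\|_2^2-\|\wmin\|_2^2=(\sqrt{R}-1)\big(1-\tfrac2k\big)\ge 0$ since $R\ge 1$ and $k\ge 2$, with equality if and only if $\sqrt{R}=1$ or $1-\tfrac2k=0$, i.e. iff $R=1$ or $k=2$. I do not expect any real obstacle here: once the explicit SVD factors of Lemma~\ref{lem:Zhat_SVD_general} are in hand the argument is pure bookkeeping. The only mildly delicate points are (a) fixing the normalization convention, handled in the first paragraph, and (b) observing that $\Pb_m\Pb_m^T$ is the orthogonal projector onto $\ones_m^{\perp}$ and hence independent of the particular choice of orthonormal basis $\Pb_m$, which is what makes the diagonal entries — and therefore the classifier norms — well defined.
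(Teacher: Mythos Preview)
Your proposal is correct and follows essentially the same route as the paper: both compute $\Vb\Lambdab\Vb^T$ from the explicit SVD factors of Lemma~\ref{lem:Zhat_SVD_general} (specialized to $\rho=1/2$), split it into the three rank-blocks weighted by $\sqrt{R}$, $\sqrt{(R+1)/2}$, and $1$, use $\Pb_m\Pb_m^T=\Id_m-\tfrac1m\ones_m\ones_m^T$, and read off the diagonal. Your additional remarks on the scaling $\alpha$ and on the basis-independence of $\Pb_m\Pb_m^T$ are correct and slightly more careful than the paper, but the argument is the same.
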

\begin{proof}
Recall, since the classifiers follow the SELI geometry, it holds that $\W^T\W=\Vb\Lambdab\Vb^T.$ Hence, it suffices to compute the diagonal of the matrix $\Vb\Lambdab\Vb^T$. We have
\begin{align}
\Vb\Lambdab\Vb^T &= \sqrt{R}
\begin{bmatrix}
\Pb_m\Pb_m^T & 0 \\ 0 & 0
\end{bmatrix}
+\frac{\sqrt{(R+1)/2}}{k}\begin{bmatrix}
\ones_{m}\ones_{m}^T & -\ones_{m}\ones_{m}^T
\\
-\ones_{m}\ones_{m}^T & \ones_{m}\ones_{m}^T
\end{bmatrix}
+
\begin{bmatrix}
0 & 0 \\ 0 & \Pb_m\Pb_m^T
\end{bmatrix} \nn
\\
&=
\begin{bmatrix}
\sqrt{R}\Id_{k/2}-\frac{1}{k}\left(2\sqrt{R}-\sqrt{(R+1)/2}\right)\ones_{k/2}\ones_{k/2}^T
&
-\frac{\sqrt{(R+1)/2}}{k}\ones_{k/2}\ones_{k/2}^T
\\
-\frac{\sqrt{(R+1)/2}}{k}\ones_{k/2}\ones_{k/2}^T
&
\Id_{k/2}-\frac{1}{k}\left(2-\sqrt{(R+1)/2}\right)\ones_{k/2}\ones_{k/2}^T
\end{bmatrix}\label{eq:SELI_Gw}\,.
\end{align}
From this, the statements of the lemma follow readily.
\end{proof}

\begin{lemma}[Norms of embeddings]\label{lem:norms_h}
The following statements are true about the norms of the embeddings:
\begin{enumerate}[label={(\roman*)},itemindent=0em]

\item The embedding norms across all majority / minority classes are all the same. That is, 
\begin{align*}
\forall j\in \{i\in[n]\,:\, y_i=1,\ldots,k/2\} \,,\quad\|\h_j\|_2 &=: \|\hmaj\|_2, \qquad\text{and}
\\
\forall j\in\{i\in[n]\,:\, y_i=k/2+1,\ldots,k\}\,,\quad\|\h_j\|_2 &=: \|\hmin\|_2.
\end{align*}
where we let $\|\hmaj\|_2$ / $\|\hmin\|_2$ denote the majority / minority norm of an arbitrary example of the corresponding type.

\item It holds that
\begin{align}
\|\hmaj\|_2^2 = \frac{1}{\sqrt{R}}(1-2/k)+\frac{1}{k\sqrt{(R+1)/2}}\quad\text{and}\quad\|\hmin\|_2^2 = (1-2/k)+\frac{1}{k\sqrt{(R+1)/2}}.
\end{align}
 Thus, $\|\hmaj\|_2\leq\|\hmin\|_2$ with equality if and only if  $R=1$ or $k=2$.

\end{enumerate}
\end{lemma}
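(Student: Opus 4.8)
The plan is to follow exactly the template of the proof of Lemma~\ref{lem:norms_w} for the classifiers, but now working with the right singular factor $\Ub$ of $\Zhat$ instead of the left factor $\Vb$. Since $(\W,\Hb)$ satisfies the \ref{SELI} property, we have $\Hb^T\Hb = \Ub\Lambdab\Ub^T$ (normalizing the global scaling to $\alpha=1$, consistent with Theorem~\ref{thm:SVM}), so $\|\h_j\|_2^2$ is precisely the $j$-th diagonal entry of $\Ub\Lambdab\Ub^T$. By the \ref{NC} property (Corollary~\ref{cor:NC}) all embeddings of a given class coincide, so it suffices to extract one diagonal entry per class; moreover, by the observation recorded in \Sec~\ref{sec:eigen_SEL} that $\Ub\Lambdab\Ub^T$ is unchanged up to a global factor when passing from general $\nmin$ to $\nmin=1$, we may assume $\nmin=1$ throughout.

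First I would compute $\Ub\Lambdab\Ub^T$ explicitly. Plugging in the formulas for $\Ub$ and $\Lambdab$ from Lemma~\ref{lem:Zhat_SVD_general}, specialized to $\rho=1/2$, $k=2m$, $\nmin=1$ (so $n=(R+1)m$), and multiplying out the three corresponding groups of singular directions gives
\[
\Ub\Lambdab\Ub^T
= \tfrac{1}{\sqrt{R}}\begin{bmatrix}(\Pb_m\Pb_m^T)\otimes\ones_R\ones_R^T & 0 \\ 0 & 0\end{bmatrix}
+ \tfrac{\sqrt{(R+1)/2}}{n}\begin{bmatrix}\ones_{Rm}\ones_{Rm}^T & -\ones_{Rm}\ones_m^T \\ -\ones_m\ones_{Rm}^T & \ones_m\ones_m^T\end{bmatrix}
+ \begin{bmatrix}0 & 0 \\ 0 & \Pb_m\Pb_m^T\end{bmatrix}.
\]
Each of the three matrices on the right has a constant diagonal on the majority block and a constant diagonal on the minority block (using $(\Pb_m\Pb_m^T)_{jj} = 1-1/m$), which immediately yields part~(i). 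For part~(ii) I would simply read off these diagonals: on the majority block the diagonal equals $\tfrac{1}{\sqrt{R}}(1-1/m) + \tfrac{\sqrt{(R+1)/2}}{n}$, and on the minority block it equals $(1-1/m) + \tfrac{\sqrt{(R+1)/2}}{n}$. Substituting $m=k/2$ and $n=(R+1)k/2$, and simplifying $\tfrac{\sqrt{(R+1)/2}}{(R+1)k/2} = \tfrac{1}{k\sqrt{(R+1)/2}}$, gives the two claimed formulas. Finally, $\|\hmaj\|_2^2 - \|\hmin\|_2^2 = \big(\tfrac{1}{\sqrt{R}}-1\big)(1-2/k) \le 0$ since $R\ge 1$ and $k\ge 2$, with equality exactly when $R=1$ or $k=2$.

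No step here is genuinely difficult; the real content is Lemma~\ref{lem:Zhat_SVD_general} (the explicit SVD of $\Zhat$), which is assumed. The only point requiring care, and the one I would double-check, is the bookkeeping of normalizations: matching the scaling convention in the \ref{SELI} definition, tracking the $\sqrt{\nmin}$ factors and justifying the reduction to $\nmin=1$ via the \ref{NC} property, and using the correct total sample size $n=(R+1)m$ under $(R,1/2)$-STEP imbalance. Beyond that, it is the same Kronecker-product / rank-one diagonal computation already carried out for the classifiers in Lemma~\ref{lem:norms_w}.
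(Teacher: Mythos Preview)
Your proposal is correct and follows essentially the same approach as the paper: use $\Hb^T\Hb=\Ub\Lambdab\Ub^T$ from the \ref{SELI} property, plug in the explicit SVD factors from Lemma~\ref{lem:Zhat_SVD_general} (specialized to $\rho=1/2$, $\nmin=1$), expand $\Ub\Lambdab\Ub^T$ into the same three-block sum, and read off the constant majority/minority diagonals. Your coefficients $\tfrac{1}{\sqrt{R}}$ and $\tfrac{\sqrt{(R+1)/2}}{n}=\tfrac{1}{k\sqrt{(R+1)/2}}$ match the paper's exactly, and the final inequality argument is the same.
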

\begin{proof}
Recall, since the classifiers follow the SELI geometry, it holds that $\Hb^T\Hb=\Ub\Lambdab\Ub^T.$ Hence, it suffices to compute the diagonal of the matrix $\Ub\Lambdab\Ub^T$. Recalling that $n=k(R+1)/2$ We have
\begin{align}
\Ub\Lambdab\Ub^T &= \sqrt{R}
\begin{bmatrix}
\frac{1}{R}\Pb_m\Pb_m^T\otimes\ones_R\ones_R^T & 0 \\ 0 & 0
\end{bmatrix}
+\frac{1}{k\sqrt{(R+1)/2}}\begin{bmatrix}
\ones_{Rm}\ones_{Rm}^T & -\ones_{Rm}\ones_{m}^T
\\
-\ones_{m}\ones_{Rm}^T & \ones_{m}\ones_{m}^T
\end{bmatrix}
+
\begin{bmatrix}
0 & 0 \\ 0 & \Pb_m\Pb_m^T
\end{bmatrix} \nn
\\
&=
\begin{bmatrix}
\left(\frac{1}{\sqrt{R}}\Id_{k/2}-\frac{1}{k}\left(\frac{2}{\sqrt{R}}-\frac{1}{\sqrt{(R+1)/2}}\right)\ones_{k/2}\ones_{k/2}^T\right)\otimes \ones_R\ones_R^T
&
-\frac{1}{k\sqrt{(R+1)/2}}\ones_{k/2}\ones_{k/2}^T
\\
-\frac{1}{k\sqrt{(R+1)/2}}\ones_{k/2}\ones_{k/2}^T
&
\Id_{k/2}-\frac{1}{k}\left(2-\frac{1}{\sqrt{(R+1)/2}}\right)\ones_{k/2}\ones_{k/2}^T
\end{bmatrix}\label{eq:SELI_Gh}\,.
\end{align}
From this, the statements of the lemma follow readily by reading out the diagonal.
\end{proof}

\subsubsection{Angles}\label{sec:SELI_angles}

\begin{lemma}[Angles of classifiers]\label{lem:angles_w}
The following statements are true about the angles of the classifiers.
\begin{enumerate}[label={(\roman*)},itemindent=0em]

\item The classifiers' angles across all majority / minority classes are all the same. That is, 
\begin{align*}
\forall c\neq c'\in\{1,\ldots,k/2\}\,,\quad\Cos{\w_c}{\w_{c'}} &=: \Cos{\wmaj}{\wmaj'}
\\
\forall c\neq c'\in\{k/2+1,\ldots,k\}\,,\quad\Cos{\w_c}{\w_{c'}} &=: \Cos{\wmin}{\wmin'}
\\
\forall c\in\{1,\ldots,k/2\}, c'\in\{k/2+1,\ldots,k\}\,,\quad\Cos{\w_c}{\w_{c'}} &=: \Cos{\wmaj}{\wmin}\,.
\end{align*}

\item It holds that
\begin{align*}
\Cos{\wmaj}{\wmaj'}&=\frac{-2\sqrt{R}+\sqrt{(R+1)/2}}{(k-2)\sqrt{R}+\sqrt{(R+1)/2}}
\\
\Cos{\wmin}{\wmin'}&=\frac{R-7}{R-7+2k\left(2+\sqrt{(R+1)/2}\right)}
\\
\Cos{\wmaj}{\wmin}&=-\frac{\sqrt{(R+1)/2}}{k\|\wmaj\|_2\|\wmin\|_2}\,.
\end{align*}
\end{enumerate}
\end{lemma}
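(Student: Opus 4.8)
\textbf{Proof proposal for Lemma \ref{lem:angles_w} (Angles of classifiers).}

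The plan is to read off all the required inner products directly from the closed-form expression for $\Vb\Lambdab\Vb^T$ already derived in \eqref{eq:SELI_Gw} during the proof of Lemma \ref{lem:norms_w}. Recall that under the SELI geometry $\W^T\W = \Vb\Lambdab\Vb^T$ (up to the global scaling $\alpha$, which cancels in every cosine), so that $\inp{\w_c}{\w_{c'}} = (\Vb\Lambdab\Vb^T)[c,c']$ and $\|\w_c\|_2^2 = (\Vb\Lambdab\Vb^T)[c,c]$. Part (i) is then immediate from the block structure in \eqref{eq:SELI_Gw}: the $(1,1)$ block is $\sqrt{R}\Id_{k/2} - \tfrac{1}{k}(2\sqrt{R}-\sqrt{(R+1)/2})\ones\ones^T$, whose off-diagonal entries are all equal; likewise the $(2,2)$ block has all equal off-diagonal entries, and the $(1,2)$ block is a constant multiple of $\ones\ones^T$. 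Hence within the majority classes, within the minority classes, and across the two groups, the pairwise angles are constant, and we may define $\Coss{\wmaj}{\wmaj'}$, $\Coss{\wmin}{\wmin'}$, $\Coss{\wmaj}{\wmin}$ as in the statement.

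For part (ii), first I would extract the numerators. From \eqref{eq:SELI_Gw} the majority--majority inner product is the off-diagonal entry of the $(1,1)$ block, namely $-\tfrac{1}{k}\big(2\sqrt{R}-\sqrt{(R+1)/2}\big)$; the minority--minority inner product is the off-diagonal entry of the $(2,2)$ block, namely $-\tfrac{1}{k}\big(2-\sqrt{(R+1)/2}\big)$; and the majority--minority inner product is $-\tfrac{\sqrt{(R+1)/2}}{k}$. Dividing by the appropriate norms (Lemma \ref{lem:norms_w}(ii)): for the majority case, $\Coss{\wmaj}{\wmaj'} = \dfrac{-\tfrac{1}{k}(2\sqrt{R}-\sqrt{(R+1)/2})}{\sqrt{R}(1-2/k)+\sqrt{(R+1)/2}/k}$, and multiplying numerator and denominator by $k$ gives $\dfrac{-2\sqrt{R}+\sqrt{(R+1)/2}}{(k-2)\sqrt{R}+\sqrt{(R+1)/2}}$, matching the claim. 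The majority--minority formula $\Coss{\wmaj}{\wmin} = -\dfrac{\sqrt{(R+1)/2}}{k\|\wmaj\|_2\|\wmin\|_2}$ follows directly since the numerator is already in the stated form. For the minority--minority case, $\Coss{\wmin}{\wmin'} = \dfrac{-2+\sqrt{(R+1)/2}}{(k-2)+\sqrt{(R+1)/2}}$ after clearing the factor $k$; to reach the stated form $\dfrac{R-7}{R-7+2k(2+\sqrt{(R+1)/2})}$ I would rationalize by multiplying numerator and denominator by $\big(-2-\sqrt{(R+1)/2}\big)$ (or equivalently recognize $(\sqrt{(R+1)/2}-2)(\sqrt{(R+1)/2}+2) = (R+1)/2 - 4 = (R-7)/2$) and then simplify; this is the one place a short but slightly delicate algebraic manipulation is needed.

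The main obstacle, such as it is, is purely bookkeeping: correctly reading the off-diagonal entries out of the two $\Id - c\,\ones\ones^T$ blocks in \eqref{eq:SELI_Gw} (an $\Id_{k/2} \cdot 0$ on the off-diagonal plus the $-c$ from the rank-one piece), and then carrying out the rationalization that turns the minority--minority cosine into the $(R-7)$ form without sign errors. No conceptual difficulty arises because \eqref{eq:SELI_Gw} already packages all the needed data; the proof is a direct computation, and I would present it as such, finishing with the remark that all three formulas reduce to $-1/(k-1)$ when $R=1$ (using $\sqrt{(R+1)/2}=1$ there), consistent with Corollary \ref{cor:balanced}.
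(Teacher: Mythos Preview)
Your proposal is correct and follows essentially the same approach as the paper: read the off-diagonal entries of $\Vb\Lambdab\Vb^T$ from \eqref{eq:SELI_Gw} to get the three inner products, then divide by the norms from Lemma \ref{lem:norms_w}. You actually supply more detail than the paper does, in particular the rationalization step that converts $\dfrac{-2+\sqrt{(R+1)/2}}{(k-2)+\sqrt{(R+1)/2}}$ into the $(R-7)$ form, which the paper leaves implicit.
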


\begin{proof} Recall under the SELI geometry that $\W^T\W=\Vb\Lambdab\Vb^T$. Hence, inspecting the off-diagonal entries of the matrix computed in Equation \eqref{eq:SELI_Gw} gives Statement (i) and  the following inner-product relations:
\begin{align*}
\wmaj^T\wmaj' &= -\frac{1}{k}\left({2}{\sqrt{R}}-{\sqrt{(R+1)/2}}\right)
\\
\wmin^T\wmin' &= -\frac{1}{k}\left(2-{\sqrt{(R+1)/2}}\right)
\\
\wmin^T\wmaj &= -\frac{\sqrt{(R+1)/2}}{k}\,.
\end{align*}
Combine these with the norm calculations in Lemma \ref{lem:norms_w} to prove Statement (ii).
\end{proof}

\begin{lemma}[Angles of classifiers]\label{lem:angles_h}
The following statements are true about the angles of the embeddings.
\begin{enumerate}[label={(\roman*)},itemindent=0em]

\item The embeddings' angles across all majority / minority classes are all the same. That is, 
\begin{align*}
\forall c\neq c'\in\{1,\ldots,k/2\}\,,\quad\Cos{\h_c}{\h_{c'}} &=: \Cos{\hmaj}{\hmaj'}
\\
\forall c\neq c'\in\{k/2+1,\ldots,k\}\,,\quad\Cos{\h_c}{\h_{c'}} &=: \Cos{\hmin}{\hmin'}
\\
\forall c\in\{1,\ldots,k/2\}, c'\in\{k/2+1,\ldots,k\}\,,\quad\Cos{\h_c}{\h_{c'}} &=: \Cos{\hmaj}{\hmin}\,.
\end{align*}

\item It holds that 
\begin{align*}
\Cos{\hmaj}{\hmaj'}&=\frac{-(R+2)}{-(R+2)+k\left(R+1+\sqrt{R}\sqrt{(R+1)/2}\right)}
\\
\Cos{\hmin}{\hmin'}&=\frac{1-\sqrt{2}\sqrt{R+1}}{1-\sqrt{2}\sqrt{R+1}+k\sqrt{(R+1)/2}}
\\
\Cos{\hmaj}{\hmin}&=-\frac{1}{\|\hmaj\|_2\|\hmin\|_2\,k\,\sqrt{(R+1)/2}}\,.
\end{align*}
\end{enumerate}
\end{lemma}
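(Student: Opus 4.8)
The plan is to mirror, step for step, the argument used for the classifiers in Lemma~\ref{lem:angles_w}, only now working with the embedding Gram matrix instead of the classifier one. Under the \emph{\ref{SELI}} property we have $\Hb^T\Hb=\Ub\Lambdab\Ub^T$, and this matrix has already been written out in closed block form in Equation~\eqref{eq:SELI_Gh} while proving Lemma~\ref{lem:norms_h}. Consequently the whole argument reduces to reading off entries of that matrix and performing a short algebraic cleanup; no new structural ingredient is needed.

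For Statement~(i) I would invoke the block form in \eqref{eq:SELI_Gh}: the top-left block equals $\bigl(\tfrac{1}{\sqrt R}\Id_{k/2}-c_{\mathrm{maj}}\ones_{k/2}\ones_{k/2}^T\bigr)\otimes\ones_R\ones_R^T$ for a scalar $c_{\mathrm{maj}}$, so the inner product between two majority-class embeddings depends only on whether the two classes coincide — and, within a single class, all embeddings are literally equal, in agreement with the \emph{\ref{NC}} property of Corollary~\ref{cor:NC}, so ``$\h_c$'' may be read as the class-mean embedding. Likewise the bottom-right block is $\Id_{k/2}-c_{\mathrm{min}}\ones_{k/2}\ones_{k/2}^T$, giving the same for minorities, and the off-diagonal blocks are scalar multiples of $\ones\ones^T$, giving one common majority--minority value. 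Hence the three cosines in the statement are well defined.

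For Statement~(ii) I would extract from \eqref{eq:SELI_Gh} the inner products $\hmaj^T\hmaj'=-\tfrac1k\bigl(\tfrac{2}{\sqrt R}-\tfrac{1}{\sqrt{(R+1)/2}}\bigr)$, $\hmin^T\hmin'=-\tfrac1k\bigl(2-\tfrac{1}{\sqrt{(R+1)/2}}\bigr)$ and $\hmaj^T\hmin=-\tfrac{1}{k\sqrt{(R+1)/2}}$, and divide each by the squared norms computed in Lemma~\ref{lem:norms_h}. Writing $s:=\sqrt{(R+1)/2}$, the majority--minority formula is then immediate; the minority--minority one follows after multiplying numerator and denominator by $ks$ and using $2s=\sqrt{2(R+1)}$; and for the majority--majority cosine I would first obtain $\Cos{\hmaj}{\hmaj'}=\tfrac{\sqrt R-2s}{\sqrt R-2s+ks}$ and then multiply through by $\sqrt R+2s$, using $(\sqrt R-2s)(\sqrt R+2s)=R-4s^2=-(R+2)$ together with $2s^2=R+1$ to arrive at the stated closed form.

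Everything here is routine bookkeeping once \eqref{eq:SELI_Gh} is in hand. The only spot requiring a small idea rather than mechanical simplification is the difference-of-squares maneuver $(\sqrt R-2s)(\sqrt R+2s)=-(R+2)$ that brings the majority--majority cosine into the advertised form; I do not expect any genuine obstacle beyond that, nor beyond keeping careful track (via \emph{\ref{NC}}) that the $n\times n$ embedding Gram matrix collapses to a $k\times k$ one for the purpose of the angle formulas.
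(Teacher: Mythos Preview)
Your proposal is correct and follows essentially the same approach as the paper: read off the off-diagonal entries of the block form \eqref{eq:SELI_Gh} to get Statement~(i) and the three inner products, then divide by the norms from Lemma~\ref{lem:norms_h}. The paper's proof stops at ``combine these with the norm calculations,'' whereas you additionally spell out the algebraic cleanup (including the difference-of-squares step for the majority--majority cosine), which is a welcome elaboration but not a departure in method.
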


\begin{proof}
Recall under the SELI geometry that $\Hb^T\Hb=\Ub\Lambdab\Ub^T$. Hence, inspecting the off-diagonal entries of the matrix computed in Equation \eqref{eq:SELI_Gh} gives Statement (i) and  the following inner-product relations:
\begin{align*}
\hmaj^T\hmaj' &= -\frac{1}{k}\left(\frac{2}{\sqrt{R}}-\frac{1}{\sqrt{(R+1)/2}}\right)
\\
\hmin^T\hmin' &= -\frac{1}{k}\left(2-\frac{1}{\sqrt{(R+1)/2}}\right)
\\
\hmin^T\hmaj &= -\frac{1}{k\sqrt{(R+1)/2}}\,.
\end{align*}
Combine these with the norm calculations in Lemma \ref{lem:norms_h} to prove Statement (ii).
\end{proof}

\subsubsection{(Non)-alignment}
In the previous lemmas  we compute the angles between classifiers of different classes and between embeddings of different classes. Here, we also also compute the angles between classifiers and embeddings. Specifically, for each $c\in[k]$, we compute the angle $\Cos{\w_c}{\h_i}$ for an  example $i: y_i=c$ that belongs to the same class. These values can be thought of as the degree of alignment between classifiers and embeddings, as $\Cos{\w_c}{\h_i}=1$ corresponds to exact alignment between the two. 

\begin{lemma}[Alignment of classifiers and embeddings]\label{lem:align} The following statements are true about the degree of alignment between class-embeddings and their corresponding classifiers.

\begin{enumerate}[label={(\roman*)},itemindent=0em]\label{lem:angles_wh}

\item The angles between majority- / minority- class embeddings and their corresponding classifiers are all the same. That is,
\begin{align*}
\forall c\in\{1,\ldots,k/2\} \text{ and } i\,:\,y_i=c,~\Cos{\w_c}{\h_i} &= \Cos{\wmaj}{\hmaj}
\\
\forall c\in\{k/2+1,\ldots,k\} \text{ and } i\,:\,y_i=c,~\Cos{\w_c}{\h_i} &= \Cos{\wmin}{\hmin}.
\end{align*}

\item It holds that
\begin{align}
\Cos{\wmaj}{\hmaj} &= \frac{1-1/k}{\|\wmaj\|_2\|\hmaj\|_2} \label{eq:wh_maj}
\\
\Cos{\wmin}{\hmin} &= \frac{1-1/k}{\|\wmin\|_2\|\hmin\|_2}. \label{eq:wh_min}
\end{align}

\end{enumerate}

\end{lemma}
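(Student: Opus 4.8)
The plan is to read everything off directly from the \emph{\ref{SELI}} property, combining it with the norm formulas already established in Lemmas \ref{lem:norms_w} and \ref{lem:norms_h}. First I would recall from Definition \ref{def:SELI} that the top-right block of the displayed Gram identity gives $\W^T\Hb = \alpha\,\Zhat$ for some scaling $\alpha>0$. Hence, for any example $i$ with $y_i=c$, the corresponding logit is $\w_c^T\h_i = \alpha\,\Zhat[c,i] = \alpha\,(1-1/k)$, where we used that $\Zhat[c,i] = 1-1/k$ precisely because $c=y_i$ (Definition \ref{def:SEL}). The diagonal blocks of the same identity give $\|\w_c\|_2^2 = \alpha\,(\Vb\Lambdab\Vb^T)[c,c]$ and $\|\h_i\|_2^2 = \alpha\,(\Ub\Lambdab\Ub^T)[i,i]$, so $\alpha$ cancels in the cosine and
\[
\Cos{\w_c}{\h_i} = \frac{\w_c^T\h_i}{\|\w_c\|_2\,\|\h_i\|_2} = \frac{1-1/k}{\|\w_c\|_2\,\|\h_i\|_2}.
\]

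Next I would invoke Lemma \ref{lem:norms_w}, which shows $\|\w_c\|_2$ takes one common value $\|\wmaj\|_2$ over all majority classes $c\in\{1,\ldots,k/2\}$ and one common value $\|\wmin\|_2$ over all minority classes, together with Lemma \ref{lem:norms_h}, which shows $\|\h_i\|_2 = \|\hmaj\|_2$ for every $i$ in a majority class and $\|\h_i\|_2 = \|\hmin\|_2$ for every $i$ in a minority class. Substituting into the cosine identity above, the value of $\Cos{\w_c}{\h_i}$ depends only on whether the class $c$ is a majority or a minority; this is Statement (i). Specializing to the two cases then yields \eqref{eq:wh_maj} and \eqref{eq:wh_min}, which is Statement (ii). If a fully explicit expression in $R$ and $k$ is desired, one can additionally substitute the closed-form norm values from Lemmas \ref{lem:norms_w} and \ref{lem:norms_h}.

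There is essentially no obstacle: all the content is carried by the identity $\W^T\Hb = \alpha\,\Zhat$ baked into the \emph{\ref{SELI}} definition, and the only points requiring any care are that the cosine is scale-invariant (so the undetermined $\alpha$ is irrelevant) and that it is the ``same-class'' condition $y_i=c$ that selects the entry $1-1/k$ of $\Zhat$ rather than $-1/k$. As an alternative one could argue straight from Theorem \ref{thm:SVM}\ref{thm:global_logits}, which gives $\What^T\Hhat = \Zhat$ on the nose, but routing through the \emph{\ref{SELI}} property keeps the statement at the stated level of generality (any pair $(\W,\Hb)$ satisfying \emph{\ref{SELI}}, not just UF-SVM minimizers).
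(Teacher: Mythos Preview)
Your proposal is correct and follows essentially the same approach as the paper: the paper's proof is a one-liner noting that under the SELI geometry $\w_c^T\h_i=1-1/k$ whenever $y_i=c$, and then the cosine formulas follow immediately from the norm lemmas. Your version is slightly more careful in explicitly tracking the scaling $\alpha$ from Definition~\ref{def:SELI} and observing that it cancels in the cosine, whereas the paper (as in Lemmas~\ref{lem:norms_w} and~\ref{lem:norms_h}) implicitly normalizes $\alpha=1$; both routes are valid.
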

\begin{proof}
The proof is immediate by recognizing that under the SELI geometry for all $c\in[k]$ and $i:y_i= c$ it holds that $\w_c^T\h_i=1-1/k$.
\end{proof}

\subsubsection{Centering}\label{sec:SELIcentering}
It is easy to see that the classifiers $\w_c, c\in[k]$ following the SELI geometry are centered, i.e. $\sum_{c\in[k]}\w_c =0$. For example, we can see this from the facts that $\W^T\W=\Vb\Lambdab\Vb^T$ and $\Vb^T\ones_k=0.$ See also Lemma \ref{lem:W_ones} for an alternative proof. 

On the other hand, the embeddings $\h_i,i\in[n]$ are \emph{not} centered around zero in general. Instead, it holds that 
\begin{align}\label{eq:center_h}
\sum_{i\in[n]}\frac{1}{n_{y_i}} \h_i = 0.
\end{align}
Note that this reduces to $\sum_{i\in[n]}\h_i$ for balanced data, but is not true otherwise. To see \eqref{eq:center_h} recall first that $\Hb^T\Hb=\Ub\Lambdab\Ub^T$. Second, check that $\sum_{i\in[n]}\frac{1}{n_{y_i}}\zhat_i = 0$, i.e. $\Zhat\omegab=0$ where $\omegab[i]=1/n_{y_i}, i\in[n]$. Thus, $\Ub^T\omegab=0$. Combining these two it follows that $\Hb\omegab=0$, which gives the desired. 

Now, suppose that the \emph{\ref{NC}} property also holds, i.e. the embeddings collapse to their class means, that is, 
$$
\forall i\in[n],~~ \h_i = \mub_{y_i} := \frac{1}{n_{y_i}}\sum_{j\,:\,y_j=y_i} \h_j.
$$ 
Then, \eqref{eq:center_h} implies the following about the class means:
\begin{align}\label{eq:center_mu}
0 = \sum_{i\in[n]}\frac{1}{n_{y_i}} \h_i = \sum_{c\in[k]}\frac{1}{n_{c}}\sum_{i\,:\,y_i=c} \h_i = \sum_{c\in[k]}\mub_{c}.
\end{align}
Therefore, the class means are always centered around zero.

\subsection{Illustrations and discussion on dependence on $R$ and $k$}
In the previous section, we derived closed-form expressions for the features describing the SELI geometry. Here, we use these expressions to study how varying values of class-number $k$ and imbalance-ratio $R$ change the geometry.  We use the numerical illustration in \Fig~\ref{fig:SELI_theory} to guide the discussion. Specifically, in \Fig~\ref{fig:SELI_theory} we compute and plot the norm ratios, alignment and angles between embeddings and classifiers for $k=2,4,10,20$ classes and imbalance ratio varying from $1$ (aka balanced) to $100$.


\begin{figure}[t]
     \centering
     \begin{subfigure}[b]{0.49\textwidth}
         \centering
         \includegraphics[width=0.75\textwidth]{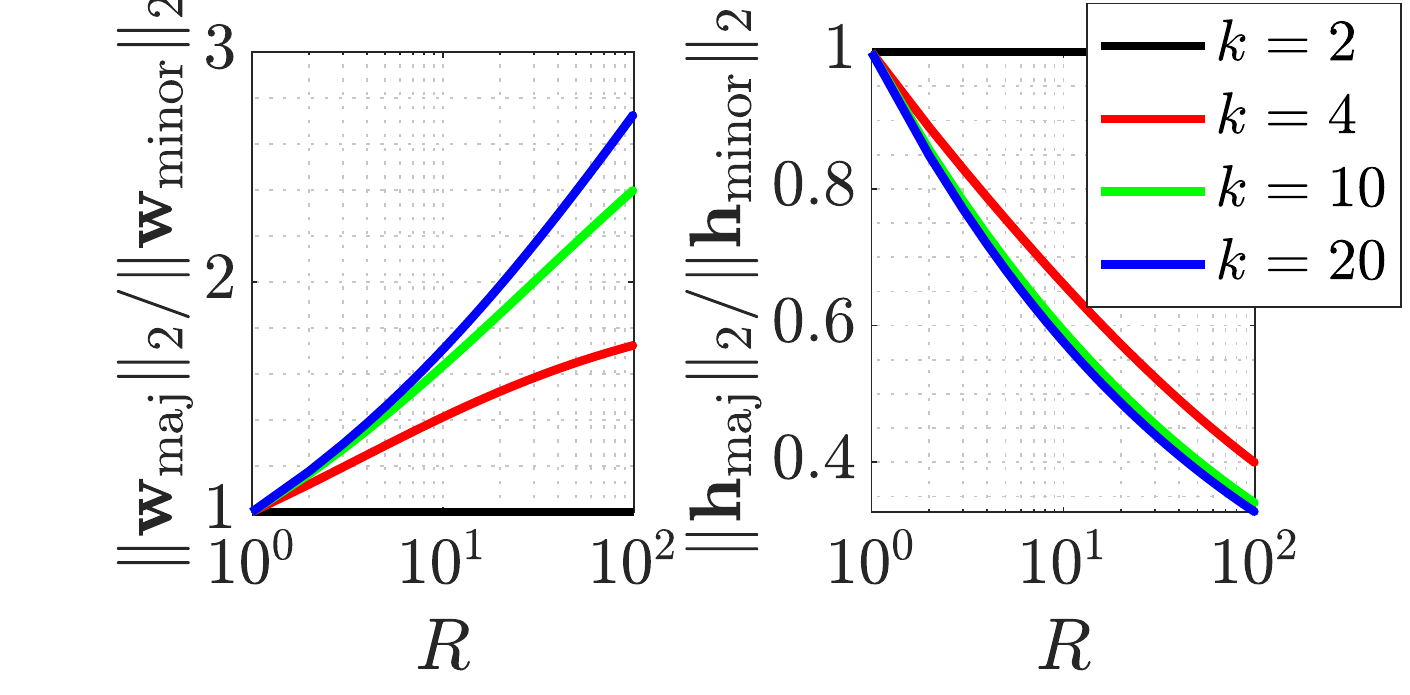}
         \caption{Norms of classifiers and embeddings.}
         \label{fig:SELI_theory_norms}
     \end{subfigure}
     \hfill
     \begin{subfigure}[b]{0.49\textwidth}
         \centering
         \includegraphics[width=0.75\textwidth]{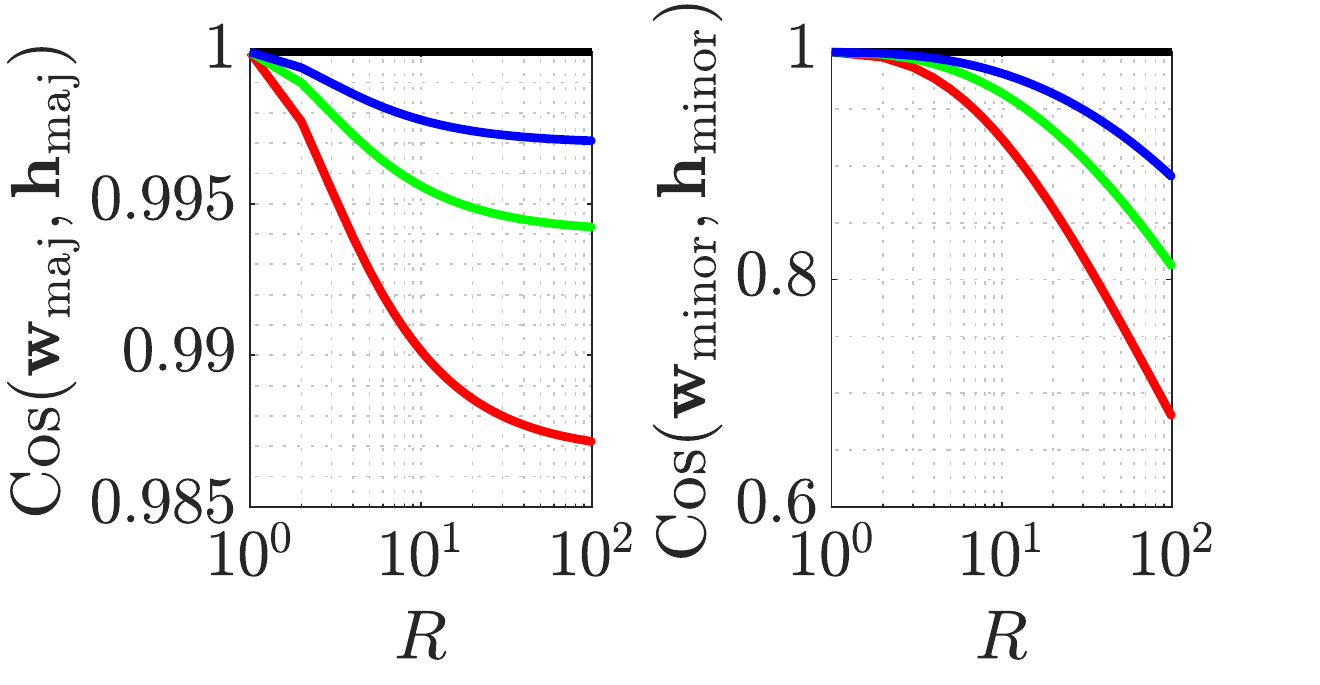}
         \captionsetup{width=0.9\linewidth}\caption{Alignment of classifiers and embeddings.}
         \label{fig:SELI_theory_angles_wh}
     \end{subfigure}
     \\
          \begin{subfigure}[b]{0.75\textwidth}
         \centering
              \includegraphics[width=\linewidth]{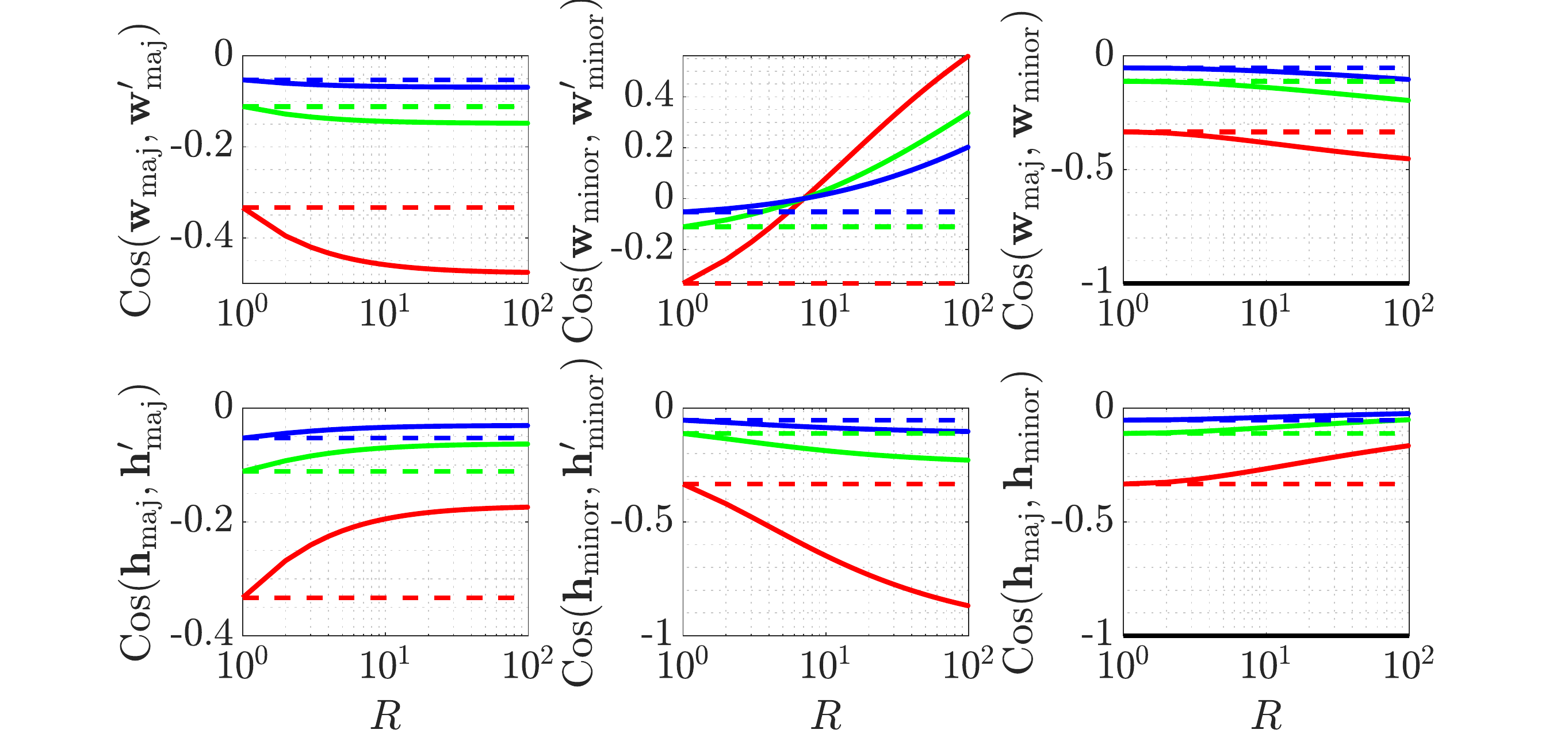}
         \captionsetup{width=0.9\linewidth}\caption{Angles of classifiers and embeddings. Dashed lines are are at $-1/{(k-1)}$.}
         \label{fig:SELI_theory_angles}
     \end{subfigure}
        \caption{
        Same as \Fig~\ref{fig:SELI_theory_inf} only $R$ varies from $1$ to $100$ to ease visualizations of how the geometry changes for finite $R$ values.}
        \label{fig:SELI_theory}
\end{figure}

\paragraph{Norms.}~\Fig~\ref{fig:SELI_theory_norms} shows the ratios of majority vs minority norms for both classifiers and embeddings. The values are computed using Lemmas \ref{lem:norms_w} and \ref{lem:norms_h}. For binary problems (aka $k=2$), the norm ratio is always equal to one irrespective of imbalance. For larger values of $k$, the norm ratio is equal to one only for balanced classes (aka $R=1$). Recall that equal norm ratios is a feature of the ETF geometry \citep{NC}. On the other hand, for $k\neq 2$ and $R>1$, the majorities have strictly larger norms for the classifiers and strictly smaller norms for the embeddings. Thus, in general the SELI geometry is very different from the ETF geometry. Interestingly, the difference is already evident when going from $k=2$ to $k=4$ classes. Also, the change in the ratios for classifiers is more pronounced than that for embeddings, which is changing progressively slower as $k$ increases (see how close are the green and blue curves in the right plot).

\paragraph{Alignment.}~\Fig~\ref{fig:SELI_theory_angles_wh} shows the degree to which the geometries of classifiers and embeddings are aligned to each other. Specifically, we plot the cosine between any majority (left) / minority (right) classifier and corresponding embeddings belonging to the same class; see also Lemma \ref{lem:align}.    For $k=2$ and $R=1$ the cosines are equal to one indicating that classifiers and embeddings align for both majorities and minorities. This is consistent with the ETF geometry. On the other hand, the alignment property breaks when $k>2$ and $R>1$. The effect is more drastic for the minorities (right plot), while for majorities the alignment is approximately preserved (note the y-axis scale is different in the two plots). Interestingly, alignment is in fact favored for larger number of classes, but deteriorates with increasing $R$ consistently for all values of $k$. 

\paragraph{Angles.}~\Fig~\ref{fig:SELI_theory_angles} shows the angles between majority/majority (left), minority/minority (center), and minority/majority (right) for both classifiers (top)  and embeddings (bottom). The values are computed using Lemmas \ref{lem:angles_w} and \ref{lem:angles_h}. For binary problems (aka $k=2$), there is only one majority and one minority class. Thus, we only plot the  minority/majority cosines, which are always equal to $-1/(k-1)$ irrespective of imbalance. For larger values of $k$, the cosines are equal to that same value  $-1/(k-1)$ only for balanced classes (aka $R=1$). Recall that cosine value equal to  $-1/(k-1)$ is a unique feature of the ETF geometry. On the other hand, for $k\neq 2$ and $R>1$, the cosines are different. For reference, we plot the values of $-1/(k-1)$ in dashed lines. For the classifiers, the majority angles increase, while the minority angles decrease. The rate of change is more drastic for minorities. In both cases, the rate of change across $R$ is more pronounced for smaller $k$. The majority-minority angles also increase with $R$. The trend is reversed for embeddings. For example, the angles between minority embeddings become larger with increasing $R$. Again, the effect of imbalance (at least for the values of $R$ shown) is more pronounced here for smaller values of $k$.


\subsection{Special cases}

\subsubsection{Balanced classes and binary classification}

The following result follows directly by combining Lemmas \ref{lem:norms_w}, \ref{lem:norms_h}, \ref{lem:angles_w}, \ref{lem:angles_h} and \ref{lem:angles_wh}. 

\begin{corollary}[$k=2$ or $R=1$: \emph{\ref{SELI}}$\equiv$\emph{\ref{ETF}}]\label{cor:SELI_to_ETF}
Assume $k=2$ or $R=1$. The following hold.
\begin{align*}
&\|\w_1\|_2=\ldots=\|\w_k\|_2\quad\text{ and }\quad
\|\h_1\|_2=\ldots=\|\h_n\|_2\,,
\\
&\forall c\neq c'\in[k],~\Cos{\w_c}{\w_c'}=-1/(k-1)\quad\text{ and }\quad &\forall i\neq i'\in[n],~\Cos{\h_i}{\h_i'}=-1/(k-1)\,,
\\
&\forall c\in[k], i\,:\,y_i=c,~\Cos{\w_c}{\h_i}=1.
\end{align*}
Thus, the SELI geometry is same as the ETF geometry in balanced and in binary classification.
\end{corollary}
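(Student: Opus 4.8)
The plan is to specialize the closed-form formulas collected in Lemmas~\ref{lem:norms_w}, \ref{lem:norms_h}, \ref{lem:angles_w}, \ref{lem:angles_h} and \ref{lem:angles_wh} to the two degenerate regimes $R=1$ and $k=2$, and to read off that every geometric quantity takes its ETF value: all classifier norms coincide, all embedding norms coincide, every pairwise cosine equals $-1/(k-1)$, and every classifier--embedding alignment cosine equals $1$. The only arithmetic facts doing any work are: for $R=1$ one has $\sqrt{R}=1$ and $\sqrt{(R+1)/2}=1$; for $k=2$ one has $1-2/k=0$, which kills every $\Pb$-block in the expressions for $\Vb\Lambdab\Vb^T$ and $\Ub\Lambdab\Ub^T$ and leaves only the rank-one contribution of the singular vector $\vb$ (equivalently $\rhobar k=\rho k=1$ and $\Pb_1$ is the empty matrix).

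For the case $R=1$: substituting into Lemmas~\ref{lem:norms_w}--\ref{lem:norms_h} gives $\|\wmaj\|_2^2=\|\wmin\|_2^2=(1-2/k)+1/k=1-1/k$ and likewise $\|\hmaj\|_2^2=\|\hmin\|_2^2=1-1/k$, so the classifier norms are all equal and the embedding norms are all equal. Lemma~\ref{lem:angles_w} then gives $\Cos{\wmaj}{\wmaj'}=\frac{-2+1}{(k-2)+1}=-\frac{1}{k-1}$, $\Cos{\wmin}{\wmin'}=\frac{-6}{-6+6k}=-\frac{1}{k-1}$ and $\Cos{\wmaj}{\wmin}=-\frac{1}{k(1-1/k)}=-\frac{1}{k-1}$; Lemma~\ref{lem:angles_h} yields the same three identities for the embeddings by identical arithmetic; and Lemma~\ref{lem:angles_wh} gives $\Cos{\wmaj}{\hmaj}=\Cos{\wmin}{\hmin}=\frac{1-1/k}{1-1/k}=1$. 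I would also record a cleaner route for $R=1$: the balanced specialization of Lemma~\ref{lem:Zhat_SVD_general} reads $\Lambdab=\Id_{k-1}$ and $\Ub=\Vb=\Pb_k$, so Definition~\ref{def:SELI} forces both Gram matrices and the logit matrix to equal $\Pb_k\Pb_k^T=\Id_k-\tfrac1k\ones_k\ones_k^T$; combined with $\What=\Rb^T\Lambdab^{1/2}\Vb^T$, $\Hhat=\Rb^T\Lambdab^{1/2}\Ub^T$ from Theorem~\ref{thm:SVM}\ref{thm:global_W} and $\Ub=\Vb$, this reproduces Corollary~\ref{cor:balanced} verbatim (after collapsing embeddings to class means).

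For the case $k=2$: there is a single majority and a single minority class, so the majority/majority and minority/minority cosines are vacuous and only the minority/majority cosine and the alignment cosine need checking. With $1-2/k=0$, Lemmas~\ref{lem:norms_w}--\ref{lem:norms_h} give $\|\wmaj\|_2^2=\|\wmin\|_2^2=\tfrac12\sqrt{(R+1)/2}$ and $\|\hmaj\|_2^2=\|\hmin\|_2^2=\tfrac{1}{2\sqrt{(R+1)/2}}$, so the norms agree within each set; Lemma~\ref{lem:angles_w} then gives $\Cos{\wmaj}{\wmin}=-\frac{\sqrt{(R+1)/2}}{2\|\wmaj\|_2\|\wmin\|_2}=-1=-\frac{1}{k-1}$, similarly for the embeddings via Lemma~\ref{lem:angles_h}, and Lemma~\ref{lem:angles_wh} gives $\Cos{\wmaj}{\hmaj}=\Cos{\wmin}{\hmin}=\frac{1/2}{1/2}=1$. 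Since ``equinorm within each set, pairwise cosines $-1/(k-1)$, unit alignment cosine'' are exactly the defining features of the ETF geometry (up to the global, and possibly embedding-versus-classifier relative, scaling already implicit in both Definition~\ref{def:SELI} and Corollary~\ref{cor:balanced}), this establishes \emph{\ref{SELI}}$\equiv$\emph{\ref{ETF}}.

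I do not expect a genuine obstacle: the argument is pure substitution into formulas already proved. The two points that need a little care are (i) the degenerate $k=2$ bookkeeping, where several angle quantities are vacuous and the $\Pb_1$-blocks vanish, so the ETF conclusion must be phrased in its reduced two-class form, and (ii) being explicit about the scaling convention when writing ``$\equiv$\,ETF'', since for $k=2,R>1$ the classifier norms and embedding norms differ across the two sets even though each set is internally equinorm and the sets are perfectly aligned.
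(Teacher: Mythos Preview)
Your proposal is correct and follows exactly the route the paper indicates: the paper's own proof consists of the single sentence ``follows directly by combining Lemmas~\ref{lem:norms_w}, \ref{lem:norms_h}, \ref{lem:angles_w}, \ref{lem:angles_h} and \ref{lem:angles_wh},'' and you carry out precisely those substitutions (with all arithmetic checking out). Your additional ``cleaner route'' for $R=1$ via the balanced SVD specialization $\Lambdab=\Id_{k-1}$, $\Ub=\Vb=\Pb_k$ is a nice bonus the paper does not spell out here, but it is not needed.
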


\subsubsection{Asymptotics}
%

We can also use the results of \Sec~\ref{sec:SELI_explicit} to understand the SELI geometric features asymptotically as $R$ increases. These are included in Corollary \ref{cor:R_inf} below. See also \Fig~\ref{fig:SELI_theory_inf} for a numerical illustration of the limiting behavior for large imbalance ratios.

\begin{corollary}[$R\rightarrow\infty$]\label{cor:R_inf} Fix even $k>2$. Then, the following limits hold.

\begin{enumerate}[label={(\roman*)},itemindent=0em]

\item $\lim_{R\rightarrow\infty} \frac{\|\wmaj\|_2^2}{\|\wmin\|_2^2} = 1+(k-2)\sqrt{2}$

\item $\lim_{R\rightarrow\infty} \frac{\|\hmaj\|_2^2}{\|\hmin\|_2^2} = 0$

\item $\lim_{R\rightarrow\infty} \Cos{\wmaj}{\wmaj'} = \frac{-4+\sqrt{2}}{\sqrt{2}+2(k-2)}$

\item $\lim_{R\rightarrow\infty} \Cos{\wmin}{\wmin'} = 1$

\item $\lim_{R\rightarrow\infty} \Cos{\wmaj}{\wmin} = -\frac{1}{\sqrt{\sqrt{2}+2(k-2)}}$

\item $\lim_{R\rightarrow\infty} \Cos{\hmaj}{\hmaj'} = \frac{1}{k\left(1+\sqrt{2}/2\right)-1}$

\item $\lim_{R\rightarrow\infty} \Cos{\hmin}{\hmin'} = -\frac{2}{k-2}$

\item $\lim_{R\rightarrow\infty} \Cos{\hmaj}{\hmin} = 0$

\item $\lim_{R\rightarrow\infty} \Cos{\wmaj}{\hmaj} = \frac{k-1}{\sqrt{k+\sqrt{2}-2}\sqrt{k+\sqrt{2}/2-2}}$

\item $\lim_{R\rightarrow\infty} \Cos{\wmin}{\hmin} = 0$

\end{enumerate}
\end{corollary}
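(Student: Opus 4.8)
The plan is to read off all ten limits from the closed-form expressions already established in Lemmas~\ref{lem:norms_w}, \ref{lem:norms_h}, \ref{lem:angles_w}, \ref{lem:angles_h}, and \ref{lem:align}, and then pass to the limit $R\to\infty$; no new ideas are needed, only care with orders of magnitude. Writing $s:=\sqrt{(R+1)/2}$, the two facts I would isolate up front are $s=\sqrt{R/2}\,(1+o(1))$ and $\sqrt{R}=\sqrt{2}\,s\,(1+o(1))$. From Lemma~\ref{lem:norms_w} these give $\|\wmaj\|_2^2=\Theta(\sqrt R)$ and $\|\wmin\|_2^2=\Theta(\sqrt R)$ with leading coefficients $(1-2/k)+\tfrac{1}{k\sqrt2}$ and $\tfrac{1}{k\sqrt2}$ respectively; from Lemma~\ref{lem:norms_h}, $\|\hmaj\|_2^2=\Theta(1/\sqrt R)$ with leading coefficient $(1-2/k)+\tfrac{\sqrt2}{k}$ and $\|\hmin\|_2^2\to 1-2/k$. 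These four asymptotics, together with the identity $k(1-2/k)=k-2$, drive everything.

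First I would dispatch the norm ratios (i)--(ii): divide numerator and denominator of $\|\wmaj\|_2^2/\|\wmin\|_2^2$ by $\sqrt R$ and of $\|\hmaj\|_2^2/\|\hmin\|_2^2$ by $1$, and take limits. Next come the classifier and embedding angles (iii),(iv),(vi),(vii): the formulas in Lemmas~\ref{lem:angles_w} and \ref{lem:angles_h} are ratios of two expressions that, once the $\sqrt{(R+1)/2}$ and $\sqrt{R(R+1)/2}$ terms are expanded, are of the \emph{same} degree in $R$, so each limit is obtained by keeping the top-order term of numerator and of denominator. For the two cross angles (v) $\Cos{\wmaj}{\wmin}$ and (viii) $\Cos{\hmaj}{\hmin}$ I would not manipulate the displayed fractions but instead combine the inner-product values ($\wmaj^T\wmin=-s/k$, and similarly for embeddings) with the norm asymptotics above: $k\,\|\wmaj\|_2\|\wmin\|_2=\Theta(\sqrt R)$ with an explicit leading constant, so dividing by $s=\Theta(\sqrt R)$ gives a finite limit for (v), whereas $\|\hmaj\|_2\|\hmin\|_2\,k\,s\to\infty$ forces (viii) to $0$.

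Finally, the alignment identities of Lemma~\ref{lem:align}, namely $\Cos{\wmaj}{\hmaj}=(1-1/k)/(\|\wmaj\|_2\|\hmaj\|_2)$ and $\Cos{\wmin}{\hmin}=(1-1/k)/(\|\wmin\|_2\|\hmin\|_2)$, I would handle by first computing $\lim_{R\to\infty}\|\wmaj\|_2^2\|\hmaj\|_2^2$, which is a product of a $\Theta(\sqrt R)$ and a $\Theta(1/\sqrt R)$ quantity and hence converges to $\big((1-2/k)+\tfrac{1}{k\sqrt2}\big)\big((1-2/k)+\tfrac{\sqrt2}{k}\big)$; simplifying the radical (it factors as $(k+\sqrt2-2)(k+\tfrac{\sqrt2}{2}-2)/k^2$) yields (ix). For (x), $\|\wmin\|_2^2\|\hmin\|_2^2=\Theta(\sqrt R)\to\infty$, so the cosine tends to $0$.

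The computation carries no conceptual obstacle; the only thing to watch is that majority-classifier norms diverge while majority-embedding norms vanish as $R\to\infty$, so in the products appearing in the cross-angle and alignment formulas one must keep track of which factors are $\Theta(R^{1/4})$, $\Theta(R^{\pm1/2})$, or $\Theta(1)$ before cancelling. Computing the limits of the relevant \emph{products of squared norms} first, as above, is the safest way to avoid constant- or sign-errors in the final radicals.
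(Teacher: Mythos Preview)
Your proposal is correct and follows exactly the same approach as the paper's own proof, which simply states ``The expressions that appear in the lemmas in \Sec~\ref{sec:SELI_explicit} hold for any value of $R$. Take the limit of $R\rightarrow\infty$ to yield the expressions above. We omit the details for brevity.'' You have merely supplied the omitted details---isolating the leading-order behavior of $\|\wmaj\|_2^2$, $\|\wmin\|_2^2$, $\|\hmaj\|_2^2$, $\|\hmin\|_2^2$ in $R$ and then reading off each limit from the closed-form formulas in Lemmas~\ref{lem:norms_w}--\ref{lem:align}.
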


\begin{proof} The expressions that appear in the lemmas in \Sec~\ref{sec:SELI_explicit} hold for any value of $R$. Take the limit of $R\rightarrow\infty$ to yield the expressions above. We omit the details for brevity.
\end{proof}

Several interesting conclusions are immediate from the formulas above. For example, Statement (iv) shows that, asymptotically in $R$, the minority classes collapse to the same vector. This is a manifestation of the minority collapse phenomenon discovered by \citet{fang2021exploring}. Asymptotic minority collapse had not been shown before for the UF-SVM. See also \Sec~\ref{sec:minority} for an extended discussion. We note that beyond minority classifiers, Corollary \ref{cor:R_inf} is further conclusive about the behavior of majority classifiers, as well as, minority and majority embeddings. For example, Statement (viii) suggests that the minority and majority embeddings become orthogonal to each other asymptotically. Further investigations of the validity of such asymptotic conclusions in deep-net training is beyond our scope.

\section{Proofs for UF-SVM Section \ref{sec:UF-SVM}}\label{sec:proofs_UF-SVM}
\subsection{Proof of Theorem \ref{thm:SVM}}\label{sec:proof_SVM}
We consider a relaxation of the non-convex SVM in \eqref{eq:svm_original} by setting
\begin{align}\label{eq:X_rel}
\X = \begin{bmatrix} \W^T \\ {\Hb}^T \end{bmatrix} \begin{bmatrix} \W & \Hb \end{bmatrix}= \begin{bmatrix} \W^T\W & \W^T\Hb \\ \Hb^T\W & \Hb^T\Hb\end{bmatrix} \in\R^{(k+n)\times (k+n)} .
\end{align}
With this consider the following semidefinite program:
\begin{align}\label{eq:cvx_X}
\mathrm{q}_*  &= \min_{\X\succeq 0}~~~~~~
\frac{1}{2}\tr\Big(\X\Big)\\
\nn&~~~\text{sub.~to}~~~~\X[y_i,k+i]-\X[c,k+i]\geq 1,~ \forall i\in[n], c\neq y_i.
\end{align}
It is not hard to see that $\mathrm{q}_* \leq \mathrm{p}_* $. In what follows, we will compute the optimal set of \eqref{eq:cvx_X} and use this to show that the relaxation is in fact tight. This will allow us to characterize the solution of the original problem. 

\vp
\noindent\underline{\emph{Dual of the convex relaxation:}}~The optimization in \eqref{eq:cvx_X} is convex and satisfies Slater's conditions. (Since constraints are affine, it suffices to check feasibility, which is easily verified.)
 Hence, strong duality holds and KKT conditions are necessary and sufficient for optimality. The dual of \eqref{eq:cvx_X} is written as follows: 
\begin{align}\label{eq:dual_X}
\mathrm{d}_*  &=\max_{\{a_{i,c}\}_{i\in[n], c\neq y_i}}~~\sum_{i\in[n]}\sum_{c\neq y_i}\alpha_{ic}
\\&~~~~~\text{sub.~to}~~~~~~ \begin{bmatrix} \Id_{k} & \A^T \\ \A & \Id_{n}
\end{bmatrix}\succeq 0 \nn
\\&~~~~~~~~~~~~~~~~~~~~\alpha_{ic}\geq 0, i\in[n], c\neq y_i,~~\forall i\in[n]:  \A[{i,c}] = \begin{cases}
-\sum_{c'\neq y_i}\alpha_{ic'}, &y_i=c,
\\\alpha_{ic},&y_i\neq c.
\end{cases}
\nn
\end{align}
Also, the complementary slackness conditions are
\begin{align}\label{eq:slackness}
 \begin{bmatrix} \Id_{k} & \A^T \\ \A & \Id_{n}
\end{bmatrix} \X = 0 
\qquad\text{and}\qquad
\forall i\in[n],c\neq y_i\,:\,\alpha_{ic}(1-\X[y_i,k+i]+\X[c,k+i])=0.
\end{align}
Instead of working with the dual in the above standard form, it is convenient to work with an alternative representation by (re)-defining dual variables $\beta_{ic}, i\in[n], c\in[k]$ such that 
\footnote{The same re-parameterization trick, but for a simpler (convex) max-margin program was used by \citet{wang2021benign}.}
\begin{align}\label{eq:betas}
\beta_{ic}=\begin{cases}
\sum_{c'\neq y_i}\alpha_{ic'}, &y_i=c,
\\-\alpha_{ic},&y_i\neq c.
\end{cases}
\end{align}
Specifically, arrange these new dual variable representations in a matrix $\Bb\in\R^{n\times k}$ with entries $\Bb[i,c]=\beta_{ic}, i\in[n], c\in[k]$ and recall the \SEL~matrix $\Zhat\in\R^{k\times n}$ in Definition \ref{def:SEL}. With these, we can rewrite the dual in \eqref{eq:dual_X} in the following more convenient form: 
\begin{align}\label{eq:dual2_X}
\mathrm{d}_*  &=\max_{\Bb\in\R^{n\times k} }~~~~ \tr(\Zhat\Bb)
\\&~~~~\text{sub.~to}~~ \begin{bmatrix} \Id_{k} & -\Bb^T \\ -\Bb & \Id_{n}
\end{bmatrix}\succeq 0 \nn
\\&~~~~~~~~~~~~~~~~ 
 \forall i\in[n]: 
\Bb[i,y_i] = - \sum_{c\neq y_i}\Bb[i,c]\nn
\\&~~~~~~~~~~~~~~~~ \Bb\odot\Zhat^T \geq 0 \label{eq:other}\,.
\end{align}
Analogously, the complementary slackness conditions in \eqref{eq:slackness} are equivalent to the following:
\begin{align}\label{eq:slackness2}
 \begin{bmatrix} \Id_{k} & -\Bb^T \\ -\Bb & \Id_{n}
\end{bmatrix} \X = 0 
\quad\text{and}\quad
\forall i\in[n],c\neq y_i\,:\,\Bb[i,c]\,(1-\X[y_i,k+i]+\X[c,k+i])=0.
\end{align}
To see the equivalence of the objective of \eqref{eq:dual2_X} denote $A=\sum_{i\in[n]}\sum_{c\neq y_i}\alpha_{ic}$ the objective in \eqref{eq:dual_X} and note that  we get simultaneously  $A=\sum_{i\in[n]}\beta_{iy_i}=\sum_{i\in[n]}\sum_{c\neq y_i}\big(-\beta_{ic}\big)$ following the definition in Equation \eqref{eq:betas}. Then, we have $A=\frac{k-1}{k}A+\frac{1}{k}A=\sum_{i\in[n]}\sum_{c\in[k]}\hat{z}_{ic}\beta_{ic}=\tr\left(\Zhat\Bb\right)$.

\vp
\noindent\underline{\emph{Solution to the dual:}}~To continue, we consider the following relaxation of the dual problem \eqref{eq:dual2_X} (by removing the constraint in Equation \eqref{eq:other}):
\begin{align}
\hat{\mathrm{d}} = \max_{\Bb\in\R^{n\times k}}~~ \tr(\Zhat\Bb)\qquad\text{sub. to}~~\|\Bb\|_2\leq 1\quad\text{ and }\quad \Bb\ones_k=0.\label{eq:unconstrained_B}
\end{align}
Here, recall that $\|\Bb\|_2$ denotes the spectral norm, hence the first constraint is equivalent to the first constraint in the maximization in \eqref{eq:dual2_X} by Schur-complement argument. Using standard  arguments, it can be shown that $\hat{\mathrm{d}}\leq\|\Zhat\|_*$ and equality holds by setting 
\begin{align}\label{eq:uncon_feasibility}
\hat\Bb &= \Ub\Vb^T,
\end{align}
where we recalled the compact SVD $\Zhat=\Vb\Lambdab\Ub^T$. It is also not hard to see that $\hat\Bb$ is feasible in \eqref{eq:unconstrained_B} (recall here that $\Vb^T\ones_k=0$). Hence, $\hat\Bb$ is a maximizer and $\hat{\mathrm{d}}=\|\Zhat\|_*$. The following lemma, the proof of which we defer to the end of this section, proves something stronger: $\hat\Bb$ is in fact the only maximizer in \eqref{eq:unconstrained_B}.

\begin{lemma}\label{lem:unconstrained_B}
The optimal cost of the maximization in \eqref{eq:unconstrained_B} is $\|\Zhat\|_*$ and $\hat\Bb=\Ub\Vb^T$ is its unique maximizer.
\end{lemma}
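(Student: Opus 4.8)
The plan is to first identify the optimal value through the trace--duality inequality relating the nuclear and spectral norms, then to exhibit $\hat\Bb=\Ub\Vb^T$ as a maximizer, and finally to strengthen this to uniqueness by examining the equality case together with the constraint $\Bb\ones_k=0$.

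\textbf{Upper bound.} For an arbitrary feasible $\Bb$ I would use the compact SVD $\Zhat=\Vb\Lambdab\Ub^T$ of Definition \ref{def:SEL} to write $\tr(\Zhat\Bb)=\tr(\Vb\Lambdab\Ub^T\Bb)=\tr(\Lambdab\,\Ub^T\Bb\Vb)=\sum_{j=1}^{k-1}\lambda_j\,\ub_j^T\Bb\vb_j$, where $\ub_j,\vb_j$ are the columns of $\Ub,\Vb$ and $\lambda_1,\dots,\lambda_{k-1}>0$ are the strictly positive singular values (recall $\operatorname{rank}(\Zhat)=k-1$ from Lemma \ref{lem:Zhat}). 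Since $\|\ub_j\|_2=\|\vb_j\|_2=1$ and $\|\Bb\|_2\leq1$, each term obeys $\ub_j^T\Bb\vb_j\leq\|\ub_j\|_2\|\Bb\vb_j\|_2\leq\|\Bb\|_2\leq1$, whence $\tr(\Zhat\Bb)\leq\sum_{j}\lambda_j=\|\Zhat\|_*$.

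\textbf{A maximizer.} Next I would verify $\hat\Bb=\Ub\Vb^T$ is feasible: it has spectral norm $1$ because $\Ub,\Vb$ have orthonormal columns (so $\hat\Bb^T\hat\Bb=\Vb\Vb^T$ is a projector), and $\hat\Bb\ones_k=\Ub(\Vb^T\ones_k)=0$ since $\Vb^T\ones_k=0$ by Lemma \ref{lem:Zhat}. It attains the bound: $\tr(\Zhat\hat\Bb)=\tr(\Vb\Lambdab\Ub^T\Ub\Vb^T)=\tr(\Lambdab)=\|\Zhat\|_*$. Hence the optimal cost is $\|\Zhat\|_*$. For uniqueness, let $\Bb^\star$ be any maximizer; then the chain $\|\Zhat\|_*=\tr(\Zhat\Bb^\star)=\sum_j\lambda_j\,\ub_j^T\Bb^\star\vb_j\leq\sum_j\lambda_j$ is all equalities, and since every $\lambda_j>0$ this forces $\ub_j^T\Bb^\star\vb_j=1$ for each $j$. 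Because $\|\ub_j\|_2=1$ and $\|\Bb^\star\vb_j\|_2\leq\|\Bb^\star\|_2\,\|\vb_j\|_2\leq1$, the Cauchy--Schwarz equality case gives $\Bb^\star\vb_j=\ub_j$ for all $j$, i.e. $\Bb^\star\Vb=\Ub$. Finally I would invoke the remaining constraint $\Bb^\star\ones_k=0$: the $k-1$ orthonormal columns of $\Vb$, together with $\ones_k/\sqrt{k}$, form an orthonormal basis of $\R^k$ (using $\Vb^T\ones_k=0$), so $\Qbf:=[\,\Vb\;\;\ones_k/\sqrt{k}\,]$ is $k\times k$ orthogonal and $\Bb^\star\Qbf=[\,\Ub\;\;0\,]$, yielding $\Bb^\star=[\,\Ub\;\;0\,]\Qbf^T=\Ub\Vb^T=\hat\Bb$.

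The main obstacle I anticipate is the uniqueness step, namely passing from ``$\tr(\Zhat\Bb^\star)=\|\Zhat\|_*$ with $\|\Bb^\star\|_2\le1$'' to the pointwise identities $\Bb^\star\vb_j=\ub_j$. The two features that make it work are: (a) the bound in the first step is a sum of terms weighted by the strictly positive $\lambda_j$, so tightness must hold term by term (this is exactly where the rank being \emph{exactly} $k-1$ is used); and (b) the second constraint $\Bb^\star\ones_k=0$ supplies precisely the one extra linear relation needed to determine $\Bb^\star$ on the orthogonal complement of the column space of $\Vb$. Everything else --- the spectral norm of $\Ub\Vb^T$ and the various trace identities --- is routine.
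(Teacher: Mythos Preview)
Your proof is correct and in fact more elementary than the paper's. Both arguments start identically: the upper bound $\tr(\Zhat\Bb)\leq\|\Zhat\|_*$ via the term-by-term estimate $\ub_j^T\Bb\vb_j\leq 1$, and the verification that $\hat\Bb=\Ub\Vb^T$ is a feasible maximizer. The difference lies in the uniqueness step. The paper introduces the compact SVD $\wt\Bb=\Ub_\Bb\Sigmab_\Bb\Vb_\Bb^T$ of a putative second maximizer, sets $\Pbf=\Ub_\Bb^T\Ub$, $\Qbf=\Vb_\Bb^T\Vb$, and runs a chain of inequalities on $\pbf_i^T\Sigmab_\Bb\qbf_i$ to conclude that $\Sigmab_\Bb=\Id_{k-1}$ and that $\Pbf=\Qbf$ is a permutation, whence $\wt\Bb=\Ub\Vb^T$. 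You instead read off directly from the Cauchy--Schwarz equality case that $\Bb^\star\vb_j=\ub_j$ for each $j$, i.e.\ $\Bb^\star\Vb=\Ub$, and then use the remaining constraint $\Bb^\star\ones_k=0$ together with the fact that $[\Vb\;\;\ones_k/\sqrt{k}]$ is a $k\times k$ orthogonal matrix to determine $\Bb^\star$ on all of $\R^k$. Your route is shorter, avoids any SVD of the maximizer, and makes transparent exactly where the constraint $\Bb\ones_k=0$ enters; the paper's route is structurally heavier but foregrounds the rank-$(k-1)$ condition on the maximizer.
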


Next, we use Lemma \ref{lem:unconstrained_B} to show that $\hat\Bb$ 
also satisfies the inequality constraints in \eqref{eq:other}. To do this, we use an explicit construction of the singular factors $\Ub$ and $\Vb$ presented in \Sec~\ref{sec:eigen_SEL}, which is possible thanks to the special structure of $\Zhat$. Specifically, we prove in Lemma \ref{lem:Zhat_SVD_general} 
that
\begin{align}\label{eq:dual_feas}
\hat\Bb\odot \Zhat^T = (\Ub\Vb^T)\odot \Zhat^T> 0.
\end{align}
Hence,  $\hat\Bb$ is feasible in \eqref{eq:dual2_X}. In fact, the feasibility inequalities are strict, which we will use soon. For now, note that feasibility of  $\hat\Bb$ in \eqref{eq:dual2_X} guarantees that it is its unique maximizer (since it is the unique maximizer of the program's relaxation, as established in Lemma \ref{lem:unconstrained_B}.) Therefore, 
\begin{align}
\mathrm{d}_*=\hat{\mathrm{d}}=\|\Zhat\|_*\,,
\end{align}
and $\hat\Bb$ in \eqref{eq:uncon_feasibility} is dual optimal for the semidefinite program in \eqref{eq:cvx_X}.

\vp
\noindent\underline{\emph{Solution to the primal relaxation:}}~By strong duality, this implies  $\mathrm{q}_*=\mathrm{d}_*=\|\Zhat\|_*$ and that any primal minimizer $\Xhat=\begin{bmatrix}\hat\X_{11} & \hat\X_{12} \\ \hat\X_{12}^T & \hat\X_{22} \end{bmatrix}$ satisfies the complementary slackness conditions in \eqref{eq:slackness2} with $\Bb=\hat\Bb$, i.e.,
\begin{align}
\forall i\in[n], c\neq y_i\,:\,\Xhat_{12}[y_i,i]-\Xhat_{12}[c,i]-1&=0
\label{eq:comp1a}
\\
\hat\X_{11} - \hat\Bb^T\hat\X_{12}^T = \hat\X_{12} - \hat\Bb^T\hat\X_{22} = -\hat\Bb\hat\X_{12}+ \hat\X_{22}  &=0  \label{eq:comp1b}\,.
\end{align}
In \eqref{eq:comp1a} we used from \eqref{eq:dual_feas} that inequalities are strict; thus, $\beta_{ic}>0$ for all $i\in[n], c\neq y_i$. Now, from \eqref{eq:comp1b} it follows that
\begin{align*}
\hat\X_{11} = \hat\Bb^T\hat\X_{12}^T,\quad \hat\X_{22} = \hat\Bb\hat\X_{12}, \quad \hat\X_{12} = \hat\Bb^T\hat\X_{22}\,.
\end{align*}
Combining the last two equations shows 
\begin{align*}
(\Id_n-\hat\Bb\hat\Bb^T)\hat\X_{22}=0 
&\implies (\Id_n-\Ub\Ub^T)\hat\X_{22}=0\implies \hat\X_{22}=\Ub\Db\Ub^T,
\\
&\implies\hat\X_{12}=\hat\Bb^T\hat\X_{22}=\Vb\Db\Ub^T \qquad\text{and}\qquad \hat\X_{11}=\hat\Bb^T\hat\X_{12}^T=\Vb\Db\Vb^T
\end{align*}
for some 
(k-1)--dimensional 
matrix $\Db\in\R^{(k-1)\times(k-1)}$.

Next, we will use $\hat\X_{12}=\Vb\Db\Ub^T$ in \eqref{eq:comp1a} to compute $\Db$.  For convenience denote  $\eb_{k,c}\in\R^k$  the $c$-th standard basis vector in $\R^k$ and $\eb_{n,i}\in\R^n$ the $i$-th standard basis vector in $\R^n$. Then, starting with \eqref{eq:comp1a}, we get the following chain of implications:
\begin{align}
\forall i\in[n], c\neq y_i\,&:\, (\eb_{k,y_i}-\eb_{k,c})^T\Vb\Db\Ub^T\eb_{n,i}=1\label{eq:comp1a_2}
\\
\nn\stackrel{(\sum_{c\neq y_i}\cdot)}{\implies}\forall i\in[n]\,&:\, \big((k-1)\eb_{k,y_i} - (\ones_k-\eb_{k,y_i})\big)^T\Vb\Db\Ub^T\eb_{n,i}=k-1\\
\nn\implies\forall i\in[n]\,&:\, \big(k\eb_{k,y_i} - \ones_k\big)^T\Vb\Db\Ub^T\eb_{n,i}=k-1
\\
\nn\stackrel{(\Vb^T\ones_k=0)}{\implies}\forall i\in[n]\,&:\, \eb_{k,y_i}^T \Vb\Db\Ub^T\eb_{n,i}=1-1/k
\\
\nn\stackrel{\eqref{eq:comp1a_2}}{\implies}\forall i\in[n]\,&:\, \eb_{k,y_i}^T \Vb\Db\Ub^T\eb_{n,i}=1-1/k \qquad\text{and}\qquad \eb_{k,c}^T \Vb\Db\Ub^T\eb_{n,i}=-1/k,\,c\neq y_i
\end{align}
Specifically, the last equation written in matrix form gives \footnote{
Note that the derivation and conclusion following \eqref{eq:comp1a_2} is general. Specifically, the display above shows that $\Zhat$ is the only $k\times n$ matrix for which it holds simultaneously that $\Z[y_i,i]-\Z[c,i]=1, \forall i\in[n], c\neq y_i$ and $\ones_k^T\Z=0$.
}
\begin{align}
\Vb\Db\Ub^T=\Zhat \stackrel{}{\implies}
\Db=\Vb^T\Zhat\Ub\implies \Db=\Lambdab,
\end{align}
where the second and third equalities used $\Vb^T\Vb=\Ub^T\Ub=\Id_{k-1}$.

To conclude, we have shown that any optimal point $\Xhat$ of \eqref{eq:cvx_X} satisfies
\begin{align}\label{eq:unique_X}
\Xhat = \begin{bmatrix}\Vb \\ \Ub\end{bmatrix}\Lambdab \begin{bmatrix}\Vb^T & \Ub^T\end{bmatrix}.
\end{align}

\vp
\noindent\underline{\emph{Solving the original problem:}}~Now, we show that the convex relaxation in \eqref{eq:cvx_X} is tight. For some partial orthonormal matrix $\Rb\in\R^{(k-1)\times d}$ (recall that $d\geq k-1$) with $\Rb\Rb^T=\Id_{k-1}$, let 
\begin{align}\label{eq:WH_fin}
\What=\Rb^T\Lambdab^{1/2}\Vb^T \qquad\text{and} \qquad\Hhat=\Rb^T\Lambdab^{1/2}\Ub^T.
\end{align}
By construnction $\What^T\Hhat=\Xhat_{12}$. Hence, $(\What,\Hhat)$ is feasible in \eqref{eq:svm_original}. Thus, $ \mathrm{p}_* \leq \frac{1}{2}\|\What\|_F^2+\frac{1}{2}\|\Hhat\|_F^2=\frac{1}{2}\tr(\Xhat)=\mathrm{q}_*$. But, we have already argued that $ \mathrm{p}_*\geq  \mathrm{q}_*$. Hence, $ \mathrm{p}_*= \mathrm{q}_*$.

Take now any minimizer $(\What,\Hhat)$ of the original  problem \eqref{eq:svm_original}. By feasibility of  $(\What,\Hhat)$, the matrix $\boldsymbol{\Omega}:=\begin{bmatrix} \What^T \\ {\Hhat}^T \end{bmatrix} \begin{bmatrix} \What & \Hhat \end{bmatrix}$ is feasible in \eqref{eq:cvx_X}. Also, $\frac{1}{2}\tr(\boldsymbol{\Omega})=\frac{1}{2}\|\What\|_F^2+\frac{1}{2}\|\Hhat\|_F^2= \mathrm{p}_*$. But, $ \mathrm{p}_*= \mathrm{q}_*$. Hence, $\frac{1}{2}\tr(\boldsymbol{\Omega})=\mathrm{q}_*$, which implies that $\boldsymbol{\Omega}$ is optimal. It must then be from \eqref{eq:unique_X} that $$\begin{bmatrix} \What^T \\ {\Hhat}^T \end{bmatrix} \begin{bmatrix} \What & \Hhat  \end{bmatrix} =\boldsymbol{\Omega}=\Xhat=\begin{bmatrix}\Vb \\ \Ub\end{bmatrix}\Lambdab \begin{bmatrix}\Vb^T & \Ub^T\end{bmatrix}.$$
 This completes the proof of the theorem.

\vp
\noindent\underline{\emph{Proof of Lemma \ref{lem:unconstrained_B}:}}~It only remains to prove Lemma \ref{lem:unconstrained_B}, which we do here.
Any feasible $\Bb$ satisfies $\|\Bb\|_2\leq 1$. Hence, also recalling that $\Zhat$ has rank $k-1$ (because $\Vb^T\ones_k=0$): 
\begin{align}
\tr(\Zhat\Bb) = \sum_{i=1}^{k-1}\Lambdab[i,i](\ub_i^T\Bb\vb_i) \leq \sum_{i=1}^{k-1}\Lambdab[i,i]  = \|\Zhat\|_*.
\end{align}
The inequality above is tight if and only if 
\begin{align}\label{eq:Bb_opt}
\forall i\in[k-1]\,:\,\eps_i:=\ub_i^T\Bb\vb_i=1,
\end{align}
which is indeed satisfied by $\hat\Bb=\Ub\Vb^T$. Clearly, $\hat\Bb$ is also feasible. Hence, $\hat\Bb$ is a maximizer of \eqref{eq:unconstrained_B}. 

Next, we will show that there is no other maximizer, say $\wt\Bb$. Indeed, since $\wt\Bb$ is optimal, it must satisfy \eqref{eq:Bb_opt}. Hence, it has rank at least $k-1$. But since $\wt\Bb\ones_k=0$, we find that $\wt\Bb$ has rank exactly $k-1$. Let $\wt\Bb=\Ub_\Bb\Sigmab_\Bb\Vb_\Bb^T$ be its compact SVD and denote $\sigma_i:=\Sigmab_\Bb[i,i],~i\in[k-1]$ its singular values. Finally, define $(k-1)\times (k-1)$ matrices 
\begin{align}
\Pbf=\Ub_\Bb^T\Ub\qquad\text{and}\qquad\Qbf=\Vb_\Bb^T\Vb
\end{align}
with columns $\pbf_i,\qbf_i,\, i\in[k-1].$ By Equation \eqref{eq:Bb_opt} we have the following chain of inequalities for all $i\in[k-1]$:
\begin{align*}
1=\ub_i^T\Bb\vb_i=\pbf_i^T\Sigmab_\Bb\qbf_i &= \sum_{j\in[k-1]}\sigma_j\pbf_i[j]\qbf_i[j] \leq \sum_{j\in[k-1]}\sigma_j|\pbf_i[j]\qbf_i[j]| 
\\&\leq \sum_{j\in[k-1]}|\pbf_i[j]\qbf_i[j]| \leq \|\pbf_i\|_2\|\qbf_i\|_2\leq 1.
\end{align*}
Inspecting this, we note that all inequalities must be equalities.
The first inequality in the second line follows because 
$
\|\wt\Bb\|_2\leq1\implies \forall j\in[k-1]\,:\,\sigma_j\leq1.
$
Hence, $\sigma_j=1$ for all $j\in[k-1]$. Equivalently $\Sigmab_\Bb=\Id_{k-1}.$
The second inequality in that same line is Cauchy-Schwarz and equality implies 
\begin{align}\label{eq:norms_pbf_equal}
\forall i\in[k-1]\,:\,\pbf_i=\pm\qbf_i.
\end{align}
 The last inequality follows because 
 \begin{align}\label{eq:norms_pbf}
 \|\pbf_i\|_2=\|\Ub_\Bb^T\ub_i\|_2\leq\|\Ub_\Bb\|_2\|\ub_i\|_2\leq 1 \qquad\text{and}\qquad
 \|\qbf_i\|_2=\|\Vb_\Bb^T\vb_i\|_2\leq\|\Vb_\Bb\|_2\|\vb_i\|_2\leq 1.
 \end{align}
Since $\Ub_\Bb$ (resp. $\Vb_\Bb$) has orthonormal columns, equality in \eqref{eq:norms_pbf} holds
if and only if for all $i\in[k-1]$, $\ub_i$ (resp. $\vb_i$) is a column of $\Ub_\Bb$ (resp. $\Vb_\Bb$). Then, it must be that $\Pbf$ and $\Qbf$ are permutation matrices. Combined with \eqref{eq:norms_pbf_equal} this gives \begin{align}
\Pbf=\Qbf=\Pibf \implies \Ub_\Bb^T\Ub=\Vb_\Bb^T\Vb=\Pibf
\end{align}
for some permutation matrix $\Pibf$. Continuing from this,
$$
\Ub_\Bb^T\Ub\Pibf^T = \Vb_\Bb^T\Vb \Pibf^T = \Id_{k-1} \implies \Ub_\Bb=\Ub\Pibf^T \text{ and } \Vb_\Bb=\Vb\Pibf^T.
$$
Putting things together, we conclude that 
$$
\wt\Bb = \Ub_\Bb\Sigmab_\Bb\Vb_\Bb^T = \Ub_\Bb\Vb_\Bb^T = \Ub\Pibf^T \Pibf\Vb^T = \Ub\Vb^T=\hat\Bb.
$$
This concludes the proof of the lemma.


\subsection{Proofs of Corollaries \ref{cor:NC}, \ref{cor:SELI}, \ref{cor:balanced} and of Lemma \ref{lem:norms}}
The proofs of the rest of the results of \Sec~\ref{sec:UF-SVM}} are presented in the following sections.
\begin{itemize}
\item \textbf{Corollary \ref{cor:NC}:} See paragraph below the statement of the corollary in \Sec~\ref{sec:NC_and_SELI}.

\item \textbf{Corollary \ref{cor:SELI}:} As already mentioned in \Sec~\ref{sec:NC_and_SELI} this is nothing but a reformulation of Theorem\ref{thm:SVM}\ref{thm:global_logits}--\ref{thm:global_gram} in view of Definition 
\ref{def:SELI} of the SELI property.

\item \textbf{Corollary \ref{cor:balanced}:} See Corollary \ref{cor:SELI_to_ETF} in \Sec~\ref{sec:SELI_properties}.

\item \textbf{Lemma \ref{lem:norms}:} See Lemma \ref{lem:norms_w} in \Sec~\ref{sec:SELI_properties}.
\end{itemize}
\new{\subsection{On different regularization hyperparameters between embeddings and classifiers}\label{sec:different_hyperparams}}
\new{
Thus far in our analysis of the UFM, we assumed same regularization strength $\lambda$ for the embeddings and classifiers; see Eqn. \eqref{eq:CE}. Here, we discuss a slight generalization allowing different regularizations $\lambda_W$ and $\lambda_H$ for the classifiers and embeddings, respectively. This is motivated by previous studies of the UFM in the literature, e.g. \cite{zhu2021geometric}. As we show, this modification does \emph{not} change the SELI geometry modulo a relative scaling factor between the embeddings and classifiers.

Concretely, consider the following slight generalization version of Eqn.~\eqref{eq:CE}:
\begin{align}\nn
	(\What_{(\la_W,\la_H)},\Hhat_{(\la_W,\la_H)}):=\arg\min_{\W,\Hb}~\Lc(\W^T\Hb) + \frac{\la_W}{2} \|\W\|_F^2+\frac{\la_H}{2}\|\Hb\|_F^2,
\end{align}
which can be more conveniently reparameterized as follows:
\begin{align}\nn
	(\What_{(\la_W,\la_H)},\Hhat_{(\la_W,\la_H)}):=\arg\min_{\W,\Hb}~\Lc(\W^T\Hb) + \sqrt{\la_W\la_H}\left( \frac{1}{{2\beta}}\|\W\|_F^2+\frac{\beta}{2}\|\Hb\|_F^2\right),
\end{align}
where $\beta := \sqrt{\frac{\la_H}{\la_W}}$ and $\la_H,\la_W>0$. Now, consider the limit of vanishing regularization $\la_W \rightarrow 0, \la_H \rightarrow 0$, with a fixed finite and non-zero ratio $\beta^2=\la_H/\la_W$. Entirely analogous to Eqn. \eqref{eq:svm_original}, this leads to the a $\beta$-parameterized UF-SVM as follows:
\begin{align}\label{eq:svm_diff_reg}
	(\What_\beta,\Hhat_\beta)\in\arg\min_{\W,\Hb}~\frac{1}{2\beta}\|\W\|_F^2+ \frac{\beta}{2}\|\Hb\|_F^2
	\quad\text{sub. to}\quad (\w_{y_i}-\w_c)^T\h_i \geq 1,~i\in[n], c\neq y_i.
\end{align}
Note that the UF-SVM we study in Eqn. \eqref{eq:svm_original} is a special case of the above for $\beta=1$. For general values of $\beta>0$, it is not hard to see that there is a one-to-one mapping of global solutions $(\What_\beta,\Hhat_\beta)$ of \eqref{eq:svm_diff_reg} to global solutions $(\What_\beta,\Hhat_\beta)$ of \eqref{eq:svm_original} as follows:
$$
(\What_\beta,\Hhat_\beta) = (\sqrt{\beta}\What,\frac{1}{\sqrt{\beta}}\Hhat_\beta).
$$
Hence, from Theorem \ref{thm:SVM}, it follows that the solutions of \eqref{eq:svm_diff_reg} satisfy for any $\beta>0$ the following:
$$
\What_\beta^T\Hhat_\beta=\Zhat,\quad \Hhat_\beta^T\Hhat_\beta = \frac{1}{\beta}\Ub\Lambdab\Ub^T,\quad \text{and}\quad
\What_\beta^T\What_\beta = \beta\Vb\Lambdab\Vb^T.
$$ 
Therefore, different regularization between embeddings and classifiers only affects the geometry of global minimizers of the corresponding UF-SVM (i.e., at vanishing regularization) up to introducing an extra scaling factor between the Gram matrices of embeddings and classifiers. Specifically, this only affects the relative scaling between the norms of embeddings and classifiers. 
}

\section{Nuclear-norm relaxations of the UFM}\label{sec:nuc_norm_SM}

In this section, we gather useful properties of the nuclear-norm-regularized CE minimization \eqref{eq:nuc_norm_reg}, repeated here for convenience:
$$
\Zhat_\la=\arg\min_\Z \Lc(\Z)+\la\|\Z\|_*\,.
$$
 These properties are useful in the proof of the results that appear in \Sec~\ref{Sec:reg}. As a reminder, the nuclear-norm-regularized CE minimization is relevant to us because of Theorem \ref{thm:regularized}, i.e. it forms a tight convex relaxation of the non-convex ridge-regularized CE-minimization for the UFM.

The following lemma gathers some basic properties, which we use later to study the behavior of the solutions to \eqref{eq:nuc_norm_reg} for different regularization strengths.
\begin{lemma}[Basic properties of \eqref{eq:nuc_norm_reg}]\label{lem:nuc_norm_reg_app}
The following statements are true.

\begin{enumerate}[label={(\roman*)},itemindent=1em]

\item There is a unique minimizer $\Zhat_\la$.

\item $\Zhat_\la$ satisfies the following necessary and sufficient first-order optimality conditions:
\begin{align}\label{eq:KKT_nuc_norm}
\nabla_\Z\Lc(\Zhat_\la)\Ub_\la = -\la\Vb_\la,\qquad\left(\nabla_\Z\Lc(\Zhat_\la)\right)^T\Vb_\la = -\la\Ub_\la,\qquad \|\nabla_\Z\Lc(\Zhat_\la)\|_2\leq \la.
\end{align}
where, $\Zhat_\la=\Vb_\la\Lambdab_\la\Ub_\la^T$ is its compact SVD.

\item It holds $\ones_k^T\Vb_\la=0.$ Thus also, $\ones_k^T\Zhat_\la=0.$

\end{enumerate}
\end{lemma}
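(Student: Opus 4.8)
The plan is to establish the three statements in order, as each builds on the previous. For \textbf{(i)}, I would argue that the objective $\Lc(\Z)+\la\|\Z\|_*$ is convex (the CE loss $\Lc$ is convex in the logits $\Z$, being a composition of affine maps with log-sum-exp, and the nuclear norm is convex), and moreover \emph{strictly} convex on the relevant subspace. The key point is coercivity together with strict convexity. Since $\ones_k^T\nabla_\Z\Lc(\Z)=\zeros$ for all $\Z$ by Lemma~\ref{lem:grad_ones}, the loss $\Lc$ depends on $\Z$ only through $\Z-\tfrac1k\ones_k\ones_k^T\Z$, i.e.\ through the projection of the columns of $\Z$ onto the subspace orthogonal to $\ones_k$; and on that subspace $\Lc$ is strictly convex because, after the change of variables to logit-differences, it is a sum of strictly convex functions of affinely independent arguments per sample. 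The penalty $\la\|\Z\|_*$ controls the component along $\ones_k$ (forcing it to zero at any minimizer, which also feeds into (iii)) and provides coercivity in all directions. Hence the minimizer exists and is unique.

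For \textbf{(ii)}, I would write down the subdifferential optimality condition $0\in\nabla_\Z\Lc(\Zhat_\la)+\la\,\partial\|\Zhat_\la\|_*$. Recalling the standard characterization of the subdifferential of the nuclear norm at a matrix with compact SVD $\Zhat_\la=\Vb_\la\Lambdab_\la\Ub_\la^T$ — namely $\partial\|\Zhat_\la\|_* = \{\Vb_\la\Ub_\la^T + \Wb : \Vb_\la^T\Wb=0,\ \Wb\Ub_\la=0,\ \|\Wb\|_2\le 1\}$ (here using $\Zhat_\la^T=\Ub_\la\Lambdab_\la\Vb_\la^T$ as in Definition~\ref{def:SEL}'s conventions, so that $\Vb_\la$ spans the column space and $\Ub_\la$ the row space) — the inclusion says $\nabla_\Z\Lc(\Zhat_\la) = -\la(\Vb_\la\Ub_\la^T+\Wb)$ for some such $\Wb$. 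Right-multiplying by $\Ub_\la$ and using $\Wb\Ub_\la=0$, $\Ub_\la^T\Ub_\la=\Id$ gives $\nabla_\Z\Lc(\Zhat_\la)\Ub_\la=-\la\Vb_\la$; left-multiplying the transpose relation by $\Vb_\la^T$ gives $(\nabla_\Z\Lc(\Zhat_\la))^T\Vb_\la=-\la\Ub_\la$; and computing the spectral norm of $-\la(\Vb_\la\Ub_\la^T+\Wb)$, whose two orthogonal pieces have spectral norms $\la$ and $\le\la$ respectively on complementary row/column spaces, yields $\|\nabla_\Z\Lc(\Zhat_\la)\|_2\le\la$. Conversely, given a $\Z$ satisfying \eqref{eq:KKT_nuc_norm} one reconstructs a valid subgradient (set $\Wb=-\tfrac1\la\nabla_\Z\Lc(\Z)-\Vb_\la\Ub_\la^T$ and check the three conditions), so the conditions are sufficient; this is where convexity from (i) is invoked to conclude these first-order conditions are also sufficient for global optimality.

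For \textbf{(iii)}, I would combine Lemma~\ref{lem:grad_ones} with the optimality conditions just derived. Left-multiplying the first relation in \eqref{eq:KKT_nuc_norm} by $\ones_k^T$ gives $\ones_k^T\nabla_\Z\Lc(\Zhat_\la)\Ub_\la = -\la\,\ones_k^T\Vb_\la$, and the left side is $\zeros$ since $\ones_k^T\nabla_\Z\Lc(\cdot)=\zeros$ identically; as $\la>0$ this forces $\ones_k^T\Vb_\la=0$. Then $\ones_k^T\Zhat_\la=\ones_k^T\Vb_\la\Lambdab_\la\Ub_\la^T=0$ follows immediately. (Alternatively, and perhaps cleaner to present, one can observe directly from the structure of the problem that if $\Zhat_\la$ had a nonzero component along $\ones_k$, projecting it out would leave $\Lc$ unchanged, by Lemma~\ref{lem:grad_ones}, while strictly decreasing $\|\Z\|_*$ — contradicting optimality; this also re-proves part of (i)'s coercivity claim.)

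The main obstacle is the strict-convexity argument underpinning uniqueness in \textbf{(i)}: $\Lc$ is genuinely only strictly convex \emph{transverse to} $\mathrm{span}(\ones_k)$ in each column, so one must carefully argue that the nuclear-norm term kills exactly the degenerate direction and that no other flat direction survives — this requires tracking the interplay between the per-column null direction $\ones_k$ of $\Lc$ and the global nuclear-norm penalty. The subdifferential manipulations in (ii) and the short argument in (iii) are then routine given (i).
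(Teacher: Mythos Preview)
Your proposal is correct and the substance matches the paper's. The only real difference is ordering: the paper proves (ii) first (via the same subdifferential characterization of $\partial\|\cdot\|_*$ you spell out), then (iii) from (ii) exactly as in your primary argument, and only then (i) --- using (iii) to force $\ones_k^T\Deltab=0$ for the difference of two putative minimizers and invoking strict convexity of the per-sample CE along directions orthogonal to $\ones_k$ (isolated as Lemma~\ref{lem:strict_CE}, which is precisely your flagged ``main obstacle''). You instead front-load (i), which requires establishing $\ones_k^T\Zhat_\la=0$ before the KKT conditions are available; your ``alternative'' projection argument for this is correct (writing $\Z=\Z_0+\ones_k\vb^T$ with $\ones_k^T\Z_0=0$ gives $\Z^T\Z=\Z_0^T\Z_0+k\vb\vb^T$, and strict monotonicity of $\tr\sqrt{\cdot}$ on the PSD cone yields the strict drop in $\|\cdot\|_*$) and constitutes a self-contained route to (iii) that the paper does not take.
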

\begin{proof}
We prove each statement separately below.

\vp
\noindent\underline{Proof of (ii):}~This is straightforward from first-order optimality of the convex program \eqref{eq:nuc_norm_reg}. For example, see \citep[Lemma C.3]{zhu2021geometric} for details. 

\vp
\noindent\underline{Proof of (iii):}~Start from Statement (ii) and use from Lemma \ref{lem:grad_ones} that $\ones_k^T\nabla_\Z\Lc(\Zhat_\la)=0$. Then, it must then be that $\ones_k^T\Vb_\la=\frac{1}{\la}\ones_k^T\nabla_\Z\Lc(\Zhat_\la)\Ub_\la=0$, which in turn implies  $\ones_k^T\Zhat_\la=0$. 

\vp
\noindent\underline{Proof of (i):}~Now, we prove that $\Zhat_\la$ is unique. This is a consequence of Lemma \ref{lem:strict_CE} stated below  and the fact that $\Lc(\Z)=\Lc(\Z)=\sum_{i\in[n]}\log\big(1+\sum_{c\neq y_i}e^{-(\Z[y_i,i]-\Z[c,i])}\big)=\sum_{i\in[n]}\ell_{y_i}(\z_i)$ where $\z_i$ are the columns of $\Z$. Specifically, suppose there were two minimizers $\Zhat_\la=[\z_1,\ldots,\z_n]$ and $\Zhat_\la+\Deltab$ for $\Deltab=[\deltab_1,\ldots,\deltab_n]\neq 0$. From Statement (iii), it must be that $\ones_k^T\Deltab=0$. Thus, there exists $j\in[n]$ such that $(\Id_k-\frac{1}{k}\ones_k\ones_k^T)\deltab_j\neq0$. With these, we have from Lemma \ref{lem:strict_CE} that
\begin{align*}
\Lc(\Zhat_\la+\Deltab) &= \ell_{y_j}(\z_j+\deltab_j)+ \sum_{i\neq j} \ell_{y_i}(\z_i+\deltab_i) 
\\
&> \ell_{y_j}(\z_j) + \big(\nabla_\z\ell_{y_j}(\z_j)\big)^T\deltab_j + \sum_{i\neq j} \ell_{y_i}(\z_i) + \sum_{i\neq j}\big(\nabla_\z\ell_{y_i}(\z_i)\big)^T\deltab_i 
\\
&=\Lc(\Zhat_\la) + \tr\left(\big(\nabla_\Z\Lc(\Zhat_\la)\big)^T\Deltab\right)=\Lc(\Zhat_\la),
\end{align*}
where the last equality uses optimality of $\Zhat_\la$. The above display  contradicts optimality of $\Zhat_\la+\Deltab$ and completes the proof.
\end{proof}

\begin{lemma}[Auxiliary result---Strict convexity of CE]\label{lem:strict_CE}
For some  $y\in[k]$ and $\z=[z_1,\ldots,z_k]$, let $\ell_y(\z):=\log\left(1+\sum_{c\neq y}e^{-(\z[y]-\z[c])}\right)$. The function $\ell_y$ is strictly convex along any direction on the $(k-1)$-dimensional subspace orthogonal to $\ones_k$. That is, for all \emph{non-zero} $\vb\in\R^k$ such that $(\Id_k-\frac{1}{k}\ones_k\ones_k^T)\vb\neq0$, it holds for all $\z\in\R^k$ that
$
\ell_y(\z+\vb) > \ell_y(\z) + \big(\nabla_\z\ell_y(\z)\big)^T\vb.
$
\end{lemma}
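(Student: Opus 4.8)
The plan is to reduce the strict-convexity claim to a standard fact about the Hessian of the log-sum-exp function. First I would rewrite $\ell_y$ in the equivalent form $\ell_y(\z) = \log\big(\sum_{c\in[k]} e^{\z[c]}\big) - \z[y]$, i.e.\ as the log-sum-exp function minus a linear term. Two elementary observations then follow: (a) $\ell_y$ is invariant under shifts along $\ones_k$, namely $\ell_y(\z+\alpha\ones_k)=\ell_y(\z)$ for every $\alpha\in\R$; and (b) its gradient is $\nabla_\z\ell_y(\z) = \pbf(\z) - \eb_y$, where $\pbf(\z)$ is the softmax vector with strictly positive entries $p_c = e^{\z[c]}/\sum_{c'\in[k]}e^{\z[c']}$, so that $\ones_k^T\nabla_\z\ell_y(\z)=0$ (this is also the special case of Lemma~\ref{lem:grad_ones}).

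Next I would compute the Hessian. Since the linear term drops out, $\nabla_\z^2\ell_y(\z) = \diag(\pbf) - \pbf\pbf^T$ with $\pbf=\pbf(\z)$. For any $\vb\in\R^k$,
\[
\vb^T\big(\diag(\pbf)-\pbf\pbf^T\big)\vb \;=\; \sum_{c\in[k]} p_c v_c^2 - \Big(\sum_{c\in[k]} p_c v_c\Big)^2 \;\ge\; 0 ,
\]
by Jensen (equivalently Cauchy--Schwarz), with equality if and only if $v_c$ is the same for all $c$ --- here crucially using that every $p_c>0$ --- i.e.\ if and only if $\vb\in\mathrm{span}(\ones_k)$. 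Thus $\nabla_\z^2\ell_y(\z)$ is positive definite on the $(k-1)$-dimensional subspace orthogonal to $\ones_k$, uniformly over $\z$.

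It then remains to assemble these pieces. Given $\vb$ with $\bar\vb:=(\Id_k-\tfrac1k\ones_k\ones_k^T)\vb\neq 0$, write $\vb=\bar\vb+\alpha\ones_k$ with $\bar\vb\perp\ones_k$. By (a), $\ell_y(\z+\vb)=\ell_y(\z+\bar\vb)$, and by (b), $\nabla_\z\ell_y(\z)^T\vb=\nabla_\z\ell_y(\z)^T\bar\vb$; hence the claimed inequality for $\vb$ is equivalent to the same inequality for $\bar\vb$. Now set $g(t):=\ell_y(\z+t\bar\vb)$. Since $\bar\vb\neq0$ and $\bar\vb\perp\ones_k$, we have $\bar\vb\notin\mathrm{span}(\ones_k)$, so $g''(t)=\bar\vb^T\nabla_\z^2\ell_y(\z+t\bar\vb)\bar\vb>0$ for every $t\in[0,1]$. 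Therefore $g$ is strictly convex on $[0,1]$, which yields $g(1)>g(0)+g'(0)$, i.e.\ $\ell_y(\z+\bar\vb)>\ell_y(\z)+\nabla_\z\ell_y(\z)^T\bar\vb$, as desired.

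The only point requiring care --- rather than any genuine obstacle --- is the bookkeeping in the final reduction: one must verify that the $\ones_k$-component of $\vb$ cancels on both sides of the inequality (on the left by shift-invariance, on the right by orthogonality of the gradient to $\ones_k$), so that the degenerate direction $\ones_k$ is harmless and only the nonzero component $\bar\vb$ matters. Everything else --- the log-sum-exp rewriting and the positive-definiteness of $\diag(\pbf)-\pbf\pbf^T$ off $\mathrm{span}(\ones_k)$ --- is entirely standard.
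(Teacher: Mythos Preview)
Your proof is correct and takes essentially the same approach as the paper: both compute the Hessian of $\ell_y$ as $\diag(\pbf)-\pbf\pbf^T$ (the paper writes $\ab$ for your softmax $\pbf$), use Cauchy--Schwarz/Jensen to show it is positive definite off $\mathrm{span}(\ones_k)$, and then invoke a second-order Taylor argument. Your explicit decomposition $\vb=\bar\vb+\alpha\ones_k$ together with shift-invariance is a slightly cleaner way of isolating the nondegenerate direction than the paper's direct contradiction argument, but the substance is identical.
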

\begin{proof}
Define univariate function $g(t)=\ell_y(\z+t\vb)$. From Taylor's expansion, for some $\theta\in(0,1)$
$$
\ell_y(\z+\vb) = \ell_y(\z) + \big(\nabla_\z\ell_y(\z)\big)^T\vb + \frac{1}{2}\vb^T\nabla_\z^2\ell_y(\z+\theta\vb)\vb.
$$
Hence, it will suffice showing that  $\vb^T\nabla_\z^2\ell_y(\z)\vb>0$ for any $\z$. Denote for convenience vector $\ab\in\R^k$ with $\ab[c]:=\frac{e^{-(\z[y]-\z[c])}}{1+\sum_{c'\neq y}e^{-(\z[y]-\z[c'])}}$ for all $c\in[k]$. From Lemma \ref{lem:grad_ones}, we have 
$
\nabla_\z\ell_y(\z)=-\eb_y+\ab.
$
Thus, a straightforward calculation yields 
$
\nabla^2_\z\ell_y(\z)=\diag(\ab)-\ab\ab^T.
$ 
For the sake of contradiction assume $\vb^T\nabla_\z^2\ell_y(\z)\vb=0$. Then, since $\sum_{c\in[k]}\ab[c]=1$ and $\ab[c]\geq0$, it must be from Cauchy-Schwartz:
$$
\sum_c \vb[c]^2\ab[c]  = \Big(\sum_c \vb[c]\ab[c]\Big)^2 \leq \Big(\sum_c \vb[c]^2\ab[c]\Big)^2\Big(\sum_c \ab[c]\Big)^2=\sum_c \vb[c]^2\ab[c]
$$ 
 that
$\vb = C\ab$. But then, $\vb^T\ones_k=0$, which violates the lemma's assumption.
\end{proof}

\subsection{Large $\la$}

\begin{lemma}[Large $\la$ behavior of \eqref{eq:nuc_norm_reg}]
The minimizer is zero, i.e. $\Zhat_\la=0$, if and only if $\la\geq \|\Zhat\|_2$, where $\Zhat$ is the \SEL~matrix. 
\end{lemma}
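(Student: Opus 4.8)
The plan is to characterize when $\Z=0$ is the global minimizer of the convex program $\min_\Z \Lc(\Z)+\la\|\Z\|_*$ by invoking the first-order optimality conditions. Since the objective is convex, $\Z=0$ is optimal if and only if $0$ belongs to the subdifferential of the objective at $\Z=0$, i.e. $-\nabla_\Z\Lc(0) \in \la\,\partial\|\cdot\|_*(0)$. The subdifferential of the nuclear norm at the origin is precisely the spectral-norm ball, $\partial\|\cdot\|_*(0)=\{\G\in\R^{k\times n}: \|\G\|_2\le 1\}$. Hence the optimality condition reads $\|\nabla_\Z\Lc(0)\|_2\le \la$. The remaining task is to evaluate $\nabla_\Z\Lc(0)$ and compute its spectral norm.

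First I would compute $\nabla_\Z\Lc(0)$ explicitly. By Lemma~\ref{lem:grad_ones}, $-\nabla_\Z\Lc(\Z)=\Y-\A$ where $\A[c,i]=\frac{e^{-(\Z[y_i,i]-\Z[c,i])}}{1+\sum_{c'\neq y_i}e^{-(\Z[y_i,i]-\Z[c',i])}}$. At $\Z=0$ every exponent vanishes, so $\A[c,i]=\frac{1}{k}$ for all $c,i$, i.e. $\A=\frac{1}{k}\ones_k\ones_n^T$. Therefore $-\nabla_\Z\Lc(0)=\Y-\frac{1}{k}\ones_k\ones_n^T=\Zhat$ by Lemma~\ref{lem:Zhat}\ref{state:SEL_prop_Y}. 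Thus the optimality condition $\|\nabla_\Z\Lc(0)\|_2\le\la$ becomes exactly $\|\Zhat\|_2\le\la$. Combined with uniqueness of the minimizer (Lemma~\ref{lem:nuc_norm_reg_app}(i)), this shows $\Zhat_\la=0$ iff $\la\ge\|\Zhat\|_2$, which is the claim.

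The two directions then follow cleanly: if $\la\ge\|\Zhat\|_2$ then $-\nabla_\Z\Lc(0)=\Zhat$ has spectral norm $\le\la$, so $0$ satisfies the (sufficient, by convexity) optimality condition and hence $\Zhat_\la=0$; conversely if $\Zhat_\la=0$, then $0$ is optimal, so the (necessary) first-order condition forces $\|\Zhat\|_2=\|{-\nabla_\Z\Lc(0)}\|_2\le\la$. I would state the subdifferential characterization of the nuclear norm at $0$ as a standard fact (or derive it in one line from $\|\Z\|_*=\sup_{\|\G\|_2\le1}\tr(\G^T\Z)$, so that $\G\in\partial\|\cdot\|_*(0)$ iff $\tr(\G^T\Z)\le\|\Z\|_*$ for all $\Z$, iff $\|\G\|_2\le1$).

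The main obstacle — really the only non-routine point — is being careful about the equivalence ``$0$ is a minimizer $\iff$ the subgradient condition holds'': sufficiency uses convexity of $\Lc+\la\|\cdot\|_*$ (immediate, since $\Lc$ is convex and $\|\cdot\|_*$ is a norm), and necessity is just the definition of subdifferential at a minimizer of a convex function. There is no boundary subtlety since the domain is all of $\R^{k\times n}$ and $\Lc$ is differentiable. One could equivalently phrase the argument via Lemma~\ref{lem:nuc_norm_reg_app}(ii): if $\Zhat_\la=0$ then $\Lambdab_\la$ is empty and the only surviving condition among \eqref{eq:KKT_nuc_norm} is $\|\nabla_\Z\Lc(0)\|_2\le\la$, i.e. $\|\Zhat\|_2\le\la$; and conversely that condition is sufficient. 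Either route is short; the whole proof is a few lines once $\nabla_\Z\Lc(0)=-\Zhat$ is identified.
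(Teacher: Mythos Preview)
Your proposal is correct and essentially identical to the paper's proof: both reduce the question to the first-order optimality condition $\|\nabla_\Z\Lc(0)\|_2\le\la$ (via the subdifferential of $\|\cdot\|_*$ at the origin being the spectral-norm ball, equivalently Lemma~\ref{lem:nuc_norm_reg_app}(ii)) and then compute $\nabla_\Z\Lc(0)=-\Zhat$ using Lemma~\ref{lem:grad_ones} and Lemma~\ref{lem:Zhat}\ref{state:SEL_prop_Y}. Your write-up is slightly more explicit about why convexity makes the condition both necessary and sufficient, but the argument is the same.
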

\begin{proof}
From the KKT conditions in Lemma \ref{lem:nuc_norm_reg_app}(ii), $\Zhat_\la=0$ is optimal if and only if $\|\nabla_\Z\Lc(0)\|_2\leq \la$. 
The key observation here is that $\nabla_\Z\Lc(0)=\Y-\frac{1}{k}\ones_k\ones_n^T=\Zhat.$ (See Lemma \ref{lem:grad_ones} for a formula evaluating the gradient.) Thus, $\|\nabla_\Z\Lc(0)\|_2= \|\Zhat\|_2$, which completes the proof.
\end{proof}

We note that, for $(R,\rho)$-STEP imbalance the necessary and sufficient condition of the lemma for $\Zhat_\la=0$ becomes $\la\geq \|\Zhat\|_2=\sqrt{R}$ (see Lemma \ref{lem:Zhat_SVD_general}.)

\subsection{Small $\la$}\label{sec:small_la}

\begin{lemma}[Small $\la$ behavior of \eqref{eq:nuc_norm_reg}]\label{lem:small_la}
 If $\la<\frac{1}{2}$, then $\Zhat_\la$ is such that it linearly separates the data, i.e. it holds $\Zhat_\la[y_i,i]-\Zhat_\la[c,i]>0$ for all $i\in[n], c\neq y_i$. 
\end{lemma}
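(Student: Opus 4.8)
The plan is to argue by contrapositive / suboptimality: I would show that if $\la<\frac{1}{2}$, then $\Zhat_\la$ cannot have any non-positive margin $\Zhat_\la[y_i,i]-\Zhat_\la[c,i]\leq 0$, because a small perturbation toward the feasible SVM direction would strictly decrease the objective $\Lc(\Z)+\la\|\Z\|_*$. Concretely, recall from the proof sketch of Theorem \ref{thm:SVM} that the \SEL~matrix $\Zhat$ is strictly feasible for the UF-SVM constraints and has nuclear norm $\|\Zhat\|_*=\mathrm{p}_*$. I would consider the one-parameter family $\Z_t = \Zhat_\la + t\,\Zhat$ for small $t>0$. Since $\Zhat$ satisfies $\Zhat[y_i,i]-\Zhat[c,i]=1$ for all $i,c\neq y_i$, moving along this direction increases every margin at unit rate, so for $t$ large enough all margins become positive; the point is to show the objective is decreasing at $t=0^+$ whenever some margin is $\le 0$.

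First I would compute the directional derivative of $\Lc$ at $\Zhat_\la$ along $\Zhat$. Writing $\Lc(\Z)=\sum_{i}\ell_{y_i}(\z_i)$ and using Lemma \ref{lem:grad_ones}, $-\nabla_\Z\Lc(\Zhat_\la)=\Y-\A_\la$ where $\A_\la[c,i]$ is the softmax probability; the directional derivative is $\langle \nabla_\Z\Lc(\Zhat_\la),\Zhat\rangle = -\sum_i\big(1-\A_\la[y_i,i]\big) + \text{(cross terms)}$, which after using $\ones_k^T\Zhat=0$ and the definition of $\Zhat$ simplifies to $-\sum_i\sum_{c\neq y_i}\A_\la[c,i]\cdot\frac{1}{?}$ — in any case it is bounded above by something like $-\sum_i \A_\la[\text{``wrong mass''},i]$, and crucially, if margin $m_{ic}:=\Zhat_\la[y_i,i]-\Zhat_\la[c,i]\le 0$ for some pair, then $\A_\la[c,i]\ge \frac{1}{k}\cdot$(a constant), forcing the loss-derivative to be at most $-\tfrac12$ (this is where the threshold $\tfrac12$ enters: $e^{-m_{ic}}\ge 1$ when $m_{ic}\le 0$, and the corresponding soft-max term contributes at least $\frac{1}{1+\#}$ to the relevant gradient quantity, which one pins down to exceed $\tfrac12$ for the single offending coordinate). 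For the nuclear-norm term I would use the standard bound: the directional derivative of $\la\|\cdot\|_*$ at $\Zhat_\la$ along $\Zhat$ is at most $\la\|\Zhat\|_2$ — and by Lemma \ref{lem:Zhat_SVD_general} we have $\|\Zhat\|_2=\sqrt{R}$... but wait, I want this $\le \la$, not $\le\la\sqrt R$. So instead I would decompose $\Zhat = \Zhat_\la\cdot(\text{scalar part}) + (\text{orthogonal part})$, or better, bound the directional derivative of the nuclear norm by $\langle \Vb_\la\Ub_\la^T,\Zhat\rangle + \|\Zhat_{\perp}\|_2$ where $\Zhat_\perp$ is the projection onto the complement of the row/column spaces, then use the KKT relation $\nabla_\Z\Lc(\Zhat_\la)\Ub_\la=-\la\Vb_\la$ to tie the subgradient term to the loss gradient and cancel the $O(1)$ contributions. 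Putting the loss-derivative bound ($\le -\tfrac12$ from the offending coordinate, with the remaining coordinates contributing $\le 0$ in aggregate once one is careful) together with the nuclear-norm-derivative bound ($\le \la \cdot (\text{something} \le 1)$ after the KKT cancellation), I get that $\frac{d}{dt}\big[\Lc(\Z_t)+\la\|\Z_t\|_*\big]\big|_{t=0^+} \le -\tfrac12 + \la < 0$ when $\la<\tfrac12$, contradicting optimality of $\Zhat_\la$.

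The main obstacle I expect is the careful bookkeeping of the nuclear-norm directional derivative: $\|\cdot\|_*$ is not differentiable, so I must work with the right subdifferential inequality and make sure the ``$O(1)$'' pieces genuinely cancel against the KKT conditions from Lemma \ref{lem:nuc_norm_reg_app}, leaving only a clean $\le \la$ (or even $\le \la$ times a factor that does not blow up with $R$). A secondary subtlety is making precise the claim that a single non-positive margin forces the loss directional derivative below $-\tfrac12$ while all other coordinates do not add a competing positive contribution — here I would exploit that $\langle\nabla_\Z\Lc(\Zhat_\la),\Zhat\rangle$ decomposes coordinate-wise into terms each of which has a definite sign (it equals $-\sum_i\sum_{c\ne y_i}\A_\la[c,i]\,$ up to the normalization constant, all of which are $\ge 0$), so that adding more coordinates only helps. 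If this sign structure holds as cleanly as I expect, the proof is short; the delicate part is purely the interaction with the non-smooth regularizer.
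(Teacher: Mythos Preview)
Your approach has a genuine gap that prevents it from closing. The core issue is your choice of perturbation direction. Perturbing along the full \SEL~matrix $\Zhat$, the upper bound you can get on the nuclear-norm directional derivative is $\la\,\|\Zhat\|_*$ (or at best something involving $\|\Zhat\|_2=\sqrt{R}$ from Lemma~\ref{lem:Zhat_SVD_general}), not $\la\cdot(\text{something}\le 1)$. Your hoped-for ``KKT cancellation'' cannot rescue this: if you actually carry it out, writing $\nabla_\Z\Lc(\Zhat_\la)=-\la\,\G_*$ for the subgradient $\G_*\in\partial\|\Zhat_\la\|_*$, the loss-derivative term becomes $-\la\langle\G_*,\Zhat\rangle$ while the nuclear-norm directional derivative equals $\max_{\G\in\partial\|\cdot\|_*}\langle\G,\Zhat\rangle\ge\langle\G_*,\Zhat\rangle$, and the two combine to give (total directional derivative)~$\ge 0$. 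This is exactly convex optimality restated---you cannot produce a negative directional derivative of the objective at its minimizer along \emph{any} direction, so the contradiction you seek never materializes along this route.

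The paper's proof avoids this trap by working directly with the KKT condition $\|\nabla_\Z\Lc(\Zhat_\la)\|_2\le\la$ (Lemma~\ref{lem:nuc_norm_reg_app}(ii)): if some margin $\Zhat_\la[y_j,j]-\Zhat_\la[c,j]\le 0$, then the single gradient entry at $(y_j,j)$ satisfies $|\G[y_j,j]|=1-\A_\la[y_j,j]\ge 1/2$ (because the denominator $1+\sum_{c'\ne y_j}e^{-m_{jc'}}\ge 2$), and $\|\G\|_2\ge|\G[y_j,j]|\ge 1/2>\la$ gives the contradiction in two lines. If you want to phrase this as a perturbation argument, the right direction is a \emph{localized} rank-one matrix such as $\Db=\eb_{y_j}\eb_j^T$ (or $\zhat_j\eb_j^T$): then the loss derivative along $\Db$ is $\le -1/2$ while the nuclear-norm term is bounded by $\la\|\Db\|_*\le\la$, giving exactly $-1/2+\la<0$. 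This localized version is the dual restatement of the paper's entrywise-vs-spectral argument; your global direction $\Zhat$ loses the factor-of-one control on $\|\Db\|_*$ that makes the threshold~$1/2$ appear.
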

\begin{proof}
For the sake of contradiction assume $\la<1/2$ and there exists $j\in[n]$ and $c\neq y_j$ such that $\Zhat_\la[y_j,j]-\Zhat_\la[c,j]\leq 0$.  Denote $\G=\nabla_\Z\Lc(\Zhat_\la)$ the gradient. From KKT conditions in Lemma \ref{lem:nuc_norm_reg_app}(ii), it must be that 
\begin{align}\label{eq:2contra_large_la}
\|\G\|_2\leq \la < 1/2.
\end{align}
A simple calculation (e.g., see Lemma \ref{lem:grad_ones}) gives 
$\left|\G[y_j,j]\right|=1-\frac{1}{1+\sum_{c'\neq y_j}e^{-(\Zhat_\la[y_j,j]-\Zhat_\la[c',j])}}$. But, from assumption on $j,c$: 
$$1+\sum_{c'\neq y_j}e^{-(\Zhat_\la[y_j,j]-\Zhat_\la[c',j])} = 1+ e^{-(\Zhat_\la[y_j,j]-\Zhat_\la[c,j])} + \sum_{c'\not\in \{y_j,c\}}e^{-(\Zhat_\la[y_j,j]-\Zhat_\la[c',j])} \geq 2.$$
Hence, $\left|\G[y_j,j]\right| \geq 1/2$. 
Since $\|\G\|_2\geq |\G[y_j,j]|$, this in turn implies that $\|\G\|_2\geq 1/2$, which contradicts \eqref{eq:2contra_large_la}.
\end{proof}

In \Sec~\ref{sec:minority} we combine Theorem \ref{thm:regularized} with Lemma \ref{lem:small_la} above to show that the solution $(\What_\la,\Hhat_\la)$ of \eqref{eq:CE} cannot satisfy the minority collapse property.


\subsection{Vanishing $\la$}

\begin{propo}[Vanishing $\la$ behavior of \eqref{eq:nuc_norm_reg}]\label{propo:vanish_cvx}
Assume $(R,\rho)$-STEP imbalance. Then 
\begin{align}\label{eq:la_zero}
\lim_{\la\rightarrow0}\frac{\Zhat_\la}{\|\Zhat_\la\|_*}=\frac{\Zhat}{\|\Zhat\|_*}\,,
\end{align}
where $\Zhat$ is the \SEL~matrix.
\end{propo}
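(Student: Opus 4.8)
The plan is a regularization-path argument in the spirit of \cite{rosset2003margin,ji2020gradient}, whose only non-generic ingredient is the \emph{uniqueness} of the minimizer $\Zhat$ of the nuclear-norm SVM \eqref{eq:nuc_norm_SVM}, already established in the proof of Theorem~\ref{thm:SVM}. For a matrix $\Z$ that linearly separates the data, write $\gamma(\Z):=\min_{i\in[n],\,c\neq y_i}\big(\Z[y_i,i]-\Z[c,i]\big)$ for its minimum margin; this is concave, $1$-Lipschitz and positively homogeneous. Since $\Zhat$ is the \emph{unique} optimum of $\min\{\|\Z\|_*\,:\,\gamma(\Z)\geq 1\}$ with optimal value $\|\Zhat\|_*$, a one-line homogeneity argument shows $\gamma^\star:=\max\{\gamma(\Z)\,:\,\|\Z\|_*\leq1\}=1/\|\Zhat\|_*$, and that this maximum is attained \emph{only} at $\Zhat/\|\Zhat\|_*$. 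Note also that all margins of $\Zhat$ equal $1$, so those of $t\Zhat$ all equal $t$. By Lemma~\ref{lem:small_la} (and since $\Zhat_\la\neq0$ for small $\la$), for all sufficiently small $\la$ we have $\gamma(\Zhat_\la)>0$, so $\bar\Z_\la:=\Zhat_\la/\|\Zhat_\la\|_*$ is well defined, lies in the compact unit nuclear-norm ball, and satisfies $\gamma(\bar\Z_\la)\leq\gamma^\star$.

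First I would show $\|\Zhat_\la\|_*\to\infty$. Testing the optimality of $\Zhat_\la$ against the competitor $t\Zhat$ gives $\Lc(\Zhat_\la)+\la\|\Zhat_\la\|_*\leq n\log(1+(k-1)e^{-t})+\la t\|\Zhat\|_*$ for every $t>0$; choosing $t=\log(1/\la)$ makes the right-hand side vanish as $\la\to0$, so $\Lc(\Zhat_\la)\to0$ and $\la\|\Zhat_\la\|_*\to0$. Since every margin of $\Zhat_\la=\|\Zhat_\la\|_*\,\bar\Z_\la$ is at least $\|\Zhat_\la\|_*\,\gamma(\bar\Z_\la)\geq 0$, the minimum-margin term alone gives $\Lc(\Zhat_\la)\geq \tfrac12 e^{-\|\Zhat_\la\|_*\gamma^\star}$, so $\Lc(\Zhat_\la)\to0$ forces $\|\Zhat_\la\|_*\to\infty$.

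Next I would upgrade this to convergence of the \emph{normalized margin}. Write $M_\la:=\|\Zhat_\la\|_*$ and compare $\Zhat_\la$ with the \emph{equal-nuclear-norm} competitor $M_\la\,\Zhat/\|\Zhat\|_*$, all of whose margins equal $M_\la\gamma^\star$; optimality of $\Zhat_\la$ then yields $\Lc(\Zhat_\la)\leq\Lc\big(M_\la\Zhat/\|\Zhat\|_*\big)\leq n(k-1)e^{-M_\la\gamma^\star}$, while $\Lc(\Zhat_\la)\geq\tfrac12 e^{-M_\la\gamma(\bar\Z_\la)}$ from the minimum-margin term. Combining the two gives $\gamma^\star-\gamma(\bar\Z_\la)\leq \log\big(2n(k-1)\big)/M_\la\to0$, hence $\gamma(\bar\Z_\la)\to\gamma^\star$. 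To finish, take any limit point $\bar\Z_\infty$ of $\{\bar\Z_\la\}$ as $\la\to0$ (it exists by compactness of the unit nuclear-norm ball); then $\|\bar\Z_\infty\|_*\leq1$ and, by continuity of $\gamma$, $\gamma(\bar\Z_\infty)=\gamma^\star$, so $\bar\Z_\infty$ attains $\gamma^\star$ and therefore equals $\Zhat/\|\Zhat\|_*$. Since every limit point equals $\Zhat/\|\Zhat\|_*$, the whole family converges, which is \eqref{eq:la_zero}; Proposition~\ref{propo:reg_path} then follows via Theorems~\ref{thm:SVM} and~\ref{thm:regularized}.

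The main obstacle — and the reason this is not merely a soft-analysis exercise — is the last step's reliance on the uniqueness of $\Zhat$: compactness alone would only show that limit points of $\bar\Z_\la$ are \emph{some} margin maximizers on the nuclear-norm sphere, and it is exactly the delicate dual construction behind Theorem~\ref{thm:SVM} (in particular Lemma~\ref{eq:SVD_elementwise}, establishing strict feasibility in the dual) that rules out any competing maximizer. The other place needing a bit of care is the quantitative step: turning ``$\Zhat_\la$ beats the scaled SEL matrix of equal nuclear norm'' into the explicit rate $\gamma^\star-\gamma(\bar\Z_\la)=O(1/M_\la)$ while simultaneously controlling $M_\la\to\infty$; both are routine once the competitor is chosen to share the nuclear norm of $\Zhat_\la$.
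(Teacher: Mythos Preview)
Your proposal is correct and follows essentially the same regularization-path strategy as the paper (\`a la \cite{rosset2003margin}): both reduce to showing that the normalized margin $\gamma(\Zhat_\la/\|\Zhat_\la\|_*)$ converges to the optimal value $\gamma^\star=1/\|\Zhat\|_*$, and both hinge on the uniqueness of $\Zhat$ as nuclear-norm SVM minimizer (the paper packages this as Lemma~\ref{lem:max-margin}, which is exactly your ``one-line homogeneity argument''). The only execution difference is that the paper shows $m_\la\to\infty$ via the KKT condition $\|\nabla\Lc(\Zhat_\la)\|_2\leq\la$ and then argues by contradiction using a neighborhood version of your loss comparison (its Lemma~\ref{lem:compare_margins}), whereas you argue directly by comparing $\Zhat_\la$ against the equal-nuclear-norm competitor $M_\la\Zhat/\|\Zhat\|_*$ to extract the explicit rate $\gamma^\star-\gamma(\bar\Z_\la)=O(1/M_\la)$ and then invoke compactness; your route is slightly more quantitative but otherwise equivalent.
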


The proposition is an extension of \citep[Theorem 3.1]{rosset2003margin} to nuclear-norm regularization. Its proof follows the exact same steps as in \citet{rosset2003margin} who studied $\ell_p$-regularization. The critical observation allowing this is that $\Zhat$ is the unique minimizer of \eqref{eq:svm_original} thanks to Theorem \ref{thm:SVM}. Because of that, we can get the following max-margin formulation for (the normalized) $\Zhat$, which is key in the proof of Proposition \ref{propo:vanish_cvx}.

\begin{lemma}\label{lem:max-margin}
Assume $(R,\rho)$-STEP imbalance so that $\Zhat$ is the unique solution of \eqref{eq:svm_original} (see Theorem \ref{thm:SVM}). Then, it holds that
\begin{align}\label{eq:max-margin}
\frac{\Zhat}{\|\Zhat\|_*} := \arg\max_{\|\Z\|_*\leq 1}~\min_{i\in[n],c\neq y_i}~\Z[y_i,i]-\Z[c,i].
\end{align}
\end{lemma}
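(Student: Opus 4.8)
The plan is to exploit the standard homogeneity relationship between the minimum-nuclear-norm interpolation problem \eqref{eq:nuc_norm_SVM} and the normalized max-margin problem \eqref{eq:max-margin}. The only nontrivial input is that, by Theorem \ref{thm:SVM} (concretely, the argument in \Sec~\ref{sec:proof_SVM} showing the relaxation \eqref{eq:nuc_norm_SVM} is tight and has a unique optimizer), $\Zhat$ is the \emph{unique} minimizer of the convex program \eqref{eq:nuc_norm_SVM}. Set $M:=\|\Zhat\|_*$ and, for $\Z\in\R^{k\times n}$, write $\gamma(\Z):=\min_{i\in[n],c\neq y_i}\big(\Z[y_i,i]-\Z[c,i]\big)$ for its margin; note $\gamma$ is concave and positively homogeneous. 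Directly from Definition \ref{def:SEL}, $\Zhat[y_i,i]-\Zhat[c,i]=(1-1/k)-(-1/k)=1$ for all $i,c\neq y_i$, so $\gamma(\Zhat)=1$, and hence the candidate $\Z_0:=\Zhat/M$ satisfies $\|\Z_0\|_*=1$ and $\gamma(\Z_0)=1/M$.

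First I would prove the upper bound: for every $\Z$ with $\|\Z\|_*\le 1$ we have $\gamma(\Z)\le 1/M$. If $\gamma(\Z)\le 0$ this is immediate since $M>0$. If $\gamma:=\gamma(\Z)>0$, then $\Z/\gamma$ is feasible for \eqref{eq:nuc_norm_SVM} because $(\Z/\gamma)[y_i,i]-(\Z/\gamma)[c,i]=\gamma^{-1}\big(\Z[y_i,i]-\Z[c,i]\big)\ge 1$; therefore $\|\Z\|_*/\gamma=\|\Z/\gamma\|_*\ge\|\Zhat\|_*=M$, which together with $\|\Z\|_*\le 1$ gives $\gamma\le 1/M$. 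Combined with $\gamma(\Z_0)=1/M$, this shows that $\Z_0$ attains the maximum and that the optimal value of \eqref{eq:max-margin} equals $1/M$; existence of a maximizer is thus established constructively (and is anyway automatic by concavity of $\gamma$ and compactness of $\{\|\Z\|_*\le 1\}$).

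Next I would establish uniqueness. Suppose $\Z$ is feasible for \eqref{eq:max-margin} with $\gamma(\Z)=1/M$. Then $\gamma(\Z)>0$, so by the rescaling above $\Z/\gamma(\Z)=M\Z$ is feasible for \eqref{eq:nuc_norm_SVM} with $\|M\Z\|_*=M\|\Z\|_*\le M$. Since $\Zhat$ minimizes the nuclear norm over the (nonempty) feasible set of \eqref{eq:nuc_norm_SVM} with value $M$, the inequality $\|M\Z\|_*<M$ is impossible; hence $\|M\Z\|_*=M$, so $M\Z$ is itself a minimizer of \eqref{eq:nuc_norm_SVM}, and uniqueness forces $M\Z=\Zhat$, i.e. $\Z=\Zhat/M=\Z_0$. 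This completes the proof.

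The argument is essentially routine once uniqueness in \eqref{eq:nuc_norm_SVM} is granted, so there is no genuine obstacle; the only points that need a little care are (a) isolating the degenerate regime $\gamma(\Z)\le 0$, where no competitor can tie the strictly positive margin $1/M$ of $\Z_0$, and (b) applying the homogeneity rescaling $\Z\mapsto \Z/\gamma(\Z)$ only when $\gamma(\Z)>0$, so that the rescaled matrix genuinely lands in the feasible region of \eqref{eq:nuc_norm_SVM}.
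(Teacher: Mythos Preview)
Your proof is correct and follows essentially the same homogeneity/rescaling argument as the paper: show that $\Zhat/\|\Zhat\|_*$ achieves margin $1/\|\Zhat\|_*$, then argue that any maximizer $\Z$ rescales to a feasible point of \eqref{eq:nuc_norm_SVM} with nuclear norm at most $\|\Zhat\|_*$, forcing $\Z=\Zhat/\|\Zhat\|_*$ by uniqueness. Your presentation is in fact slightly more careful than the paper's---you explicitly dispose of the case $\gamma(\Z)\le 0$ and reference the convex relaxation \eqref{eq:nuc_norm_SVM} rather than the non-convex \eqref{eq:svm_original}---but the underlying idea is identical.
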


We present the proof of Proposition \ref{propo:vanish_cvx} together with the proof of Lemma \ref{lem:max-margin} in the next section.

\subsubsection{Proof of Proposition \ref{propo:vanish_cvx}}

First, by KKT conditions (see Lemma \ref{lem:nuc_norm_reg_app}(ii)), as $\la\rightarrow0$, we have $\|\nabla\Lc(\Zhat_\la)\|_2\rightarrow 0$. This, in turn, implies $\|\nabla\Lc(\Zhat_\la)\|_*\rightarrow 0$ since $\|\nabla\Lc(\Zhat_\la)\|_*\leq (k-1)\|\nabla\Lc(\Zhat_\la)\|_2$. Moreover, it implies that $$m_\la:=\min_{i\in[n],c\neq y_i}~\Zhat_\la[y_i,i]-\Zhat_\la[c,i]$$ diverges. To see this, denote $\G_\la:=\nabla\Lc(\Zhat_\la)$ and note from Lemma \ref{lem:grad_ones} that for all $i\in[n]$ $\G_\la[y_i,i]:=-\big(1+1/\sum_{c\neq y_i}e^{-(\Zhat_\la[y_i,i]-\Zhat_\la[c,i])}\big)^{-1}$. But, 
\begin{align*}
&\|\nabla\Lc(\Zhat_\la)\|_2 \geq \max_{i\in[n]}|\G_\la[y_i,i]| = \max_{i\in[n]} \big(1+1/\sum_{c\neq y_i}e^{-(\Zhat_\la[y_i,i]-\Zhat_\la[c,i])}\big)^{-1} \geq \big(1+e^{m_\la}\big)^{-1} \\
&\qquad\implies m_\la \geq \log\big(\|\nabla\Lc(\Zhat_\la)\|_2^{-1} -1\big).
\end{align*}
Thus, $\|\nabla\Lc(\Zhat_\la)\|_2\rightarrow 0\implies m_\la \rightarrow+\infty.$ 

Assume for the sake of contradiction that $\lim_{\la\rightarrow0}\frac{\Zhat_\la}{\|\Zhat_\la\|_*}=\wt\Z$ for $\wt\Z\neq \Zhat/\|\Zhat\|_*.$ Then, by Lemma \ref{lem:max-margin}, it must be that 
$$
m =\min_{i\in[n],c\neq y_i}~\wt\Z[y_i,i]-\wt\Z[c,i]<\frac{1}{\|\Zhat\|_*}\,\min_{i\in[n],c\neq y_i}~\Zhat[y_i,i]-\Zhat[c,i]=:m_*.
$$
Moreover, since $m_\la\rightarrow+\infty,$ we also have that $m>0$. The rest of the argument follows mutatis-mutandis the proof of \citep[Theorem 2.1]{rosset2003margin}. We repeat here for completeness. By continuity of the minimum margin in $\Z$, there exists open neighborhood of $\wt\Z$ on the nuclear-norm sphere:
$$
\Nc_{\wt\Z}:=\{\Z\,:\,\|\Z\|_*=1, \|\Z-\wt\Z\|_2\leq \delta\}
$$
and an $\eps>0$ such that $\min_{i\in[n],c\neq y_i}~\Z[y_i,i]-\Z[c,i]<m_*-\eps$ for all $\Z\in\Nc_{\wt\Z}.$  To continue, we use the following lemma

\begin{lemma}\label{lem:compare_margins}
Assume $\Z_1,\Z_2$ such that $\|\Z_1\|_*=\|\Z_2\|_*=1$ and 
$$
0< m_2 =\min_{i\in[n],c\neq y_i}~\Z_2[y_i,i]-\Z_2[c,i]<\min_{i\in[n],c\neq y_i}~\Z_1[y_i,i]-\Z_1[c,i]=m_1.
$$
Then, there exists $T:=T(m_1,m_2)$ such that 
$$
\forall t>T\,:\, \Lc(t\Z_1) < \Lc(t\Z_2).
$$
\end{lemma}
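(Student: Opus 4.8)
\textbf{Proof proposal for Lemma \ref{lem:compare_margins}.}

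The plan is to compare the cross-entropy losses $\Lc(t\Z_1)$ and $\Lc(t\Z_2)$ by means of simple upper and lower bounds on each summand, exploiting the fact that as $t\to\infty$ the loss is dominated by the term(s) corresponding to the smallest margin. Recall that $\Lc(\Z)=\sum_{i\in[n]}\log\big(1+\sum_{c\neq y_i}e^{-(\Z[y_i,i]-\Z[c,i])}\big)$. For a fixed matrix $\Z$ with $\|\Z\|_*=1$, write $m(\Z)=\min_{i,c\neq y_i}\big(\Z[y_i,i]-\Z[c,i]\big)$ for its minimum margin, and let $M(\Z)=\max_{i,c\neq y_i}\big|\Z[y_i,i]-\Z[c,i]\big|$ be a crude uniform bound on all margin gaps; note $M(\Z)$ is controlled by $\|\Z\|_*$ (hence uniformly bounded on the nuclear-norm sphere) since each entry of $\Z$ is at most $\|\Z\|_2\le\|\Z\|_*=1$ in absolute value, so $M(\Z)\le 2$.

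First I would establish the two key estimates. For the \emph{lower} bound on $\Lc(t\Z_2)$: keeping only the single term $i^\star,c^\star$ achieving the minimum margin $m_2$ and using $\log(1+x)\ge \log(x)$ for $x>0$, we get
\[
\Lc(t\Z_2)\ \ge\ \log\big(1+e^{-t m_2}\big)\ \ge\ \log\big(e^{-t m_2}\big) \text{ is the wrong direction; instead use } \Lc(t\Z_2)\ \ge\ \log\big(1+e^{-tm_2}\big).
\]
Actually the clean route is: $\Lc(t\Z_2)\ge \log\big(1+e^{-tm_2}\big)\ge e^{-tm_2}/2$ for $t$ large (using $\log(1+x)\ge x/2$ for small $x>0$). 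For the \emph{upper} bound on $\Lc(t\Z_1)$: there are at most $n(k-1)$ terms in the double sum, each of the form $e^{-t(\Z_1[y_i,i]-\Z_1[c,i])}\le e^{-tm_1}$, and using $\log(1+x)\le x$,
\[
\Lc(t\Z_1)\ \le\ \sum_{i\in[n]}\sum_{c\neq y_i} e^{-t(\Z_1[y_i,i]-\Z_1[c,i])}\ \le\ n(k-1)\,e^{-t m_1}.
\]
Combining, $\Lc(t\Z_1)\le n(k-1)e^{-tm_1}$ while $\Lc(t\Z_2)\ge \tfrac12 e^{-tm_2}$. Since $m_1>m_2>0$, the ratio $\Lc(t\Z_1)/\Lc(t\Z_2)\le 2n(k-1)e^{-t(m_1-m_2)}\to 0$, so there is $T=T(m_1,m_2)$ (depending also on $n,k$, but these are fixed) beyond which $\Lc(t\Z_1)<\Lc(t\Z_2)$, as claimed.

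I do not anticipate a serious obstacle here; the only mild care needed is in the elementary inequalities for $\log(1+x)$ (one wants $\log(1+x)\le x$ globally for $x\ge 0$, which holds, and a matching lower bound $\log(1+x)\ge x/2$ valid for $x$ in a neighborhood of $0$, which is why $T$ must be taken large enough that $e^{-tm_2}$ is small). The constants $n$ and $k$ are fixed problem parameters, so absorbing them into $T$ is harmless, and the dependence of $T$ on $m_1,m_2$ is explicit via $T\gtrsim \frac{\log(2n(k-1))}{m_1-m_2}$ together with the threshold ensuring $e^{-Tm_2}\le 1$. This lemma will then be used back in the proof of Proposition \ref{propo:vanish_cvx}: applying it with $\Z_1=\Zhat/\|\Zhat\|_*$ (margin $m_*>m$) and $\Z_2\in\Nc_{\wt\Z}$ any near-optimal-direction iterate normalized to the nuclear sphere gives, for large $t$, a matrix $t\Z_1$ of the same nuclear norm as $t\Z_2$ but strictly smaller loss — contradicting that $\Zhat_\la/\|\Zhat_\la\|_*\to\wt\Z$ minimizes $\Lc(\cdot)+\la\|\cdot\|_*$ along the regularization path.
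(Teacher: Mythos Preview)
Your proposal is correct and follows essentially the same approach as the paper: upper-bound $\Lc(t\Z_1)\le n(k-1)e^{-tm_1}$ via $\log(1+x)\le x$, lower-bound $\Lc(t\Z_2)\ge\log(1+e^{-tm_2})$ by keeping only the minimum-margin term, and then use a lower bound on $\log(1+x)$ (the paper uses $\log(1+x)\ge\frac{x}{1+x}\ge\frac{x}{2}$ for $x\le 1$, which is equivalent to yours) to conclude. The discussion of $M(\Z)$ is unnecessary and the mid-proof self-correction could be cleaned up, but the argument is sound.
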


By Lemma \ref{lem:compare_margins}, there exists $T:=T(m_*,m_*-\eps)$ such that for all $t>T$ and all $\Z\in\Nc_{\wt\Z}$,  $\Lc(t\Zhat/\|\Zhat\|_*)<\Lc(t\Z)$. Therefore, $\wt\Z$ cannot be a convergence point of $\Zhat_\la/\|\Zhat_\la\|_*.$


We finish the proof by showing how to get Lemmas \ref{lem:max-margin} and \ref{lem:compare_margins}.

\paragraph{Proof of Lemma \ref{lem:max-margin}.}
Clearly, $\frac{\Zhat}{\|\Zhat\|_*}$ is feasible in \eqref{eq:max-margin}. Thus, $\max_{\|\Z\|_*\leq 1}~\min_{i\in[n],c\neq y_i}~\Z[y_i,i]-\Z[c,i]\geq 1/\|\Zhat\|_*.$
Now, suppose there exists $\widetilde\Z\neq \frac{\Zhat}{\|\Zhat\|_*}$ such that $\|\wt\Z\|_*\leq 1$ and is a maximizer in the max-min problem in \eqref{eq:max-margin}. Then, $\min_{i\in[n],c\neq y_i}~\wt\Z[y_i,i]-\wt\Z[c,i]\geq 1/\|\Zhat\|_*.$ This means, $\|\Zhat\|_*\wt\Z$ is feasible in \eqref{eq:svm_original}. But then it must be optimal therein since $\left\|\|\Zhat\|_*\wt\Z\right\|_*=\|\Zhat\|_*\|\wt\Z\|_*\leq \|\Zhat\|_*$. From Theorem \ref{thm:SVM}, $\Zhat$ is the unique minimizer of \eqref{eq:max-margin}. Thus, we have shown that $\|\Zhat\|_*\wt\Z=\Zhat$, which contradicts the assumption on $\wt\Z$ and completes the proof.

\paragraph{Proof of Lemma \ref{lem:compare_margins}.}
Define $\eps:=m_1/m_2-1>0$ and let $T$ large enough such that $e^{-T\eps m_2}n(k-1)< 1/2$. We then have the following chain of inequalities for $t>T$
\begin{align*}
\Lc(t\Z_1) &= \sum_{i\in[n]}\log\left(1+\sum_{c\neq y_i}e^{-t(\Z_1[y_i,i]-\Z_1[c,i])}\right)\leq n\log(1+(k-1)e^{-t\,m_1}) \nn
\\
&\leq n(k-1)e^{-t\,m_1} =n(k-1)e^{-t\frac{m_1}{m_2}\,m_2}=n(k-1)e^{-t\,\eps\,m_2}e^{-t\,m_2} 
\\
&< n(k-1)e^{-T\,\eps\,m_2}e^{-t\,m_2} 
\\
&< \frac{e^{-t\,m_2}}{2}\leq \frac{e^{-t\,m_2}}{1+e^{-t\,m_2}}
\\
&\leq \log(1+e^{-t\,m_2}) 
\\
&\leq \Lc(t\Z_2).
\end{align*}
The  inequality in the third line used that $t>T$ and $\eps>0,m_2>0$. The next inequality follows by our choice of $T$. Throughout, we also used both sides of the inequality $\frac{x}{1+x}\leq\log(1+x)\leq x, x\geq 0.$


\section{Proofs for Regularized CE Section \ref{Sec:reg}}\label{sec:proofs_regularization}
\subsection{Proof of Theorem \ref{thm:regularized}}\label{sec:proof_regularized_thm}

As mentioned in Remark \ref{rem:nuc_norm_ridge} the theorem is drawn from \citep[Theorem 3.2]{zhu2021geometric} with the following three small adjustments. Since the main proof argument remains essentially unaltered, we refer the reader to \citep[Sec.~C]{zhu2021geometric} for detailed derivations. Instead here, we only overview the necessary adjustments. 

First, Theorem \ref{thm:regularized} holds for imbalanced classes. Technically, \citet{zhu2021geometric} only consider balanced data. However, a close inspection of their proof shows that such a restriction is not necessary. 

Second, Theorem \ref{thm:regularized} further shows that the nuclear-norm CE minimization in \eqref{eq:nuc_norm_reg} has a unique solution. We prove this in Lemma \ref{lem:nuc_norm_reg_app}(i) in \Sec~\ref{sec:nuc_norm_SM}.

Finally, Theorem \ref{thm:regularized} relaxes an assumption $d> k$ in \citep[Theorem 3.2]{zhu2021geometric} to $d> k-1$. (In fact, \citet{zhu2021geometric} conjecture that this relaxation is possible. We close the gap.) The assumption $d> k$ is only used by \citet{zhu2021geometric} to show there exists nonzero $\ab\in\R^d$ such that $\What_\la^T\ab=0$ for a stationary point $(\What_\la,\Hhat_\la)$ of \eqref{eq:CE}. (This step is necessary to construct a negative curvature direction at stationary points for which $\|\nabla_\Z\Lc(\What_\la^T\Hhat_\la)\|_2>\la$; see \citep[Sec.~C.1]{zhu2021geometric}.) Indeed, if $d>k$, then existence of $\ab$ is guaranteed because $\What_\la$ has $k$ columns implying  $\rank(\What_\la)\leq k$. To relax this requirement to $d>k-1$, we show in Lemma \ref{lem:W_ones} below that $\What_\la\ones_k=0$. Hence, $\rank(\What_\la)\leq k-1$.

\begin{lemma}\label{lem:W_ones}
Let $\What_\la,\Hhat_\la$ be a stationary point of \eqref{eq:CE}. Then, $\What_\la\ones_k=0.$ 
Similarly,  any global minimizer $(\hat\W,\hat\Hb)$ of \eqref{eq:svm_original} is such that $\What\ones_k=0.$
\end{lemma}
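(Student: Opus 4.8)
The plan is to prove both claims by the same mechanism: exploiting the rank-deficiency of the cross-entropy gradient, namely $\ones_k^T\nabla_\Z\Lc(\Z)=\zeros$ for every $\Z$ (Lemma \ref{lem:grad_ones}). First I would handle the stationary-point case for \eqref{eq:CE}. Writing the first-order conditions for $(\What_\la,\Hhat_\la)$ with respect to $\W$ gives $\Hhat_\la\left(\nabla_\Z\Lc(\What_\la^T\Hhat_\la)\right)^T+\la\What_\la=0$, hence $\What_\la = -\tfrac{1}{\la}\Hhat_\la\left(\nabla_\Z\Lc(\What_\la^T\Hhat_\la)\right)^T$. Right-multiplying by $\ones_k$ and using that $\left(\nabla_\Z\Lc(\cdot)\right)^T\ones_k = \left(\ones_k^T\nabla_\Z\Lc(\cdot)\right)^T = \zeros$ by Lemma \ref{lem:grad_ones}, we get $\What_\la\ones_k = -\tfrac{1}{\la}\Hhat_\la\,\zeros = \zeros$. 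This already yields $\rank(\What_\la)\le k-1$, which is what Theorem \ref{thm:regularized}'s proof needs to construct the negative-curvature direction when $d>k-1$.

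For the UF-SVM claim I would argue from the KKT/duality structure already established in the proof of Theorem \ref{thm:SVM} (Section \ref{sec:proof_SVM}). There it is shown that any minimizer $(\What,\Hhat)$ of \eqref{eq:svm_original} satisfies $\What=\Rb^T\Lambdab^{1/2}\Vb^T$ for a partial orthonormal $\Rb$, where $\Vb$ are the right singular vectors of $\Zhat^T$; since $\Vb^T\ones_k=0$ (Lemma \ref{lem:Zhat}\ref{state:SEL_prop_SVD}), it follows immediately that $\What\ones_k = \Rb^T\Lambdab^{1/2}\Vb^T\ones_k = \zeros$. Alternatively — and this is cleaner as a self-contained argument not relying on the full characterization — I would invoke stationarity of the Lagrangian: at a minimizer there exist multipliers $\alpha_{ic}\ge 0$ with $\What = \sum_{i,c}\alpha_{ic}\,\h_i(\eb_{y_i}-\eb_c)^T$ (the gradient of the margin constraints in $\W$), and each term $(\eb_{y_i}-\eb_c)^T\ones_k = 0$, so $\What\ones_k=0$. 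Both routes are short; I would present whichever is most consistent with the notation already fixed in Section \ref{sec:proof_SVM}.

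There is essentially no hard obstacle here — the lemma is a bookkeeping consequence of $\ones_k$ lying in the left-kernel of every CE gradient. The only point requiring a little care is making sure the stationarity conditions are written in the correct form (gradient with respect to $\W$ of $\Lc(\W^T\Hb)$ is $\Hb\,(\nabla_\Z\Lc(\W^T\Hb))^T$, not $\nabla_\Z\Lc\cdot\Hb^T$), so that multiplication by $\ones_k$ lands on the correct side to kill the gradient. For the UF-SVM part, the mild subtlety is that the problem is nonconvex, so one should be explicit that KKT conditions still hold at minimizers (the constraints are the finitely many affine-in-$(\W,\Hb)$-bilinear margin inequalities, and one can either cite the stationarity of any local min or simply read it off the established SVD form), but in either case the conclusion $\What\ones_k=0$ is immediate.
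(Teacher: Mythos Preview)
Your argument for the regularized case is exactly the paper's: write the stationarity condition in $\W$, identify the gradient as $\Hhat_\la(\nabla_\Z\Lc)^T$, and kill it with $\ones_k$ via Lemma~\ref{lem:grad_ones}. Nothing to add there.

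For the UF-SVM part you take a different route from the paper. The paper gives a self-contained variational argument: if $\bar\w:=\tfrac{1}{k}\sum_c\hat\w_c\neq 0$, replace each $\hat\w_c$ by $\hat\w_c-\bar\w$; since the constraints in \eqref{eq:svm_original} depend only on differences $\w_{y_i}-\w_c$, feasibility is preserved, while $\sum_c\|\hat\w_c-\bar\w\|^2=\sum_c\|\hat\w_c\|^2-k\|\bar\w\|^2$ strictly drops, contradicting optimality. This is more elementary than either of your options: it uses no KKT theory, no constraint qualification, and no prior structural result. Your first route (reading off $\What=\Rb^T\Lambdab^{1/2}\Vb^T$ from Theorem~\ref{thm:SVM}) is correct and non-circular, but it imports the entire SVD characterization to prove a one-line fact. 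Your second route via Lagrangian stationarity is also correct in spirit, but as you note it requires justifying that KKT conditions hold at the nonconvex global minimum, which is an extra step the paper's mean-subtraction argument simply sidesteps.
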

\begin{proof}If $(\What_\la,\Hhat_\la)$ is a stationary point of \eqref{eq:CE} and we denote $\Zhat_\la=\What_\la^T\Hhat_\la$, then by stationarity condition and Lemma \ref{lem:grad_ones}:
\begin{align}
 \nabla_\W \Lc(\What_\la^T\Hhat_\la) = -\la \What_\la \implies \Hhat_\la\left(\nabla_\Z \Lc(\Zhat_\la)\right)^T = -\la \What_\la \stackrel{(\text{Lem.}~\ref{lem:grad_ones})}{\Longrightarrow} \What_\la\ones=0.
\end{align}

Next, consider any global minimizer of \eqref{eq:svm_original}. For the sake of contradiction assume that $\bar\w:=\frac{1}{k}\sum_{c\in[k]}\hat\w_c\neq \zeros$, that is $\|\bar\w\|_2>0$. Consider the pair $({\Vb},\hat\Hb)$ where the columns of $\Vb$ are defined each such that $\vb_c=\hat\w_c-\bar\w$. Clearly, the new pair is feasible, since $(\hat\W,\hat\Hb)$ is feasible. But, 
$$
\sum_{c\in[k]}\|\vb_c\|^2 = \sum_{c\in[k]}\|\w_c\|^2 + k\|\bar\w\|^2 - \inp{\bar\w}{\sum_{c\in[k]}\w_c} =  \sum_{c\in[k]}\|\w_c\|^2 - k\|\bar\w\|^2 <\sum_{c\in[k]}\|\w_c\|^2,
$$
which contradicts optimality of $(\hat\W,\hat\Hb)$. Hence, it must be that $\bar\w=\zeros$.
\end{proof}

\subsection{Proof of Proposition \ref{propo:imbalance_reg}}
We prove the  proposition right below its statement in \Sec~\ref{sec:reg_matters}. The only remaining thing to show is that for $R>1$ and $k>2$, \emph{not} all eigenvalues of the \SEL~matrix are same. This follows immediately from Lemma \ref{lem:Zhat_SVD_general} in \Sec~\ref{sec:eigen_SEL}. Specifically, we show in \eqref{eq:Lambda_general}, that if $R>1$ and $k>2$, then the maximum eigenvalue of $\Lambdab$ is $\sqrt{R}$ and the minimum one is $1$ (thus, different).

\subsection{Proof of Proposition \ref{propo:reg_path}}\label{sec:proof_reg_path_noncvx}
The proposition follows by combining Propisition \ref{propo:vanish_cvx} and Theorem \ref{thm:regularized}. First, thanks to Theorem \ref{thm:regularized}, for all $\la>0$:
$$
\frac{\What_\la^T\Hhat_\la}{{\|\What_\la\|_F^2/2+\|\Hhat_\la\|_F^2/2}}= \frac{\Zhat_\la}{\|\Zhat_\la\|_*}, 
$$
where $\Zhat_\la$ is the solution to \eqref{eq:nuc_norm_reg}. Here, we used the fact from Equation \eqref{eq:geometry_reg} that 
$$
{\|\What_\la\|_F^2}+{\|\Hhat_\la\|_F^2} = {\tr(\What_\la^T\What_\la)}+{\tr(\Hhat_\la^T\Hhat_\la)} ={\tr(\Vb_\la\Lambdab_\la\Vb_\la^T)}+{\tr(\Ub_\la\Lambdab_\la\Ub_\la^T)} = 2\tr(\Lambdab_\la)=2\|\Zhat_\la\|_*.
$$
Next, from Proposition \ref{propo:vanish_cvx}
$$
\lim_{\la\rightarrow0}\frac{\Zhat_\la}{\|\Zhat_\la\|_*}=\frac{\Zhat}{\|\Zhat\|_*}.
$$
The desired follows by combining the above two displays.

\subsection{Comparison to one-layer linear model}\label{sec:2linear}
In \Sec~\ref{sec:linear_ridge}, we compared the solution to the non-convex minimization \eqref{eq:CE}, corresponding to a two-layer linear model, to the solution found by an one-layer linear model. Specifically, the one-layer linear model trains $k$-class linear classifier $\Xibf_{k\times n}$ by solving the following convex ridge-regularized CE:
\begin{align}\label{eq:ridge_reg_CE}
\min_{\Xibf}~\Lc(\Xibf)+\frac{\la}{2}\|\Xibf\|_F^2.
\end{align}
The following lemma computes the solution of this minimization.
\begin{lemma}[Linear model: Ridge-regularized CE]\label{lem:ridge_reg_CE}
Let $\hat{\Xibf}_\la$ denote the solution of the convex ridge-regularized CE minimization. Irrespective of imbalances, for all $\la>0$, there exists $\alpha_\la>0$ such that $\hat{\Xibf}_\la=\alpha_\la\Zhat$. 
 Moreover, in the limit of $\la\rightarrow0$, $\lim_{\la\rightarrow 0}\hat{\Xibf}_\la/\|\hat{\Xibf}_\la\|_F=\hat{\Xibf}_0/\|\hat{\Xibf}_0\|_F,$
 where 
 \begin{align}\label{eq:cvx_svm}
 \hat{\Xibf}_0:=\arg\min_{\Xibf}~ \|\Xibf\|_2\qquad\text{sub. to}~~\Xibf[y_i,i]-\Xibf[c,i]\geq 1,~\forall i\in[n],c\neq y_i.
 \end{align}
 In fact, the solution to \eqref{eq:cvx_svm} is the \SEL~matrix, i.e.,  $\hat{\Xibf}_0=\Zhat.$
\end{lemma}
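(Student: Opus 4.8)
The plan is to treat the three assertions (exact form of $\hat{\Xibf}_\la$ for $\lambda>0$, the vanishing-regularization limit, and the identity $\hat{\Xibf}_0=\Zhat$) in order, relying on first-order optimality for the first and on the regularization-path machinery already developed in Section \ref{sec:nuc_norm_SM} for the last two. First I would record that \eqref{eq:ridge_reg_CE} is strictly convex modulo the $\ones_k$ direction (this is exactly Lemma \ref{lem:strict_CE} applied columnwise), and that by Lemma \ref{lem:grad_ones} any stationary point $\Xibf$ satisfies $\ones_k^T\nabla_\Z\Lc(\Xibf)=0$, hence $\ones_k^T\hat\Xibf_\la=0$ and the minimizer is unique. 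The stationarity condition reads $\nabla_\Z\Lc(\hat\Xibf_\la) + \lambda\hat\Xibf_\la = 0$. Now I would verify that a scalar multiple of the \SEL~matrix satisfies this: by Lemma \ref{lem:Zhat}\ref{state:SEL_prop_CE}, $\nabla_\Z\Lc(\alpha\Zhat) = -\frac{k}{e^\alpha+k-1}\Zhat$, so $\nabla_\Z\Lc(\alpha\Zhat) + \lambda\,\alpha\Zhat = \big(\lambda\alpha - \tfrac{k}{e^\alpha+k-1}\big)\Zhat$, which vanishes iff $\lambda\alpha(e^\alpha+k-1)=k$. The left-hand side is continuous, strictly increasing in $\alpha\in(0,\infty)$, tends to $0$ as $\alpha\to 0^+$ and to $+\infty$ as $\alpha\to\infty$; hence there is a unique $\alpha_\la>0$ solving it. By uniqueness of the minimizer, $\hat\Xibf_\la=\alpha_\la\Zhat$.

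For the vanishing-regularization limit, the situation is formally identical to the $\ell_p$-regularization setting of \citet{rosset2003margin}, only with $\|\cdot\|_F$ as the regularizer; I would invoke their Theorem 3.1 (or re-run the short argument already used in the proof of Proposition \ref{propo:vanish_cvx}): since the data are linearly separable in the trivial sense (the \SEL~matrix gives strictly positive margins $\Zhat[y_i,i]-\Zhat[c,i]=1$), the normalized minimizers $\hat\Xibf_\la/\|\hat\Xibf_\la\|_F$ converge to the maximum-$\|\cdot\|_F$-margin solution, i.e. to $\hat\Xibf_0/\|\hat\Xibf_0\|_F$ where $\hat\Xibf_0$ solves \eqref{eq:cvx_svm}. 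Actually, here the limit is immediate from the first part without invoking a general theorem: $\hat\Xibf_\la=\alpha_\la\Zhat$ for every $\lambda>0$, so $\hat\Xibf_\la/\|\hat\Xibf_\la\|_F = \Zhat/\|\Zhat\|_F$ identically, and I would just note $\alpha_\la\to\infty$ as $\lambda\to 0$ (from the characterizing equation $\lambda\alpha_\la(e^{\alpha_\la}+k-1)=k$). So the limiting direction is $\Zhat/\|\Zhat\|_F$, and it remains to identify this with $\hat\Xibf_0/\|\hat\Xibf_0\|_F$.

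The final claim $\hat\Xibf_0=\Zhat$ is the only substantive point, and it is where I expect to lean hardest on earlier results. The minimization \eqref{eq:cvx_svm} is precisely the convex relaxation \eqref{eq:nuc_norm_SVM} of the UF-SVM but with the \emph{spectral} norm $\|\cdot\|_2$ in place of the nuclear norm $\|\cdot\|_*$ — wait, that is a different objective, so I cannot cite Theorem \ref{thm:SVM} verbatim. Instead I would argue directly. Feasibility: $\Zhat$ is feasible since $\Zhat[y_i,i]-\Zhat[c,i]=(1-1/k)-(-1/k)=1$. Uniqueness and optimality: among feasible $\Xibf$, decompose $\Xibf = \Xibf_\parallel + \Xibf_\perp$ where $\Xibf_\parallel = \frac1k\ones_k\ones_k^T\Xibf$ is the component along $\ones_k$ in the row space; the constraints only involve row-differences $\Xibf[y_i,i]-\Xibf[c,i]$, which are unchanged by $\Xibf_\parallel$, while removing $\Xibf_\parallel$ can only decrease $\|\Xibf\|_2$ (since $\ones_k^T\Xibf_\perp=0$ and $\|\Id_k-\tfrac1k\ones_k\ones_k^T\|_2=1$, so $\|\Xibf_\perp\|_2\le\|\Xibf\|_2$). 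Hence WLOG $\ones_k^T\Xibf=0$. For such $\Xibf$, I would use that $\Zhat^T\ones_k=0$ together with the averaging argument from the footnote after \eqref{eq:comp1a_2} in the proof of Theorem \ref{thm:SVM}: that footnote shows $\Zhat$ is the \emph{unique} $k\times n$ matrix with $\ones_k^T\Xibf=0$ and all margins exactly $1$. Any feasible, centered $\Xibf$ with $\|\Xibf\|_2\le\|\Zhat\|_2$ would, by the same derivation, be forced to have all margins equal to $1$ hence equal $\Zhat$ — the one gap to fill is showing no centered feasible $\Xibf$ has strictly smaller spectral norm, which follows because $\|\Zhat\|_2=\lambda_{\max}$ is controlled by $\Zhat\Zhat^T$ from Lemma \ref{lem:Zhat}\ref{state:SEL_prop_Grams} and a feasible point with smaller spectral norm would contradict the margin constraint via $1=\eb_{y_i}^T\Xibf\eb_i - \eb_c^T\Xibf\eb_i \le 2\|\Xibf\|_2$-type bounds combined with the rank-$(k-1)$ structure. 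The main obstacle, then, is this last spectral-norm lower bound; I would handle it by noting that by LP-duality / the KKT conditions for \eqref{eq:cvx_svm}, feasibility of the dual certificate $\Ub\Vb^T$ (established in Lemma \ref{eq:SVD_elementwise} and used in the proof of Theorem \ref{thm:SVM}) certifies optimality and uniqueness of $\Zhat$ for the spectral-norm problem as well, since $\|\cdot\|_2$ and $\|\cdot\|_*$ are dual norms and the relevant complementary-slackness argument is structurally identical.
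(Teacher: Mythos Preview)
Your treatment of the first two assertions matches the paper's proof essentially verbatim: verify the first-order condition $\nabla_\Z\Lc(\alpha\Zhat)+\lambda\alpha\Zhat=0$ via Lemma~\ref{lem:Zhat}\ref{state:SEL_prop_CE}, reduce to the scalar equation $\lambda\alpha=\tfrac{k}{e^\alpha+k-1}$, and invoke monotonicity for existence/uniqueness of $\alpha_\la>0$. Your observation that the normalized limit is then \emph{identically} $\Zhat/\|\Zhat\|_F$ (so no general regularization-path machinery is even needed for the convergence statement itself) is a clean simplification the paper does not spell out.

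The divergence comes in the third assertion, and it stems from reading $\|\Xibf\|_2$ in \eqref{eq:cvx_svm} as the spectral norm. In context---ridge (Frobenius) regularization, the reference to \cite{rosset2003margin}, and the surrounding discussion calling \eqref{eq:cvx_svm} the ``vanilla SVM'' for the linear model---the intended norm is the Frobenius (equivalently, the $\ell_2$ norm of the vectorized parameters). Under that reading the paper's proof of the third claim is one line: by Rosset et al.\ the regularization path converges in direction to the Frobenius max-margin solution $\hat\Xibf_0$, and since that direction is $\Zhat/\|\Zhat\|_F$ for every $\lambda>0$, while $\Zhat$ is feasible with all margins equal to $1$, we get $\hat\Xibf_0=\Zhat$. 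You already had all the ingredients for this; the extra work in your Part~3 is unnecessary.

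If instead one insists on the spectral-norm reading, your proposed dual-certificate argument has a genuine gap: the dual of spectral-norm minimization is \emph{nuclear}-norm constrained, and $\|\Ub\Vb^T\|_*=k-1$, so $\hat\Bb=\Ub\Vb^T$ is not feasible there and the complementary-slackness transfer from Theorem~\ref{thm:SVM} does not go through as stated. (Indeed, with spectral norm the minimizer need not even be unique, so the claim $\hat\Xibf_0=\Zhat$ would require additional care.)
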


According to the lemma above, the one-layer linear model always finds the \SEL~matrix: irrespective of imbalances and for any value of $\la$ (including vanishing ones). On the other hand, by Proposition \ref{propo:imbalance_reg}, we know that the end-to-end models minimizing ridge-regularized CE for a two-layer linear network correspond to the \SEL~matrix only if data are balanced, or there is two classes, or regularization is vanishing. Specifically, the solution is different when $k>2$, $R=1$ and $\la>0$. 

It is also worth noting the following connection between the one- and two-layer models. Thanks to the convex relaxation of Theorem \ref{thm:regularized} the end-to-end model $\W^T\Hb$ found by the two-layer model solves CE minimization with nuclear norm minimization (cf. \eqref{eq:nuc_norm_reg}) compared to the ridge regularization in \eqref{eq:ridge_reg_CE}. Correspondingly, for vanishing regularization, the two-layer model corresponds to the ``nuclear-norm SVM'' in \eqref{eq:nuc_norm_SVM} compared to the vanilla SVM in \eqref{eq:cvx_svm}. Notably, Theorem \ref{thm:SVM} proves that the solution to the former is always $\Zhat$, i.e., the same as that of the latter. 

\begin{proof}[Proof of Lemma \ref{lem:ridge_reg_CE}]
The minimization in \eqref{eq:ridge_reg_CE} is convex. Hence, it suffices to prove there exists $\alpha_\la$ such that that $\nabla_{\Z}\Lc(\alpha_\la\Zhat)+\la\alpha_\la\Zhat=0$. Thanks to Lemma \ref{lem:Zhat}\ref{state:SEL_prop_CE}, it suffices that $\exists\alpha_\la$ such that
$
\la\alpha_\la=\frac{k}{e^{\alpha_\la}+k-1}.
$
It is easy to check that this equation always has a positive solution $\alpha_\la>0$ since the LHS is increasing in $(0,\infty)$, the RHS is decreasing in the same interval, and they both take values in $(1,\infty)$. The fact that following the regularization path $\la\rightarrow0$ leads (in direction) to the SVM solution follows from \cite{rosset2003margin}. These two combined also show that the solution to \eqref{eq:cvx_svm} is $\Zhat.$
\end{proof}


\section{Additional results on UFM}\label{sec:UFM}
\begin{figure*}[t]
	\centering
	\hspace{-1.5cm}
	\begin{subfigure}{0.22\textwidth}
		\centering
		\begin{tikzpicture}
			\node at (-2,-1.4) 
			{\includegraphics[scale=0.26]{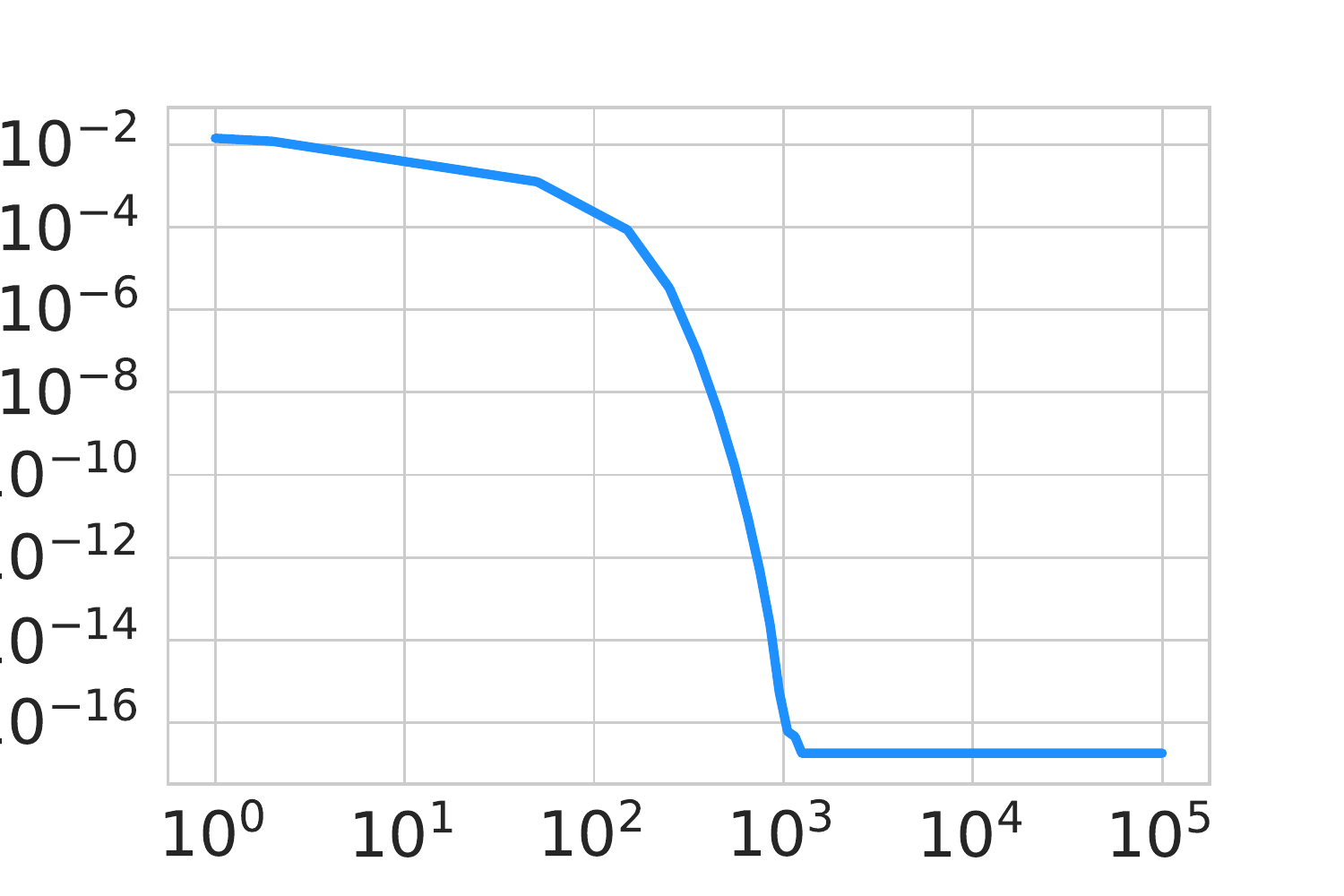}};
			\node at (-2,-3) [scale=0.7]{Epochs};
			\node at (-4.2,-1.4)  [scale=0.7, rotate=90]{NC error};
		\end{tikzpicture}\caption{\small{NC}}\label{fig:UFM_ridge_NC}
	\end{subfigure}\hspace{20pt}\begin{subfigure}{0.22\textwidth}
		\centering
		\begin{tikzpicture}
			\node at (-1.4,-1.4) 
			{\includegraphics[scale=0.26]{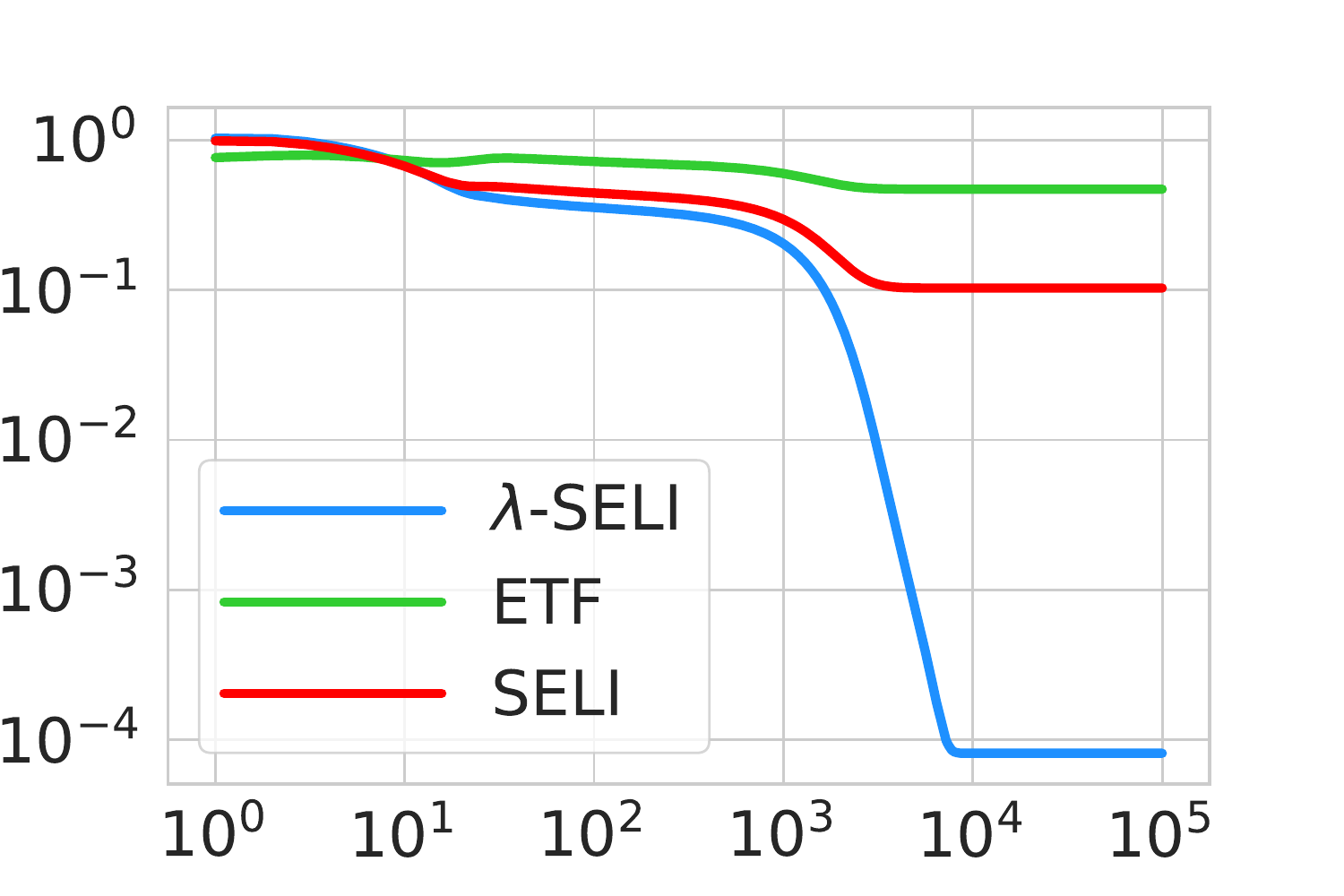}};
			\node at (-1.4,-3) [scale=0.7]{Epochs};
			\node at (-3.7,-1.4)  [scale=0.7, rotate=90]{Distance to SELI/ETF};
		\end{tikzpicture}\captionsetup{width=0.9\linewidth}\caption{\small{Classifiers}}\label{fig:UFM_ridge_W_hat}
	\end{subfigure}\hspace{1pt}\begin{subfigure}{0.22\textwidth}
		\centering
		\begin{tikzpicture}
			\node at (0,-1.4) {\includegraphics[scale=0.26]{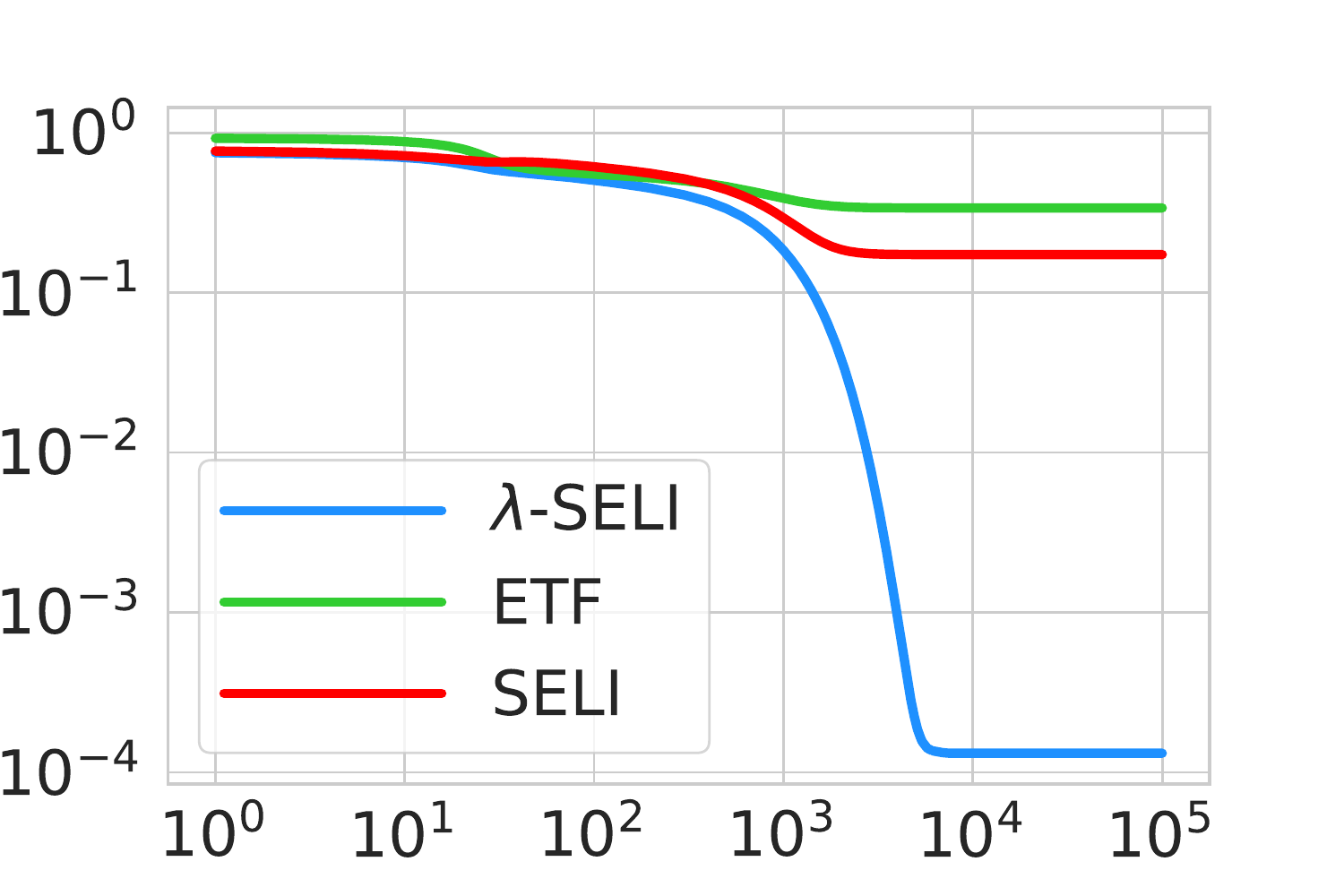}};
			\node at (0,-3) [scale=0.7]{Epochs};
			\node at (-2.7,-1.4) [scale=0.7, rotate=90]{};
		\end{tikzpicture}\captionsetup{width=0.9\linewidth}\caption{\small{Embeddings}}\label{fig:UFM_ridge_H_hat}
	\end{subfigure}\hspace{10pt}\begin{subfigure}{0.22\textwidth}
		\centering
		\begin{tikzpicture}
			\node at (0,-1.4) {\includegraphics[scale=0.26]{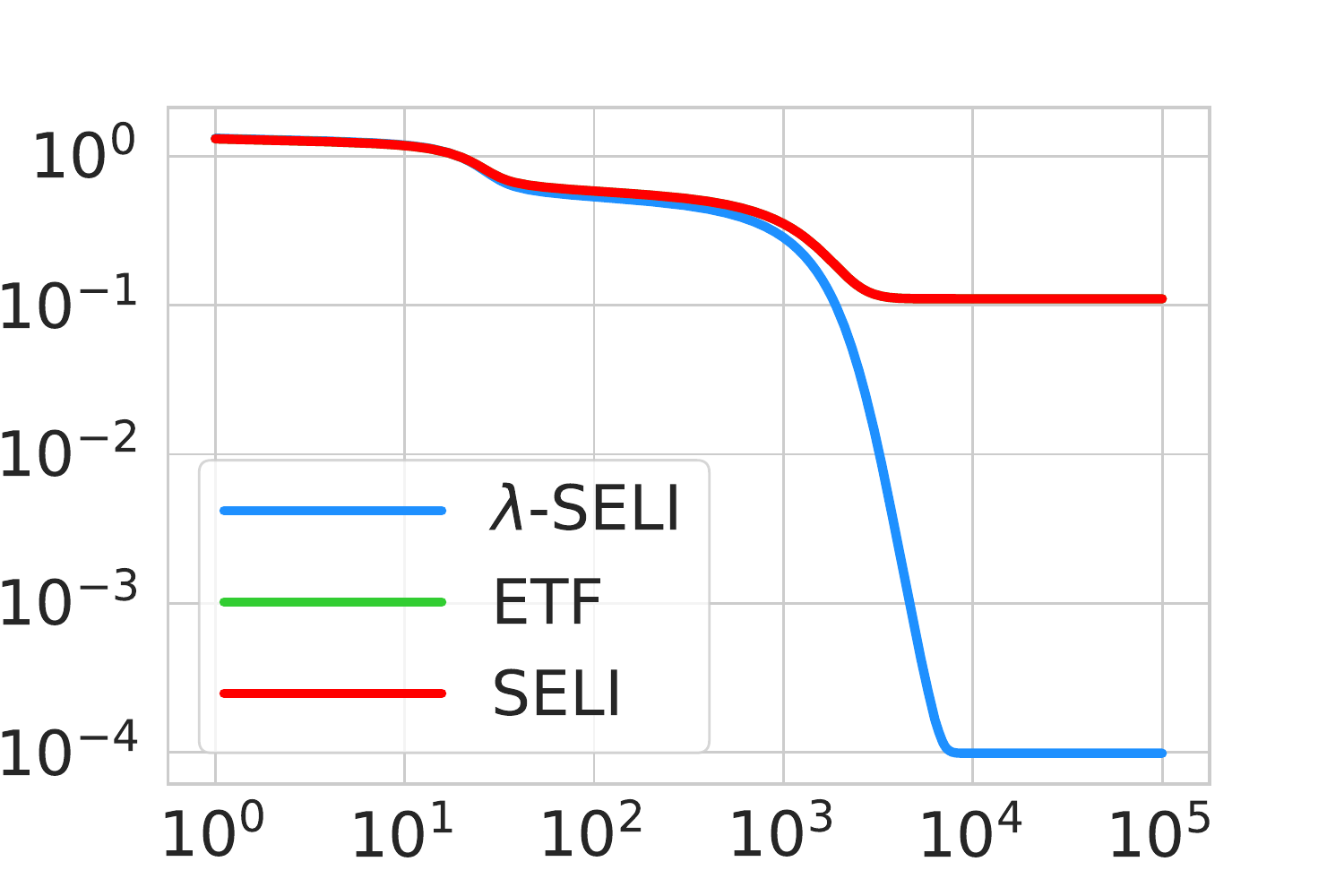}};
			\node at (0,-3) [scale=0.7]{Epochs};
			\node at (-2.7,-1.4) [scale=0.7, rotate=90]{};
		\end{tikzpicture}\captionsetup{width=0.9\linewidth}\caption{\small{Logits}}\label{fig:UFM_ridge_Z_hat}
	\end{subfigure}\caption{ SGD solution with ridge-regularized CE converging to $(\Zhat_\la,\What_\la,\Hhat_\la)$ for $R=10$ and $\lambda/n=10^{-3}$. The iterates clearly converge towards $\lambda$-SELI (see Eqn. \eqref{eq:geometry_reg} in Theorem \ref{thm:regularized}) and are far away from ETF. The distance to SELI also stays large as training progresses. However, note that it tracks the curves measuring distance to $\la$-SELI until $\sim5e3$ epochs.
	}
	\label{fig:UFM_ridge}
\end{figure*}

\subsection{Experiments with weight-decay}
In this section, we show experiments on the UFM supporting the claim of Theorem \ref{thm:regularized}: the global solution $(\W_\lambda,\Hb_\lambda)$ of ridge-regularized CE in \eqref{eq:CE}, call it ``$\lambda$-SELI'' for convenience (with some abuse of the term SELI, as the logit matrix does not represent the simplex encoding anymore for $\lambda > 0$), satisfies \eqref{eq:geometry_reg}. Moreover any first-order method that avoids strict saddles converges to that global optimum \cite{zhu2021geometric}.

\Fig~\ref{fig:UFM_ridge} investigates the above claims in a setting of $(R=10,\rho=\nicefrac{1}{2})$-STEP imbalance, $k=4$ classes and $\nmin=1$, where we ran SGD on the ridge-regularized CE with $\lambda/n=10^{-3}$. We set learning rate to $1$ and implement ridge-regularization as weight-decay on the parameters. 
We observe the following. \Fig~\ref{fig:UFM_ridge_W_hat}, \ref{fig:UFM_ridge_H_hat}, and \ref{fig:UFM_ridge_Z_hat} verify convergence to $\lambda$-SELI, while \Fig~\ref{fig:UFM_ridge_NC} verifies that the \emph{\ref{NC}} property also holds. Also, the solution is clearly away from ETF geometry (see green lines in \Fig~\ref{fig:UFM_ridge}). This is a noteworthy difference of the behavior of learning with imbalanced data, compared to that with balanced data. With balanced data, the geometry with ridge-regularization $\lambda>0$ is always ETF. On the contrary, the geometry for learning from imbalanced data is sensitive to $\lambda$, as discussed in \Sec~\ref{sec:reg_matters}.  While the distance from $\lambda$-SELI is the least, the distance to SELI is smaller compared to ETF. Thus, while SELI is not the ``true'' characterization when training with finite ridge-regularization, it is nevertheless a significantly better approximation than the ETF. In addition, compared to the $\la$-SELI solution, the SELI one admits explicit closed-form expressions (see \Sec~\ref{sec:SELI_properties}), rather than requiring numerical solution to a nuclear-norm CE minimization. Finally, note that, up to a certain point of time during the training, the distances from SELI and $\lambda$-SELI are comparable. Interestingly, the divergence between the two distances becomes more prominent at epoch count that also corresponds to a sharp fall of the NC error-curve in \Fig~\ref{fig:UFM_ridge_NC}.
 
 \begin{remark}[A note on $\lambda$ scaling] In Equation \eqref{eq:CE}, the CE loss is \emph{not} normalized by the number of examples. On the other hand, in all our experiments, we normalize the CE loss by $1/n$. 
 This is why, when denoting the regularization used in experiments, we write $\nicefrac{\lambda}{n}$ in the axis-labels of all figures.
 \end{remark}
\subsection{Logit regularization and Ridge-decay}\label{sec:logit_reg}
Our Proposition \ref{propo:reg_path} proves that the optimal logits of 
CE minimization for the UFM with vanishing ridge-regularization is the SEL matrix. However, we also find in \Fig~\ref{fig:UFM_convergence} that the convergence of SGD to the SEL matrix is rather slow. Motivated by this slow convergence we discuss here \emph{logit-regularized} CE minimization, i.e., the solution to the following (non-convex) program:
\begin{align}\label{eq:logreg_CE}
\min_{\W,\Hb}~\Lc(\W^T\Hb) + \frac{\la}{2} \|\W\|_F^2+\frac{\la}{2}\|\Hb\|_F^2+\la_L\|\W^T\Hb\|_F^2. 
\end{align}
Note that when $\la=0$, the global solutions of the above minimization give a logit matrix that aligns with the SEL matrix $\Zhat$. This is easy to see by noting the resemblance to the convex program in \eqref{eq:ridge_reg_CE} and invoking Lemma \ref{lem:ridge_reg_CE}.

Empirically, we observe that the above logit regularization helps achieve SGD convergence to solutions with SEL matrix as logits converge much faster, even without additional ridge-regularization. Specifically, for  $(R=10,\rho=\nicefrac{1}{2})$-STEP imbalance, $k=4$ classes and $\nmin=1$, we run SGD on the logit-regularized CE with $\lambda_L/n=10^{-3}$ and with zero ridge-regularization ($\lambda=0$). We also set the learning rate to $1$. \Fig~\ref{fig:UFM_logreg_Z_only} depicts convergence of the logit-matrix $\Z$ in the direction of the SEL matrix $\Zhat$. However, we find that this does \emph{not} ensure of convergence for the individual geometries of $\W$ and $\Hb$ towards SELI, although their inner product $\W^T\Hb$ aligns well to the SEL matrix. 
\begin{figure*}[t]
	\centering
		\centering
		\begin{tikzpicture}
			\node at (-3,-2) {\includegraphics[scale=0.3]{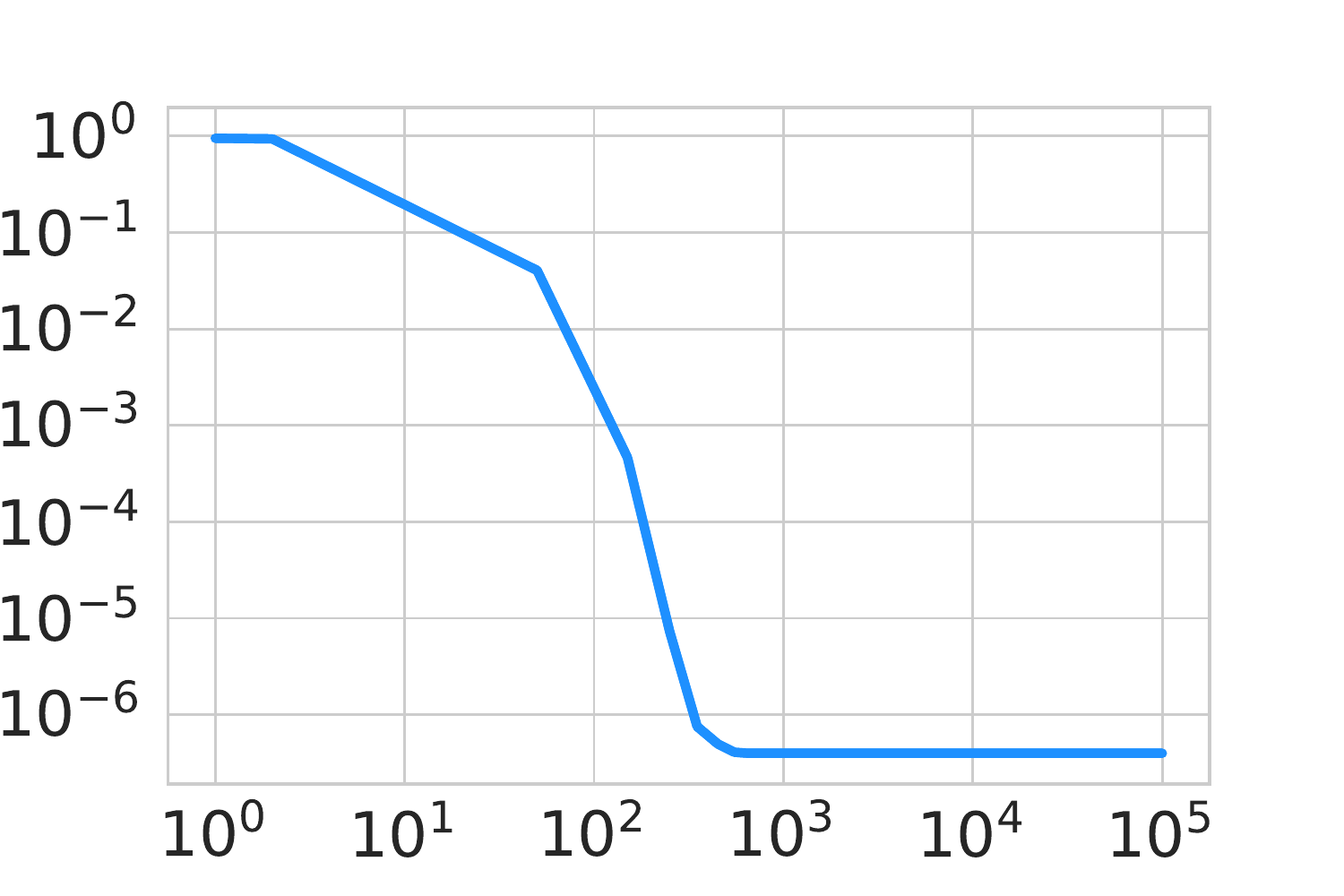}};
			\vspace{0.5cm}\node at (-2.7,-3.6) [scale=0.9]{epochs};
			\node at (-5.5,-1.9)  [scale=0.8, rotate=90]{Distance to SEL};
		\end{tikzpicture}\caption{\small{GD iterates on logit-regularized CE without ridge-regularization favor yielding logit matrix in the direction of SEL matrix $\Zhat$. }
	}\label{fig:UFM_logreg_Z_only}
\end{figure*}

\begin{figure*}[t]
	\centering
	\hspace{-1.5cm}
	\begin{subfigure}{0.22\textwidth}
		\centering
		\vspace{0.5cm}
		\begin{tikzpicture}
			\node at (-2,-1.8) 
			{\includegraphics[scale=0.25]{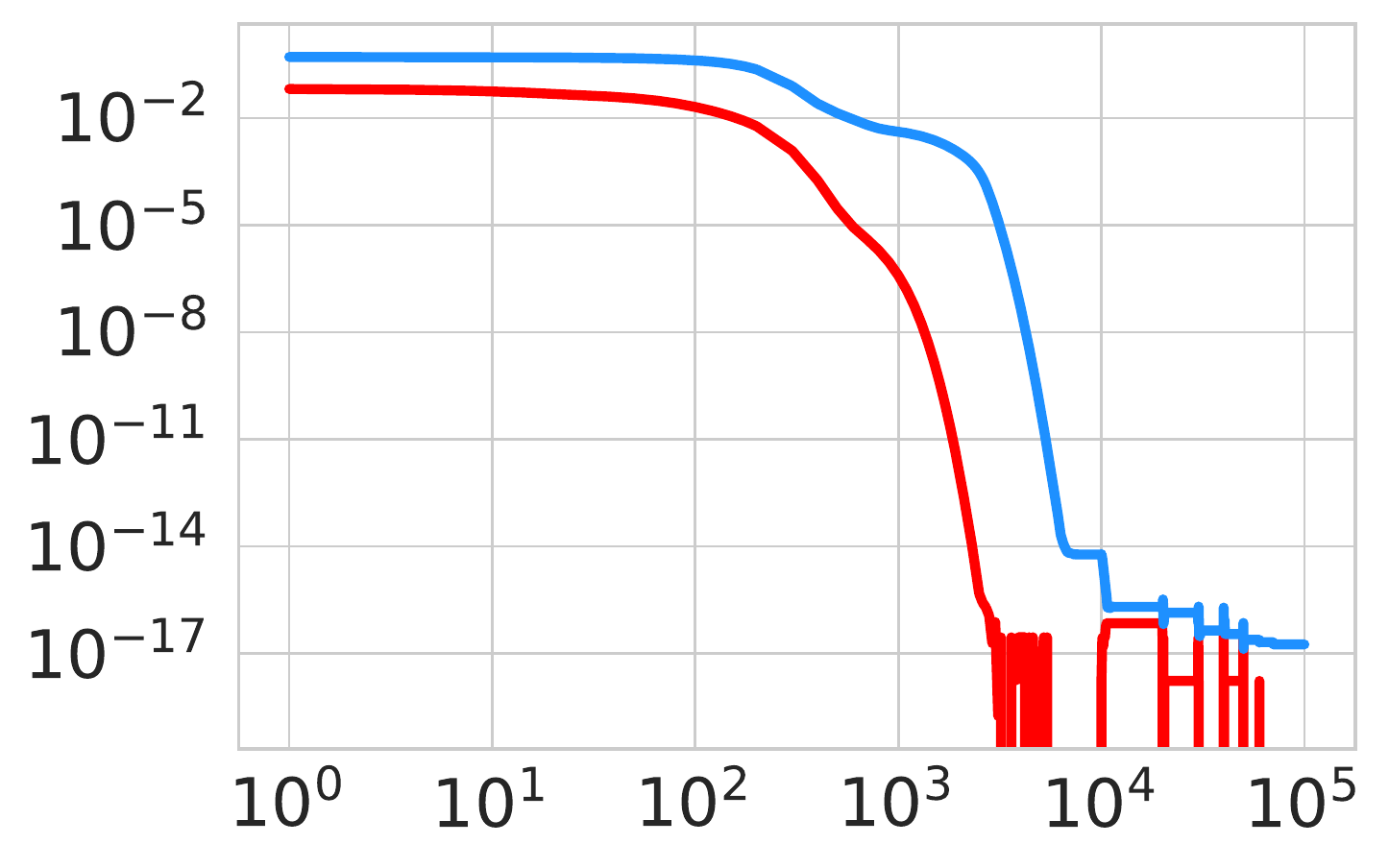}};
			\node at (-2,-3.4) [scale=0.7]{Epochs};
			\node at (-4.2,-1.8)  [scale=0.7, rotate=90]{NC error};
		\end{tikzpicture}\caption{\small{NC}}\label{fig:UFM_logreg_NC}
	\end{subfigure}\hspace{20pt}\begin{subfigure}{0.22\textwidth}
		\centering
		\begin{tikzpicture}
			\node at (-1.4,-1.4) 
			{\includegraphics[scale=0.26]{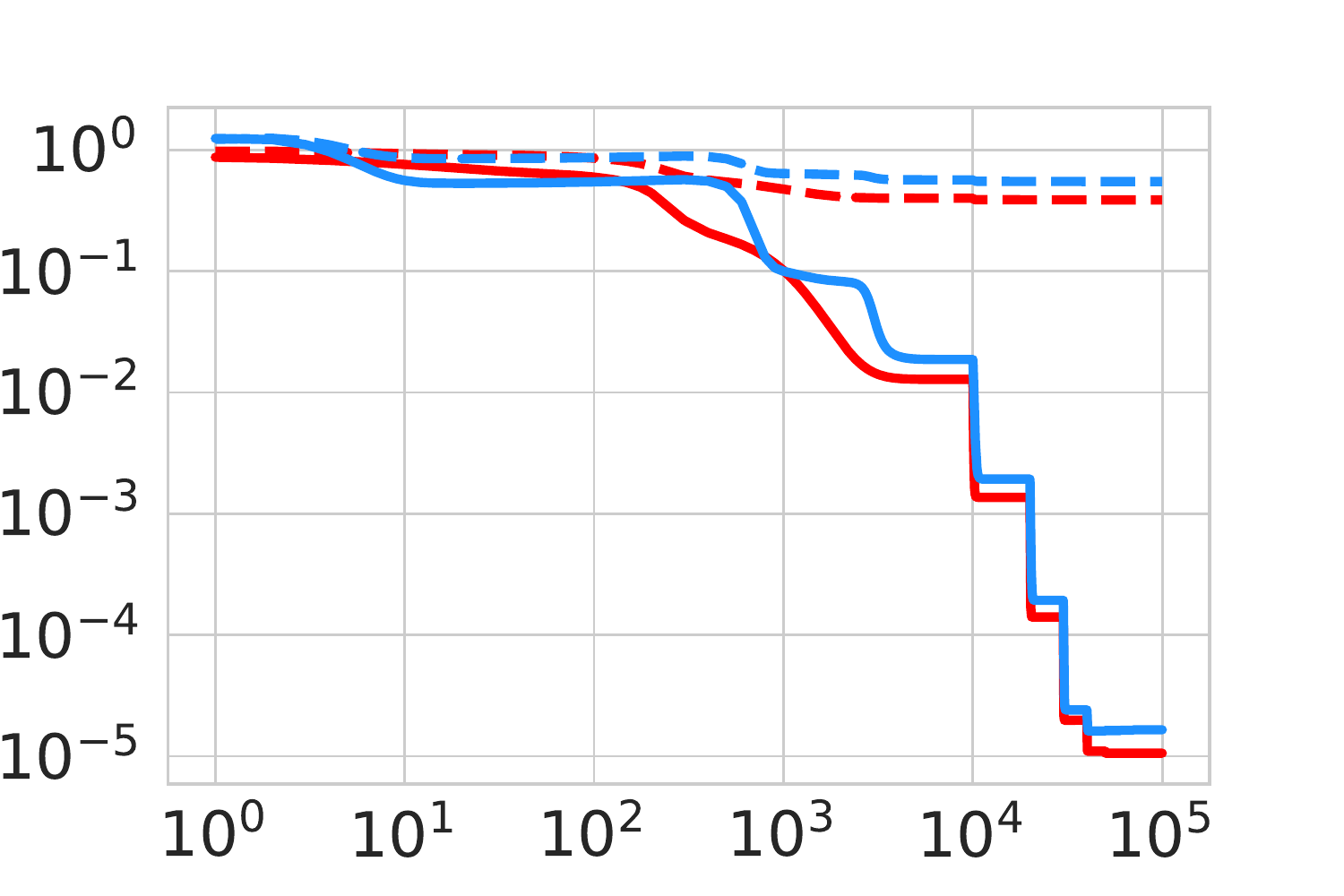}};
			\node at (-1.4,-3) [scale=0.7]{Epochs};
			\node at (-3.7,-1.4)  [scale=0.7, rotate=90]{Distance to SELI/ETF};
		\end{tikzpicture}\captionsetup{width=0.9\linewidth}\caption{\small{Classifiers}}\label{fig:UFM_logreg_W_hat}
	\end{subfigure}\hspace{1pt}\begin{subfigure}{0.22\textwidth}
		\centering
		\begin{tikzpicture}
			\node at (0,-1.4) {\includegraphics[scale=0.26]{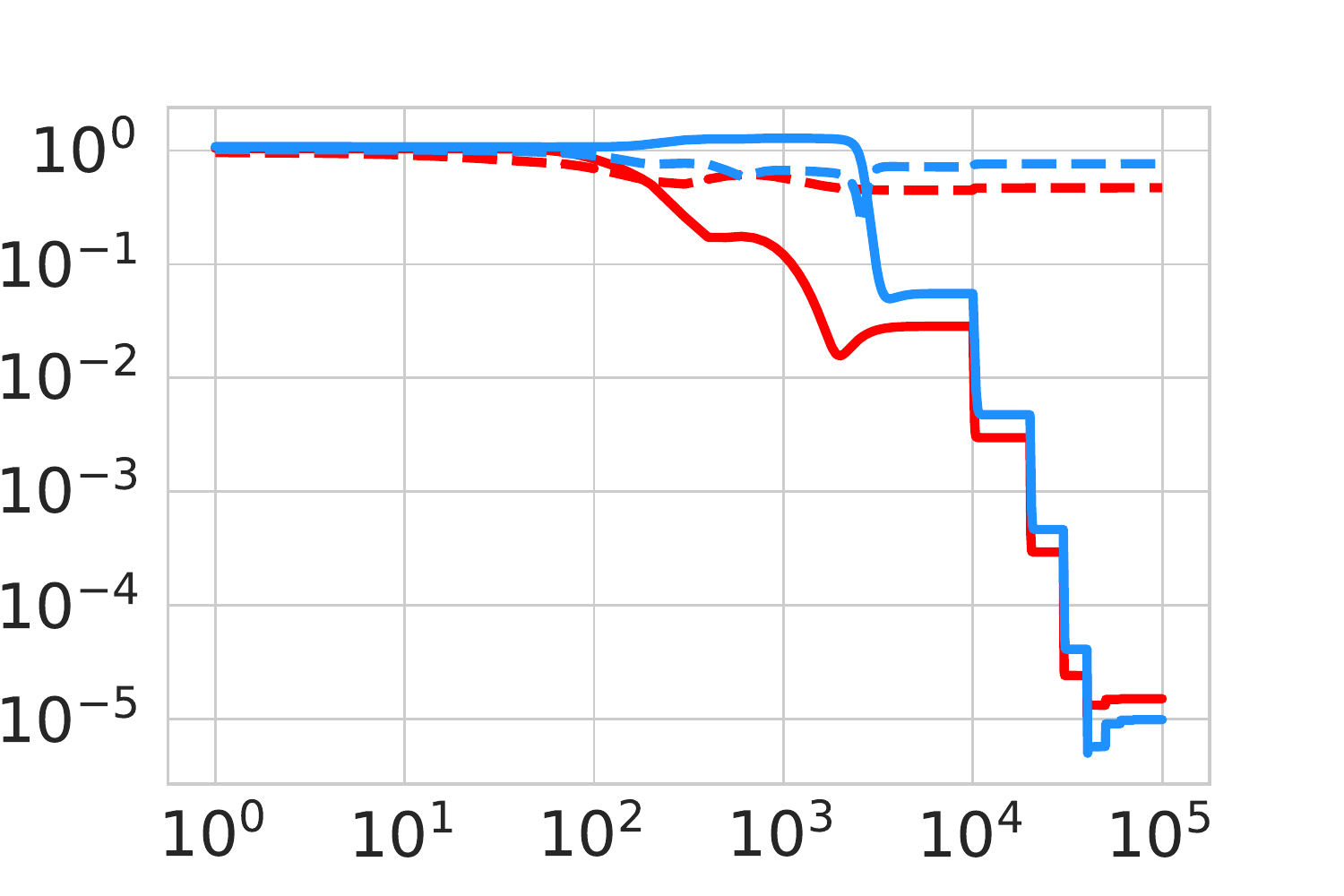}};
			\node at (0,-3) [scale=0.7]{Epochs};
			\node at (-2.7,-1.4) [scale=0.7, rotate=90]{};
		\end{tikzpicture}\captionsetup{width=0.9\linewidth}\caption{\small{Embeddings}}\label{fig:UFM_logreg_H_hat}
	\end{subfigure}\hspace{10pt}\begin{subfigure}{0.22\textwidth}
		\centering
		\begin{tikzpicture}
			\node at (0,-1.4) {\includegraphics[scale=0.26]{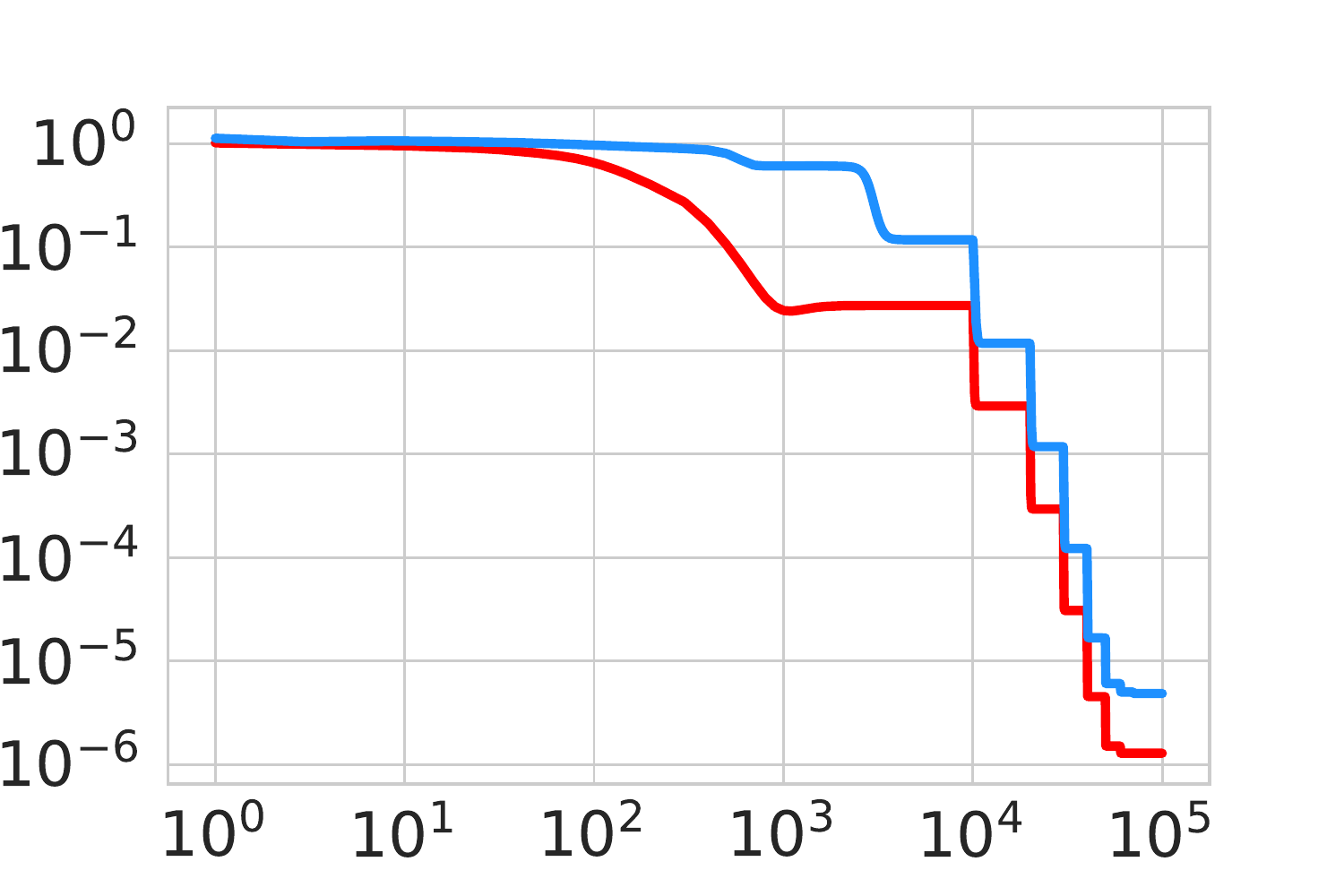}};
			\node at (0,-3) [scale=0.7]{Epochs};
			\node at (-2.7,-1.4) [scale=0.7, rotate=90]{};
		\end{tikzpicture}\captionsetup{width=0.9\linewidth}\caption{\small{Logits}}\label{fig:UFM_logreg_Z_hat}
	\end{subfigure}\caption{ Geometry of SGD solutions minimizing logit-regularized CE with ridge-decay on  UFM; SELI(Solid)/ETF(Dashed), $R=10$(\red{Red})/$R=100$(\blue{Blue}); See Sec. \ref{sec:logit_reg} for details.
	}
	\label{fig:UFM_logreg}
\end{figure*}

\begin{figure*}[t]
	\begin{subfigure}{0.23\textwidth}
		\centering
		\begin{tikzpicture}
			\node at (0,0) {\includegraphics[scale=0.23]{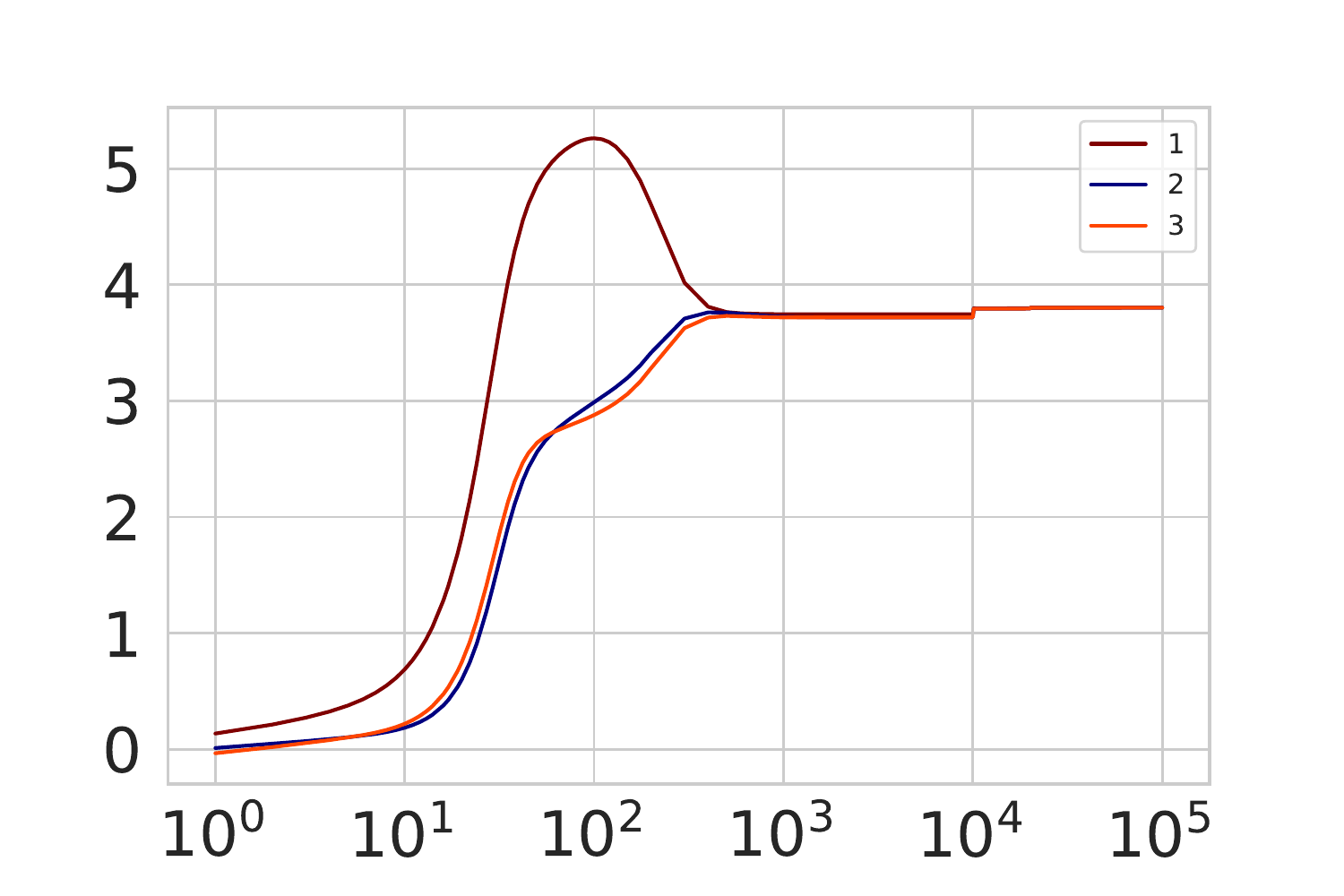}};
			\node at (0,-1.6) [scale=0.9]{epochs};
			\node at (-1.7,-0)  [scale=0.8, rotate=90]{$\text{margin}_y(c)$};
			\node at (-2.2,-0)  [scale=0.8, rotate=90]{$R=10$};
		\end{tikzpicture}\captionsetup{width=0.9\linewidth}\caption{\small{class $y=0$}}\label{fig:UFM_logreg_margins0}
	\end{subfigure}\hspace{0.5cm}\begin{subfigure}{0.23\textwidth}
		\centering
		\begin{tikzpicture}
			\node at (0,0.0) {\includegraphics[scale=0.23]{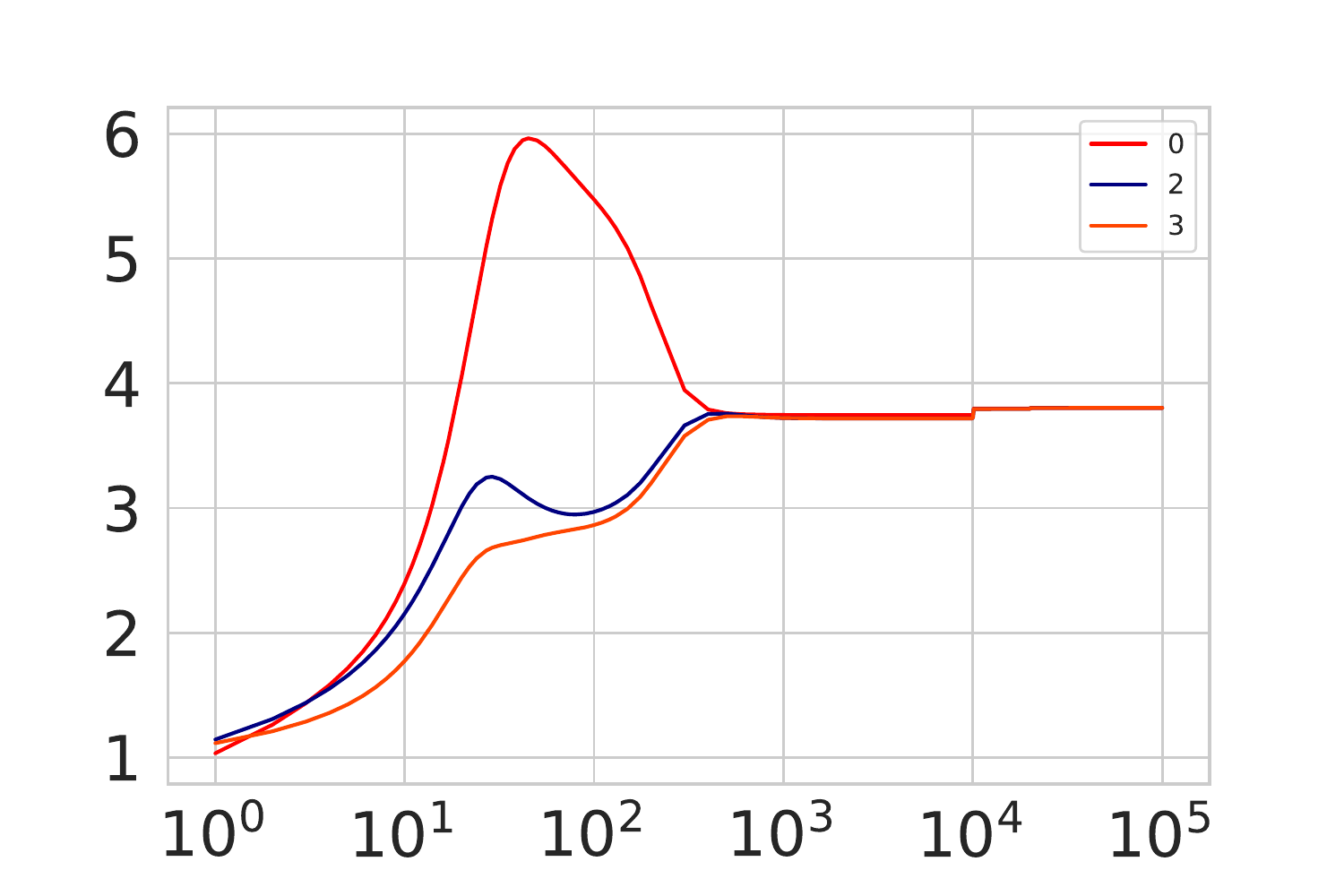}};
			\node at (0,-1.6) [scale=0.9]{epochs};
		\end{tikzpicture}\captionsetup{width=0.9\linewidth}\caption{\small{class $y=1$}}\label{fig:UFM_logreg_margins1}
	\end{subfigure}\hspace{0pt}\begin{subfigure}{0.23\textwidth}
		\centering
		\begin{tikzpicture}
			\node at (0,0) {\includegraphics[scale=0.23]{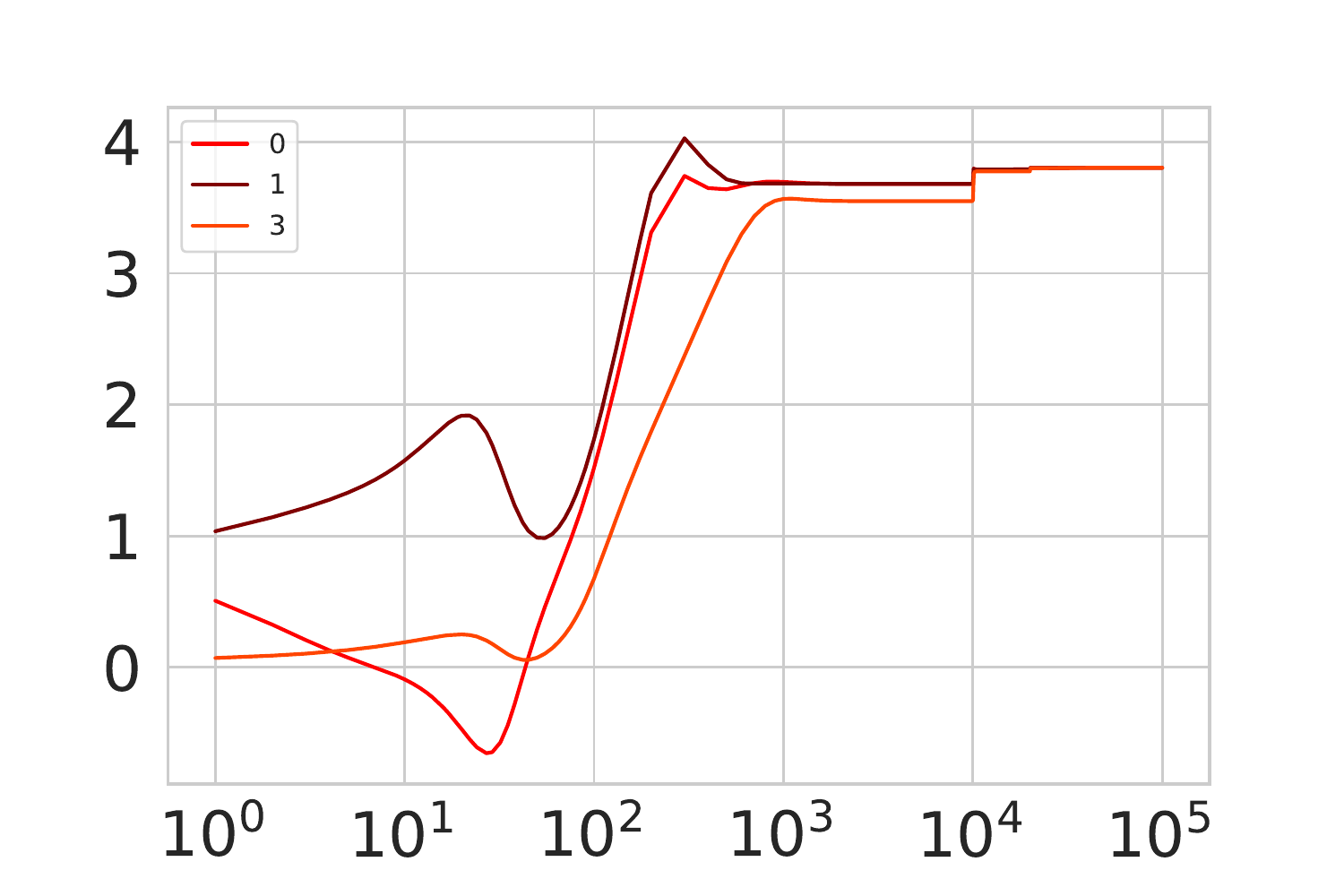}};
			\node at (0,-1.6) [scale=0.9]{epochs};
		\end{tikzpicture}\captionsetup{width=0.9\linewidth}\caption{\small{class $y=2$}}\label{fig:UFM_logreg_margins2}
	\end{subfigure}\hspace{0.5cm}\begin{subfigure}{0.23\textwidth}
		\centering
		\begin{tikzpicture}
			\node at (0,0) {\includegraphics[scale=0.23]{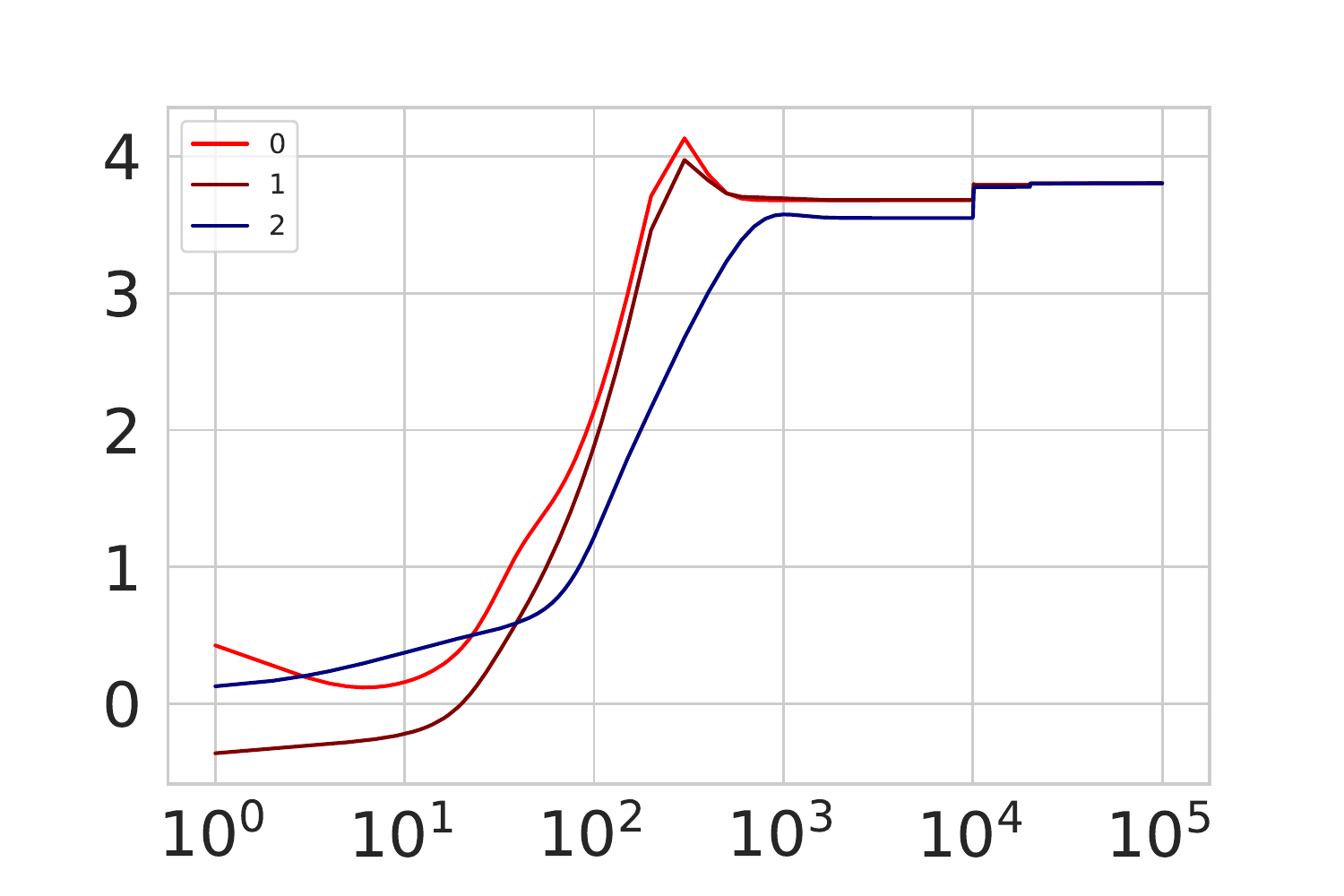}};
			\node at (0,-1.6) [scale=0.9]{epochs};
		\end{tikzpicture}\captionsetup{width=0.9\linewidth}\caption{\small{class $y=3$}}\label{fig:UFM_logreg_margins3}
	\end{subfigure}
	\begin{subfigure}{0.23\textwidth}
		\centering
		\begin{tikzpicture}
			\node at (0,0) {\includegraphics[scale=0.23]{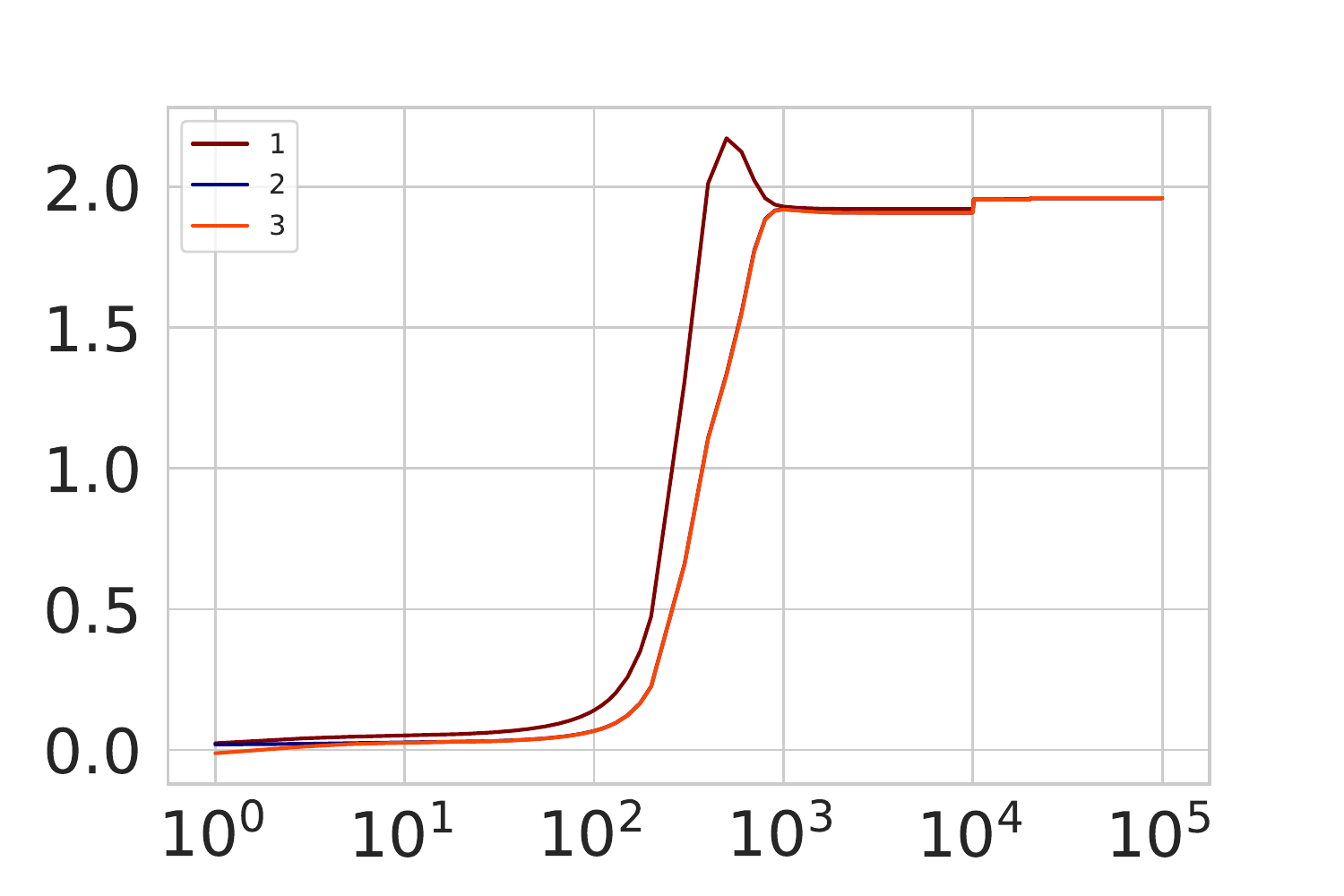}};
			\node at (0,-1.6) [scale=0.9]{epochs};
			\node at (-1.9,-0)  [scale=0.8, rotate=90]{$\text{margin}_y(c)$};
			\node at (-2.4,-0)  [scale=0.8, rotate=90]{$R=100$};
		\end{tikzpicture}\captionsetup{width=0.9\linewidth}\caption{\small{class $y=0$}}\label{fig:UFM_logreg_margins0_R_100}
	\end{subfigure}\hspace{0.3cm}\begin{subfigure}{0.23\textwidth}
		\centering
		\begin{tikzpicture}
			\node at (0,0.0) {\includegraphics[scale=0.23]{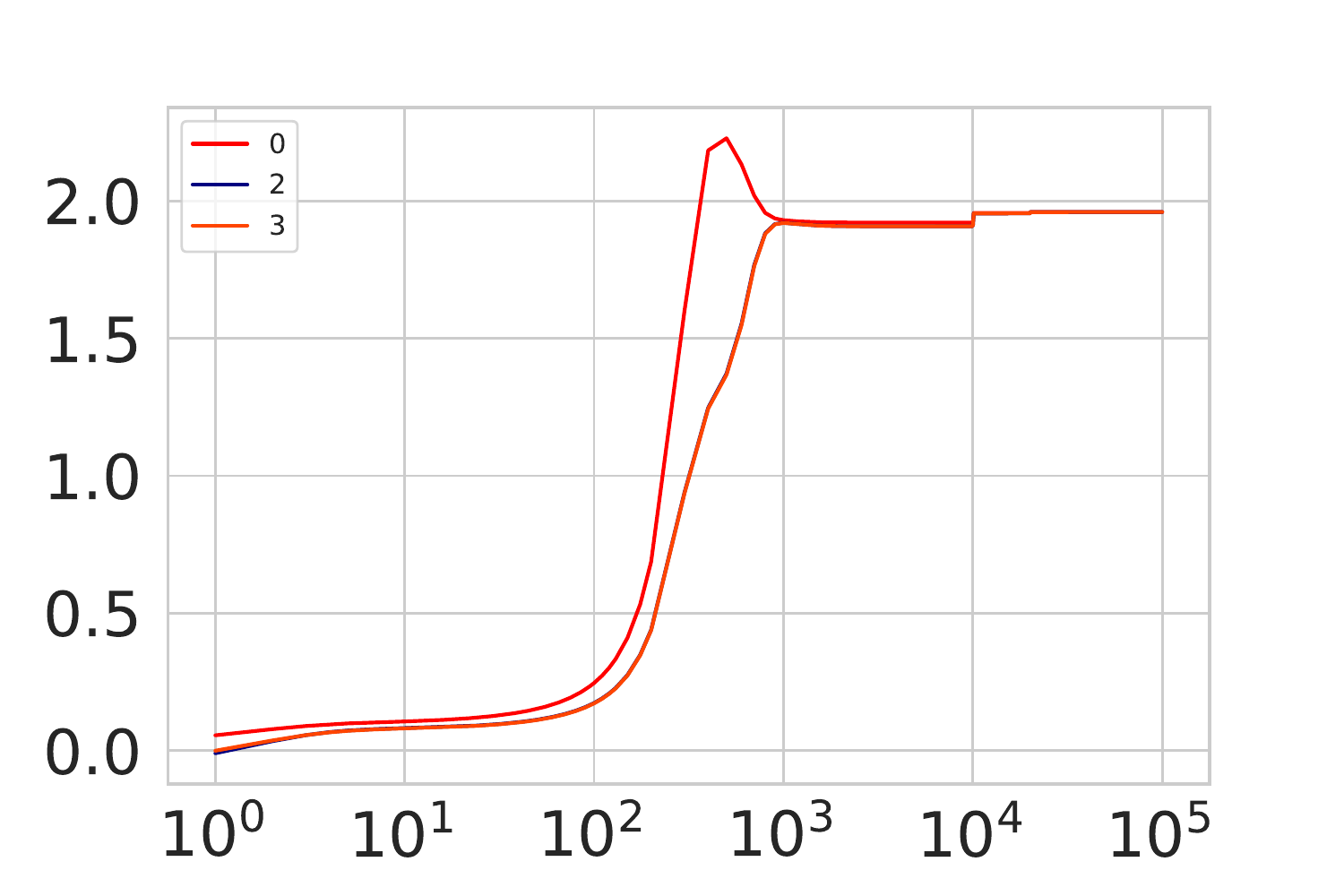}};
			\node at (0,-1.6) [scale=0.9]{epochs};
		\end{tikzpicture}\captionsetup{width=0.9\linewidth}\caption{\small{class $y=1$}}\label{fig:UFM_logreg_margins1_R_100}
	\end{subfigure}\hspace{0.2cm}\begin{subfigure}{0.23\textwidth}
		\centering
		\begin{tikzpicture}
			\node at (0,0) {\includegraphics[scale=0.23]{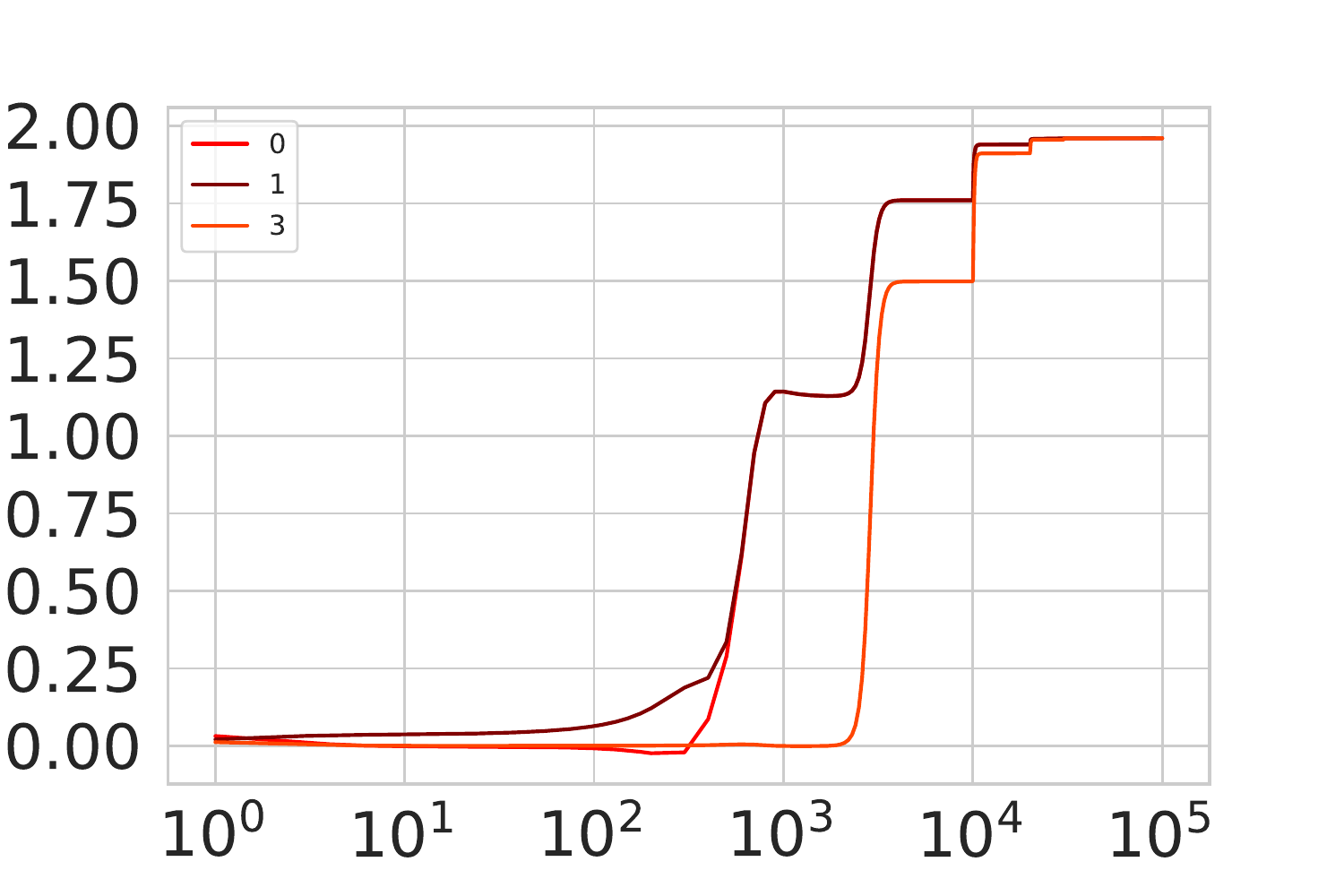}};
			\node at (0,-1.6) [scale=0.9]{epochs};
		\end{tikzpicture}\captionsetup{width=0.9\linewidth}\caption{\small{class $y=2$}}\label{fig:UFM_logreg_margins2_R_100}
	\end{subfigure}\hspace{0.5cm}\begin{subfigure}{0.23\textwidth}
		\centering
		\begin{tikzpicture}
			\node at (0,0) {\includegraphics[scale=0.23]{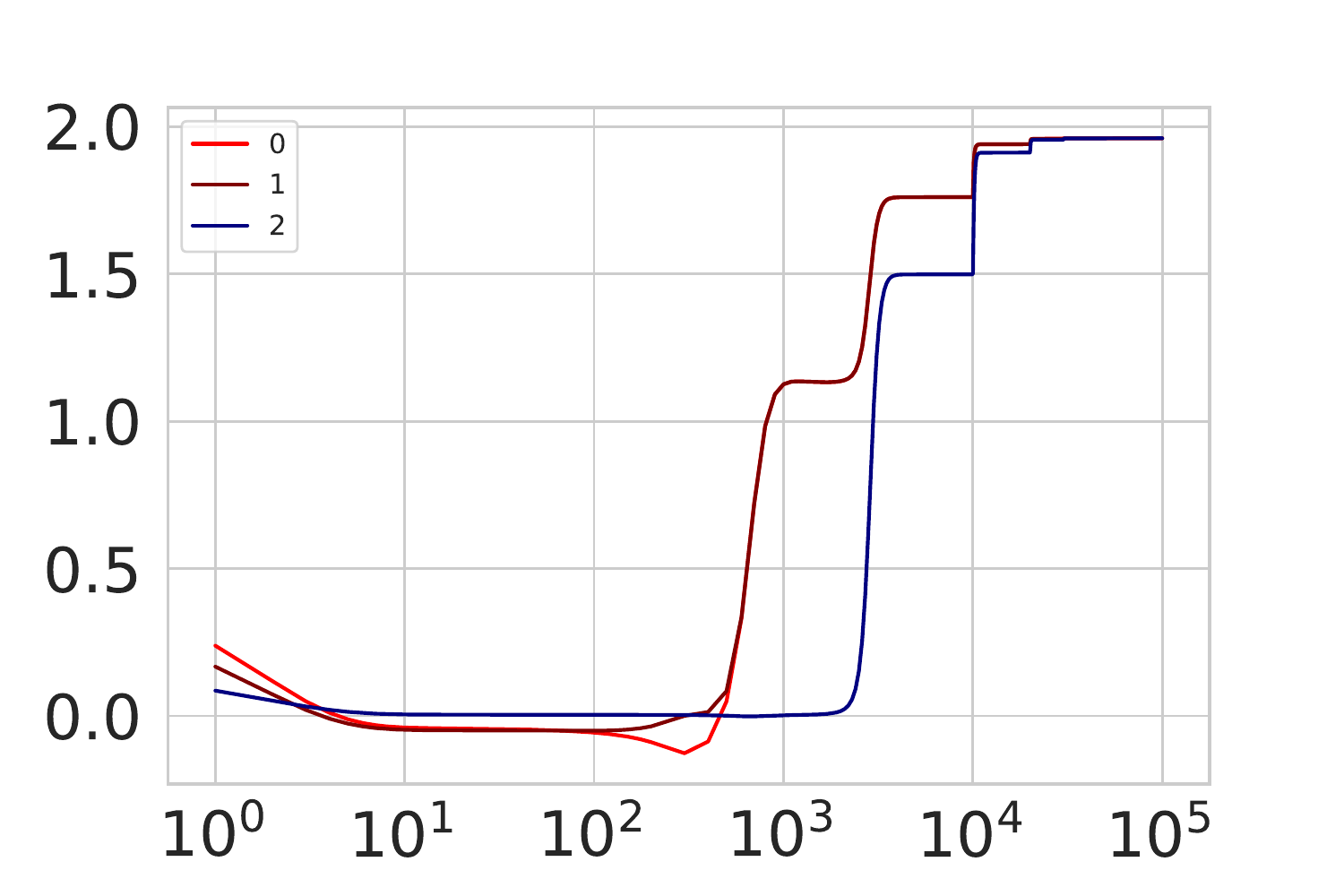}};
			\node at (0,-1.6) [scale=0.9]{epochs};
		\end{tikzpicture}\captionsetup{width=0.9\linewidth}\caption{\small{class $y=3$}}\label{fig:UFM_logreg_margins3_R_100}
	\end{subfigure}\caption{\small{Average Margins for 4 classes corresponding to the experiments of $R=10$ (top row) and $R=100$ (bottom row) from \Fig~\ref{fig:UFM_logreg}. We run GD on logit-regularized ($\lambda_L/n=10^{-3}$) CE with ridge-decay $\lambda_\text{initial}/n=10^{-2}$ (see Sec. \ref{sec:logit_reg} for details).}
	}\label{fig:UFM_logreg_margins}
\end{figure*}

On the other hand, we also find that logit-regularized ERM on the UFM yields the SELI geometry simultaneously for logits, classifiers and embeddings when we follow a specific decaying schedule for the ridge-regularization parameter. 
Specifically, in the experiments shown in \Fig~\ref{fig:UFM_logreg} and \ref{fig:UFM_logreg_margins} below, we start with a large initial value $\la/n = 10^{-2}$ and progressively decay $\la$ by a factor of $10$ after every few epochs. We also set a finite strength of logit-regularization $\la_L/n=10^{-3}$, although the convergence direction is not sensitive to that choice. 
We term this scheduling ``ridge-decay''. While the exact dynamics followed by the SGD with such a scheme require further analysis, ``ridge-decay'' can be thought of as emulating the regularization path of ridge-regularization with $\la \rightarrow 0$.

\Fig~\ref{fig:UFM_logreg} show convergence of the GD solution with the above described ridge-decay to the SELI geometry. Here, we choose $k=4$ classes, with $(R=10,100,\rho=\nicefrac{1}{2})$-STEP imbalance and  $n_\text{min} = 1$. The learning rate is again fixed to $1$. In this experiment, we use GD instead of SGD since the number of examples is small. This is also useful as it shows that GD is able to drive the solution towards the SELI geometry without stochastic updates being necessary. 

In summary, we make the following observations from \Fig~\ref{fig:UFM_logreg}. First, GD iterates favor the SELI, instead of the ETF geometry, suggesting an implicit bias towards global minimizers of the UF-SVM. Second, with logit-regularization and ridge-decay, convergence towards SELI is achieved at a faster rate than in the case of unregularized CE. Third, while \Fig~\ref{fig:UFM_logreg_Z_only} showed that the logit matrix converges fast in direction of the SEL matrix $\Zhat$ with only logit-regularization, \Fig~\ref{fig:UFM_logreg_W_hat} and \ref{fig:UFM_logreg_H_hat} show that ridge-decay promotes convergence of classifiers and embeddings to their respective SELI geometries as well.
Finally, for completeness, we also present the NC convergence in \Fig~\ref{fig:UFM_logreg_NC}. The metric used to measure NC is as described in \Sec~\ref{sec:NC_for_Imbalance}.

The corresponding margins for the $4$ classes are shown in \Fig~\ref{fig:UFM_logreg_margins}. For examples belonging to class $y$, we define average margin with respect to another class $c\neq y$ as follows:
\begin{align*}
	\text{margin}_y(c) := (\w_{y} - \w_{c})^T\mub_y.
\end{align*}
Note that this is an average over examples from the class $y$ since by NC property $\h_i\approx\mub_y, \forall i\,:\,y_i=c$. 

We make the following remarks regarding \Fig~\ref{fig:UFM_logreg_margins}. First, as training progresses, the (average) margins are positive, thus zero training error is achieved for all classes.
Second, the average margins for a class $y$ with respect to classes ${c:c\neq y}$ converge to a common value, even though their initial values  differ. This can be seen from the convergence of the four different colored curves within a plot (e.g. \Fig~\ref{fig:UFM_logreg_margins0}). Third, all margins for all pairs of classes converge to the same quantity, irrespective of being majority or minority classes. Note, for instance, from \Fig~\ref{fig:UFM_logreg_margins0},\ref{fig:UFM_logreg_margins1},\ref{fig:UFM_logreg_margins2}, and \ref{fig:UFM_logreg_margins3} that the final value of all graphs is the same. Finally, the value of margins stagnates to a level that is governed by the strength of the logit-regularization $\lambda_L$. 
\section{Additional results on real data}\label{sec:real_exp_SM}
\new{This section complements Sec. \ref{sec:deep-net-main} with  additional experiments.}


\subsection{Missing details on experiments of Section \ref{sec:deep-net-main}}
\subsubsection{Implementation details}\label{subsec:app_implementation_details}


Our experiments on deep models build on the code provided by \cite{NC}\footnote{\url{https://colab.research.google.com/github/neuralcollapse/neuralcollapse/blob/main/neuralcollapse.ipynb}}. We train ResNet-18 \cite{he2015deep} and VGG-13 \cite{vgg} models, on three 10-class datasets, CIFAR10, MNIST and Fashion-MNIST. {Following \cite{NC}, we use batch normalization in place of the dropout layers in VGG-13.} For both models, we disable the biases of all the fully-connected layers, similar to the experiments on UFM. We adopt the same training strategy as \cite{NC}, namely SGD on CE loss, with momentum ($0.9$), small weight decay ($5\times10^{-4}$), and learning rate $0.1$ decayed at two stages (epochs $120$ and $240$) by a factor of 10. We train the network on a $(R,1/2)$-STEP imbalance setting. To create imbalanced data, we use the data sampler provided by \citet{TengyuMa} \footnote{\url{https://github.com/kaidic/LDAM-DRW}}. Following \cite{NC} we do not use any data augmentation. In all the experiments, we fix the first 5 classes to be majorities, and the rest as minorities. To have a fair comparison between the models with different imbalance ratios $R$, we sample the datasets to have $n=25250$ training images in all cases. While the training set is imbalanced, when measuring test performance of a trained model we do so on a balanced test set, e.g. just like \cite{byrd2019effect,TengyuMa}. We measure the metrics at certain epochs, and similar to \cite{NC}, we sample epochs more frequently at the start of the training as the network parameters change more quickly in the beginning.


\subsubsection{Model accuracies}\label{sec:SM_accuracies}
Consistent with the requirements of the neural collapse phenomenon by \citet{NC}, all models are trained well beyond zero error. Specifically, as illustrated in \Fig~\ref{fig:intro_CIFAR} and \ref{fig:vgg_}, most of the models achieve zero error around epoch 120, while training continues until epoch $350$. Table \ref{Table 2} presents the first epochs at which each model achieves 100 percent accuracy for each value of the imbalance ratio $R$. For practical purposes, the minimal requirement for majority classes to be declared having achieved zero training error is set to $0.2\%$ error. For minority classes  $n_{min} \leq 500$ we set $0.00\%$ for the same requirements.
\begin{table}[h]
	\begin{center}
		\small
		\begin{tabular}[5pt]{ |c||c|c|c|c| } 
			\hline \newline
			Zero Error Epoch & R = 1 & R = 5 & R = 10 & R = 100 \\
			\hline\hline
			\multicolumn{5}{|c|}{ ResNet (VGG)} \\
			\hline
			CIFAR10 & 118 (121) & 117 (119) & 119 (119) & 120 (176)\\
			\hline
			MNIST & 10 (117) & 8 (125) & 11 (127) & 7 (117) \\
			\hline
			\new{Fashion-MNIST} & \new{117 (119)} & \new{117 (119)} & \new{118 (120)} & \new{117 (141)} \\
			\hline
		\end{tabular}
		\vspace{3pt}
		\caption{First epoch that zero training error is achieved in deep-net eperiments of Sec. \ref{sec:real_exp_SM}.}
		\label{Table 2}
	\end{center}
\end{table}

Balanced test errors are reported in Table \ref{Table 3}. Majority and minority errors are calculated by averaging the per-class majority and minority errors respectively. The total error is calculated by averaging all per-class accuracies. The averaging is done with equal weights. The test error for the balanced case ($R=1$) is larger than errors reported in \cite{NC} since the model is not trained on the whole dataset, but on $n=25250$ samples. 
\begin{table}[h]
	\begin{center}
		\small
		\begin{tabular}[5pt]{ |c||c|c|c|c| } 
			\hline
			Test Error & R = 1 & R = 5 & R = 10 & R = 100 \\
			\hline\hline
			\multicolumn{5}{|c|}{ CIFAR10 - ResNet (VGG) } \\
			\hline
			Total  & 16.47\% (17.27\%) & 26.17\% (21.08\%) & 34.68\% (30.03\%) & 53.89\% (53.21\%)\\
			\hline
			Majority  & 19.18\% (20.60\%) & 11.26\% (12.64\%) & 10.58\% (11.30\%) & 9.68\% (10.28\%) \\
			\hline
			Minority  & 13.76\% (13.94\%) & 41.08\% (30.96\%) & 58.78\% (48.76\%) & 98.1\% (96.14\%) \\
			\hline\hline
			\multicolumn{5}{|c|}{ MNIST - ResNet (VGG) } \\
			\hline
			Total  & 0.55\% (0.52\%) & 0.78\% (0.70\%) & 0.96\% (0.81\%) & 3.04\% (4.75\%)\\
			\hline
			Majority  & 0.38\% (0.35\%) & 0.18\% (0.25\%) & 0.06\% (0.06\%) & 0.02\% (0.08\%) \\
			\hline
			Minority  & 0.73\% (0.70\%) & 1.41\% (1.16\%) & 1.87\% (1.60\%) & 6.18\% (9.62\%) \\
			\hline\hline
			\multicolumn{5}{|c|}{ \new{Fashion-MNIST - ResNet (VGG)} } \\
			\hline
			Total  & 7.64\% (7.54\%) & 10.27\% (9.26\%) & 11.63\% (10.35\%) & 16.68\% (16.67\%)\\
			\hline
			Majority  & 8.56\% (8.40\%) & 5.72\% (5.50\%) & 5.12\% (4.58\%) & 4.78\% (4.94\%) \\
			\hline
			Minority  & 6.72\% (6.68\%) & 14.82\% (13.02\%) & 18.14\% (16.12\%) & 28.58\% (28.40\%) \\
			 \hline
		\end{tabular}
		\vspace{3pt}
		\caption{Balanced test error of deep-net experiments in Sec. \ref{sec:real_exp_SM}.}
		\label{Table 3}
	\end{center}
\end{table}

\subsubsection{NC property}
From the \emph{\ref{NC}} property, we expect that the embeddings collapse to their class means. In order to quantify validity of this property, we follow \cite{NC}. Specifically, we compute the {within-class covariance} ($\Sigmab_W$) and {between-class covariance} ($\Sigmab_B$) as, 
$
\Sigmab_W = \sum_{i \in [n]}  (\h_{i} - \mub_{y_i})(\h_{i} - \mub_{y_i})^T \in \mathbb{R}^{d \times d},
$ 
 and 
$
\Sigmab_B = \sum_{c \in [k]}  (\mub_c - \mub_G)(\mub_c - \mub_G)^T \in \mathbb{R}^{d \times d},
$
where $\mub_c = \frac{1}{n_c} \sum_{i\,:\,y_i\in[c]} \h_{i}$ is the mean embedding of class $c$ and $\mub_G = \frac{1}{k} \sum_{c \in [k]} \mub_c$ is their (blanaced) global mean. We can now measure NC by computing $\tr(\Sigmab_W \Sigmab_B^\dagger) / k$. \Fig~\ref{fig:NC} illustrates how this quantity indeed decreases as training evolves. This confirms that feature embeddings converge to their class means, regardless of the imbalance ratio $R$. 
\begin{figure*}[t]
	\vspace{-10pt}
	\centering
	\hspace{-60pt}
	\begin{subfigure}{0.9\textwidth}
		\centering
		\begin{tikzpicture}
			\node at (0,0) 
			{\includegraphics[scale=0.25]{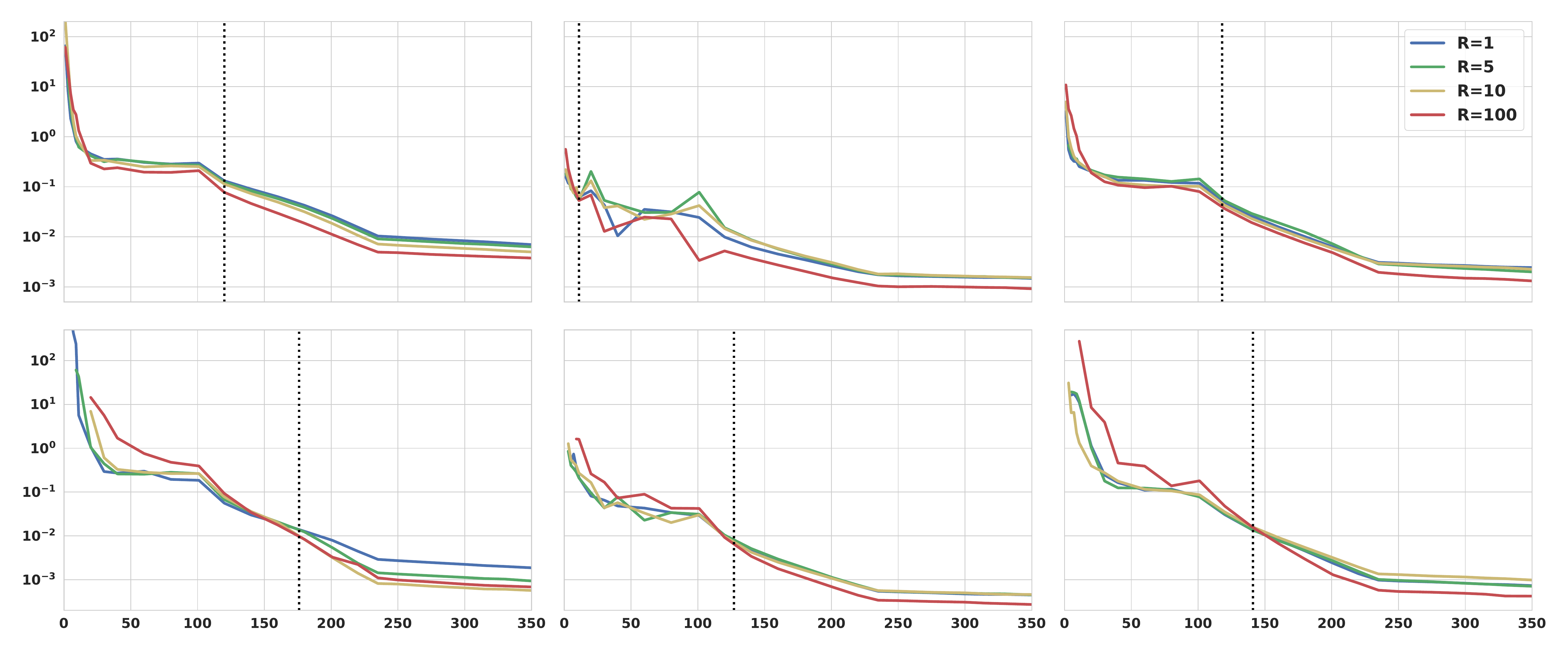}};
			\node at (5.0,-3.2) [scale=0.7] {Epochs};
			\node at (0,-3.2) [scale=0.7] {Epochs};
			\node at (-5.0,-3.2) [scale=0.7] {Epochs};
			\node at (-7.6,1.8) [scale=0.7, rotate=90]{\textbf{ResNet}};
			\node at (-7.6,-1.3) [scale=0.7, rotate=90]{\textbf{VGG}};
			\node at (-8.0,0.2) [scale=0.7, rotate=90] {Distance to SELI/ETF};
			\node at (-5.0,3.2) [scale=0.9] {\textbf{CIFAR10}};
			\node at (0.0,3.2) [scale=0.9] {\textbf{MNIST}};
			\node at (5.0,3.2) [scale=0.9] {\new{\textbf{Fashion-MNIST}}};
		\end{tikzpicture}
	\end{subfigure}	\vspace{-5pt}
	\caption{NC property for 
		different imbalance levels $R$.}
	\label{fig:NC}
\end{figure*}

\label{sec:NC_for_Imbalance}

\subsubsection{Norms of classifiers / embeddings}\label{sec:SM_exp_norm}
\begin{figure*}
	\vspace{-10pt}
	\centering
	\hspace{-60pt}
	\begin{subfigure}{0.9\textwidth}
		\centering
		\begin{tikzpicture}
			\node at (0,0) 
			{\includegraphics[scale=0.25]{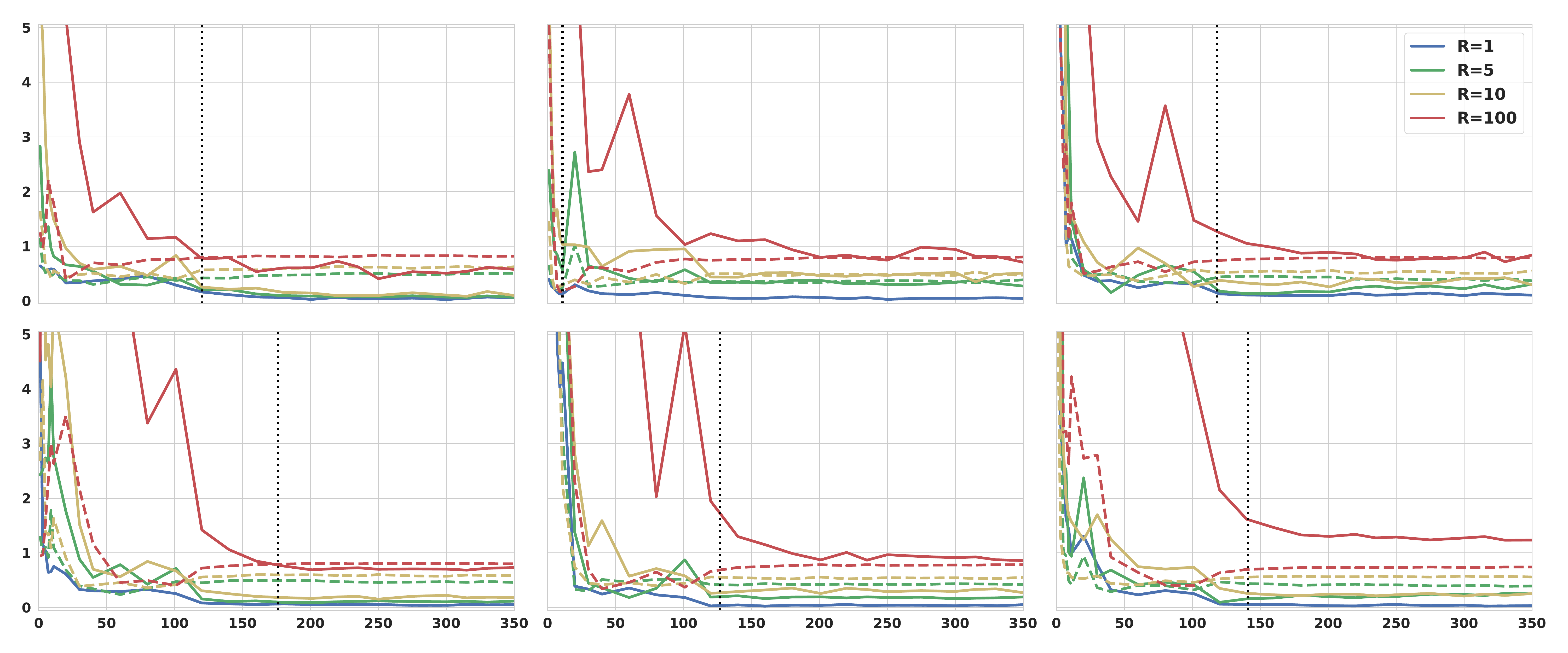}};
			\node at (5.0,-3.2) [scale=0.7] {Epochs};
			\node at (0,-3.2) [scale=0.7] {Epochs};
			\node at (-5.0,-3.2) [scale=0.7] {Epochs};
			\node at (-7.6,1.8) [scale=0.7, rotate=90]{\textbf{ResNet}};
			\node at (-7.6,-1.3) [scale=0.7, rotate=90]{\textbf{VGG}};
			\node at (-8.0,0.2) [scale=0.7, rotate=90] {Distance to SELI/ETF};
			\node at (-5.0,3.1) [scale=0.9] {\textbf{CIFAR10}};
			\node at (0.0,3.1) [scale=0.9] {\textbf{MNIST}};
			\node at (5.0,3.1) [scale=0.9] {\new{\textbf{Fashion-MNIST}}};
		\end{tikzpicture}
	\end{subfigure}	\vspace{-5pt}
	\caption{Convergence of embeddings norm ratio ($\|\mubmaj\|_2/\|\mubmin\|_2$) to SELI (solid) vs ETF (dashed)
			for different imbalance levels $R$.}
	\label{fig:norm_ratio_exp_H}
\end{figure*}
Here, we further investigate the geometry of learned embeddings and classifiers, by focusing on their norms. In particular, we study the ratios $\tau_\h := \|\hmaj\|_2 / \|\hmin\|_2$ and $\tau_\w := \|\wmaj\|_2 / \|\wmin\|_2$.  Assuming the classifiers and embeddings follow the SELI geometry, those ratios admit explicit closed-form expressions thanks to Lemmas \ref{lem:norms_w} and \ref{lem:norms_h}. To determine deviations of the measured norm-ratios $\tau_\w$ and $\tau_\h$ compared to those reference closed-form expressions, we calculate and report the following quantity:
\begin{align*}
    \text{Average}_{c,c'} \big(|\tau_{\w}(c,c') - \hat{\tau}_{\w}| / \hat{\tau}_{\w}\big),
\end{align*}
where $\hat{\tau}_\w$ is given by Lemma \ref{lem:norms_w} for SELI and is equal to $1$ for ETF, and,  $\tau_{\w}(c,c') = \|\w_{c}\|_2 / \|\w_{c'}\|_2$ with $c$ being a majority and $c'$ a minority class. Similarly, we compute distances for the norm-ratios of centered mean embeddings $\overline{\mub_c}$. \Fig~\ref{fig:norm_ratio_exp_W} and \ref{fig:norm_ratio_exp_H} depict these metrics during training of the ResNet and VGG networks. The results confirm once more that the SELI geometry accurately captures features of the learned geometries. On the contrary, this is not the case for ETF when data are imbalanced. We observe that convergence to SELI geometry (and respective deviation from ETF) is more pronounced for the classifier weights and becomes elusive for the embeddings particularly for large imbalance ratios ($R=100$.)

\subsubsection{Non-alignment of classifiers and embeddings} \label{sec:SM_exp_nonalign}
\begin{figure*}[t]
	\vspace{-10pt}
	\centering
	\hspace{-60pt}
	\begin{subfigure}{0.9\textwidth}
		\centering
		\begin{tikzpicture}
			\node at (0,0) 
			{\includegraphics[scale=0.25]{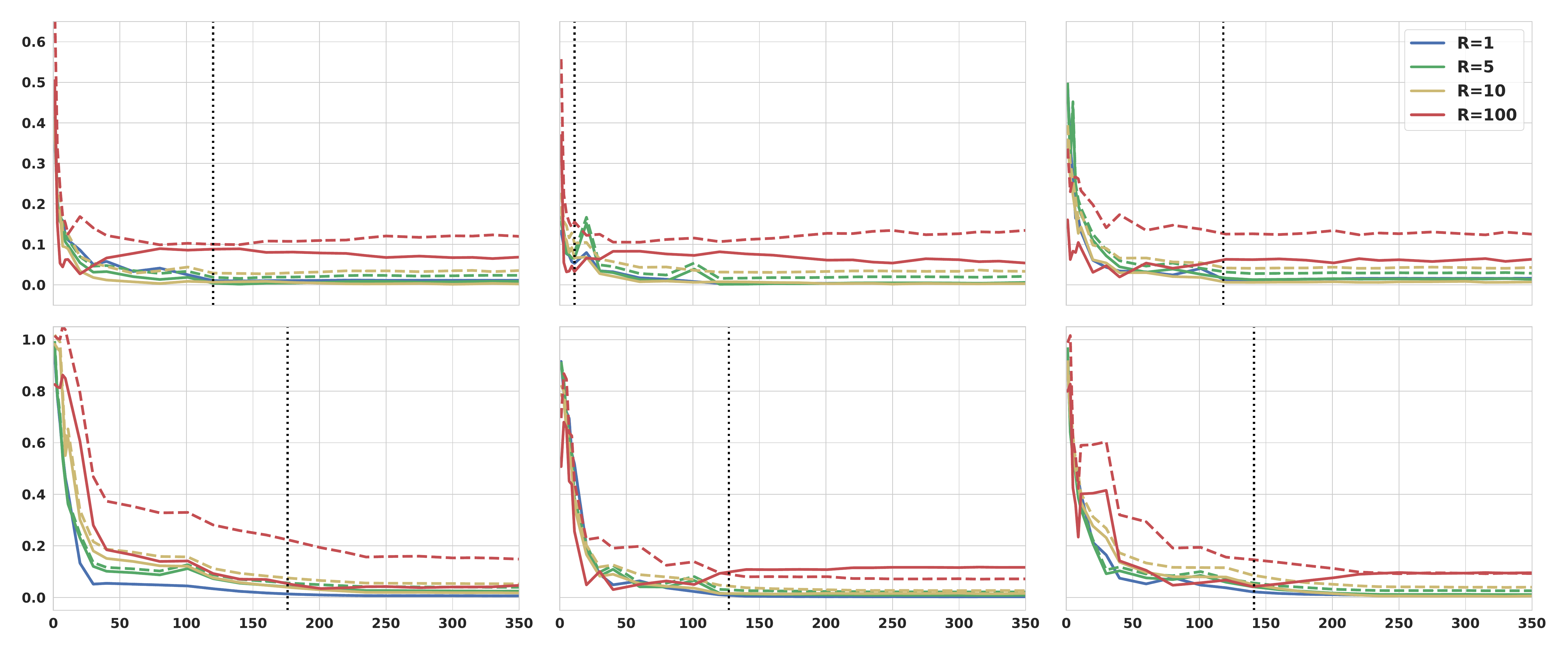}};
			\node at (5.0,-3.2) [scale=0.7] {Epochs};
			\node at (0,-3.2) [scale=0.7] {Epochs};
			\node at (-5.0,-3.2) [scale=0.7] {Epochs};
			\node at (-7.6,1.8) [scale=0.7, rotate=90]{\textbf{ResNet}};
			\node at (-7.6,-1.3) [scale=0.7, rotate=90]{\textbf{VGG}};
			\node at (-8.0,0.2) [scale=0.7, rotate=90] {Distance to SELI/ETF};
			\node at (-5.0,3.2) [scale=0.9] {\textbf{CIFAR10}};
			\node at (0.0,3.2) [scale=0.9] {\textbf{MNIST}};
			\node at (5.0,3.2) [scale=0.9] {\new{\textbf{Fashion-MNIST}}};
		\end{tikzpicture}
	\end{subfigure}	\vspace{-5pt}
	\caption{Alignment of minority embeddings and classifiers. Convergence of angles to SELI (solid) and ETF (dashed) for different imbalance levels $R$.}
	\label{fig:cosWH}
\end{figure*}
%
While from previous empirical results on balanced datasets \cite{NC}, we expect an alignment between the classifiers and embeddings, Lemma \ref{lem:align} suggests these two geometries deviate as data becomes more imbalanced. 
To verify this property, we compute the angle between mean embeddings and their corresponding classifiers, and measure the deviation from the SELI geometry. Namely, let $\theta_c=\Cos{\w_c}{\h_c}$. Then, similar to the previous section, we compute,
\begin{align*}
	\text{Average}_{c} \big(|\theta_c - \hat{\theta}| / \hat{\theta}\big),
\end{align*}
where $c$ ranges over minority classes and $\hat{\theta}$ is given by \eqref{eq:wh_min}. \Fig~\ref{fig:cosWH} shows how this quantity evolves during training. From \Fig~\ref{fig:SELI_theory_angles_wh}, we know that the embeddings and classifiers of the majority classes remain aligned even for highly imbalanced data, thus we only analyze the impact of imbalance ratio on the minority classes.

\subsubsection{Majority vs minority geometry}\label{sec:SM_maj_min}
\begin{figure}
	\centering
	\begin{subfigure}[b]{1.0\textwidth}
		\centering
		\begin{tikzpicture}
			\node at (0,-1.4) {\includegraphics[width=0.9\textwidth]{./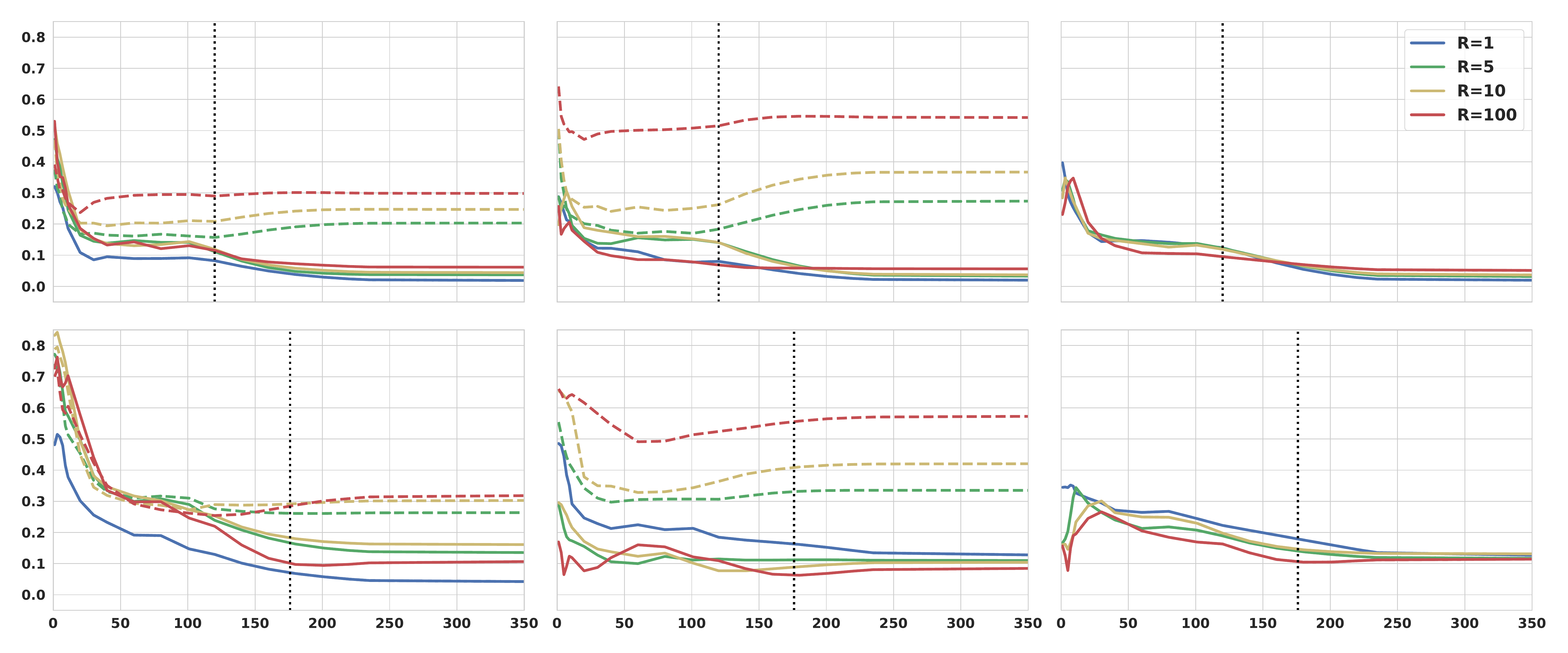}};
			\node at (0,1.7) [scale=0.9]{\textbf{CIFAR10}};
			\node at (0.25,-4.5) [scale=0.7]{Epochs};
			\node at (-4.3,-4.5) [scale=0.7]{Epochs};
			\node at (4.5,-4.5) [scale=0.7]{Epochs};
			\node at (-7.3,0.0) [scale=0.7, rotate=90]{\textbf{ResNet}};
			\node at (-7.3,-2.7) [scale=0.7, rotate=90]{\textbf{VGG}};
			\node at (-7.7,-1.4) [scale=0.7, rotate=90]{Distance to SELI/ETF};
		\end{tikzpicture}
		\label{fig:GW_minmaj_cifar10}
	\end{subfigure}
	\\
	\vspace{-2pt}
	\begin{subfigure}[b]{1.0\textwidth}
		\centering
		\begin{tikzpicture}
			\node at (0,-1.4) {\includegraphics[width=0.9\textwidth]{./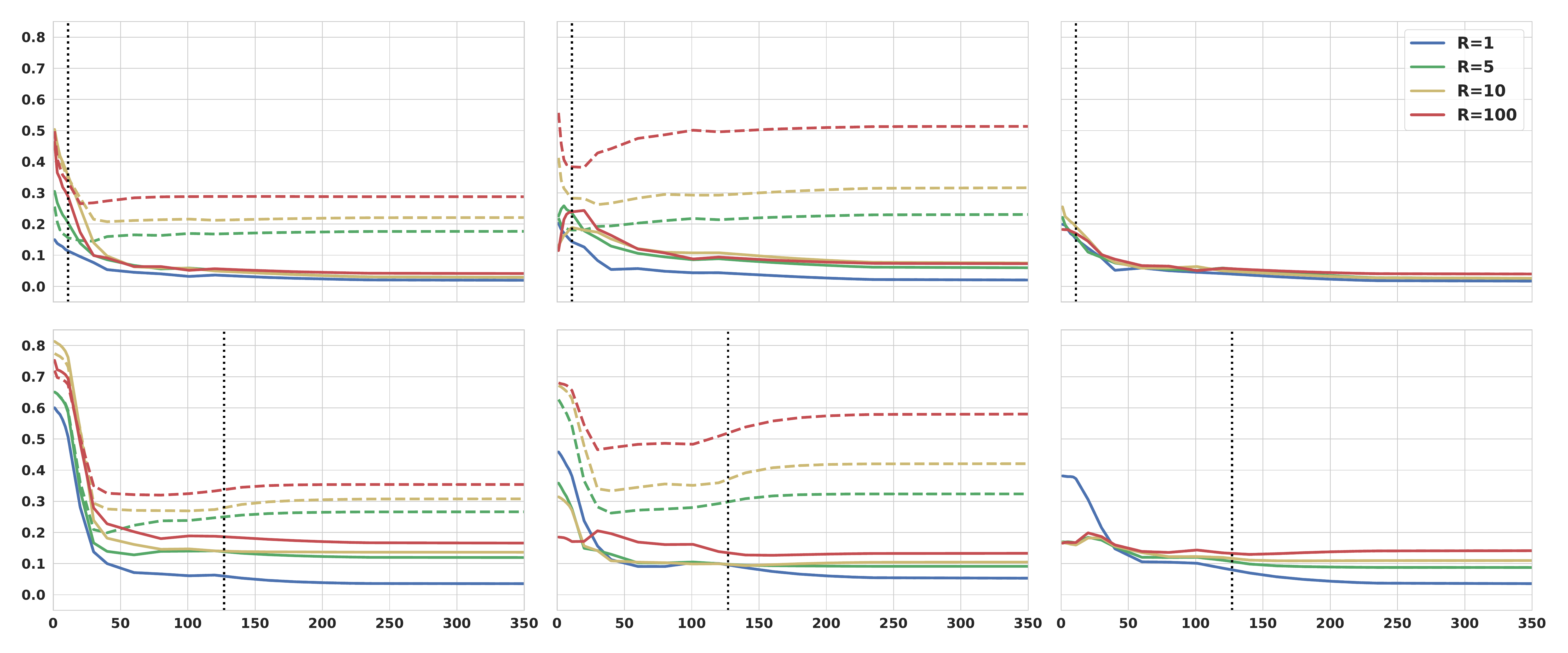}};
			\node at (0,1.7) [scale=0.9]{\textbf{MNIST}};
			\node at (0.25,-4.5) [scale=0.7]{Epochs};
			\node at (-4.3,-4.5) [scale=0.7]{Epochs};
			\node at (4.5,-4.5) [scale=0.7]{Epochs};
			\node at (-7.3,0.0) [scale=0.7, rotate=90]{\textbf{ResNet}};
			\node at (-7.3,-2.7) [scale=0.7, rotate=90]{\textbf{VGG}};
			\node at (-7.7,-1.4) [scale=0.7, rotate=90]{Distance to SELI/ETF};
		\end{tikzpicture}
		\label{fig:GW_minmaj_mnist}
	\end{subfigure}
	\\
	\vspace{-2pt}
	\begin{subfigure}[b]{1.0\textwidth}
	\centering
	\begin{tikzpicture}
		\node at (0,-1.4) {\includegraphics[width=0.9\textwidth]{./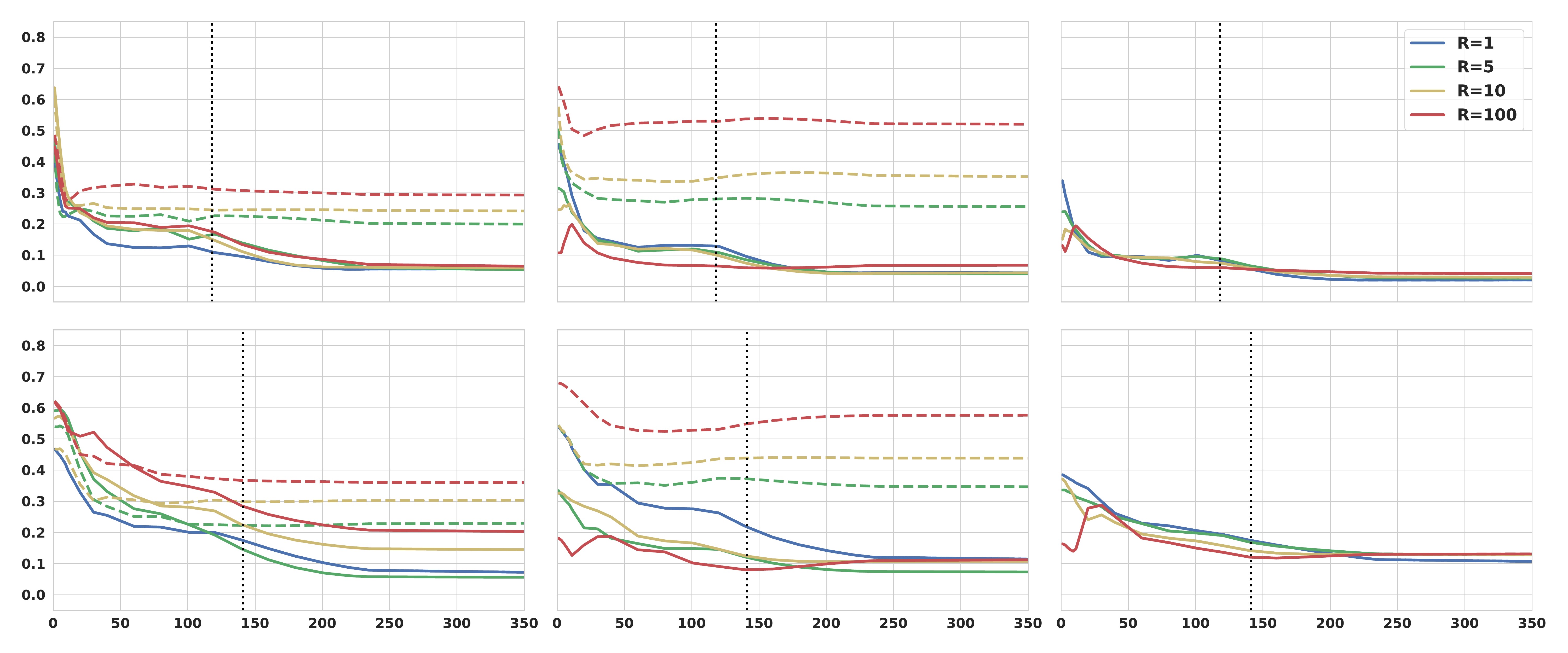}};
		\node at (0,1.7) [scale=0.9]{\new{\textbf{Fashion-MNIST}}};
		\node at (0.25,-4.5) [scale=0.7]{Epochs};
		\node at (-4.3,-4.5) [scale=0.7]{Epochs};
		\node at (4.5,-4.5) [scale=0.7]{Epochs};
		\node at (-7.3,0.0) [scale=0.7, rotate=90]{\textbf{ResNet}};
		\node at (-7.3,-2.7) [scale=0.7, rotate=90]{\textbf{VGG}};
		\node at (-7.7,-1.4) [scale=0.7, rotate=90]{Distance to SELI/ETF};
		\node at (0.25,-5.2) [scale=0.9]{\textbf{(b)} $\underline{\G_\W}^{\minor-\minor}$};
		\node at (-4.3,-5.2) [scale=0.9]{\textbf{(a)} $\underline{\G_\W}^{\maj-\maj}$};
		\node at (4.5,-5.2) [scale=0.9]{\textbf{(c)} $\underline{\G_\W}^{\maj-\minor}$};
	\end{tikzpicture}
	\label{fig:GW_minmaj_fmnist}
	\end{subfigure}
	\vspace{-20pt}
	\caption{Convergence of learned classifiers' majority and minority individual geometries to the SELI (solid lines) vs ETF (dashed lines) geometries. 
	}
	\label{fig:appx_GW_MNIST}
\end{figure}
\begin{figure}
	\centering
	\begin{subfigure}[b]{1.0\textwidth}
		\centering
		\begin{tikzpicture}
			\node at (0,-1.4) {\includegraphics[width=0.9\textwidth]{./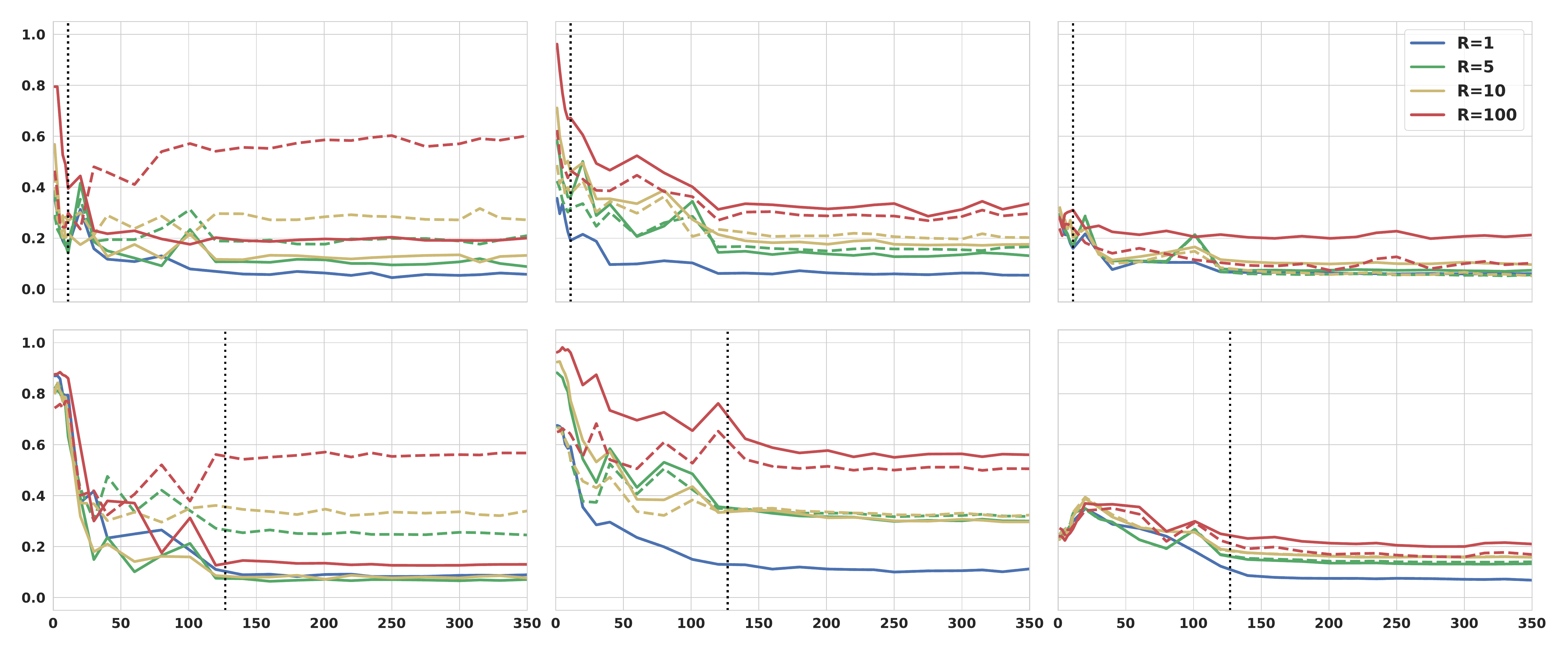}};
			\node at (0,1.7) [scale=0.9]{\textbf{CIFAR10}};
			\node at (0.25,-4.5) [scale=0.7]{Epochs};
			\node at (-4.3,-4.5) [scale=0.7]{Epochs};
			\node at (4.5,-4.5) [scale=0.7]{Epochs};
			\node at (-7.3,0.0) [scale=0.7, rotate=90]{\textbf{ResNet}};
			\node at (-7.3,-2.7) [scale=0.7, rotate=90]{\textbf{VGG}};
			\node at (-7.7,-1.4) [scale=0.7, rotate=90]{Distance to SELI/ETF};
		\end{tikzpicture}
		\label{fig:GH_minmaj_cifar10}
	\end{subfigure}
	\\
	\vspace{-2pt}
	\begin{subfigure}[b]{1.0\textwidth}
		\centering
		\begin{tikzpicture}
			\node at (0,-1.4) {\includegraphics[width=0.9\textwidth]{./figs/PAPER_H_MNIST_GH_fro_minmaj.pdf}};
			\node at (0,1.7) [scale=0.9]{\textbf{MNIST}};
			\node at (0.25,-4.5) [scale=0.7]{Epochs};
			\node at (-4.3,-4.5) [scale=0.7]{Epochs};
			\node at (4.5,-4.5) [scale=0.7]{Epochs};
			\node at (-7.3,0.0) [scale=0.7, rotate=90]{\textbf{ResNet}};
			\node at (-7.3,-2.7) [scale=0.7, rotate=90]{\textbf{VGG}};
			\node at (-7.7,-1.4) [scale=0.7, rotate=90]{Distance to SELI/ETF};
		\end{tikzpicture}
		\label{fig:GH_minmaj_mnist}
	\end{subfigure}
	\\
	\vspace{-2pt}
	\begin{subfigure}[b]{1.0\textwidth}
		\centering
		\begin{tikzpicture}
			\node at (0,-1.4) {\includegraphics[width=0.9\textwidth]{./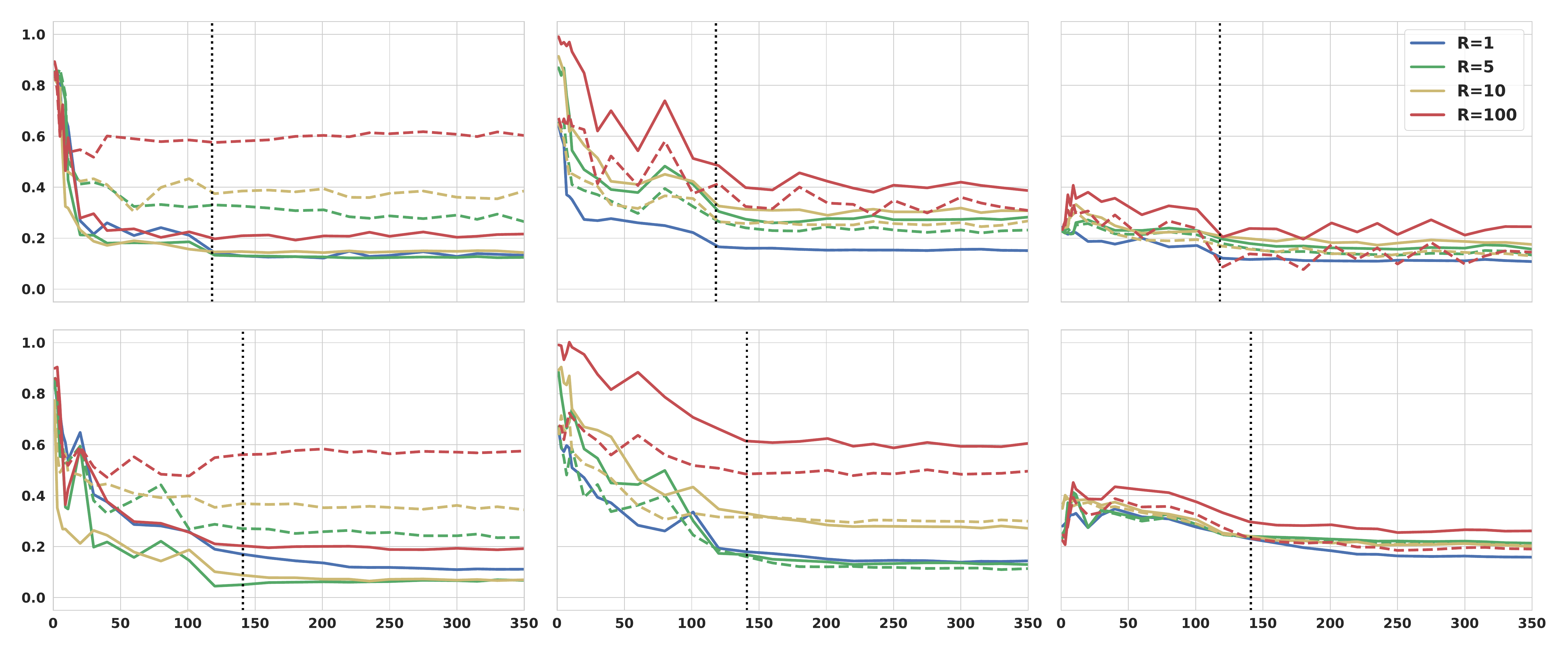}};
			\node at (0,1.7) [scale=0.9]{\new{\textbf{Fashion-MNIST}}};
			\node at (0.25,-4.5) [scale=0.7]{Epochs};
			\node at (-4.3,-4.5) [scale=0.7]{Epochs};
			\node at (4.5,-4.5) [scale=0.7]{Epochs};
			\node at (-7.3,0.0) [scale=0.7, rotate=90]{\textbf{ResNet}};
			\node at (-7.3,-2.7) [scale=0.7, rotate=90]{\textbf{VGG}};
			\node at (-7.7,-1.4) [scale=0.7, rotate=90]{Distance to SELI/ETF};
			\node at (0.25,-5.2) [scale=0.9]{\textbf{(b)} $\underline{\G_\M}^{\minor-\minor}$};
			\node at (-4.3,-5.2) [scale=0.9]{\textbf{(a)} $\underline{\G_\M}^{\maj-\maj}$};
			\node at (4.5,-5.2) [scale=0.9]{\textbf{(c)} $\underline{\G_\M}^{\maj-\minor}$};
		\end{tikzpicture}
		\label{fig:GH_minmaj_fmnist}
	\end{subfigure}
	\vspace{-20pt}
	\caption{Convergence of learned classifiers' majority and minority individual geometries to the SELI (solid lines) vs ETF (dashed lines) geometries. 
	}
	\label{fig:appx_GH_MNIST}
\end{figure}
In order to better understand the individual behavior of majorities and minorities, we now compare individual quadrants of the (normalized) $\underline{\G_\W}$ and $\underline{\G_\Hb}$ matrices. Concretely, let
$$
	\underline{\G_\W} = 
	\begin{bmatrix}
		\underline{\G_\W}^{\maj-\maj} & \underline{\G_\W}^{\maj-\minor} \\
		\big(\underline{\G_\W}^{\maj-\minor}\big)^T & \underline{\G_\W}^{\minor-\minor}
	\end{bmatrix}
$$
be a partition of the normalized $\underline{\G_\W}=\G_\W/\|\G_\W\|_F$ to $(k/2)\times (k/2)$ sub-blocks.  Comparing quadrants $\underline{\G_\W}^{\maj-\maj},  \underline{\G_\W}^{\maj-\minor} $ and $\underline{\G_\W}^{\minor-\minor}$ to the corresponding quadrants of the reference SELI/ETF matrix $\hat{\G}_\W$ allows us to ``zoom-in'' the majority-majority, majority-minority and minority-minority structures. Entirely analogous calculations allow the same for the embeddings.

\vspace{3pt}
\noindent\textbf{Classifiers Geometry.}~
\Fig~\ref{fig:appx_GW_MNIST} confirms that both majority and minority geometries converge to SELI properly. Interestingly, we see that the minorities diverge the most from the equiangular structure of ETF geometry.

\vspace{3pt}
\noindent\textbf{Embeddings Geometry.}~We find thanks to \Fig~\ref{fig:appx_GH_MNIST}  that the ``error'' in convergence of embeddings to SELI geometry (compare to the better convergence for classifiers) shown previously in \Fig~\ref{fig:intro_CIFAR} and \ref{fig:vgg_} is primarily due to the minority class geometries. However, an overall inspection of the subfigures shows that embeddings also tend to align better to SELI compared to ETF. This alignment property is pronounced in the case of majority geometries (see $\underline{\G_\M}^{\maj-\maj}$).


\subsection{Capturing weight-decay with regularized UFM}\label{sec:exp_wd_ufm}

\begin{figure*}[t!]
	\vspace{-10pt}
	\centering
	\hspace{-40pt} 
	\begin{subfigure}{0.3\textwidth}
		\centering
		\begin{tikzpicture}
			\node at (-1.4,-1.4) 
			{\includegraphics[scale=0.18]{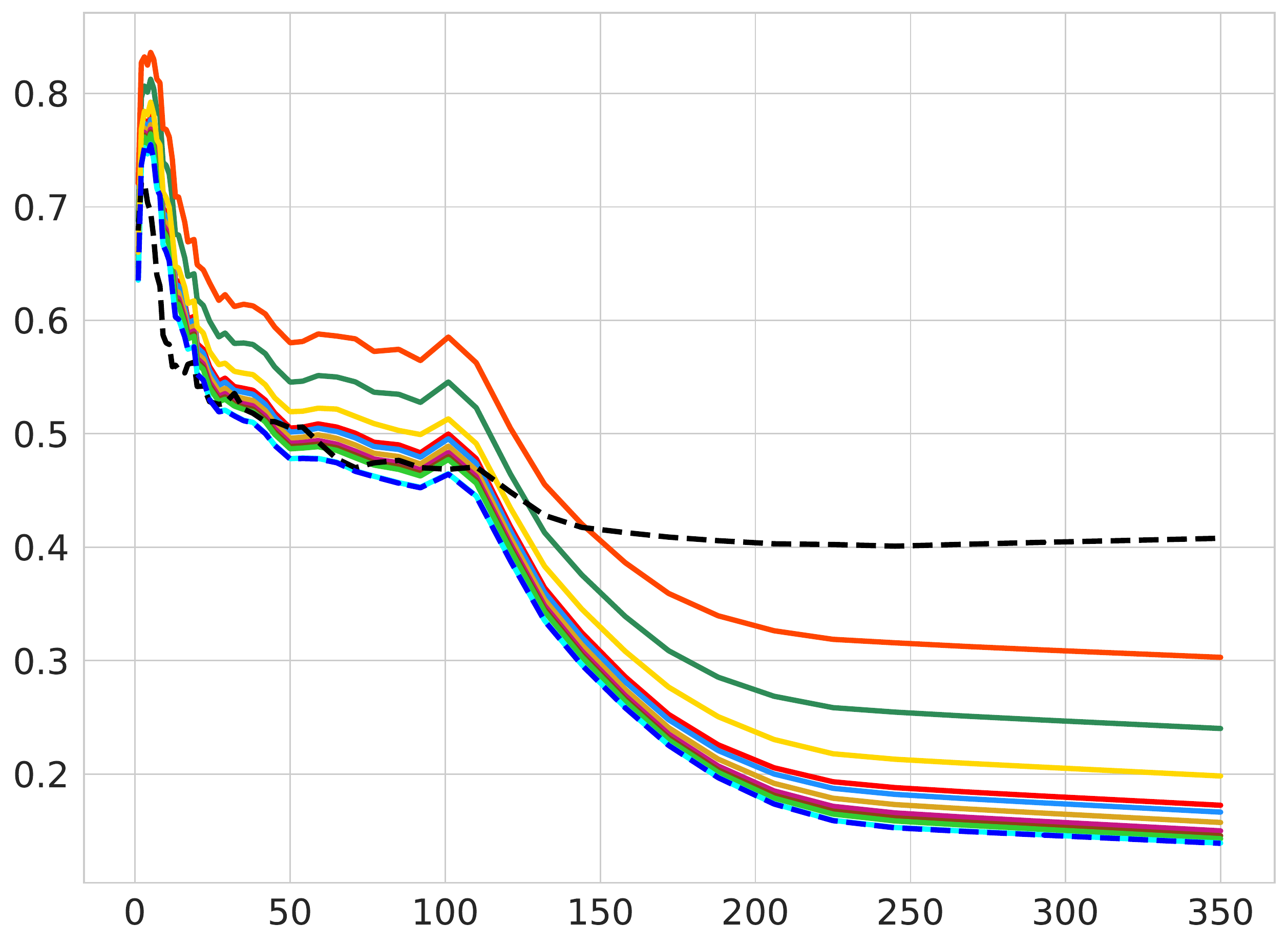}};
			 			\node at (-1.4,-3.5) [scale=0.7]{Epochs};
			\node at (-4.1,-1.4)  [scale=0.7, rotate=90]{Distance to $\lambda$ -SELI/SELI/ETF};
		\end{tikzpicture}\caption{Classifiers}\label{fig:cifar_ridge_ufm_W}
	\end{subfigure}\hspace{25pt}
	\begin{subfigure}{0.3\textwidth}
		\centering
		\begin{tikzpicture}
			\node at (0,-1.4) {\includegraphics[scale=0.18]{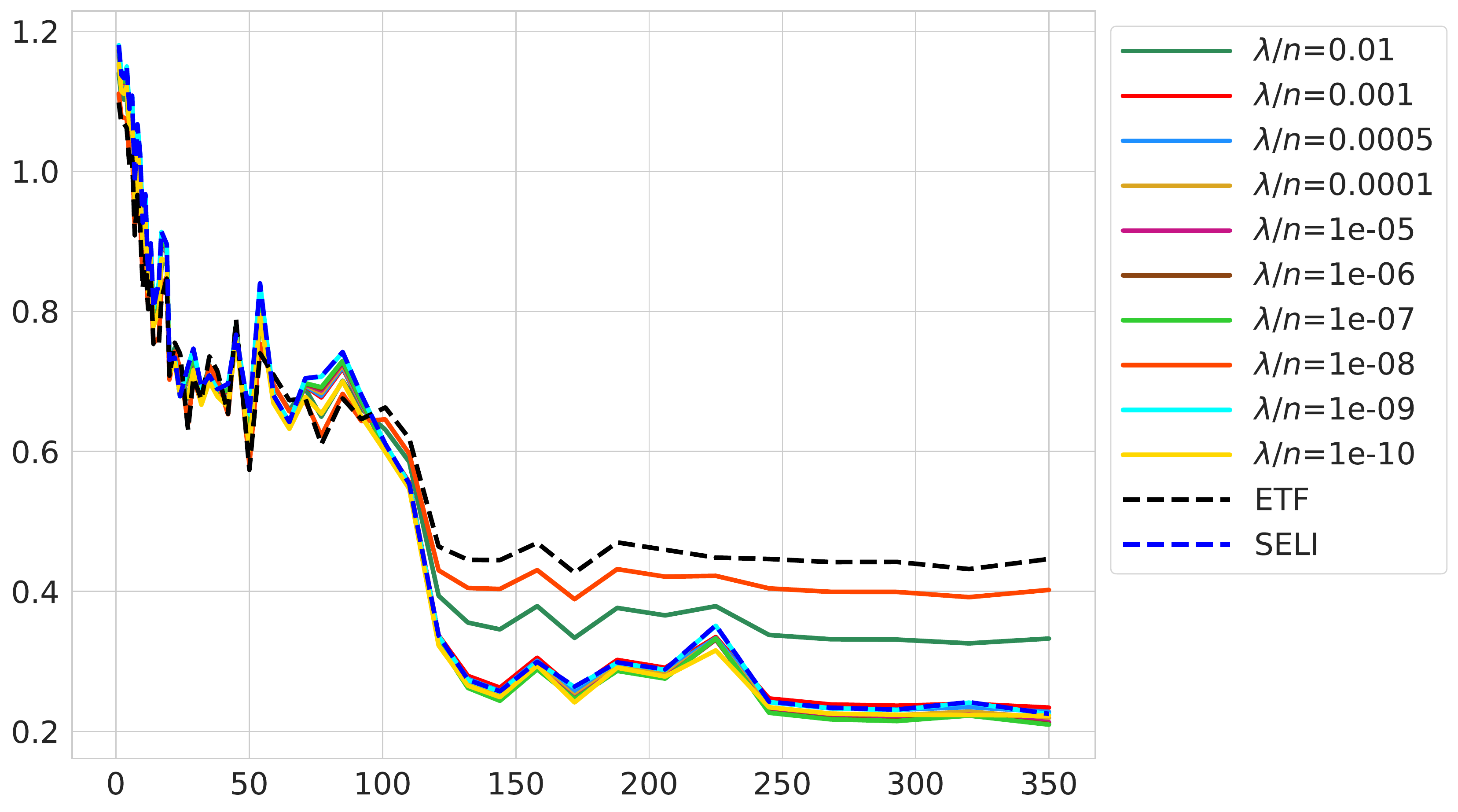}};
			 			\node at (0,-3.5) [scale=0.7]{Epochs};
			\node at (-2.7,-1.4) [scale=0.7, rotate=90]{};
		\end{tikzpicture}\caption{Embeddings}\label{fig:cifar_ridge_ufm_H}
	\end{subfigure}
\caption{Distances of ResNet learned classifiers and mean-embeddings trained on CIFAR10 data with imbalance ratio $R=10$ from the $\lambda$-SELI geometry (aka solution to \eqref{eq:nuc_norm_reg} as defined in Theorem \ref{thm:regularized}) for different values of $\lambda$, as well as, from the SELI  and ETF geometries.}
\label{fig:cifar_ridge_ufm}
\vspace{-5pt}
\end{figure*}
In \Fig~\ref{fig:cifar_ridge_ufm} we compare for $R=10$ the distances of $\underline{\G_\W}$ and $\underline{\G_\M}$ matrices to various $\lambda$-SELI geometries for different values of regularization $\lambda$. Specifically, for the  $\lambda$-SELI geometries, we obtained the reference matrices $\hat{\G}_\W^\la,\hat{\G}_\M^\la$ as follows. For each value of $\la$, we solve the nuclear-norm CE minimization in \eqref{eq:nuc_norm_reg} to find $\Zhat_\la$ for $k=10$ and $R=10$ (to match the CIFAR10 settings). We then form $\hat{\G}_\W^\la, \hat{\G}_\M^\la$ as described in the first paragraph of \Sec~\ref{sec:rel} only now using $\Zhat_\la$ instead of the SELI $\Zhat$. For comparison, we also plot the distances to the SELI and ETF geometries.

Note that we have experimented with a wide range of values for the regularization $\la$. Among those is the value $5e-4$ that matches with the choice of weight-decay parameters in the deep-net experiments. Recall also from the discussion in \Sec~\ref{Sec:reg} that $\Zhat_\la$ is sensitive to $\lambda$ in this setting where $R=10, k=10.$

 
We make the following interesting observations from \Fig~\ref{fig:cifar_ridge_ufm}. First, note that for classifiers the minimum distance  is that to SELI. The distance of the embeddings' geometry to SELI is also among the lowest ones. Specifically, despite being slightly larger than that from $\lambda$-SELI for a few values of $\lambda$, the difference is very small. This suggests that SELI is indeed a good approximation for the learned geometry even when training with finite weight-decay. Besides, there are two key advantages of the SELI over the $\lambda$-SELI geometry. First, it is unclear what the mapping ought to be (if such a mapping exists) between training-implementation choices (such as weight-decay) and $\la$. Second, even if such a mapping was known, the SELI geometry has the unique advantage of being expressed simply in terms of the (SVD of the) \SEL~matrix. In fact, as we show in \Sec~\ref{sec:SELI_properties} it is possible to get closed-form expressions for the norms and angles describing the geometry. This not only makes calculations much easier, but also it allows further analysis of the properties (e.g. quantifying norm-ratios as in \Fig~\ref{fig:norm_ratio_exp_W}).
%


\new{\subsection{Additional experiments for minority ratios $\rho\neq 1/2$}\label{sec:rho}} 
\new{
Thus far, in our previous experiments we considered imbalanced data with minority ratio $\rho=1/2$, i.e. same number of minorities and majorities. However, note that our theoretical results hold for any value of $\rho$. Specifically, Theorem \ref{thm:SVM} shows that the solution of the UF-SVM follows the SELI geometry irrespective of $\rho$. Here, we empirically study convergence to the SELI geometry for a ResNet-18 model on ($R$,$\rho$)-STEP imbalanced CIFAR10 data, for two values of minority ratio $\rho=0.3$ and $0.7$. These experiments complement our previous demonstrations for $\rho=0.5$. Specifically, we create training set of the same size of $n=15350$ in all experiments for a fair comparison.\footnote{This is smaller than the total number $n=25250$ of examples used previously in our experiments for $\rho=1/2$. The reason is that there is a limited number of $5000$ images per class in CIFAR-10 making it impossible to have $(R=100,\rho=0.7)$-STEP imbalanced CIFAR-10 data.}  All  other experimental settings are as described in Section \ref{subsec:app_implementation_details}. Figs. \ref{fig:rho_gram}, \ref{fig:rho_minmaj_w} and \ref{fig:rho_minmaj_m} demonstrate how classifiers and embeddings converge to the proposed SELI geometry. Consistent with the previous experiments, the learnt geometries of deep networks are well-captured by the SELI. We also observe empirically that larger values of $\rho$ exhibit slightly faster convergence.
}
\vspace{5pt}
\begin{figure*}[h!]
	\vspace{-10pt}
	\centering
	\hspace{-40pt} \begin{subfigure}{0.3\textwidth}
		\centering
		\begin{tikzpicture}
			\node at (-1.4,-1.4) 
			{\includegraphics[scale=0.26]{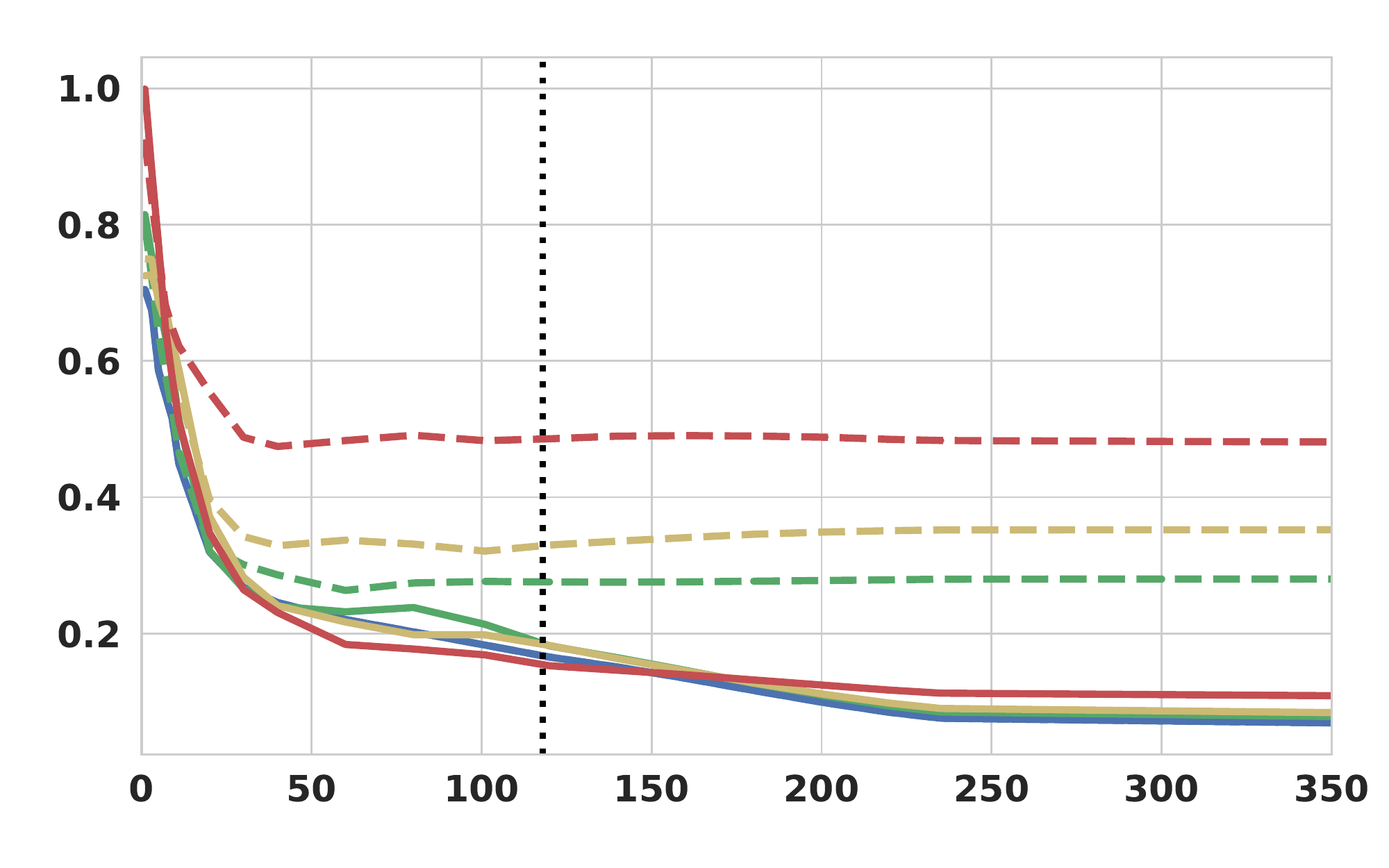}};
			\node at (-4.1,-1.4)  [scale=0.7, rotate=90]{Distance to SELI/ETF};
		\end{tikzpicture}
	\end{subfigure}\hspace{16pt}\begin{subfigure}{0.3\textwidth}
		\centering
		\begin{tikzpicture}
			\node at (0,-1.4) {\includegraphics[scale=0.26]{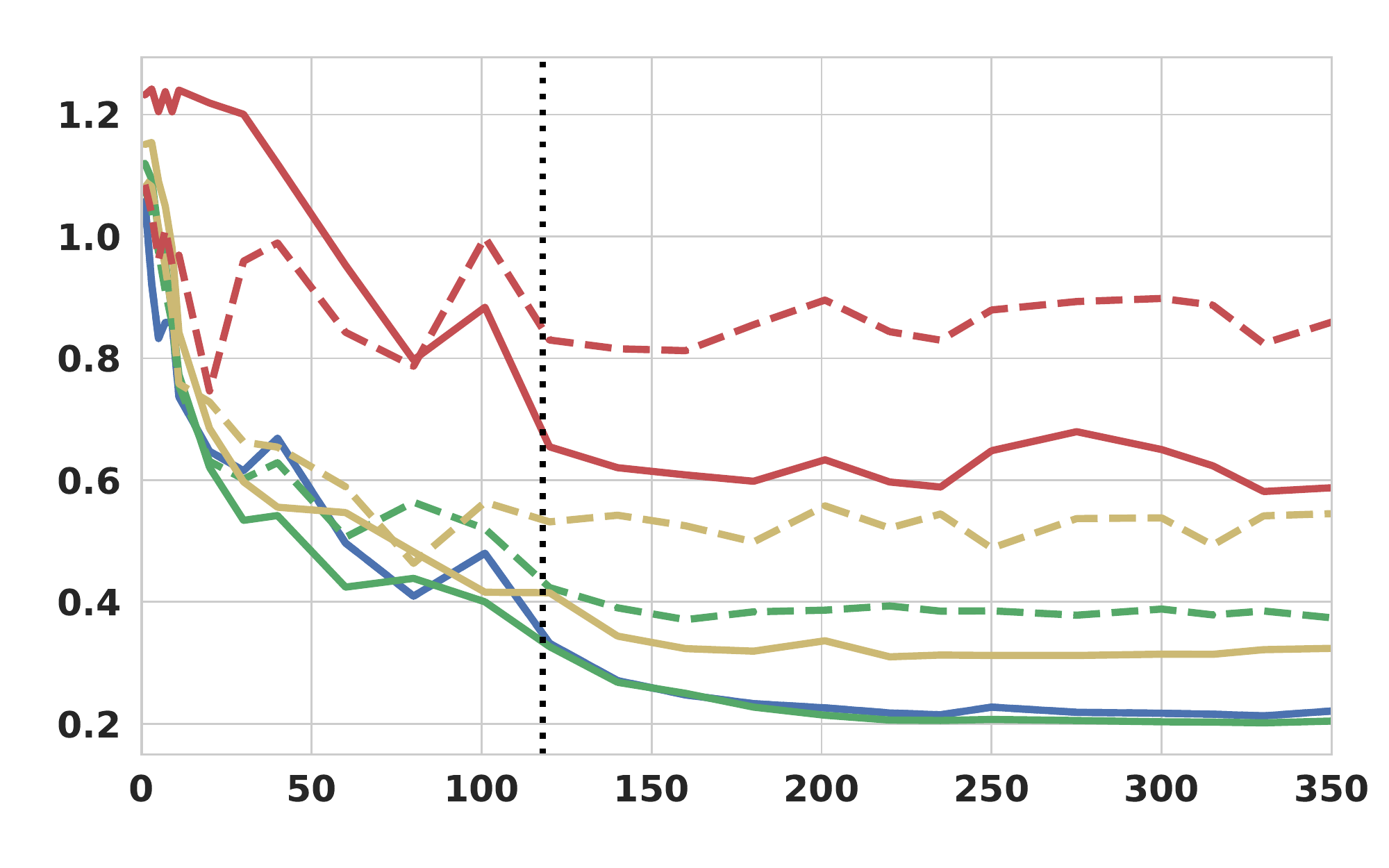}};
			\node at (0,0.2) [scale=0.9]{\textbf{$\rho = 0.3$}};
			\node at (-2.7,-1.4) [scale=0.7, rotate=90]{};
		\end{tikzpicture}
	\end{subfigure}\hspace{13pt}\begin{subfigure}{0.3\textwidth}
		\centering
		\begin{tikzpicture}
			\node at (0,-1.4) {\includegraphics[scale=0.26]{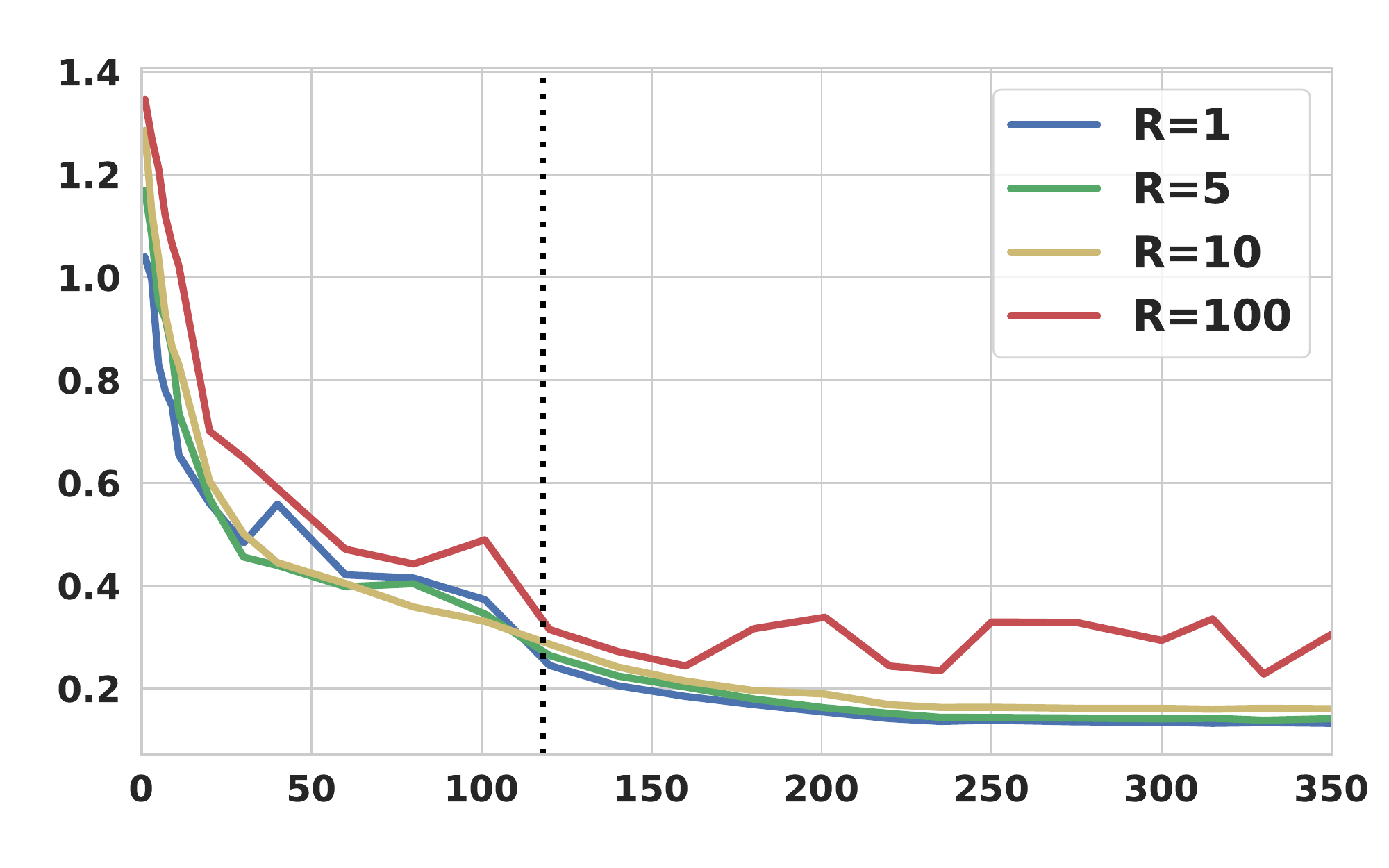}};
			\node at (-2.7,-1.4) [scale=0.7, rotate=90]{};
		\end{tikzpicture}
	\end{subfigure}\vspace{-5pt}
	\vspace{-5pt}
	
	\vspace{5pt}
	\centering
	\hspace{-40pt} \begin{subfigure}{0.3\textwidth}
		\centering
		\begin{tikzpicture}
			\node at (-1.4,-1.4)
			{\includegraphics[scale=0.26]{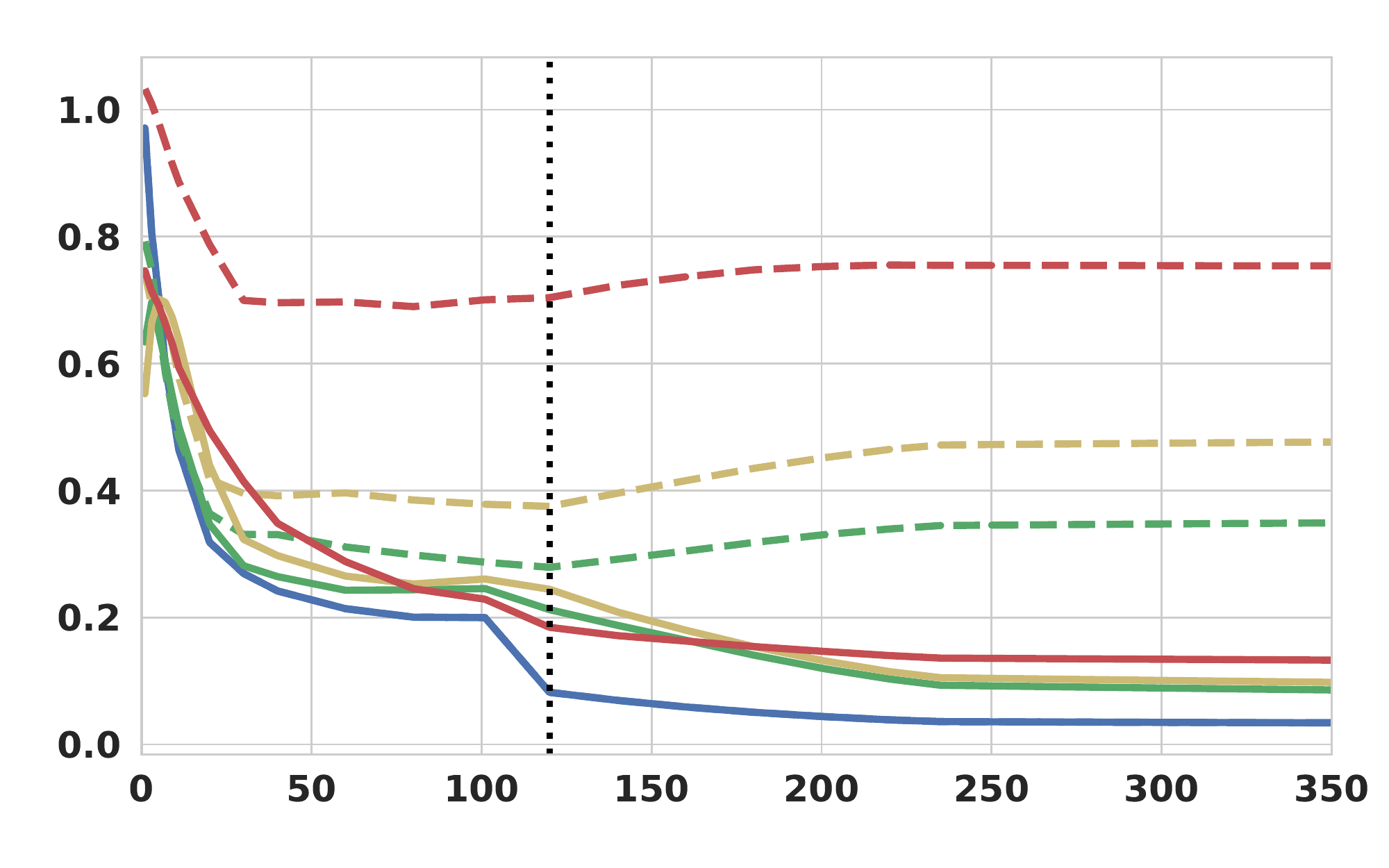}};
			\node at (-1.4,-3.1) [scale=0.7]{Epochs};
			\node at (-4.1,-1.4)  [scale=0.7, rotate=90]{Distance to SELI/ETF};
		\end{tikzpicture}\vspace{-0.2cm}\caption{Classifiers}
	\end{subfigure}\hspace{16pt}\begin{subfigure}{0.3\textwidth}
		\centering
		\begin{tikzpicture}
			\node at (0,-1.4) {\includegraphics[scale=0.26]{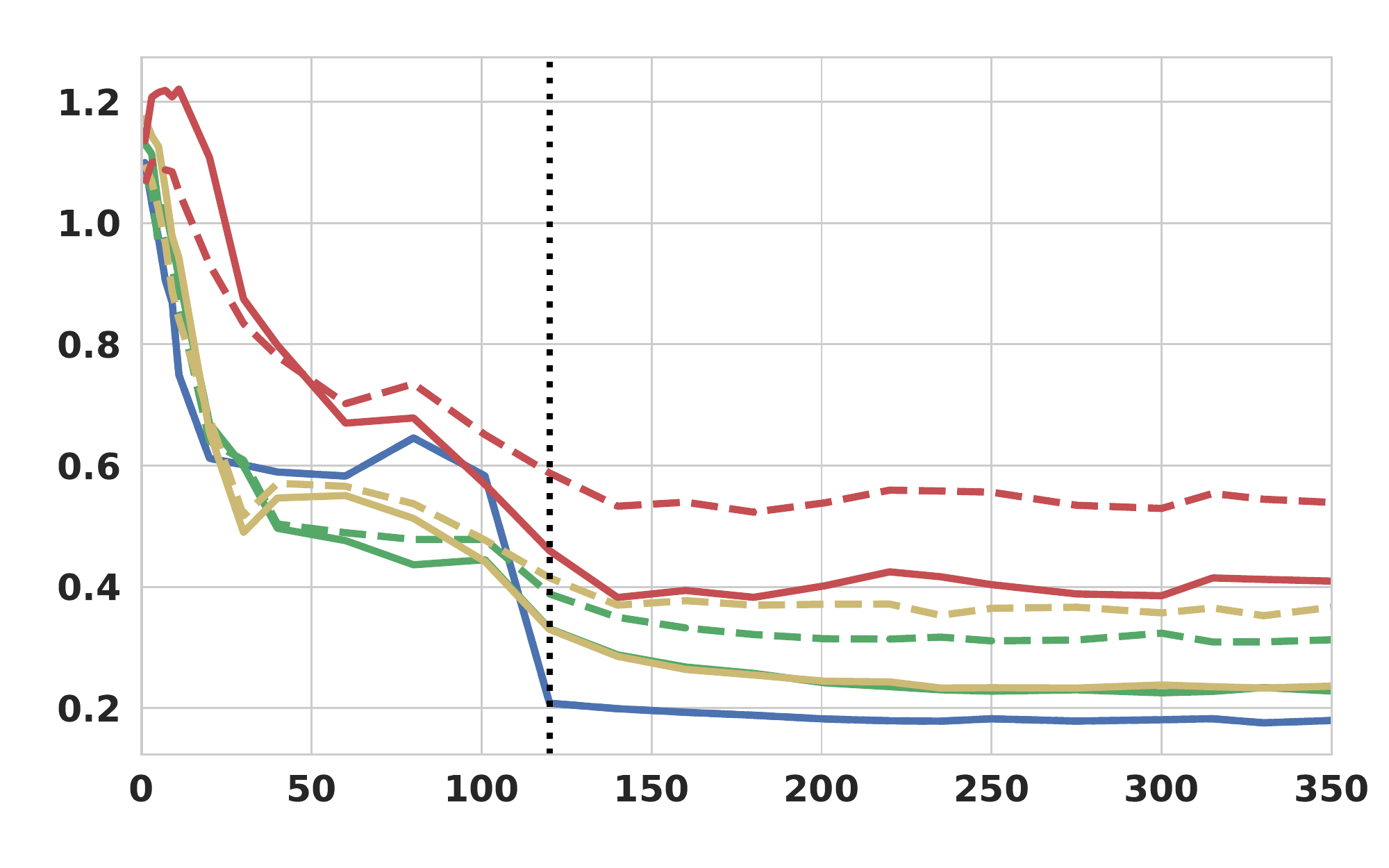}};
			\node at (0.2,0.2) [scale=0.9]{\textbf{$\rho=0.7$}};
			\node at (0,-3.1) [scale=0.7]{Epochs};
			\node at (-2.7,-1.4) [scale=0.7, rotate=90]{};
		\end{tikzpicture}\vspace{-0.2cm}\caption{Embeddings}
	\end{subfigure}\hspace{13pt}\begin{subfigure}{0.3\textwidth}
		\centering
		\begin{tikzpicture}
			\node at (0,-1.4) {\includegraphics[scale=0.26]{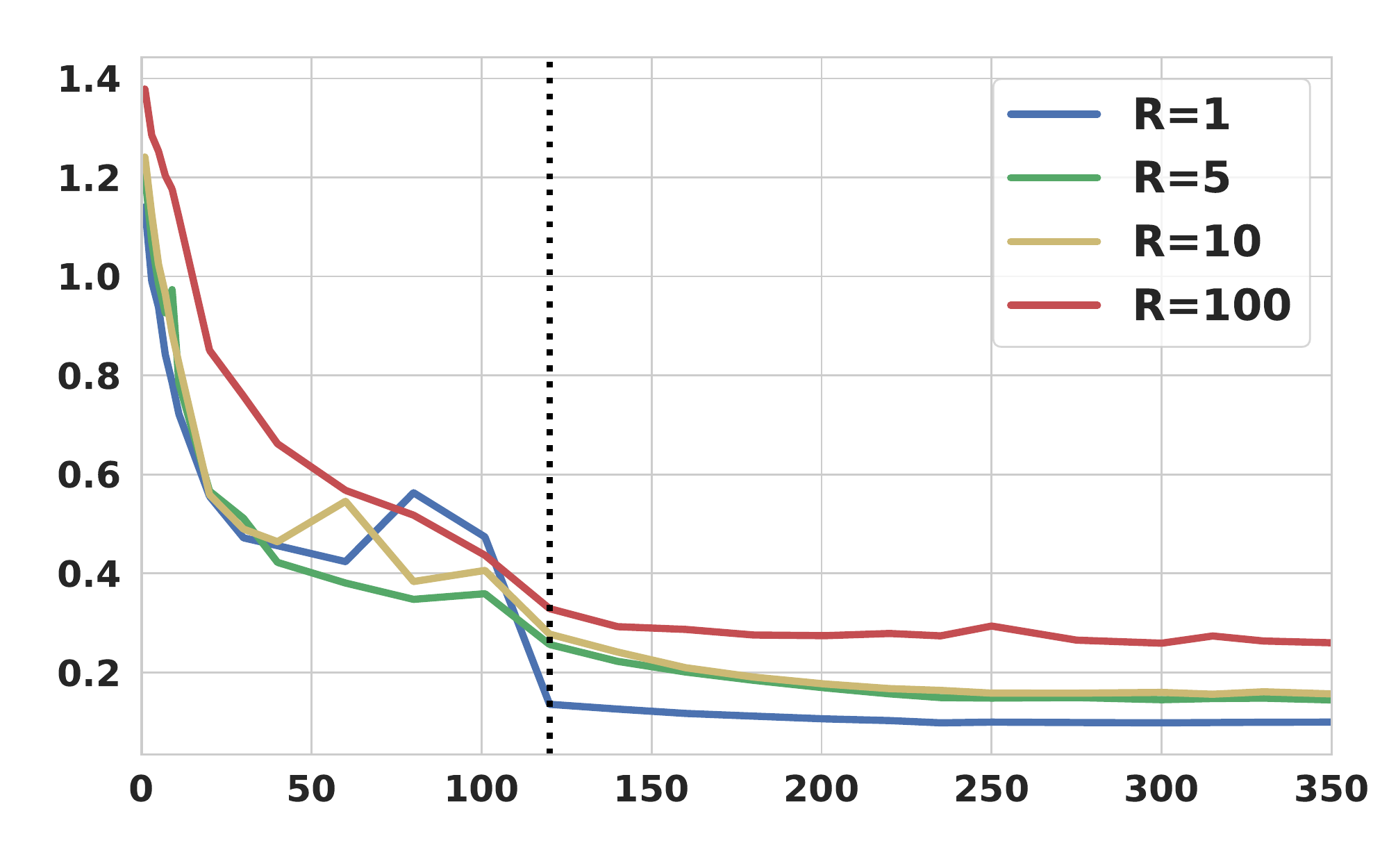}};
			\node at (0,-3.1) [scale=0.7]{Epochs};
			\node at (-2.7,-1.4) [scale=0.7, rotate=90]{};
		\end{tikzpicture}\vspace{-0.2cm}\caption{Logits}
	\end{subfigure}
	\vspace{-1pt}\caption{\new{Impact of $\rho$ on the convergence of ResNet geometry to the SELI, trained on CIFAR10.}
	}
	\label{fig:rho_gram}
	\vspace{-5pt}
\end{figure*}

\begin{figure}[h!]
	\centering
	\begin{subfigure}[b]{1.0\textwidth}
		\centering
		\begin{tikzpicture}
			\node at (0,-1.4) {\includegraphics[width=0.9\textwidth]{./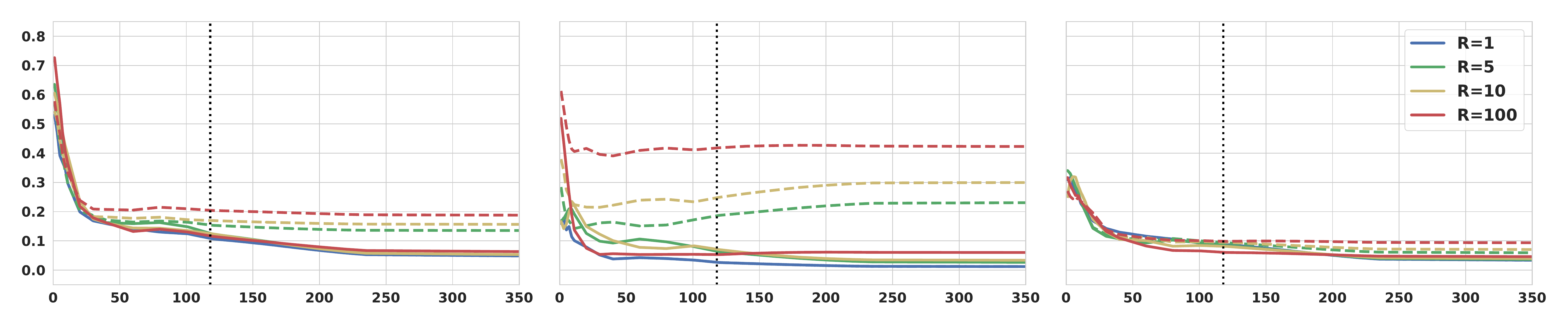}};
			\node at (0,0.1) [scale=0.9]{\textbf{$\rho=0.3$}};
			\node at (-7.3,-1.4) [scale=0.7, rotate=90]{Distance to SELI/ETF};
		\end{tikzpicture}
	\end{subfigure}
	\\
	\vspace{-7pt}
	\begin{subfigure}[b]{1.0\textwidth}
		\centering
		\begin{tikzpicture}
			\node at (0,-1.4) {\includegraphics[width=0.9\textwidth]{./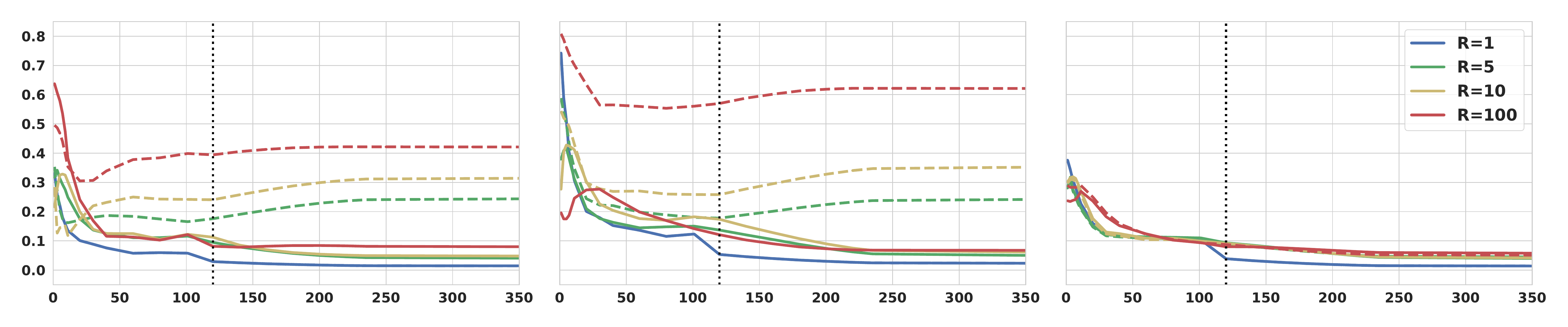}};
			\node at (0,0.1) [scale=0.9]{\textbf{$\rho=0.7$}};
			\node at (0.25,-3.0) [scale=0.7]{Epochs};
			\node at (-4.3,-3.0) [scale=0.7]{Epochs};
			\node at (4.5,-3.0) [scale=0.7]{Epochs};
			\node at (-7.3,-1.4) [scale=0.7, rotate=90]{Distance to SELI/ETF};
			\node at (0.25,-3.7) [scale=0.9]{\textbf{(b)} $\underline{\G_\W}^{\minor-\minor}$};
			\node at (-4.3,-3.7) [scale=0.9]{\textbf{(a)} $\underline{\G_\W}^{\maj-\maj}$};
			\node at (4.5,-3.7) [scale=0.9]{\textbf{(c)} $\underline{\G_\W}^{\maj-\minor}$};
		\end{tikzpicture}
	\end{subfigure}\vspace{-5pt}\caption{\new{Impact of $\rho$ on the convergence of majority/minority classifiers of ResNet model trained on CIFAR10.}
		}
	\label{fig:rho_minmaj_w}
	\vspace{-5pt}
\end{figure}

\begin{figure}[h!]
	\centering
	\begin{subfigure}[b]{1.0\textwidth}
		\centering
		\begin{tikzpicture}
			\node at (0,-1.4) {\includegraphics[width=0.9\textwidth]{./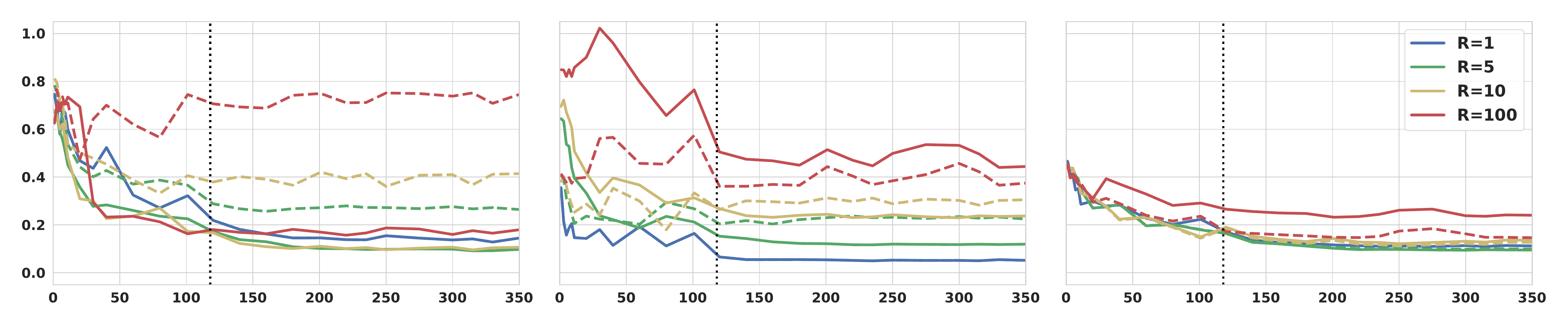}};
			\node at (0,0.1) [scale=0.9]{\textbf{$\rho=0.3$}};
			\node at (-7.3,-1.4) [scale=0.7, rotate=90]{Distance to SELI/ETF};
		\end{tikzpicture}
	\end{subfigure}
	\\
	\vspace{-7pt}
	\begin{subfigure}[b]{1.0\textwidth}
		\centering
		\begin{tikzpicture}
			\node at (0,-1.4) {\includegraphics[width=0.9\textwidth]{./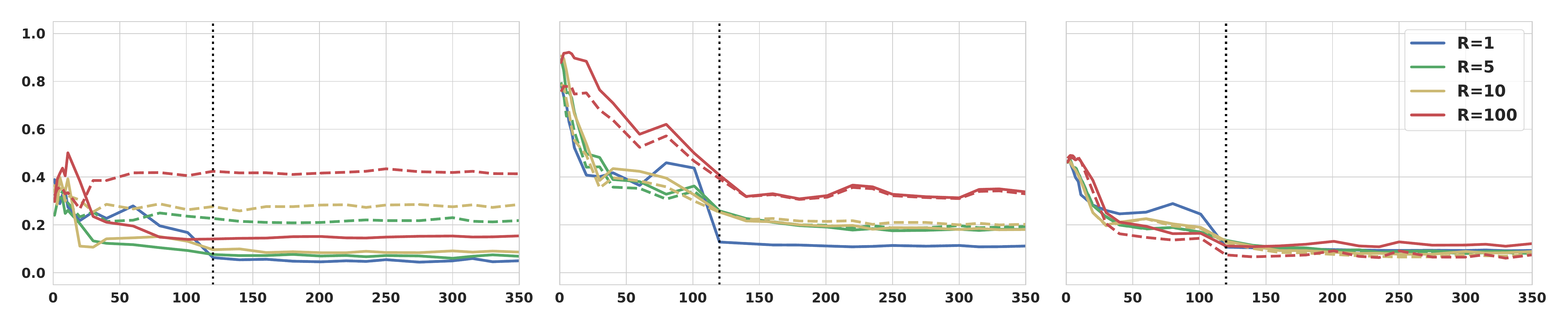}};
			\node at (0,0.1) [scale=0.9]{\textbf{$\rho=0.7$}};
			\node at (0.25,-3.0) [scale=0.7]{Epochs};
			\node at (-4.3,-3.0) [scale=0.7]{Epochs};
			\node at (4.5,-3.0) [scale=0.7]{Epochs};
			\node at (-7.3,-1.4) [scale=0.7, rotate=90]{Distance to SELI/ETF};
			\node at (0.25,-3.7) [scale=0.9]{\textbf{(b)} $\underline{\G_\M}^{\minor-\minor}$};
			\node at (-4.3,-3.7) [scale=0.9]{\textbf{(a)} $\underline{\G_\M}^{\maj-\maj}$};
			\node at (4.5,-3.7) [scale=0.9]{\textbf{(c)} $\underline{\G_\M}^{\maj-\minor}$};
		\end{tikzpicture}
	\end{subfigure}\vspace{-5pt}\caption{\new{Impact of $\rho$ on the convergence of majority/minority embeddings of ResNet model trained on CIFAR10.}
	}
	\label{fig:rho_minmaj_m}
	\vspace{-5pt}
\end{figure}

\section{Implications on minority collapse}\label{sec:minority}
In this section, we further elaborate on how our results relate to the \emph{minority collapse} phenomenon, which is defined by \citet{fang2021exploring} as the phenomenon during which \emph{minority classifiers become completely indistinguishable}. Notably, \citet{fang2021exploring} discover its occurrence both in the UFM and in real datasets trained with deep-nets.  Below, we first state their concrete findings and then we discuss how our results extend them. 

\paragraph{Summary of findings by \citet{fang2021exploring}.} 
\citet{fang2021exploring} make the following key findings. 

\begin{enumerate}[label={\textbf{FHLS(\arabic*)}},itemindent=5em]

\item\label{FHLS1} In \cite[Theorem 5]{fang2021exploring} they prove for a constrained UFM, that $\lim_{R\rightarrow \infty}\Cos{\wmin}{\wmin'}=1$ (here, $\wmin,\wmin'$ are any two distinct minority classifiers.) 

\item\label{FHLS2} They also find numerically that the solution to the same constrained UFM gives $\Cos{\wmin}{\wmin'}=1$ for any $R>R_0$ for some \emph{finite} threshold $R_0$. 

\item\label{FHLS3} Their experiments, specifically \cite[\Fig~3]{fang2021exploring}, suggest that for fixed number of classes $k$, the value of the threshold $R_0$ increases as the constraint parameter gets relaxed and also as the ratio $\rho$ of minority classes increases.

\item\label{FHLS4} Finally, they validate the minority collapse phenomenon on real imbalanced datasets trained with deep-nets. Their real-data experiments (e.g.  \cite[\Fig~2,4]{fang2021exploring}) suggest the following. 
\begin{enumerate}
\item Consistently, as $R$ increases, the cosine similarity between minority classes increases until it reaches one.
\item The value of $R$ after which the cosine becomes one (i.e. minority collapse is reached) depends critically on the weight-decay. For small weight decay ($\sim5e-4$), it takes $R>1000$ to reach minority collapse. It is only for larger weight decay ($\sim5e-3$) that minority collapse occurs for $R\sim100$.
\end{enumerate}
\end{enumerate}

\paragraph{Our novelties.}
Before discussing implications of our results for minority collapse, we highlight the following key features of our study. 
\begin{itemize}
\item \textbf{Entire geometry:} We describe the \emph{entire geometry of classifiers and embeddings for both majority and minority classes} (not only the geometry of minority classes.) 

\item \textbf{Finite imbalance levels:} Our geometric characterizations (aka SELI geometry) hold for \emph{all finite values of the imbalance ratio $R$} (not only asymptotically.)

\item \textbf{Vanishing regularization:} We focus on CE training with vanishing regularization. (As such, our geometry characterizations result from analyzing the UF-SVM.) 
\end{itemize}

\paragraph{Contact points: What do our results say about minority collapse?}

\begin{itemize}
\item \textbf{For zero regularization (aka UF-SVM), there is no minority collapse for any finite value of $R$.} Specifically, we show in Lemma \ref{lem:angles_w} that for $(R,1/2)$-STEP imbalance $\Cos{\wmin}{\wmin'}=\nicefrac{(R-7)}{\left(R-7+2k(2+\sqrt{(R+1)/2})\right)}<1.$ This does not contradict Finding \ref{FHLS2} since here we consider zero regularization; all their numerical evaluations are with finite regularization. 

\item \textbf{Minority collapse occurs (only) \emph{asymptotically} in $R$ for the UF-SVM.} Specifically, we show in Corollary \ref{cor:R_inf} that $\lim_{R\rightarrow\infty}\Cos{\wmin}{\wmin'} = 1$; see also \Fig~\ref{fig:SELI_theory_angles_inf}.  This can be seen as a different manifestation of the Finding \ref{FHLS1}, this time for the UF-SVM, instead of  constrained CE minimization.

\item \textbf{There is no minority collapse for small regularization strength when training the UFM with regularized CE minimization.}  Specifically, we show in Lemma \ref{lem:small_la} that for any finite imbalance values $R, \rho$, there is no minority collapse in the solutions of  Eqn. \eqref{eq:CE} when  $\la<1/2.$ Instead, the CE solution perfectly separates the training data. Our result theoretical justifies the numerical Findings \ref{FHLS3} and \ref{FHLS4}(b). To establish the connection to the setting of \citet{fang2021exploring}, we need to normalize the CE loss in \eqref{eq:CE} by a factor of $1/n$ (see \cite[Eqn.~(15)]{fang2021exploring}). For this normalized minimization, Lemma \ref{lem:small_la} ensures \emph{no} minority collapse provided that 
\begin{align}\label{eq:explain_minority}
2\la<\frac{1}{n} = \frac{1}{(\rho+R\rhobar)k}=\frac{1}{(R-\rho(R-1))k}\,.
\end{align}
Put in terms of imbalance ratio, we prove that 
\begin{align}
\text{Minority collapse requires}~~R>f(\la,\rho):=\frac{\frac{1}{2k\la}-\rho}{1-\rho}\,.
\end{align}
It is straightforward to check that $f(\la,\rho)$ is increasing in $\rho$ and decreasing in $\la$. Thus, we show that the minority collapse threshold $R_0$ (see Finding \ref{FHLS3}) increases with the minority ratio $\rho$ and with the inverse regularization parameter $1/\la$. This finding explains the behavior reported empirically in \cite[\Fig~3]{fang2021exploring} for the UFM and in \cite[\Fig~2,4]{fang2021exploring} for real data. \footnote{When referring to \cite[\Fig~3]{fang2021exploring} keep in mind that they simulate constrained, rather than regularized, CE minimization. Hence, larger constraint parameters mean larger \emph{inverse} regularization parameter $1/\la.$}

\item \textbf{The angle between minority classifiers decreases monotonically with the imbalance ratio $R$ for the UF-SVM.} Specifically, it can be checked easily by direct differentiation that the formula in Lemma \ref{lem:angles_w} giving $\Cos{\wmin}{\wmin'}$ is increasing in $R$. This can be seen as a theoretical justification of the empirical Finding \ref{FHLS4}(a).

\end{itemize}



\section{Additional related work}\label{sec:rel2}

As discussed in \Sec~\ref{sec:intro_related}, our work is inspired and is most closely related to the recent literature on  Neural Collapse. In \Sec~\ref{sec:intro_related} we reviewed the most closely related of these works. A few others are referenced in \Sec~\ref{sec:NC_rel_SM} below. Beyond Neural collapse, our results and analysis tools are also related to the literatures on implicit bias, matrix factorization and imbalanced deep-learning. We elaborate on these connections below.

\subsection{Additional works on Neural Collapse}\label{sec:NC_rel_SM}
Beyond CE minimization, a series of recent works study and analyze the neural collapse phenomenon when training with square loss. 
Interestingly, \citet{graf2021dissecting,fang2021exploring} discover and analyze  neural collapse for similarity-type losses, such as the self-supervised contrastive loss, which trains only for embeddings. To the best of our knowledge, all these works restrict attention to balanced classification. Our work shows that it is possible to obtain explicit geometric characterizations in class-imbalanced settings when training with CE. Hence, it also opens the way to extending the analysis to square-loss minimization. 

Potential connections of neural collapse to generalization and transferability  is a less well-understood topic. Some initial investigations appear in the recent works \cite{hui2022limitations,galanti2021role,han2021neural}. Our results are not immediately conclusive about generalization. However, as mentioned in \Sec~\ref{sec:out} our results have the potential to offer new perspectives on generalization, since they uncover different geometries (aka SELI for different $R$ values), each leading to different generalization (worse for increasing $R$ \cite{byrd2019effect,TengyuMa}).

\begin{remark}[Last-layer peeled model (LPM)]Around the same time that the UFM was proposed by \citet{mixon2020neural} (see also \citet{lu2020neural,graf2021dissecting}), the same model was independently formulated and analyzed by \citet{fang2021exploring} under the alternative name of ``last-layer peeled model (LPM)". 
\end{remark}

\begin{remark}[\emph{\ref{NC}} and \emph{\ref{ETF}} properties in \cite{NC}]
The formalization of Neural collapse by \citet{NC} involved four NC properties. The first property concerns the collapse of class embeddings to their corresponding means. Properties two and three concern the geometry of class-means and classifier-weights, specifically their alignment and convergence to a simplex ETF geometry. Property four is a consequence of the other three properties, hence is less important in the formalization and we do not discuss it further. 
Motivated by our findings, we propose and use here a regrouping/renaming of the aforementioned three properties. We refer to the first property as the \emph{\ref{NC}} property, and, to the second and third properties as the \emph{\ref{ETF}} property. We argue that this distinction is important towards a formulation that is invariant across class-imbalances, by showing that the ETF property is \emph{not} invariant and replacing it with the \emph{\ref{SELI}}~property. For balanced data, the latter simplifies to the ETF property. 
\end{remark}



\subsection{Implicit bias}
Neural collapse is intimately related to the recent literature on implicit bias, which started from a series of influential works \cite{soudry2018implicit,ji2018risk,gunasekar2018characterizing,ji2019implicit,nacson2019convergence}. (This connection is already recognized by the seminal work of \citet{NC}.) For example,  \cite[Theorem 7, Remark~2]{gunasekar2018characterizing} concerns a bilinear non-convex SVM-type minimization that bears similarities to the UF-SVM and establishes a connection to a convex nuclear-norm minimization problem. However, the two factors in their bilinear formulation are the same (unlike the UF-SVM) and also they restrict attention to binary classification. Another very closely related work is that by \citet{lyu2019gradient} who show that gradient descent on deep homogeneous networks converges (in direction) to a KKT point of a corresponding non-convex max-margin classifier. While their focus is again on binary classification, they briefly discuss extension to multiclass settings in their appendix. Recently, \citet{ULPM} leveraged their results and formally showed that gradient descent on \eqref{eq:CE} with zero regularization converges to a KKT point of the UF-SVM. The max-margin classifiers corresponding to multi-layer linear networks (with the UF-SVM being a special case) are non-convex. Hence there is no guarantee that the KKT point where gradient descent converges to is a global optimum. Whether this is the case or not is investigated for various settings by \citet{vardi2021margin}. Specifically for linear fully-connected networks, which are of interest to us, they show that, when trained on binary data, the point  of convergence is always a global optimum \cite[Theorem  3.1]{vardi2021margin}. Their proof uses another nice result on implicit bias by \citet{ji2020directional}. In fact, their results are more intimately connected to neural collapse as it can be checked that \cite[Proposition~4.4]{ji2020directional} provides a direct proof that gradient descent on unregularized CE for the UFM and binary data finds embeddings and a classifier that satisfy the NC and ETF properties (irrespective of imbalance). Note here that all these works focus almost exclusively on binary settings. A salient message of our results is that rich and possibly complicated behaviors can occur in multiclass ($k>2$) settings. There are several findings supporting this. For example, we show that regularization in the UFM only matters when data are multiclass and imbalanced. Similarly, it is only then that the model found by the two-layer UFM can differ from what a one-layer convex network would find. We also show empirically for the UFM that convergence rates in the absence of regularization can be heavily impacted by imbalances. Related to this, we highlight a missing piece in our analysis: we characterize the global optimum of the UF-SVM, but we do not prove, or are aware of a proof, that gradient descent converges to this global optimum. Proving or disproving this can be of great interest in its own way. On the one hand, if the conjecture holds, then our results warn that imbalance levels can severely impact convergence rates. On the other hand, if the conjecture is refuted, then this would be the simplest model to have been discovered where convergence to global optimum fails.

%


\subsection{Matrix factorization and low-rank recovery}
The connection between the study of the UFM for the purpose of neural collapse analysis and the literatures on matrix factorization and low-rank matrix recovery (see for example \cite{wright2022high}) is uncovered and first exploited by \citet{zhu2021geometric} and \citet{fang2021exploring}. Thus, we refer the interested reader to those papers for a list of references and detailed discussion (specifically see \citet[Sec.~3.2]{zhu2021geometric}). 
Specializing this discussion to the UF-SVM that is of main interest to us, we note its close ties to the formulation of the hard-margin matrix factorization problem as studied by \citet{srebro2004maximum}. The author formulated the problem of fitting a binary target matrix $\Y$ (ie. with entries $\pm1$) with a low-rank matrix $\W^T\Hb$ as the minimization 
\begin{align}\label{eq:Srebro}
\min_{\W,\Hb} \|\W^T\Hb\|_* \quad\text{sub. to}~~ \Big(\Y\odot\W^T\Hb\Big)[c,i]\geq1, ~\forall (c,i)\in\Sc,
\end{align}
where $\Sc$ is a given subset of observed entries of $\Y$.
Despite being non-convex, they derived a convex reformulation based on duality and a corresponding procedure for finding a solution to \eqref{eq:Srebro} via essentially solving an SDP and an appropriate system of linear equations corresponding to the active constraints.  The non-convex max-margin problem \eqref{eq:svm_original} that we investigate bears similarities to \eqref{eq:Srebro}. Specifically, for an one-hot encoding and fully observed $\Y$, \eqref{eq:Srebro} is essentially the binary analogue of \eqref{eq:svm_original}. 
Importantly in our setting, we are able to calculate the solution to \eqref{eq:svm_original} in closed form, that is without requiring numerically solving an SDP. Finally, as mentioned in \Sec~\ref{sec:2linear} a byproduct of Theorem \ref{thm:SVM} is that the nuclear-norm max-margin minimization \eqref{eq:nuc_norm_SVM} has the same solution as the vanilla max-margin with Frobenius norm. Although different, somewhat related settings were frobenius-norm penalized problems give same solutions as nuclear-norm penalized ones have been studied in the low-rank representation literature, e.g. \cite{vidal2014low,peng2016connections}.

\subsection{Class-imbalanced deep learning}
The past few years have seen a surge of research activity towards substituting the vanilla CE empirical risk minimization, which leads to poor accuracy for minorities, with better alternatives that are particularly suited for training large models, e.g. \cite{kang2020decoupling,KimKim,TengyuMa,Menon,CDT}. Among the many solutions suggested in the recent literature, most closely related to the topic of neural collapse are \cite{kang2020decoupling,KimKim}. Interestingly (as it happened chronologically before the conception of Neural collapse by \citet{NC}), \citet{kang2020decoupling} and \citet{KimKim} observed that the classifier weights found by deep-nets when trained with CE on class-imbalanced data yield larger norms for majority rather than minority classes. This empirical observation led them to propose post-hoc schemes that normalize the logits before deciding on the correct class, thus leading to better performance on the minorities. Our Lemma \ref{lem:norms_w}, not only proves this behavior for the unconstrained feature model, but it also precisely quantifies the norm-ratio between minorities and majorities. Interestingly, our deep-net experiments in \Fig~\ref{fig:norm_ratio_exp_W} confirm the predicted behavior. Evidently then, Lemma \ref{lem:norms} offers a plausible theoretical justification of the empirical findings by \citet{kang2020decoupling,KimKim} and also quantifies the norm ratio. It is conceivable, that this characterization in terms of a simple formula that only involves the imbalance ratio and number of classes, can be used to turn some of the heuristic post-hoc normalizations of  \citet{kang2020decoupling,KimKim} to principled methods. Beyond that, our results are conclusive not only about classifiers and norms, but also about embeddings and angles. It is exciting to investigate leveraging these findings to design better techniques for class-imbalanced deep learning (see also discussion by the end of \Sec~\ref{sec:out}.) 

\new{Related ideas appeared very recently in the contemporaneous works \cite{yang2022we,xie2022neural}, where the authors  design loss functions for class-imbalanced learning in an attempt to enforce a geometry that is alike the ETF geometry for balanced data. However, they do not characterize the joint geometry of classifiers and embeddings under class imbalances as we do here.}

\subsection{On the Simplex-encoding interpolation}
Our analysis of the UF-SVM uncovers the unique role played by the \SEL~matrix. Specifically, Theorem \ref{thm:SVM} shows that the global minimizers $(\What,\Hhat)$ of the non-convex UF-SVM are such that the resulting logit matrix $\What^T\Hhat$ satisfies all constraints with equality and also it equals the \SEL~matrix $\Zhat$. Our finding is related to (in fact, can be seen as an extension of) a recent result by \citet{wang2021benign}. In a different context and with different research objective,  \citet{wang2021benign} derived a deterministic condition under which the solution to the (convex) vanilla SVM (like the one in \eqref{eq:cvx_svm} but with general inputs, not necessarily the basis vectors $\eb_i$ resulting in the UFM) finds logits that interpolate a simplex encoding of the labels. Theorem \ref{thm:SVM} goes far beyond: it studies a non-convex SVM and since it applies directly to the UFM, it does not involve deterministic input conditions for the result to hold. That said, it might be interesting future work to derive similar conditions on the inputs of the non-convex max-margin problem such that it gives optimal logits interpolating the simplex encoding of the labels.

\newpage




%




\newpage


\end{document}